\newcommand*{\supp}{\mathrm{supp}}
\newcommand{\rF}{\mathrm{\scriptstyle F}}
\newcommand{\new}{\textrm{new}}
\def\T{{ \mathrm{\scriptscriptstyle T} }}
\def\cov{{ \mathrm{cov} }}
\def\##1\#{\begin{align}#1\end{align}}
\def\$#1\${\begin{align*}#1\end{align*}}
\renewcommand{\tr}{\textrm{tr}}
\def\T{{ \mathrm{\scriptscriptstyle T} }} %%%transpose operator
\newcommand{\Rom}[1]{\text{\uppercase\expandafter{\romannumeral #1\relax}}}
\newcommand{\condition}{\;\middle|\;}
\newcommand{\ndata}{n}
\newcommand{\ndim}{d}
\newcommand{\as}{{\rm a.s.}}
\newcommand{\qas}{{\quad \rm a.s.}}
\newcommand{\npinfty}{\ndata,\ndim \rightarrow +\infty}
\newcommand{\zinfty}{z \rightarrow 0^{-}}
\DeclareRobustCommand\full  {\tikz[baseline=-0.6ex]\draw[thick] (0,0)--(0.5,0);}
\DeclareRobustCommand\dotted{\tikz[baseline=-0.6ex]\draw[thick,dotted] (0,0)--(0.54,0);}
\DeclareRobustCommand\dashed{\tikz[baseline=-0.6ex]\draw[thick,dashed] (0,0)--(0.54,0);}
\DeclareRobustCommand\chain {\tikz[baseline=-0.6ex]\draw[thick,dash dot] (0,0)--(0.5,0);}
\newcommand{\variablecondition}{X,\, \cW,\, \truesignal}
\newcommand{\riskcondition}{R_{\variablecondition}}
\newcommand{\biascondition}{B_{\variablecondition}}
\newcommand{\varcondition}{V_{\variablecondition}}
\newcommand{\poisson}{\text{Poisson}}
\newcommand{\stieltjes}{m}
\newcommand{\compstieltjes}{v}
\newcommand{\tcompstieltjes}{\Tilde{v}}
\newcommand{\stieltjescov}{\Theta}
\newcommand{\trainloss}{L}
\newcommand {\covmat}{\Sigma}
\newcommand {\hcovmat}{\widehat\covmat}
\newcommand {\tcovmat}{\Tilde{\covmat}}
\newcommand {\eigval}{\lambda}
\newcommand{\projmat}{\Pi}
\newcommand{\Bernoulliweight}{D}
\newcommand{\nresample}{B}
\newcommand{\pinv}{^{+}}
\newcommand{\inv}{^{-1}}
\newcommand{\transp}{^{\top}}
\newcommand{\sketchmat}{S}
\newcommand{\weight}{w}
\newcommand{\aspratio}{\gamma}
\newcommand{\limitdistr}{H}
\newcommand{\tlimitdistr}{\Tilde{H}}
\newcommand{\limitdistrweight}{\mu_{\weight}}
\newcommand{\tildelimitdistrweight}{\tilde{\mu}_{\weight}}
\newcommand{\ensambleweight}{k}
\newcommand{\sampleratio}{\theta}
\newcommand{\estimator}{\hat{\beta}^{\nresample}}
\newcommand{\estimatork}{\hat{\beta}_{k}}
\newcommand{\estimatorsketch}{\hat{\beta}^{1}}
\newcommand{\truesignal}{\beta}
\newcommand{\noiselev}{\sigma^2}
\newcommand{\signallev}{r^2}
\newcommand{\A}{A}
\newcommand{\B}{B}
\newcommand{\C}{C}
\newcommand{\ca}{a}
\newcommand{\cb}{b}
\newcommand{\iid}{\rm i.i.d.}
\newcommand{\ridgeless}{\hat\beta^{\rm mn}}
\newcommand{\Bern}{\rm{\scriptstyle Bern}}
\newcommand{\multi}{\rm{\scriptstyle multi}}
\renewcommand{\numberline}[1]{%
  \@cftbsnum #1\@cftasnum~\@cftasnumb%
}
\newcommand{\scolor}[1]{{\color{magenta}#1}}
\newcommand{\scomment}[1]{\scolor{$\dagger$}\marginpar{\tiny\scolor{S:\ #1}}\hspace{-3pt}}
\newcommand{\wcolor}[1]{{\color{red}#1}}
\newcommand{\wcomment}[1]{\wcolor{$\dagger$}\marginpar{\tiny\wcolor{W:\ #1}}\hspace{-3pt}}
\newcommand{\nn}{\nonumber}
\begin{document}

\title{ \LARGE Ensemble  linear interpolators: The role of ensembling}     % Option 

\author{Mingqi Wu\thanks{Department of Mathematics and Statistics, Mcgill University, 805 Sherbrooke Street West, Montreal, Quebec H3A 0B9, Canada; Email: \texttt{mingqi.wu@mail.mcgill.ca}.} \and Qiang Sun\thanks{Corresponding author. Department of Statistical Sciences, University of Toronto, 100 St. George Street, Toronto, ON M5S 3G3, Canada; Email: \texttt{qiang.sun@utornoto.ca}.}}

%mingqi.wu@mcgill.ca, qsun.ustc@gmail.com

\date{August 31st, 2023 }

\maketitle

%\vspace{-0.25in}

% typeset the title of the contribution
\begin{abstract}

Interpolators are  unstable. For example, the mininum $\ell_2$ norm least square interpolator exhibits unbounded test errors when dealing with noisy data.  In this paper, we study how ensemble stabilizes and thus improves the generalization performance, measured by the out-of-sample prediction risk, of an individual interpolator. We focus on bagged linear interpolators, as bagging is a popular randomization-based ensemble method that can be implemented in parallel.  We  introduce the multiplier-bootstrap-based bagged least square estimator, which can then be formulated as an average of the sketched least square estimators. The proposed multiplier bootstrap encompasses the classical bootstrap with replacement as a special case, along with a more intriguing variant which we call the Bernoulli bootstrap. 

Focusing on the proportional regime where the sample size scales proportionally with the feature dimensionality, we investigate the out-of-sample prediction risks of the sketched and bagged least square estimators in both underparametrized and overparameterized regimes. 
Our results reveal the statistical roles of sketching and bagging. In particular, sketching modifies the aspect ratio and shifts the interpolation threshold of the minimum $\ell_2$ norm estimator.  However, the risk of the sketched estimator continues to be  unbounded around the interpolation threshold due to excessive variance. In stark contrast, bagging effectively mitigates this variance, leading to a bounded  limiting out-of-sample prediction risk. To further understand this stability improvement property, we establish that bagging acts as a form of implicit regularization, substantiated by the equivalence of the bagged estimator with  its explicitly regularized counterpart. We also discuss several extensions.

%As extensions of our core findings, we establish a link between the training error and the out-of-sample prediction risk, and delve into the calculation of adversarial risks.

%We also discuss several extensions. 

\end{abstract}

\noindent
{\bf Keywords}: Bagging, ensemble, interpolators, overparameterization, random matrix theory, sketching.

\tableofcontents

%%%%%%%%%%%%%%%%%%%%%%%%%%%%%%%%%%%%%%%%%%%
%%%%%%%%%%%%%%Introduction%%%%%%%%%%%%%%%%%
%%%%%%%%%%%%%%%%%%%%%%%%%%%%%%%%%%%%%%%%%%%
\section{Introduction}\label{sce:Intro}

Deep neural networks  have achieved remarkable performance across a broad spectrum  of tasks, such as computer vision, speech recognition, and natural language processing \citep{lecun2015deep, schmidhuber2015deep, wei2023overview}. These networks are often  overparameterized, allowing them to effortlessly interpolate the training data and achieve zero training errors. However, the  success of overparameterized neural networks comes at the cost of heightened instability. For example, neural networks can fail dramatically  in the presence of subtle and almost imperceptible image  perturbations in image classification tasks  \citep{goodfellow2018making} or  they may struggle  to  capture signals in low signal-to-noise ratio scenarios,  such as minor structural variations in image reconstruction tasks \citep{antun2020instabilities}.   The instability inherent to overparameterized models, or interpolators,  extends beyond deep neural networks; even simple models such as trees and linear models, exhibit instability, as evidenced by their unbounded test errors \citep{wyner2017explaining, belkin2018reconciling, hastie2019surprises}. 

%Instabilities are not only confined to deep neural networks.  Overparameterized models or interpolators exhibit instability in general. For example, simple models such as trees and linear models face instability challenges, as evidenced by their unbounded test errors \citep{wyner2017explaining, belkin2018reconciling, hastie2019surprises}. 

%In a matrix form, the model \eqref{eq:lm} can be written as
%\paragraph{Down to linear interpolators}

In this paper, we focus on least square interpolators.
Consider  independently and identically distributed (\iid) data $\cD := \{(x_i, y_i):  1 \leq i \leq \ndata\}$, which are  generated according to the model
\#\label{eq:lm}
&(x_i, \varepsilon_i) \sim (x,\varepsilon) \sim P_x\times P_\varepsilon, \nn\\
&y_i = x_i^\T \truesignal + \varepsilon_i, ~~ 1\leq i \leq n, 
\#
where  $P_x$ is a distribution on $\RR^\ndim$ and  $P_\varepsilon$ is a distribution on $\RR$ with mean $0$ and variance $\noiselev$.  In a matrix form, we can write 
$
Y = X \truesignal + E,
$
where $Y=(y_1,\ldots, y_n)^\T$, $X=(x_1,\ldots, x_n)^\T$, and $E = (\varepsilon_1,\ldots, \varepsilon_n)^\T$. 
Let the minimum $\ell_2$ norm, or min-norm for short, least square  estimator 
\citep{hastie2019surprises} be
\$
\ridgeless 
&:=  \argmin \left\{\|b\|_2: ~b~\text{minimizes}~ \|Y - Xb\|_2^2\right\} \\
&= (X^\T X)^{+}X^\T Y  = \lim_{\lambda \rightarrow 0^+}(X^\T  X + n\lambda I)^{-1} {X}^\T  {Y}, 
\$
where $(\cdot)^+$ denotes the Moore-Penrose pseudoinverse,  and $I\in \mathbb{R}^{\ndim \times \ndim}$ is the identity matrix. This estimator is also called the ridgeless least square estimator.  When $X$ possesses  full row  rank, typically the case for $p> n$, the min-norm  estimator becomes an interpolator, implying   $y_i = x_i^\T \ridgeless$ for $1\leq i \leq n$.

%and  is the converging limit of gradient descent when initialized  from zero \citep{hastie2019surprises}. 

%When $X$  has  full column rank, which is $n>p$ for example, the min-norm estimator reduces to the usual ordinary least square (OLS) estimator.  %$\ridgeless = (X^\T X)^{-1}X^\T y$,  %reduces to the usual ordinary least square (OLS) estimator, 
 
%In other words, the ridgeless least square estimator becomes an interpolator after the interpolation threshold $\ndim/n=1$. This estimator is the converging limit of gradient descent when initializing from zero    and agrees with the minimum-$\ell_2$-norm estimator for the linear regression problem \citep{zhang2021understanding}.  

\begin{figure}[t!]
       \centering    \includegraphics[width=0.60\textwidth]{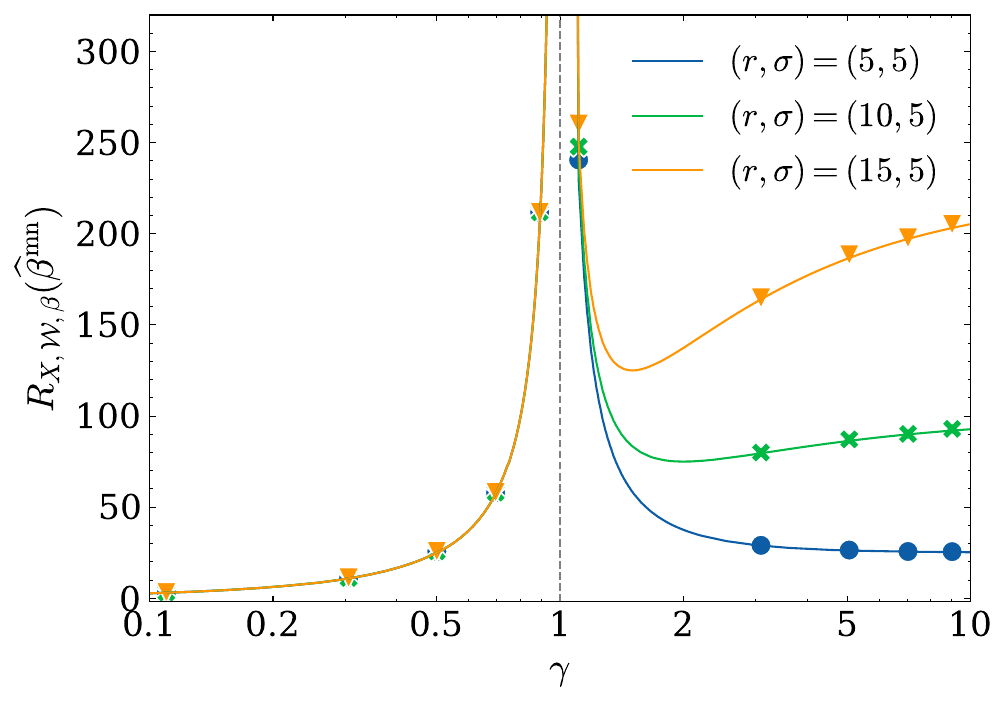}
    %\vspace{-1em}
    \caption{\small 
    Limiting out-of-sample prediction risk curves for the min-norm least square estimator, as functions of the aspect ratio $\gamma$. Rows of the feature matrix $X\in \RR^{n\times d}$ are \iid~drawn from $\mathcal{N} (0, I_d)$, errors $\varepsilon_i$ are \iid~drawn from $\cN(0, \noiselev)$, and $\beta$ is drawn from $\cN(0, r^2 I_d/d)$.   The blue, green, and yellow lines are theoretical risk curves with  $(r, \sigma)$ taking $(5, 5), (10, 5)$ and $(15, 5)$ respectively. The blue dots, green crosses, and orange triangles mark the finite-sample risks with  $n=400$, $\gamma$ varying in $[0.1,10]$ and $d = [n \gamma]$. Symbols are averages over 100 realizations. 
    }
    \label{fig:fig_1}
\end{figure}

%Existing literature centers on characterizing the out-of-sample (OS) prediction risks of interpolators and employing these characterizations  to elucidate the benign generalization properties of interpolators, in the hope of shedding light on the success of overparameterized neural networks \citep{belkin2018reconciling, ba2019generalization, liang2020just,  zhang2021understanding, mei2022generalization}. Taking a different perspective, we argue that interpolators are unstable. 

%\paragraph{Introducing the instability in terms of prediction risk} %$\ridgeless$ is unstable, as characterized  by its unbounded generalization performance.  

The least square interpolator $\ridgeless$ has unbounded test errors. 
To see this,  Figure \ref{fig:fig_1} plots its out-of-sample prediction risks  versus the dimension-to-sample-size ratio $\aspratio= \ndim/n$\footnote{To be rigorous, $\aspratio = \ndim/\ndata$ should be understood as $\aspratio \simeq \ndim/n$, that is, $\aspratio$ is asymptotically equivalent to  $\ndim/n$ or $\lim_{n\rightarrow} a_n/b_n = 1$.}, which is laso referred to as the aspect ratio.  Notably, the  out-of-sample prediction risk spikes around the interpolation threshold $\aspratio = 1$, leading to highly unstable predictions if the model size and thus the aspect ratio is not judiciously chosen. %as the sample size is often fixed  in practice. 
Recent empirical studies showed that ensembles of interpolators, including random forests and  deep ensembles, improve the predictive performance of individual interpolators \citep{lee2015m, wyner2017explaining}. For instance, \cite{wyner2017explaining} interpreted random forests as ``self-averaging interpolators" and hypothesized that such a behavior led to their success.  Similarly, \cite{lee2015m} demonstrated that deep ensembles, employing various ensembling strategies, effectively enhance the predictive performance of individual networks. However, the mechanisms through which ensembling fosters stability and improves the generalization performance of individual interpolators remain elusive. Therefore we ask the following questions:
\begin{quote}\label{q:role}
{\it What is the statistical role of ensembling?  How does ensembling  offer  stability and predictive improvement for individual interpolators? }
\end{quote}

This paper addresses  the above questions in the context of bagged least square interpolators. Bagging, an abbreviation for \underline{b}ootstrap \underline{agg}regat\underline{ing} \citep{breiman1996bagging}, represents a popular  randomization-based ensemble technique that lends itself to parallel implementation.
Given a training dataset $\cD$ comprising $n$ instances, bagging generates $B$ subsamples ${\cD_1, \ldots, \cD_B}$, each of size $n$, through uniform sampling with replacement from $\cD$. This sampling procedure is called the classical bootstrap, and each of these subsamples is referred to as a bootstrap sample. Subsequently, $B$ min-norm least square estimators are individually fitted to these $B$ bootstrap samples, which are then aggregated to produce the final estimator. 
We shall call the resulting estimator  the bagged least square estimator or simply the bagged estimator\footnote{We use the bootstrap estimator interchangeably with the bagged estimator.}.

%This paper answers the questions above in the case of bagged least square interpolators. Bagging, short for \underline{b}ootstrap \underline{agg}regat\underline{ing} \citep{breiman1996bagging},  is a popular randomization-based  ensemble method  and can be implemented in parallel. Given the training dataset $\cD$ of size $n$, bagging generates $B$ subsamples $\{\cD_1, \ldots, \cD_B\}$, each of size $n$, by sampling from  $\cD$ uniformly and with replacement. Each subsample is known as a {bootstrap} sample. Then $B$ min-norm least square estimators  are fitted  on  these  $B$ bootstrap samples, and combined by simply taking the empirical average. We shall call this  averaged estimator  the bagged least square estimator, or simply the bagged estimator\footnote{We shall sometimes refer to the bagged estimator as the bootstrap estimator.}. 

%in the proportional regime where

Specifically,  we study the generalization performance, measured by the out-of-sample prediction risks, of bagged least square interpolators  when the dimensionality scales proportionally  with the sample size  $\ndim \asymp \ndata$. This allows us to understand how bagging  stabilizes and thus improves the generalization performance.  We summarize our main contributions below. 
\begin{enumerate}
\item %{\bf New formulation:} 
First, we introduce a new bagged least square estimator using a multiplier bootstrap approach, and formulate the former as an average of sketched least square estimators. This approach includes  the classical bootstrap with replacement as a special case and introduces an intriguing variant which we call the Bernoulli bootstrap. Remarkably, bagged estimators based on these two bootstrap methods yield identical limiting out-of-sample prediction risks, when the success probability in the Bernoulli bootstrap is taken as $1-1/e \approx 0.632$.

\item %{\bf Exact prediction risk formulas:} 
Second, we provide precise  formulas for the limiting out-of-sample prediction risks of both sketched and bagged least square estimators. This unveils the statistical roles of sketching and bagging in terms of the generalization performance. While sketching modifies the aspect ratio and shifts the interpolation threshold of the min-norm estimator, the prediction risk still spikes near the interpolation threshold. Remarkably, bagging consistently stabilizes and improves the risks of an individual sketched estimator, and keeps the risk of the bagged estimator bounded. This benefit arises from fact that  the bagged estimator is not an interpolator anymore. 

\item %{\bf Influence of sampling distribution:}
Third, we find that the out-of-sample prediction risk of the sketched estimator depends on the sampling distribution of the multipliers, while the risk of the bagged least square estimator is invariant to it. This demonstrates certain robustness of bagging. For sketching, we establish the optimality of the Bernoulli sketching among various sketching methods. Numerical studies prove that the Bernoulli sketching is computationally faster than other methods.

%Third, the out-of-sample prediction risk of the sketched estimator depends on the sampling distribution of the multipliers  in the underparameterized regime while being invariant to it in the overparameterized regime.  We then  establish the optimality of the Bernoulli sketching among various sketching matrices in the underparameterized regime, which extends to optimality in both regimes. These findings complement prior work by \cite{chen2023sketched}, who investigated different sketching methods. In contrast, the prediction risk of the bagged least square estimator is invariant to the sampling distribution in both regimes.

\item %{\bf Bagging as implicit regularization:}
Fourth, we prove that bagging acts as a form of implicit regularization. Specifically, we establish that the bagged estimator is equivalent to a ridge regression estimator under isotropic features. Under more general covariance matrices, we prove an  equivalence between the bagged least square estimator under model \eqref{eq:lm} and the min-norm least square estimator under a different model with a  shrunken feature covariance and a shrunken signal. 

%These findings underscore bagging's role as a form of implicit regularization.

\item %{\bf Computational efficiency gain:}
 Fifth, in comparison to the classical bootstrap, which requires multiple passes over the training data and is challenging to parallelize due to random selection of $n$ elements, the Bernoulli bootstrap stands out as a more efficient alternative. Specifically, when generating a single bootstrap sample, instead of randomly drawing from the sample data with replacement, each data point is assigned a random Bernoulli weight. %with the success probability $\sampleratio$. 
 Bernoulli bootstrap  is particularly advantageous for large values of $n$ due to its lower number of data passes.

\item  %{\bf Extensions:}
Lastly, we characterize the training error  and adversarial risk of the bagged estimator. The training error is linear in the out-of-sample prediction risk: Better the rescaled  training error, better the prediction risk. Thus, the bagged estimator with the smallest rescaled training error generalizes the best. Comparing with the full-sample min-norm estimator, bagging also helps stabilize the adversarial risk, in part by shrinking the $\ell_2$ norm of the estimator.

\end{enumerate}

\subsection*{Related work}

%\paragraph{Related work}
%\noindent{\bf Generalization properties of Interpolators}
%The phenomenon of interpolation, which refers to the ability of modern deep learning models to fit data perfectly while still achieving good generalization performance, has been extensively studied theoretically in various models. \cite{zhang2021understanding} provided significant insights into this phenomenon, and it has since garnered considerable attention. \cite{belkin2018reconciling} offered an explanation for this phenomenon by introducing the concept of a double descent curve in the risk curve, illustrating that interpolating functions have smaller norms and can be considered "simpler". Subsequently, \cite{hastie2022} further confirmed the presence of this double descent curve in the context of ridgeless regression models. \cite{liang2020} extended these findings to kernel regression models, while \cite{Ghorbani2021} explored the phenomenon in the context of random feature regression. Additionally, \cite{xiaolechao2022} investigated multiple descents in the risk curve for kernel regression, focusing on a higher-order asymptotic regime.

We briefly review work that is  closely related to ours. 

%The generalization properties of over-parametrized models have been widely studied in recently years. It starts with  the observation that 

%,including the linear model \citep{hastie2019surprises,richards2021asymptotics}, the random feature model \citep{mei2022generalization} and the partially optimized two-layer neural network \citep{ba2019generalization}, among others. 

\paragraph{Benign generalization of overparameterized models} 
Overparameterized neural networks often exhibit benign generalization performance, even when they are trained without explicit regularization \citep{he2016deep, neyshabur2014search, canziani2016analysis, novak2018sensitivity, zhang2021understanding, bartlett2020benign, liang2020just}.  
\cite{belkin2018reconciling} furthered this line of research  by positing the ``double descent" performance curve applicable to models beyond neural networks. This curve subsumes the textbook $U$-shape bias-variance-tradeoff curve  \citep{hastie2009elements}, illustrating  how increasing model capacity beyond some interpolation threshold may yield improved out-of-sample prediction risks. Subsequent studies characterized this double descent phenomenon for various simplified models \citep{hastie2019surprises, ba2019generalization, richards2021asymptotics, mei2022generalization}.

\paragraph{Sketching}
\cite{raskutti2016statistical} and \cite{edgar2019} first studied the statistical  performance of several sketching methods in the underparameterized regime. More recently,  \cite{chen2023sketched} analyzed the out-of-sample prediction risks of the sketched ridgeless least square estimators in both underparameterized and overparameterized regimes. They introduced an intriguing perspective: Sketching serves as the dual of overparameterization and may help generalize. However, the sketching matrices considered there, such as orthogonal and \iid~sketching matrices, are of full rank. In contrast, we focus on  diagonal and mostly singular sketching matrices in the form  of \eqref{eq:sketching}, which complements their results. Here, each sketch dataset is constructed by multiplying each observation in the training dataset by  a random multiplier. This approach is more computationally efficient than  multiplying the data matrix by the often dense sketching matrix as done in \cite{chen2023sketched}, which is also supported by our numerical studies. %Other works on sketching includes \cite{edgar2021} %Moreover, we provably show that bagging stabilizes and thus improves the generalization performance.  

\paragraph{Bootstrap and bagging}
Bootstrap, a resampling technique introduced by~\cite{Efron1979} and inspired by the jackknife method~\citep{Jackknife}, finds extensive applications  in estimating and inferring sampling distributions, such as the regression coefficient $\truesignal$ in model \eqref{eq:lm}. However, as the aspect ratio $\ndim/\ndata$ approaches the interpolation threshold, the regression coefficient $\truesignal$ quickly becomes unidentifiable, rendering its estimation and inference a groundless task. Indeed, \cite{el2018can} concluded that it is perhaps not possible to develop universal and robust bootstrap techniques for inferring $\truesignal$ in high dimensions. This negative observation, with hindsight, is  not surprising because   identifiability issues for parameter estimation  arise in high dimensions, while   prediction tasks are free of such issues. \cite{buhlmann2002analyzing} established that bagging acts as a smoothing operation for hard decision problems. In contrast, we analyze bagged least square interpolators, showing how bagging can stabilize the variance and benefit generalization properties.

\subsection*{Paper overview}
The rest of this paper proceeds as follows. The first introduce  notation that is used throughout the paper.  Section~\ref{sec:pre} presents the proposed bagged estimator, related definitions, and standing assumptions.    In Section~\ref{sec:isotropic}, we study the out-of-sample prediction risks in the context of isotropic features. Section~\ref{sec:correlated} delves into the analysis of correlated features. We extends the results to the deterministic signal case, characterize the training errors and adversarial risks in Section \ref{sec:extension}. Finally, Section~\ref{sec:Discus} concludes the paper with discussions. %and discusses future directions. 
All proofs are collected in the supplementary material.

\paragraph{Notation}
 We  use $c$ and $C$ to denote generic constants which may change from line to line. For a sequence $a_n$, $a_n\rightarrow a^-$ or $a_n\searrow a$ denotes that $a_n$ goes to $a$ from the left side, while $a_n\rightarrow a^+$ or $a_n\nearrow a$ denotes that $a_n$ goes to $a$ from the right side. 
For a vector $u$ and any $p\geq 1$, $\|u\|_p$ denotes its $p$-th norm.
For a matrix $A$, we use  $A^+$ to denote its Moore-Penrose pseudoinverse,  $\| A \|_{2} $ to denote its spectral norm, $\|A\|_\rF$ to denote its Frobenius norm, and $\tr(A)$ to denote its trace. 
For a sequence of random variables $\{X_n\}$, we use $X_n\overset{\as}{\rightarrow} X$ to denote that $X_n$ converges almost surely to $X$, and $X_n \rightsquigarrow  X$ to denote that $X_n$ converges in distribution to $X$.

%We shall note that the Frobenius norm coincides with the $\ell_2$ norm when $\Theta$ is a vector. 

\iffalse
For $f(x_n, y_n)$,  we shall use the following simplified notation \scomment{this is confusing? }
\$
\lim_{x_n\rightarrow\infty \atop y_n\rightarrow \infty} f(x_n, y_n ) &= \lim_{ y_n\rightarrow \infty} \lim_{x_n\rightarrow\infty} f(x_n, y_n )  , ~~~ 
\lim_{y_n\rightarrow\infty \atop x_n\rightarrow \infty} f(x_n, y_n )  = \lim_{ x_n\rightarrow \infty} \lim_{y_n\rightarrow\infty}f(x_n, y_n ) . 
\$
\fi

%%%%%%%%%%%%%%%%%%%%%%%%%%%%%%%%%%%%%%%%%%
%%%%%%%%%%%%%Preliminaries%%%%%%%%%%%%%%%%
%%%%%%%%%%%%%%%%%%%%%%%%%%%%%%%%%%%%%%%%%%

\section{Bagged linear interpolators}\label{sec:pre}

This section introduces the  multiplier bootstrap procedure and the associated bagged ridgeless least square estimator,  formally defines the out-of-sample prediction risk, and  states standing assumptions.

%%%%%%%%%%%%%%%%%%%%%%%%%%%%%%%
%%%%%Data model and risk%%%%%%%
%%%%%%%%%%%%%%%%%%%%%%%%%%%%%%

\subsection{Bagged linear interpolators}

We introduce the multiplier-bootstrap-based  bagged ridgeless least square estimator. Recall the training dataset $\cD := \{(x_i, y_i)\in \RR^\ndim \times \RR: 1\leq i \leq n  \}$. Let $\cW_k=\{\weight_{k,1}, \ldots, w_{k,\ndata}\}$ be $\ndata$ non-negative multipliers. Multiplying each summand of $L_n = \sum_{i= 1}^\ndata (y_i - x_i b)^2$ by each multiplier in $\cW_k$ and summing them up, we obtain the bootstrapped empirical loss
\$
L_n^{(k)}(b) := \sum_{i= 1}^\ndata w_{k,i}(y_i - x_i\transp b)^2. 
\$
We  then calculate the bootstrap  estimator $\hat\truesignal_k$ as the min-norm solution that minimizes the above  loss
\#\label{eq:multi_loss}
\estimatork  &= \argmin \Big\{\|b\|_2: ~b~\text{minimizes}~  L_n^{(k)}(b) = \sum_{i=1}^n \weight_{k,i} \|y_i-x_i^\T b\|_2^2\Big\}.  
\#
This procedure is repeated  $\nresample$ times to obtain a sequence of estimators $\big\{ \hat\beta_k:\, 1\le k \le B \big\}$.  The bagged least square estimator, or simply the bagged estimator, is obtained by  simply averaging these $B$ estimators as
\begin{equation}\label{eq:Bestimator} 
    \estimator = \frac{1}{\nresample}\sum_{k=1}^\nresample \estimatork.
\end{equation}

%Given the training dataset $\cD=(X, Y)$, the multiplier bootstrap generates $B$ subsamples $\cD_k$, each referred to as a bootstrap sample. 

%where  $\cW_k=\{\weight_{k,1}, \ldots, w_{k,n}\}$ are the same non-negative multipliers as above. 

The above bagged estimator $\estimator$ can be formulated as an average of sketched ridgeless least square estimators. To see this, it suffices to show  each individual  estimator $\estimatork$ is a sketched estimator.  Let $\sketchmat_k$ be the sketching matrix such that
\begin{equation}\label{eq:sketching}
\sketchmat_k= 
\begin{bmatrix}
\sqrt{\weight_{k,1}} & 0 & \ldots  & 0 \\
0 & \sqrt{\weight_{k,2}}  & \ldots  &0 \\
\vdots & \vdots & \ddots  &\vdots \\
0& 0 & \ldots    & \sqrt{\weight_{k,n}}
\end{bmatrix}.
\end{equation} 
Then the sketched dataset $\cD_k=(S_kY, S_kX)$ corresponds to the $k$-th bootstrap sample in the multiplier bootstrap, and  the $k$-th individual bootstrap estimator $\estimatork$ coincides  with sketched ridgeless least square estimator fitted on $\cD_k$:  
\begin{align}
    \estimatork
     &= \argmin \Big\{\|b\|_2: ~b~\text{minimizes}~ \sum_{i=1}^n \weight_{k,i} \|y_i-x_i^\T b\|_2^2\Big\}  \nn \\
    &= \argmin \left\{\|b\|_2: ~b~\text{minimizes}~ \left\| \sketchmat_k Y - \sketchmat_k Xb   \right\|_2^2   \right\} \nn \\
     &= \left((\sketchmat_k  X)\transp \sketchmat_k X\right)\pinv X^\top \sketchmat_k\transp \sketchmat_k Y.  \label{eq:sketched}
\end{align}

%Thus the bagged estimator is an average of $B$ sketched estimators. 

Finally, our multiplier bootstrap framework encompasses the classical bootstrap with replacement as a specific case. In the classical bootstrap, each bootstrap sample $\cD_k$ is generated by independently and uniformly sampling from $\cD$ with replacement. This sampling method permits the possibility of individual observations being repeated within $\cD_{k}$. When the sample size $n$ is sufficiently large, $\cD_{k}$ is expected to contain approximately $1 - 1/e \approx 63.2\%$ distinct examples from $\cD$. In this case, each bootstrap sample $\cD_k$ corresponds to the sketched dataset $(\sketchmat_kX, \sketchmat_k Y)$, where the sketching matrix $\sketchmat_k$ is composed of multipliers $(\weight_{k,1}, \ldots, w_{k,\ndata}) \sim {\rm Multinomial}(\ndata; p_1,\ldots, p_\ndata)$ with $p_i= 1/n$ for $1\leq i \leq n$ as the diagonal entries.

%\wcomment{$p_{i} = \frac{1}{\ndata}$?} 
%while the remaining portion consists of duplicates.

%%%%%%%%%%%%%%%%%%%%%%%%%%%%%%%%%%%%%%%%%%%%%%%%
%%%%%%%%%%%%Risk, bias, and variance%%%%%%%%%%%%
%%%%%%%%%%%%%%%%%%%%%%%%%%%%%%%%%%%%%%%%%%%%%%%%

\subsection{Risk, bias, and variance}

%In this paper, we focus on investigating the out-of-sample (OS) prediction risk, or simply the risk, with respect to the training data $X$ and the multiplier matrix $\cW$. Given a test point $x_0 \sim P_x$ that is independent of the training data and multiplier matrix, we define the OS prediction risk of an estimator $\hat{\truesignal}$ as follows:

Let us consider a test data point $x_{\new}\sim P_x$, which is independent of both the training data and the multipliers $\{\cW_k: 1\leq k \leq B\}$. To measure the generalization performance, we consider the following out-of-sample prediction risk,  also referred to as the prediction risk or simply risk: 
\begin{equation}\label{eq:riskcon}
\riskcondition(\estimator) 
= \EE\left[\left(x_\new \transp \estimator - x_\new \transp \truesignal\right)^2 \condition \variablecondition\right]
=   \EE\left[ \left\| \estimator - \truesignal\right\|_\covmat^2 \condition \variablecondition\right], 
\end{equation}
where $\|x\|_\Sigma^2 = x^\T\Sigma x$, and the conditional expectation is taken with respect to the noises $\{ \varepsilon_{i}\}_{1 \leq i \leq \ndata}$ and the test point $x_\new $. We have the following bias-variance decomposition  
\begin{align*}
\riskcondition(\estimator) &=  \big\| \EE\left(\estimator \condition \variablecondition \right) -\truesignal\big\|_\covmat^2 + \tr\left[ \cov\left(\estimator \condition \variablecondition \right)\Sigma\right]\\
&= \biascondition(\estimator) + \varcondition(\estimator),
\end{align*}
where {$\|x\|_\Sigma^2 = x^\T\Sigma x$}. 

%$\tr(A)$ represents the trace of $A$.

Our next result provides expressions for the bias and variance of the bagged least square estimator~\eqref{eq:Bestimator}. In the subsequent sections, we  will characterize the out-of-sample prediction risks by analyzing  the bias and variance terms respectively.

%\scomment{do we need assumption \ref{Assume:Covdistri}} %and Assumption~\ref{Assume:Covdistri}, 

\begin{lemma}[Bias-variance decomposition]\label{lm:biasvar}
Under model~\eqref{eq:lm} with $\cov(x)=\Sigma$, 
the bias and variance of the bagged linear regression estimator~$\estimator$ are
\begin{align}
    &\biascondition\left(\estimator\right) = \frac{1}{\nresample^{2}} \sum_{k,\ell} \truesignal\transp \projmat_{k} \covmat \projmat_{\ell} \truesignal , \label{eq:bias}\\
    &\varcondition(\estimator)=\frac{\noiselev}{\nresample^{2}} \sum_{k,\ell} \frac{1}{n^{2}} \tr\left( \hcovmat_{k}\pinv X\transp \sketchmat_{k}^2 \sketchmat_{\ell}^2 X \hcovmat_{\ell}\pinv \covmat \right), \label{eq:var}
\end{align}
where $\hcovmat_{k} = X\transp \sketchmat_{k}\transp \sketchmat_{k} X/n$ is the sketched sample covariance matrix, and $\projmat_{k}=I-\hcovmat_{k}\pinv \hcovmat_{k}$ is the projection matrix onto the null space of $S_kX$. 
\end{lemma}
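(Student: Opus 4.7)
The plan is to compute the conditional expectation and conditional covariance of $\estimator$ given $(\variablecondition)$ in closed form, using the explicit sketched representation~\eqref{eq:sketched} for each $\estimatork$, then substitute into the bias--variance decomposition.

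First I would write each individual estimator in a form that separates signal from noise. Since $S_k$ is diagonal (symmetric), $S_k\transp S_k=S_k^2$, and \eqref{eq:sketched} gives
\[
\estimatork \;=\; \hcovmat_k\pinv \, \tfrac{1}{n}X\transp S_k^2 Y
\;=\; \hcovmat_k\pinv\hcovmat_k\,\truesignal \;+\; \hcovmat_k\pinv\, \tfrac{1}{n}X\transp S_k^2 E.
\]
The key algebraic identity is $\hcovmat_k\pinv \hcovmat_k = I-\projmat_k$, which follows from the definition $\projmat_k = I - \hcovmat_k\pinv \hcovmat_k$ and the fact that this matrix is the orthogonal projector onto $\ker(\hcovmat_k)=\ker(S_kX)$. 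Hence
\[
\estimatork - \truesignal \;=\; -\,\projmat_k\,\truesignal \;+\; \hcovmat_k\pinv\,\tfrac{1}{n}X\transp S_k^2 E.
\]

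Next I would take the conditional expectation given $(\variablecondition)$. Because the noise $E$ has mean zero and is independent of $X,\cW$, the noise term vanishes in expectation, yielding
\[
\EE[\estimator \mid \variablecondition] - \truesignal \;=\; -\,\tfrac{1}{\nresample}\sum_{k=1}^{\nresample} \projmat_k \truesignal.
\]
Squaring in the $\covmat$-norm and using the symmetry of $\projmat_k$ delivers
\[
\biascondition(\estimator) \;=\; \bigl\|\tfrac{1}{\nresample}\sum_k \projmat_k\truesignal\bigr\|_\covmat^2 \;=\; \tfrac{1}{\nresample^2}\sum_{k,\ell}\truesignal\transp \projmat_k \covmat \projmat_\ell \truesignal,
\]
which is \eqref{eq:bias}.

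For the variance term, I would subtract the conditional mean and obtain
\[
\estimator - \EE[\estimator\mid \variablecondition] \;=\; \tfrac{1}{\nresample}\sum_{k=1}^{\nresample} \hcovmat_k\pinv\,\tfrac{1}{n}X\transp S_k^2 E.
\]
Since $\cov(E\mid \variablecondition)=\noiselev I$, computing the conditional covariance gives
\[
\cov(\estimator\mid \variablecondition) \;=\; \tfrac{\noiselev}{\nresample^2 n^2}\sum_{k,\ell} \hcovmat_k\pinv X\transp S_k^2 S_\ell^2 X \hcovmat_\ell\pinv,
\]
and taking $\tr(\cdot\,\covmat)$ gives \eqref{eq:var}.

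The only nontrivial step is the identity $\hcovmat_k\pinv\hcovmat_k = I-\projmat_k$ for possibly rank-deficient $\hcovmat_k$ in the overparameterized regime; this is the main point requiring care, and it follows from standard Moore--Penrose properties together with $\ker(\hcovmat_k)=\ker(S_kX)$. Everything else is linear algebra on the explicit forms above.
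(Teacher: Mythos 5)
Your proposal is correct and follows essentially the same route as the paper: write each $\estimatork$ in the signal-plus-noise form $\hcovmat_k\pinv\hcovmat_k\truesignal + \hcovmat_k\pinv \tfrac{1}{n}X\transp S_k^2 E$, take the conditional mean to get the bias as $\tfrac{1}{\nresample^2}\sum_{k,\ell}\truesignal\transp\projmat_k\covmat\projmat_\ell\truesignal$, and compute $\tr[\cov(\cdot)\covmat]$ with $\cov(E)=\noiselev I$ to get the variance. Note that the identity $\hcovmat_k\pinv\hcovmat_k = I-\projmat_k$ you flag as the delicate step is immediate from the paper's definition $\projmat_k = I-\hcovmat_k\pinv\hcovmat_k$, so no extra care is actually needed there.
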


%Assuming Assumption~\ref{Assume:beta4+mom}, then we have

%In the rest of the paper, we  will characterize the out-of-sample prediction risks by analyzing  the bias and variance terms respectively. % to investigate the out-of-sample prediction risks separately. 

%Specifically, we first analyze the asymptotic behaviors of the bias and variance terms by considering a single subsample, focusing on the terms where $i = j$. We then proceed to investigate cross-term ($i \neq j$) generated by Aggregating through different subsamples. These analyses are conducted under the high-dimensional regime, as defined in Assumption~\ref{Assume:highdim}.

\subsection{Standing assumptions}

This subsection collects standing assumptions. %Our first two assumptions are on the proportional asymptotic regime, and  moment and covariance conditions. 

\begin{assumption}[Proportional asymptotic regime]\label{Assume:highdim}
Assume  $\npinfty$ such that  $\ndim/\ndata \rightarrow \aspratio\in (0, + \infty)$. 
\end{assumption}

% for some $\aspratio \in (0, + \infty)$. This dimensionality-to-sample-size ratio $\aspratio$ is also referred to as the aspect ratio in the random matrix literature. %\scomment{give me a reference} 

\begin{assumption}[Moment and covariance conditions]\label{Assume:Covdistri}
Assume that the feature vector $x$ can be written as $x = \Sigma^{1/2} z$, where $z \in \mathbb{R}^{\ndim}$ has \iid~entries, each with a zero mean, unit variance, and a bounded $(8+\epsilon)$-th moment. Moreover,  the eigenvalues of $\Sigma$ are bounded away from zero and infinity, i.e., $0 < c_\lambda \leq \eigval_{\min}(\Sigma) < \eigval_{\max}(\Sigma) \leq C_\lambda < +\infty$  where $c_\lambda$ and $C_\lambda$ are constants. The empirical spectral distribution $F_{\Sigma}$ of $\Sigma$ converges weakly to a probability  measure $\limitdistr$ as $\npinfty$.
\end{assumption}

\iffalse
\begin{assumption}[Covariance matrix]\label{Assume:Covmatrix}
Let $\Sigma_d \in \mathbb{R}^{\ndim \times \ndim}$ denote the population covariance matrix of $x$. We assume the eigenvalues of $\Sigma_d$ are bounded, i.e., $0 < c < \eigval_{\min}(\Sigma_d) < \eigval_{\max}(\Sigma_d) < C < +\infty$, where $c$ and $C$ are constants. Furthermore, we assume that the empirical spectral distribution $F_{\Sigma_d}$ of $\Sigma_d$ converges weakly to a measure $\limitdistr$ as $\npinfty$.
\end{assumption}

\scolor{
We require a slightly stronger condition of a bounded $8+\epsilon$-th moment for each entry of $z$ compared to the $4+\epsilon$-th moment requirement by~\cite{hastie2019surprises}. The requirement of a bounded $8+\epsilon$-th moment in Assumption~\ref{Assume:Covdistri} is necessary to utilize the result concerning the uniform concentration of quadratic forms, as illustrated in Lemma 1 of \cite{ledoit2011eigenvectors}, to establish results on the Stieltjes transforms.  
}
\fi

Assumption \ref{Assume:highdim} specifies the proportional asymptotic regime and is frequently  adopted by recent literature on exact risk characterizations \citep{hastie2019surprises, mei2022generalization, chen2023sketched}.  Assumption~\ref{Assume:Covdistri} specifies the covariance structure and moment conditions. We require a bounded $(8+\epsilon)$-th moment for each entry of $z_i$, which is slightly stronger than the $(4+\epsilon)$-th moment condition  by \cite{hastie2019surprises}. This is because we need to derive a uniform concentration inequality on quadratic forms  to establish results on the Stieltjes transforms; see Lemma \ref{lm:concentrationontrace}. % \scomment{why do we need uniform concentration for Stieltjes transforms? Others do not need uniform concentration results?}  
Our next assumption is on the multipliers.

%Assumption~\ref{Assume:beta4+mom} specifies a random 

%eliminates the interaction between the signal and covariance structure, simplifying the analysis of the role of bagging. We note that the random signal assumption is not necessary in the isotropic case where $\covmat = I$. Furthermore, we extend our analysis to the deterministic signal $\truesignal$. This extension allows us to establish an equivalence between bagging and ridgeless regression in Corollary~\ref{cor:bootstrapequiv}.

%%%%%%%%%%%%%%%%%%%%%%%%%%%%%%%%%%%%%%%%%%%%%%%%%%%%
%%%%%%%%%%%%%%%%%%%Multipliers%%%%%%%%%%%%%%%%%%%%%%
%%%%%%%%%%%%%%%%%%%%%%%%%%%%%%%%%%%%%%%%%%%%%%%%%%%%

%lower bound for the nonzero values of $w_{k,i}$. 

% $(\weight_{1,1}, \weight_{2,1}), \dots, (\weight_{1,\ndata}, \weight_{2,\ndata})$

\begin{assumption}[Multipliers]\label{Assume:multip}
Assume the multipliers $\{\cW_k: 1\leq k \leq B\}$ are non-negative, independent of the training dataset $(X, Y)$, and the non-zero multipliers are bounded away from zero, i.e., there exists a positive constant $c_w$ such that
\begin{equation*}
\PP\left(w_{k,i}\in \{0\}\cup [c_w, +\infty)\right) = 1, ~\text{for all}~1\leq k \leq B,\,  1\leq i \leq n. 
\end{equation*} 
Moreover, the empirical measure of the multipliers $\cW_k$ %$\cW_k=\{\weight_{k,1}, \ldots, w_{k,n}\}$ 
converges weakly to some probability measure $\limitdistrweight$ as $\ndata\rightarrow \infty$ almost surely\footnote{We say $\mu_n$ converges weakly to $\mu$ almost surely if, for each continuous bounded function $g$,  $\lim \int g \mu_n(dx) = \int g \mu(dx)$ for $\mu$-almost all $x$.}. We refer to $\sampleratio := 1- \limitdistrweight(\{0\})>0$ as the {downsampling ratio}. Additionally, assume  $\cW_1, \ldots, \cW_B$ are asymptotically pairwise independent, aka the joint empirical measure of any two distinct  sets of multipliers, i.e., any $\cW_k, \cW_\ell$ with $k\ne \ell$, converges weakly to the probability measure $\limitdistrweight \otimes \limitdistrweight$ almost surely.
\end{assumption}

If $\{\cW_k, 1\leq k \leq B\}$ are independently and identically distributed, then they are also pairwise independent. In addition to requiring the multipliers to be pairwise independent, non-negative, and independent of the training data, Assumption~\ref{Assume:multip} follows a similar spirit as Assumption \ref{Assume:Covdistri} by assuming that the empirical measure of the multipliers converges weakly to a limiting probability measure $\limitdistrweight$ almost surely. The limiting measure $\limitdistrweight$ consists of point masses either at $0$ or in an interval bounded away from $0$. The downsampling ratio $\sampleratio= 1- \limitdistrweight({0})$ quantifies the long-term proportion of nonzero multipliers, reflecting the fraction of training samples picked up by each bootstrap sample. As $\aspratio/\sampleratio$ corresponds to the aspect ratio within each bootstrap sample, we shall refer to  the underparameterized and overparameterized regimes as $\aspratio/\sampleratio < 1$ and $\aspratio/\sampleratio > 1$ respectively.  Before presenting some  examples of the multiplier bootstrap procedures that satisfy  Assumption \ref{Assume:multip}, we need the following lemma. 

\begin{lemma}\label{lm:limitdistrMultinomial}
Let $\tildelimitdistrweight$ be the empirical measure of $(\weight_{k,1}, \ldots, w_{k,\ndata})\sim {\rm Multinomial}(\ndata; 1/\ndata,\ldots, 1/\ndata)$ and $\poisson(1)$ be the Poisson distribution with parameter $1$.  Then, {almost surely}, we have
\begin{equation*}
{\tildelimitdistrweight} \rightsquigarrow \poisson(1),~ \text{as}~ \npinfty. 
\end{equation*}
\end{lemma}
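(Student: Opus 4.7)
The plan is to fix a test point $j \in \mathbb{N}$ and reduce the almost sure weak convergence to a countable collection of pointwise a.s.\ statements about $\tilde\mu_w(\{j\})$, each of which will be established by concentration. I would begin by representing the multinomial draw through the standard balls-and-bins construction: let $u_1,\ldots,u_n$ be i.i.d.\ Uniform on $\{1,\ldots,n\}$ and set $w_i = \sum_{\ell=1}^n \mathbb{1}\{u_\ell = i\}$, so that $(w_1,\ldots,w_n) \sim {\rm Multinomial}(n;1/n,\ldots,1/n)$. This makes each statistic of the bins an explicit symmetric function of the independent bin-assignments $u_\ell$, which is exactly what concentration inequalities like to see.

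Next, write $\tilde\mu_w(\{j\}) = N_j/n$ with $N_j = \sum_{i=1}^n \mathbb{1}\{w_i = j\}$. For the expectation, $\mathbb{E}[N_j]/n = \mathbb{P}(w_1 = j) = \binom{n}{j}(1/n)^j(1-1/n)^{n-j}$, which converges to $e^{-1}/j!$ by the standard Binomial-to-Poisson limit. For concentration I would invoke McDiarmid's bounded-differences inequality on the $u_\ell$'s: changing one $u_\ell$ modifies exactly two of the $w_i$'s (decrementing the old bin by $1$, incrementing the new one by $1$), and therefore alters $N_j$ by at most $2$. McDiarmid then yields
\[
\mathbb{P}\!\left(\bigl|N_j/n - \mathbb{E}N_j/n\bigr| \geq \varepsilon\right) \leq 2\exp(-\varepsilon^2 n/2),
\]
which is summable in $n$ for every fixed $\varepsilon > 0$.

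Borel--Cantelli now gives $\tilde\mu_w(\{j\}) \to e^{-1}/j!$ almost surely for each fixed $j \in \mathbb{N}$. Taking the countable intersection over all $j$ produces a single probability-one event on which $\tilde\mu_w(\{j\}) \to e^{-1}/j!$ holds simultaneously for every $j$. To upgrade pointwise convergence of probability mass functions to weak convergence, I would note that both $\tilde\mu_w$ and $\poisson(1)$ are probability measures supported on $\mathbb{N}$ and $\sum_{j \geq 0} e^{-1}/j! = 1$; Scheffé's lemma (applied with counting measure as reference) then immediately yields total variation convergence, which is stronger than weak convergence.

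The one step requiring genuine care is the concentration bound, since the coordinates $w_i$ are dependent (they sum to $n$), so a naive second-moment Chebyshev bound with $O(1/n)$ variance would fail to be summable. McDiarmid on the i.i.d.\ latent bin-assignments $u_\ell$ sidesteps this difficulty cleanly and gives exponential tails; an alternative route via an explicit fourth-moment computation on $N_j$ would also work but is considerably more tedious. Everything else in the argument is routine.
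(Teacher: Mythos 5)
Your proof is correct, but it follows a genuinely different route from the paper's. The paper's argument is a de-Poissonization: it takes $\pi_1,\ldots,\pi_n$ i.i.d.\ $\text{Poisson}(1)$ with sum $\Pi_n$, uses that $(\pi_1,\ldots,\pi_n)\,|\,\Pi_n=n$ has exactly the multinomial law in question, bounds the conditional deviation probability by the unconditional one divided by $\Pr(\Pi_n=n)\sim(2\pi n)^{-1/2}$, controls the unconditional deviation via a fourth-moment (Burkholder-type) bound of order $O(n^{-2})$, obtains $O(n^{-3/2})$, and applies Borel--Cantelli; because it works directly with an arbitrary bounded test function $f$, weak convergence drops out immediately. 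Your approach instead linearizes the dependence by representing the multinomial counts as occupancy numbers of i.i.d.\ bin-assignments, invokes McDiarmid's bounded-differences inequality on those latent variables to get exponential (rather than merely polynomial) tails for the empirical pmf at each fixed $j$, then upgrades the resulting a.s.\ pointwise pmf convergence to total variation (and hence weak) convergence via Scheff\'e. The paper's route is shorter once one knows the conditional-multinomial identity, and treats general test functions in one stroke; yours avoids the Poissonization lemma and the local-limit estimate for $\Pr(\Pi_n=n)$, gives quantitatively stronger concentration, and is self-contained modulo McDiarmid and Scheff\'e -- at the price of the extra step reducing weak convergence to pmf convergence. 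Both Borel--Cantelli applications are sound, and your observation that a bare Chebyshev bound on $N_j/n$ would give non-summable $O(1/n)$ tails is correct and is exactly the obstacle that either McDiarmid or the paper's fourth-moment bound is there to overcome.
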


%%%%%%%%%%%%%%%%%%%%%%%%%%%%%%%%%%%%%
%%%%%%%%%%%%%%Examples%%%%%%%%%%%%%%%
%%%%%%%%%%%%%%%%%%%%%%%%%%%%%%%%%%%%%

\begin{example}[Classical bootstrap with replacement~\citep{Efron1979}]
The classical bootstrap with replacement is equivalent to the multiplier bootstrap with 
\iid~$\cW_1, \ldots, \cW_B$ such that $\cW_k = (\weight_{k,1}, \ldots, w_{k,\ndata}) \sim {\rm Multinomial}(\ndata; 1/\ndata,\ldots, 1/\ndata)$. %and $p_1=\ldots = p_n=1/\ndata$. 
Then Lemma \ref{lm:limitdistrMultinomial} indicates that the empirical measure of the multipliers $(\weight_{k,1}, \ldots, w_{k,\ndata})$ converges weakly to a Poisson distribution with parameter $1$ almost surely.  Consequently, the downsampling ratio is $\sampleratio = 1 - 1/e\approx 0.632>0 $.
\end{example}

\begin{example}[Bernoulli bootstrap]
The Bernoulli bootstrap samples \iid~ multipliers $w_{i,j}$ from the Bernoulli distribution with success probability $p$. In this case, the downsampling ratio is  $\sampleratio = p$. 
\end{example}

\begin{example}[Jackknife~\citep{Jackknife}]
The jackknife method samples multipliers $\{w_{k,j}\}$ such that exactly one multiplier equals zero, while all the others  equal one. In this case, the multipliers $\{\cW_k, 1\leq k \leq B\}$ are pairwise independent and  the downsampling ratio $\sampleratio$ equals $1$.
\end{example}

Lastly, we assume that the true signal vector is isotropic. 

\begin{assumption}[Random signal]\label{Assume:beta4+mom}
The true signal $\truesignal$ is a random vector with  i.i.d. entries,  $\EE[\beta] =0$, $\EE\big[d\beta_j^2\big]=r^2$, and  $\EE\big[ |\beta_j|^{4+\eta}\big]\leq C$ for some $\eta>0$ and $C<\infty$.  Moreover, the random $\beta$ is independent of the training data $(X, E)$ and the multipliers $\cW_{k}, 1 \leq k \leq \nresample$. 
\end{assumption}

%We shall write  $\signallev:= \|\truesignal\|_{2}^{2} $. %We further assume that the true signal is independent of the training data ${ X,Y }$ and the multipliers $\cW_{k}, 1 \leq k \leq \nresample$. \scomment{revise this assumption?} %where $\signallev > 0$ is a constant. 

%We first focus on the the case of random $\beta$ case in Assumption \ref{Assume:beta4+mom}, where $\beta$ follows an isotropic  distribution, allowing for clear presentation of the exact risk results. The assumption of random $\beta$ is commonly adopted in the literature \citep{dobriban2018,li2021asymptotic}. We also consider the deterministic $\beta$ in Section \ref{sec:correlated},  where the interaction between $\beta$ and $\Sigma$ needs to be taken into account.  With Assumption \ref{Assume:beta4+mom},  we first present a simplified Lemma \ref{lm:biasvar}, which will be used in the following sections. 

We first focus on the case of a random $\beta$ as specified in Assumption \ref{Assume:beta4+mom}, where $\beta$ follows an isotropic distribution. This  assumption facilitates  a clear presentation of the exact risk results. Such an assumption is commonly adopted  in the literature \citep{dobriban2018,li2021asymptotic}. We also consider the case of a deterministic $\beta$ in Section \ref{sec:extension}, where the interplay between $\beta$ and $\Sigma$ needs to be taken into account. Under Assumption \ref{Assume:beta4+mom}, we present the following simplified version of Lemma \ref{lm:biasvar}. %which will be used in forthcoming sections.

%\|\Sigma\|_2\leq C<\infty$ for some positive constant $C$, 

\begin{lemma}\label{lm:biasvar-beta}
Assume model~\eqref{eq:lm} with $\cov(x)=\Sigma$ and  Assumption \ref{Assume:beta4+mom}. If $\Sigma$ has bounded eigenvalues,  
then the bias of the bagged linear regression estimator~$\estimator$ is %satisfies
\begin{align}
    \lim_{\npinfty} \biascondition\left(\estimator\right) = \lim_{\npinfty}\frac{\signallev}{\nresample^{2}} \sum_{k,\ell} \frac{1}{\ndim} \tr\left( \projmat_{k} \covmat \projmat_{\ell} \right) \qas \label{eq:bias-beta}
\end{align}
The variance is the  same as in  Lemma \ref{lm:biasvar}. 
\end{lemma}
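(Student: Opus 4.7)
}
Starting from the bias expression \eqref{eq:bias} in Lemma \ref{lm:biasvar}, the plan is to replace each quadratic form $\truesignal^\top \projmat_k \covmat \projmat_\ell \truesignal$ by its conditional mean with respect to $\beta$, namely $(r^2/d)\tr(\projmat_k \covmat \projmat_\ell)$, and control the deviation through a concentration-of-quadratic-forms argument using the moment condition in Assumption~\ref{Assume:beta4+mom}. Since the sum over $(k,\ell)$ involves only $B^2$ terms and $B$ is fixed, it suffices to prove the claim pairwise.

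First, I would fix $(k,\ell)$, symmetrize by setting $M_{k\ell} = \tfrac{1}{2}(\projmat_k \covmat \projmat_\ell + \projmat_\ell \covmat \projmat_k)$, and condition on the $\sigma$-field $\mathcal{F} = \sigma(X, \cW_1, \ldots, \cW_B, E)$, under which $M_{k\ell}$ is deterministic. Since $\|\projmat_k\|_2 \le 1$ and $\|\covmat\|_2 \le C_\lambda$ by Assumption~\ref{Assume:Covdistri}, we have $\|M_{k\ell}\|_2 \le C_\lambda$ and therefore $\|M_{k\ell}\|_\rF^2 \le d\, C_\lambda^2$. By independence of $\truesignal$ from $\mathcal{F}$ (Assumption~\ref{Assume:beta4+mom}) and the i.i.d.\ zero-mean structure with $\EE[d\beta_j^2] = r^2$,
\begin{equation*}
\EE\!\left[ \truesignal^\top M_{k\ell} \truesignal \,\big|\, \mathcal{F}\right] = \frac{r^2}{d}\tr(M_{k\ell}) = \frac{r^2}{d}\tr(\projmat_k \covmat \projmat_\ell).
\end{equation*}

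The main step is to invoke a concentration bound for quadratic forms of the Bai--Silverstein type: for i.i.d.\ entries with bounded $(4+\eta)$-th moment, there exists a constant $C_\eta$ such that
\begin{equation*}
\EE\!\left[\left| \truesignal^\top M_{k\ell} \truesignal - \tfrac{r^2}{d}\tr(M_{k\ell})\right|^{2+\eta/2} \,\Big|\, \mathcal{F}\right] \le C_\eta\, (r^2/d)^{2+\eta/2}\, \|M_{k\ell}\|_\rF^{2+\eta/2} \le C'_\eta\, d^{-1-\eta/4}.
\end{equation*}
Combined with a Markov inequality and Borel--Cantelli along the proportional sequence $n,d\to\infty$ with $d/n\to\gamma$, this yields
\begin{equation*}
\truesignal^\top \projmat_k \covmat \projmat_\ell \truesignal - \tfrac{r^2}{d}\tr(\projmat_k \covmat \projmat_\ell) \overset{\as}{\longrightarrow} 0,
\end{equation*}
conditionally on $\mathcal{F}$, and hence unconditionally by Fubini. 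Summing the $B^2$ such differences, dividing by $B^2$, and noting that each $\tfrac{1}{d}\tr(\projmat_k \covmat \projmat_\ell)$ is uniformly bounded, produces \eqref{eq:bias-beta}. Since the variance formula \eqref{eq:var} does not involve $\truesignal$, it is inherited verbatim from Lemma~\ref{lm:biasvar}.

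The main obstacle I anticipate is verifying that the concentration bound on quadratic forms is available under the relatively weak $(4+\eta)$-th moment hypothesis on $\beta_j$ (the classical Bai--Silverstein lemma is typically stated for trace normalizations, so some care is needed to formulate it for non-symmetric $\projmat_k \covmat \projmat_\ell$ via symmetrization, and to ensure the summability required for Borel--Cantelli along the joint sequence $(n,d)$). Everything else reduces to exploiting uniform boundedness of the eigenvalues of $\covmat$ together with $\|\projmat_k\|_2 \le 1$.
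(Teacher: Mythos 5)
Your proposal is correct and follows essentially the same route as the paper: reduce to the bias formula from Lemma~\ref{lm:biasvar}, observe that $\projmat_k\covmat\projmat_\ell$ has uniformly bounded operator norm, and invoke concentration of the quadratic form $\truesignal^\transp \projmat_k\covmat\projmat_\ell\truesignal$ around its trace normalization (the paper cites Lemma~\ref{lm:quadconcentration}, i.e.\ Lemma~C.3 of Dobriban--Wager, as a black box, whereas you re-derive that concentration from a Burkholder-type moment bound together with Markov and Borel--Cantelli). Your explicit symmetrization $M_{k\ell} = \frac{1}{2}(\projmat_k\covmat\projmat_\ell + \projmat_\ell\covmat\projmat_k)$ is actually a small but genuine improvement, since Lemma~\ref{lm:quadconcentration} is stated for symmetric $A_\ndim$ while $\projmat_k\covmat\projmat_\ell$ is not symmetric for $k\neq\ell$, a point the paper's one-line proof passes over in silence.
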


\section{A warm-up: Isotropic features}\label{sec:isotropic}

As a warm-up, this section studies the case of isotropic features where the covariance matrix is an identity matrix $\covmat=I$. The investigation of the correlated case will be postponed to Section \ref{sec:correlated}. We present first the limiting risk of the sketched min-norm least square estimator  $\hat\beta^1$, aka $\estimator$ with $B=1$, and then the risk of the bagged min-norm least square estimator $\estimator$. These risk characterizations  shed light on how bagging stabilizes and thus improves the generalization performance.

%%%%%%%%%%%%%%%%%%%%%%%%%%%%%%%%%%%%%%%%
%%%%%%%%%%%%%Sketching?%%%%%%%%%%%%%%%%%
%%%%%%%%%%%%%%%%%%%%%%%%%%%%%%%%%%%%%%%%

\subsection{Sketching shifts the interpolation threshold}

This subsection studies the risk of  $\hat\beta^1$ when the sketching matrix, in the form of \eqref{eq:sketching}, is diagonal and mostly singular.  While \cite{chen2023sketched} also explored the exact risks of sketched ridgeless least square estimators, they required the sketching matrices to be full rank, which distinguishes it from our work.  In order to characterize the underparametrized variance, we need the following lemma.

\begin{lemma}\label{lemma:unique_sketching_iso}
 Assume Assumptions \ref{Assume:highdim}-\ref{Assume:multip} and $\covmat = I$.  Suppose  $\aspratio > \sampleratio$.  Then the following equation  has a unique positive solution $c_0:= c_0(\aspratio, \limitdistrweight)$ with respect to $x$,
\begin{equation}\label{eq:uniqueMPlawsketching}
    \int \frac{1}{1+xt} \limitdistrweight(dt) = 1-\aspratio.
\end{equation}
\end{lemma}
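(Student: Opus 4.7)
My strategy is to analyze the real-valued function
\[
g(x) := \int \frac{1}{1 + x t} \, \mu_w(dt), \qquad x \in [0, \infty),
\]
and show that it is continuous, strictly monotone, and has a cleanly computable range, after which the claim will follow from the intermediate value theorem.

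First I would establish continuity of $g$ on $[0, \infty)$ by the dominated convergence theorem: the integrand is continuous in $x$ and uniformly bounded by $1$, which is integrable against the probability measure $\mu_w$. Differentiation under the integral, justified on any compact subset of $(0,\infty)$ by the uniform bound $t/(1+xt)^2 \leq 1/(c_w x^2)$ valid for $t \in \supp(\mu_w) \setminus \{0\} \subseteq [c_w,\infty)$ (and the integrand being $0$ at $t=0$), then gives
\[
g'(x) = -\int \frac{t}{(1 + xt)^2} \, \mu_w(dt).
\]
By Assumption~\ref{Assume:multip}, $\mu_w$ places mass $\theta > 0$ on $[c_w, \infty)$, so the integrand is strictly positive on a set of positive $\mu_w$-measure. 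Hence $g'(x) < 0$ for every $x > 0$, and $g$ is strictly decreasing on $[0, \infty)$.

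Next I would compute the boundary values. At $x = 0$, $g(0) = \int 1 \, \mu_w(dt) = 1$ since $\mu_w$ is a probability measure. As $x \to \infty$, the integrand $(1+xt)^{-1}$ converges pointwise to $\mathbf{1}_{\{t = 0\}}$ while staying dominated by $1$, so dominated convergence yields $\lim_{x \to \infty} g(x) = \mu_w(\{0\}) = 1 - \theta$. Combining continuity, strict monotonicity, and these limits, $g$ is a continuous strict bijection from $(0, \infty)$ onto the open interval $(1 - \theta, 1)$.

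The intermediate value theorem then delivers a unique positive solution $c_0$ to $g(c_0) = 1 - \gamma$, provided the target value $1 - \gamma$ lies strictly in $(1 - \theta, 1)$, with uniqueness inherited from strict monotonicity. The main---and essentially only---subtlety is the strict decrease, which hinges on the positivity of $\theta$ in Assumption~\ref{Assume:multip} (equivalently, $\mu_w$ is not a Dirac mass at $0$); all remaining steps are routine applications of dominated convergence and the intermediate value theorem.
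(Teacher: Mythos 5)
Your proof is correct, and it takes a genuinely different route from the paper's. The paper (Lemma \ref{lm:uniqueMPlawsketching} in the appendix) obtains \emph{existence} by exhibiting the solution explicitly as $\gamma \stieltjes_{1}(0)$, where $\stieltjes_{1}(0)$ is the $z\to 0^-$ limit of the Stieltjes transform of the limiting spectral law of the sketched sample covariance (Lemma \ref{lm:MPlawsketching}), and then argues \emph{uniqueness} by essentially the same monotonicity-plus-endpoints consideration you use. Your argument dispenses with the random-matrix input entirely: continuity and strict decrease of $g(x)=\int (1+xt)^{-1}\mu_w(dt)$ (your differentiation under the integral is fine, though strict pointwise monotonicity in $x$ together with $\mu_w((0,\infty))=\theta>0$ would already suffice), the endpoint values $g(0)=1$ and $\lim_{x\to\infty}g(x)=\mu_w(\{0\})=1-\theta$, and the intermediate value theorem. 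What the paper's route buys is the identification $c_0=\gamma \stieltjes_{1}(0)$, which is what is actually used downstream in the variance computation for the sketched estimator; what your route buys is a shorter, self-contained proof and, incidentally, a more careful endpoint computation --- the paper asserts $\lim_{x\to+\infty} f(x)=0$, whereas the correct limit is $1-\theta$, and it is precisely this value that makes the comparison with $1-\gamma$ delicate.

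One point you should make explicit: your argument shows the equation is solvable in $(0,\infty)$ exactly when $1-\gamma\in(1-\theta,1)$, i.e.\ $0<\gamma<\theta$. Under the hypothesis as printed in the statement ($\gamma>\theta$) there is no positive solution at all, since then either $1-\gamma<1-\theta=\inf_{x>0}g(x)$ or $1-\gamma\le 0$. The printed inequality is evidently a typo: the appendix version of the lemma assumes $\gamma<\theta$, and $c_0$ is only used in the underparameterized regime $\gamma/\theta<1$ of Theorem \ref{thm:isosketching} and in Corollary \ref{cor:isointerpolating}. So your proviso is the right one; just state that you are proving the lemma under the intended condition $\gamma<\theta$ rather than leaving it as a conditional remark at the end.
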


The above lemma establishes the existence and uniqueness of a positive  solution to  equation \eqref{eq:uniqueMPlawsketching}. Equations of this type are known as  self-consistent equations \citep{bai2010}, and are fundamental in caldulating the exact risks. The solutions to self-consistent equations  are fundamental in calculating the exact risks. They do not generally admit  closed form solutions but can be solved numerically.   Our next result characterizes the limiting risk, as well as the limiting bias and variance, of $\hat\beta^1$. Both the variance and risk in the underparameterized regime depend on the solution to equation \eqref{eq:uniqueMPlawsketching}.

%Our next result characterizes  the limiting risk of $\hat\beta^1$ which depends on the solution to equation \eqref{eq:uniqueMPlawsketching}.  %as well as the limiting bias and variance of $\hat\beta^1$. 

\begin{theorem}[Sketching under isotropic features]\label{thm:isosketching}
 Assume  Assumptions~\ref{Assume:highdim}-\ref{Assume:beta4+mom} and $\covmat = I$. 
The out-of-sample prediction risk of $\hat\beta^{1}$ satisfies %, almost surely, 
\#\label{eq:risk_sketching_iso}
\lim_{\npinfty} \riskcondition (\hat\beta^{1})
&=
\begin{cases}
\sigma^2\left(\frac{\aspratio}{1-\aspratio - f(\aspratio)} - 1 \right),& \aspratio/\sampleratio < 1\\
\signallev \frac{\aspratio/\sampleratio - 1 }{\aspratio/\sampleratio} + \noiselev \frac{1}{\aspratio/ \sampleratio - 1}, & \aspratio/\sampleratio > 1
\end{cases} \qas
\#
where 
$
f(\aspratio) = \int \frac{1}{(1+ c_0 t )^{2}} \limitdistrweight(dt),
$ 
and the constant $c_0$ is the same as in Lemma \ref{lemma:unique_sketching_iso}. 
Specifically, the bias and variance satisfy 
\$
%\lim_{\npinfty} 
\biascondition (\hat\beta^{1})
& \overset{{\rm a.s.}}{\rightarrow}
\begin{cases}
0,& \aspratio/\sampleratio < 1  \\
\signallev \frac{\aspratio/\sampleratio - 1 }{\aspratio/\sampleratio},& \aspratio/\sampleratio > 1
\end{cases},
%\lim_{\npinfty} 
\quad
\varcondition (\hat\beta^{1})
 \overset{\as}{\rightarrow}
\begin{cases}
\sigma^2\left(\frac{\aspratio}{1-\aspratio - f(\aspratio)} - 1 \right), & \aspratio/\sampleratio < 1  \\
\noiselev \frac{1}{\aspratio/ \sampleratio - 1}, & \aspratio/\sampleratio > 1
\end{cases}. 
\$
\end{theorem}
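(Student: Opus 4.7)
The plan is to apply Lemma~\ref{lm:biasvar-beta} with $\nresample=1$ and $\covmat=I$ to obtain
\begin{align*}
\biascondition(\estimatorsketch) = \frac{\signallev}{\ndim}\tr(\projmat_{1}),\qquad
\varcondition(\estimatorsketch) = \frac{\noiselev}{\ndata}\tr\!\bigl(\hcovmat_{1}\pinv\, T\, \hcovmat_{1}\pinv\bigr),
\end{align*}
where $T := n^{-1}X^{\top}\sketchmat_{1}^{4}X$ and I have used $\projmat_{1}^{2}=\projmat_{1}$. The bias terms reduce to a rank count: almost surely $\rank(\sketchmat_{1}X)=\min\{\ndim,\#\{i:w_{1,i}>0\}\}$ and $\#\{i:w_{1,i}>0\}/\ndata\to\sampleratio$ by Assumption~\ref{Assume:multip}, so in the regime $\aspratio/\sampleratio<1$ the matrix $\hcovmat_{1}$ is eventually invertible, $\projmat_{1}=0$, and the bias vanishes; in the regime $\aspratio/\sampleratio>1$, $\tr(\projmat_{1})/\ndim\to 1-\sampleratio/\aspratio = (\aspratio/\sampleratio-1)/(\aspratio/\sampleratio)$, matching the stated bias in each case.

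For the variance in the overparameterized regime I would exploit the fact that, once $\#\{i:w_{1,i}>0\}<\ndim$, the weighted loss $\sum_{i}w_{1,i}(y_{i}-x_{i}^{\top}b)^{2}$ attains its infimum (namely zero) on the affine set $\{b:x_{i}^{\top}b=y_{i}\text{ for every }i\text{ with }w_{1,i}>0\}$, and the minimum-$\ell_{2}$ element of this set does not depend on the \emph{values} of the positive $w_{1,i}$. Hence $\estimatorsketch$ coincides almost surely with the ordinary min-norm ridgeless interpolator built from the subsample $\{(x_{i},y_{i}):w_{1,i}>0\}$, which has effective aspect ratio $\aspratio/\sampleratio>1$. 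Applying the classical isotropic-feature formula of \cite{hastie2019surprises} at this effective ratio immediately yields the variance limit $\noiselev/(\aspratio/\sampleratio-1)$, simultaneously verifying the claimed bias $\signallev(\aspratio/\sampleratio-1)/(\aspratio/\sampleratio)$.

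The underparameterized variance is the main technical step. Introduce a ridge parameter $\lambda>0$ and write
\begin{align*}
V_{\lambda}:=\frac{\noiselev}{\ndata}\tr\!\bigl((\hcovmat_{1}+\lambda I)^{-1}\,T\,(\hcovmat_{1}+\lambda I)^{-1}\bigr)
= -\frac{\noiselev}{\ndata}\,\frac{\partial}{\partial \lambda}\tr\!\bigl(T(\hcovmat_{1}+\lambda I)^{-1}\bigr),
\end{align*}
so that $\lim_{\lambda\to 0^{+}}V_{\lambda}$ equals the target since the smallest eigenvalue of $\hcovmat_{1}$ stays bounded below. By rotational invariance of isotropic features in the Gaussian case (together with an approximation using Lemma~\ref{lm:concentrationontrace} for the general moment assumption in Assumption~\ref{Assume:Covdistri}), the normalized trace and quadratic forms of $(\hcovmat_{1}+\lambda I)^{-1}$ concentrate to their isotropic deterministic equivalent $\alpha(\lambda)I$, where $\phi(\lambda):=\aspratio\,\alpha(\lambda)$ solves the self-consistent equation $\int(1+\phi w)^{-1}\limitdistrweight(dw)=1-\aspratio+\lambda\phi$, which at $\lambda=0$ recovers Lemma~\ref{lemma:unique_sketching_iso} with $c_{0}=\phi(0^{+})$. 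Combining this with the uniform leave-one-out/Sherman--Morrison estimate $x_{i}^{\top}(\hcovmat_{1}+\lambda I)^{-1}x_{i}\to\ndim\,\alpha(\lambda)/(1+w_{1,i}\phi(\lambda))$ yields $\tr(T(\hcovmat_{1}+\lambda I)^{-1})\to\ndim(\bar w/\aspratio-1/\phi(\lambda)+\lambda/\aspratio)$ with $\bar w:=\int w\,\limitdistrweight(dw)$. Differentiating in $\lambda$ and using the implicit-function identity $\phi'(0^{+})=-c_{0}^{2}/(1-\aspratio-f(\aspratio))$, itself obtained by differentiating the self-consistent equation and invoking the algebraic identity $f(\aspratio)+c_{0}\int w/(1+c_{0}w)^{2}\,\limitdistrweight(dw)=1-\aspratio$, produces the asserted $\noiselev[\aspratio/(1-\aspratio-f(\aspratio))-1]$.

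The principal obstacle is the degeneracy of the multipliers: standard Silverstein/Marchenko--Pastur theory requires weights bounded away from $0$, yet $\limitdistrweight$ carries mass $1-\sampleratio$ at $0$. A clean remedy is to discard zero-weight rows outright, since they contribute nothing to either $\hcovmat_{1}$ or $T$; the analysis then reduces to a weighted sample covariance on $\sim\sampleratio\ndata$ samples with weight distribution $\limitdistrweight|_{(0,\infty)}/\sampleratio$ supported on $[c_{w},\infty)$ by Assumption~\ref{Assume:multip}. One must verify that the self-consistent equation translates consistently between the pairs $(\aspratio,\limitdistrweight)$ and $(\aspratio/\sampleratio,\limitdistrweight|_{(0,\infty)}/\sampleratio)$, so that Lemma~\ref{lemma:unique_sketching_iso} is preserved under the reduction, before invoking the standard RMT machinery.
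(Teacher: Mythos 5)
Your proposal is correct, and in two of the three pieces it takes a genuinely more elementary route than the paper.

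For the bias you reduce to a rank count: with $\covmat=I$ the quantity $\frac{\signallev}{\ndim}\tr(\projmat_1\covmat\projmat_1)=\frac{\signallev}{\ndim}\tr(\projmat_1)=\signallev\bigl(1-\rank(\hcovmat_1)/\ndim\bigr)$, which is $0$ when $\aspratio/\sampleratio<1$ and tends to $\signallev(1-\sampleratio/\aspratio)$ when $\aspratio/\sampleratio>1$ once $|A_0|/\ndata\to\sampleratio$ and the Bai--Yin bound guarantees that the subsampled Gram matrix has full row rank. The paper instead pushes this through the Stieltjes-transform machinery (Lemma~\ref{lm:sketchoverbias} of the appendix); your rank argument is shorter and is exactly what that machinery is secretly computing in the isotropic case. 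For the overparameterized variance you observe that the min-norm solution to the weighted problem depends only on the support of the multipliers, so $\estimatorsketch$ is literally the ordinary interpolator on the $\sim\sampleratio\ndata$ retained rows with effective aspect ratio $\aspratio/\sampleratio$; this is precisely the content of the paper's Lemma~\ref{lm:Pseudo_products}, after which you invoke the known isotropic risk formula directly, while the paper again re-derives it via $\compstieltjes(0),\compstieltjes'(0)$ in Lemma~\ref{lm:sketchovervar}. For the underparameterized variance your ridge-$\lambda$ differentiation plus the identity $f(\aspratio)+c_0\int w/(1+c_0w)^2\,\limitdistrweight(dw)=1-\aspratio$ reproduces exactly the calculation in the paper's Lemma~\ref{lm:sketchundervar}, just reparametrized: the paper differentiates the self-consistent equation in $z$ at $z=0^{-}$ whereas you differentiate in $\lambda$ at $\lambda=0^{+}$, and both need the Vitali/Moore--Osgood step to justify interchanging the derivative and the $\npinfty$ limit, which you gesture at but should state explicitly. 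Your final remark about degenerate multipliers is a real point but is already absorbed into the paper's hypotheses: Assumption~\ref{Assume:multip} forces $\limitdistrweight$ to be supported on $\{0\}\cup[c_w,\infty)$ and Lemma~\ref{lm:MPlawsketching} is stated precisely for that situation, so no separate reduction to $\limitdistrweight|_{(0,\infty)}/\sampleratio$ is required. The main structural difference is that the paper proves Theorem~\ref{thm:isosketching} as the $\covmat=I$, $\limitdistr=\delta_1$ specialization of Theorem~\ref{thm:corsketching}, which buys generality to correlated features at the cost of the heavier Stieltjes apparatus, whereas your argument exploits isotropy to keep the bias and the overparameterized variance almost free.
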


We first compare the limiting risk of the sketched min-norm  estimator $\hat\beta^1$  with that of the min-norm estimator  $\ridgeless$ under isotropic features.  The latter's risk is given by \cite{hastie2019surprises}:
\#\label{eq:risk_ridgeless}
\lim_{\npinfty} R_{X, \, \truesignal} (\ridgeless)
\ =
\begin{cases}
\noiselev \frac{\aspratio}{1-\aspratio},& \aspratio < 1  \\
\signallev \frac{\aspratio - 1}{\aspratio} + \noiselev \frac{1}{\aspratio - 1},& \aspratio > 1
\end{cases}\qas
\#
where $R_{X, \, \truesignal}$ is the same as $\riskcondition$ but without conditioning on $\cW$. Comparing these two risks, we observes  that, in the overparameterized regime,  sketching  modifies the risk by modifying the aspect ratio from $\gamma$ to  $\gamma/\theta$, and shifts the interpolation threshold from $\aspratio = 1$ to $\gamma/\theta = 1$. However, despite these modifications, the risk still explodes  as $\gamma/\theta$ approaches the interpolation threshold from the right side, i.e., as $\aspratio/\sampleratio \searrow 1$. This observation concurs with the findings by \cite{chen2023sketched}, who focused on orthogonal and \iid~sketching. %and implies that sketching serves as a dual of increasing the model size. 

In the underparameterized regime however, the limiting risk of $\hat\beta^1$ relies on the limiting distribution of the multipliers $\limitdistrweight$ and is, therefore, different for different multipliers. Our next result demonstrates that the  risk becomes unbounded as ${\gamma/\theta\nearrow 1}$, aka $\gamma/\theta$  approaches 1 from the left side. This, in conjunction with the earlier discussion, indicates that the risk of $\hat\beta^1$ explodes when  $\aspratio/\sampleratio = 1$ from either side.

\begin{corollary}\label{cor:isointerpolating}
For any probability measure $\limitdistrweight$ satisfying Assumption~\ref{Assume:multip} and sampling ratio $\aspratio < \sampleratio \leq 1$, the function $f(\aspratio)$ satisfies
$
\lim_{\aspratio/\sampleratio  \nearrow 1} f(\aspratio) =  1 - \aspratio. 
$
Consequently, the asymptotic risk of $\hat\beta^{1}$ explodes when $\aspratio/\sampleratio  \nearrow 1$, i.e., 
$
\lim_{\aspratio/\sampleratio \nearrow 1}\lim_{\npinfty} R_{X,\cW}(\hat\beta^1) = +\infty.
$
\end{corollary}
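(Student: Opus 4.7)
The plan is to exploit the self-consistent equation defining $c_0$ together with a dominated-convergence argument. First I would introduce $g(x) := \int (1+xt)^{-1} \limitdistrweight(dt)$ and observe, using Assumption~\ref{Assume:multip}, that $g$ is continuous and strictly decreasing on $[0,\infty)$ with $g(0)=1$ and $\lim_{x\to\infty} g(x) = \limitdistrweight(\{0\}) = 1-\sampleratio$, the latter by dominated convergence (the integrand is bounded by $1$ and vanishes pointwise on the support away from $0$). Since the self-consistent equation from Lemma~\ref{lemma:unique_sketching_iso} reads $g(c_0)=1-\aspratio$, driving $\aspratio/\sampleratio\nearrow 1$ pushes the right-hand side down to $1-\sampleratio$, which by monotonicity forces $c_0=c_0(\aspratio,\limitdistrweight)\to+\infty$.

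Next I would compute the limit of $f$ by exploiting the elementary identity $\tfrac{1}{1+s}-\tfrac{1}{(1+s)^{2}}=\tfrac{s}{(1+s)^{2}}$ to rewrite
\[
1-\aspratio-f(\aspratio) \;=\; \int \frac{c_0 t}{(1+c_0 t)^{2}}\,\limitdistrweight(dt).
\]
The atom of $\limitdistrweight$ at $0$ contributes nothing, and on the remainder of the support, which lies in $[c_w,+\infty)$ by Assumption~\ref{Assume:multip}, we have the uniform bound $\tfrac{c_0 t}{(1+c_0 t)^{2}}\le \tfrac{1}{c_0 t}\le \tfrac{1}{c_0 c_w}$. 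Hence
\[
0 \;\le\; 1-\aspratio-f(\aspratio) \;\le\; \frac{\sampleratio}{c_0\,c_w} \;\longrightarrow\; 0
\]
as $c_0\to\infty$, which is the precise sense in which $\lim_{\aspratio/\sampleratio\nearrow 1}f(\aspratio)=1-\aspratio$.

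For the explosion of the risk I would substitute this into the underparametrized formula of Theorem~\ref{thm:isosketching},
\[
\lim_{\npinfty}\riskcondition(\hat\beta^{1}) \;=\; \noiselev\Bigl(\frac{\aspratio}{1-\aspratio-f(\aspratio)}-1\Bigr),
\]
and observe that as $\aspratio/\sampleratio\nearrow 1$ the numerator approaches $\sampleratio>0$ while the denominator vanishes to $0^{+}$ by the previous step, forcing the ratio to $+\infty$. The main technical point is the combined use of monotonicity to force $c_0\to\infty$ together with the uniform bound $1/(c_0 c_w)$ over $[c_w,\infty)$; both steps depend crucially on the lower gap $c_w$ in Assumption~\ref{Assume:multip}, without which mass of $\limitdistrweight$ concentrating near $0^{+}$ could prevent the remainder integral from vanishing and break the explosion argument.
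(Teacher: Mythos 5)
Your proof is correct, but it takes a more roundabout route than the paper's. You first establish $c_0\to+\infty$ via monotonicity of $g$, and then bound the remainder
\[
1-\aspratio-f(\aspratio)=\int \frac{c_0 t}{(1+c_0 t)^2}\,\limitdistrweight(dt)
\le \frac{\sampleratio}{c_0\,c_w},
\]
using the lower gap $c_w$. The paper's proof is shorter and does not need either ingredient: it splits off the atom at $0$ to get $\int \tfrac{1}{1+c_0 t}\,1(t\neq 0)\,\limitdistrweight(dt)=\sampleratio-\aspratio$, and then simply observes that on $\{t\ge 0\}$ one has $\tfrac{1}{(1+c_0 t)^2}\le \tfrac{1}{1+c_0 t}$, so the corresponding term in $f(\aspratio)$ is squeezed between $0$ and $\sampleratio-\aspratio\to 0$. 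That argument works for any non-negative multipliers and never invokes $c_w>0$ or $c_0\to\infty$, while yours gives an explicit rate $O(1/c_0)$ at the cost of those extra inputs. Both are valid, and the risk-explosion conclusion from Theorem~\ref{thm:isosketching} is then identical.

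One small correction: your closing remark that ``both steps depend crucially on the lower gap $c_w$, without which mass of $\limitdistrweight$ concentrating near $0^{+}$ could prevent the remainder integral from vanishing'' overstates the role of $c_w$ for this corollary. As the paper's argument shows, $\frac{1}{(1+c_0 t)^2}\le\frac{1}{1+c_0 t}$ holds for all $t\ge 0$, and the self-consistent equation already forces the right-hand integral over $\{t\neq 0\}$ to equal $\sampleratio-\aspratio\to 0$, with no need for the non-zero multipliers to be bounded away from $0$. So $c_w$ is needed for your particular bound, but not for the result.
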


Since the risk of $\hat\beta^1$ in the underparameterized regime depends on the limiting distribution of the multipliers, we consider three different types of multipliers and compute the risks of the associated sketched estimators.
Specifically, we consider Jackknife multipliers, Bernoulli multipliers, and multinomial multipliers, where the multinomial multipliers  correspond to the classical bootstrap with replacement.  We will refer to the corresponding sketching methods as Jackknife sketching, Bernoulli sketching, and multinomial sketching, respectively.

\begin{figure}[t!]
\centering
\subfigure{
    \includegraphics[width=.45\linewidth]{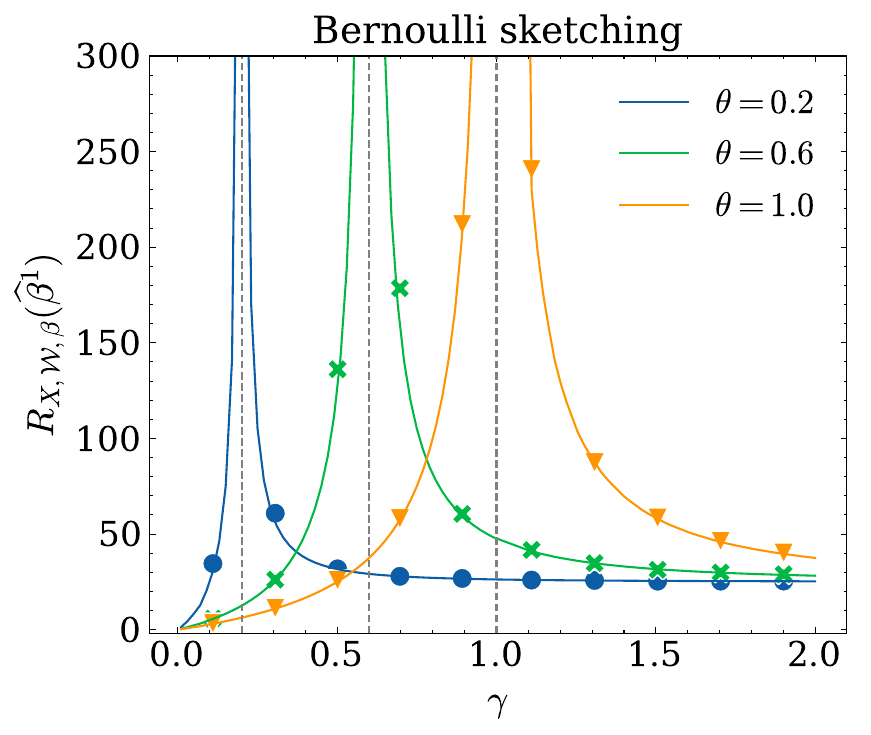}
}
% \hspace{-2em}
\subfigure{
    \includegraphics[width=.45\linewidth]{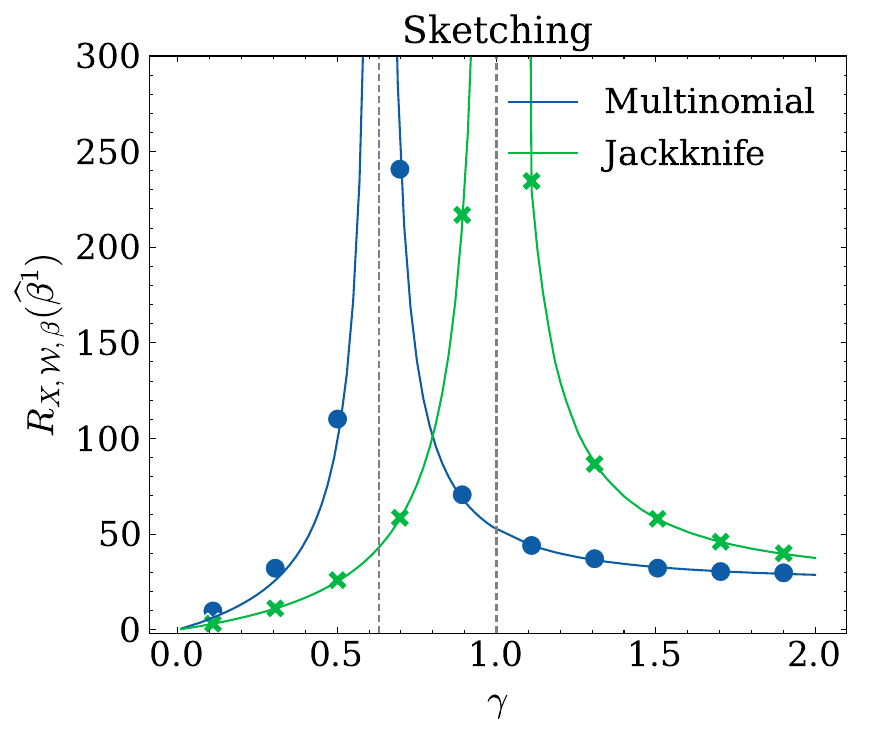}
}
    \caption{\small 
    Limiting risk curves for Bernoulli sketched estimators  (left panel) and multinomial and Jackknife sketched estimators (right panel) with $(r,\sigma)=(5,5)$. The features, errors, and $\beta$ are generated in the same way as in Figure \ref{fig:fig_1}.  {Left panel:} The blue, green, and yellow lines are theoretical risk curves for Bernoulli sketched estimators with $\theta= 0.2,\, 0.6,\, 1.0,$ respectively. 
    The blue dots, green crosses, and orange  triangles mark the corresponding finite-sample risks  with  $n=400$, $\gamma$ varying in $[0.1,10]$, and $d = [n \gamma]$.   {Right panel:} The blue and  green lines are theoretical risk curves for multinomial and Jackknife sketched estimators respectively. The blue dots and green crosses mark the corresponding finite-sample risks with the same setup as in the left panel. 
    }
     \label{fig:fig_2}
\end{figure}

%\scomment{ What is the limiting distribution $\limitdistweights$? Can we get a closed form for that? }

\begin{corollary}\label{cor:isoexample}
   Assume  Assumptions~\ref{Assume:highdim}-\ref{Assume:beta4+mom}, $\covmat = I$, and $\aspratio < \sampleratio \leq 1$.  Then the following holds.  
\begin{enumerate}[leftmargin=35pt]
    \item[(i)] The full-sample min-norm estimator $\ridgeless$ satisfies
    \begin{equation*}
    \lim_{\npinfty} \riskcondition (\ridgeless) = \noiselev \frac{\aspratio}{1-\aspratio}.
    \end{equation*}
        
    \item[(ii)] The Bernoulli sketched estimator $\hat\beta^1_{\Bern}$  with $\weight_{1,j} \overset{\iid}{\sim} \text{Bernoulli}(\sampleratio)$  satisfies
        \begin{equation*}
            \lim_{\npinfty} \riskcondition (\hat\beta^1_{\Bern}) = \noiselev \frac{\aspratio/ \sampleratio}{1-\aspratio/ \sampleratio}.
        \end{equation*}
        
    \item[(iii)] The multinomial sketched estimator   $\hat\beta^1_{\multi}$ with  
    \$
    (\weight_{1,1}, \ldots, w_{1,n}) \sim {\rm Multinomial}(n; 1/\ndata,\ldots, 1/\ndata)
    \$ 
    corresponds to the classical bootstrap with  replacement, and satisfies 
        \begin{equation*}
            \lim_{\npinfty} \riskcondition (\hat\beta^1_{\multi}) > \noiselev \frac{\aspratio/ (1 - 1/e)}{1-\aspratio/ (1 - 1/e)}.
        \end{equation*}
    \item[(iv)]    The Jackknife sketched estimator shares the same limiting risk in item (i).   
 \end{enumerate}
\end{corollary}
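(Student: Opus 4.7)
The strategy is to apply Theorem~\ref{thm:isosketching} after identifying the limiting multiplier distribution $\limitdistrweight$ in each case. For (i), (ii), and (iv), $\limitdistrweight$ is concentrated on at most two points, so the self-consistent equation~\eqref{eq:uniqueMPlawsketching} admits a closed-form solution for $c_0$, and direct substitution into \eqref{eq:risk_sketching_iso} yields the claimed formula.

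More concretely, for (i) the full-sample estimator corresponds to $\limitdistrweight=\delta_1$, which gives $c_0=\aspratio/(1-\aspratio)$, $f(\aspratio)=(1-\aspratio)^2$, and risk $\noiselev\aspratio/(1-\aspratio)$. For (ii), $\limitdistrweight=(1-\sampleratio)\delta_0+\sampleratio\delta_1$ gives $c_0=\aspratio/(\sampleratio-\aspratio)$, $f(\aspratio)=(1-\sampleratio)+(\sampleratio-\aspratio)^2/\sampleratio$, and routine algebra produces $\noiselev(\aspratio/\sampleratio)/(1-\aspratio/\sampleratio)$. For (iv), the Jackknife empirical multiplier distribution $(1/n)\delta_0+((n-1)/n)\delta_1$ converges weakly to $\delta_1$, so the limiting risk coincides with that in (i).

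For (iii), Lemma~\ref{lm:limitdistrMultinomial} yields $\limitdistrweight=\poisson(1)$ so that $\sampleratio=1-1/e$, and there is no closed-form expression for $c_0$. The key device is the substitution $U:=1/(1+c_0 W)$ with $W\sim\limitdistrweight$: then $U=1$ on $\{W=0\}$, $U\in(0,1)$ on $\{W>0\}$, and the self-consistent equation becomes $\EE[U\mid W>0]=(\sampleratio-\aspratio)/\sampleratio$. Using the algebraic identity $1/(1+c_0 t)^2=1/(1+c_0 t)-c_0 t/(1+c_0 t)^2$, one verifies that
\[
1-\aspratio-f(\aspratio)=c_0\int\frac{t}{(1+c_0 t)^2}\,d\limitdistrweight(t)=\sampleratio\bigl(\EE[U\mid W>0]-\EE[U^2\mid W>0]\bigr)=:H,
\]
so the risk in \eqref{eq:risk_sketching_iso} equals $\noiselev(\aspratio/H-1)$. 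Since the conditional law of $\poisson(1)$ on $\{W\geq 1\}$ is non-degenerate, strict Jensen's inequality yields $\EE[U^2\mid W>0]>(\EE[U\mid W>0])^2=((\sampleratio-\aspratio)/\sampleratio)^2$, whence $H<(\sampleratio-\aspratio)\aspratio/\sampleratio$ and the risk strictly exceeds $\noiselev\aspratio/(\sampleratio-\aspratio)$, which is precisely the Bernoulli risk in (ii) with $\sampleratio=1-1/e$.

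The only nontrivial step is (iii); the change of variable $U=1/(1+c_0 W)$ is what makes it tractable, reducing the strict inequality to an elementary variance-positivity statement on the conditional law of $U$ given $\{W>0\}$ without ever having to solve the Poisson self-consistent equation explicitly.
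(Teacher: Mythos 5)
Your proof is correct and follows the same overall strategy as the paper: specialize Theorem~\ref{thm:isosketching} by computing $c_0$ and $f(\aspratio)$ for each limiting multiplier distribution. Parts (i), (ii), and (iv) coincide exactly with the paper's computations. For part (iii) the paper bounds $f(\aspratio)$ from below by applying the Cauchy--Schwarz inequality (stated there as ``Hölder's inequality'') directly to the integrand $1/(1+c_0 t)$ restricted to $\{t \neq 0\}$; you instead introduce the auxiliary variable $U = 1/(1+c_0 W)$, use the algebraic identity $1/(1+c_0 t) - 1/(1+c_0 t)^2 = c_0 t/(1+c_0 t)^2$ to express $H := 1-\aspratio-f(\aspratio) = \sampleratio\bigl(\EE[U\mid W>0]-\EE[U^2\mid W>0]\bigr)$, and invoke strict Jensen's inequality on the conditional distribution. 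These two arguments are mathematically equivalent (Cauchy--Schwarz with indicator weight is exactly conditional variance positivity), but your reformulation buys two things: it packages the bound directly in terms of the quantity $H$ that enters the risk formula, avoiding the intermediate step of lower-bounding $f$ and then transferring that to the risk; and it makes the \emph{strict} inequality explicit via non-degeneracy of $U$ given $\{W>0\}$, whereas the paper's displayed chain only writes $\geq$ and leaves strictness implicit. Both are sound; yours is arguably the cleaner exposition of part (iii).
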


The corollary above confirms that taking different sketching matrices yields different limiting risks in the underparameterized regime,  while they agree in the overparameterized regime.  To provide a visual representation, Figure \ref{fig:fig_2} depicts the limiting risk curves as functions of $\gamma$, as well as the finite-sample risks for Bernoulli, multinomial, and Jackknife sketched estimators with $(r, \sigma) = (5, 5)$. The symbols (dots, crosses, triangles) indicate the finite-sample risks for $n=400$, $\gamma$ varying in $[0.1,10]$, and  $d = [n \gamma]$, whose values are averaged over 100 repetitions\footnote{In all following figures, we use the same setup for finite sample risks and omit these details.}. Notably, the downsampling ratios for multinomial and Jackknife sketching remain fixed at $1-1/e$ and $1$ respectively. However, in the case of Bernoulli sketching, the downsampling ratio $\sampleratio$, serves as a tuning parameter. This offers more flexibility. 

%Different downsampling ratios induce different effective aspect ratios and thus different interpolation thresholds. 
%On the other hand, the downsampling ratios for multinomial and Jackknife sketching remain fixed at $1-1/e$ and $1$, respectively, and can not be tuned.

\begin{figure}[t!]
\centering
\subfigure{
    \includegraphics[width=.45\linewidth]{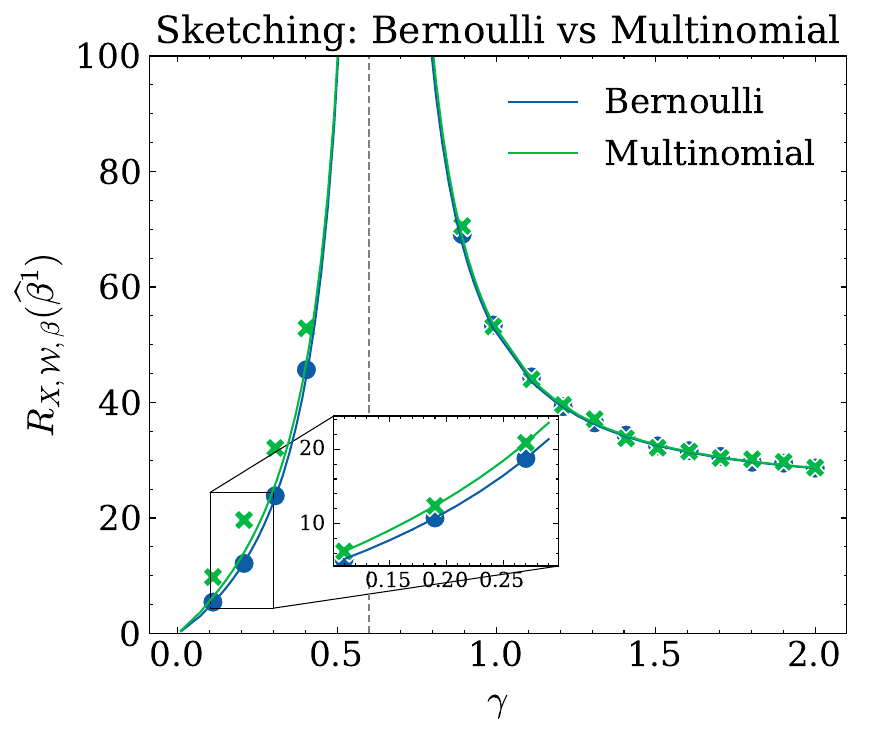}
}
% \hspace{-2em}
\subfigure{
    \includegraphics[width=.45\linewidth]{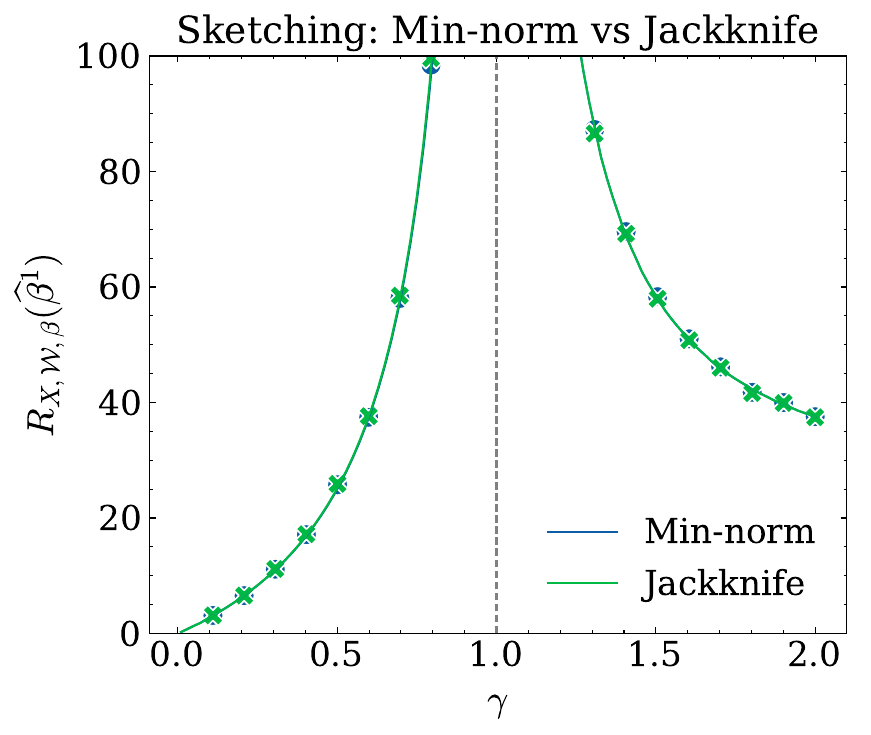}
}
\caption{\small 
Limiting risk curves for the full-sample min-norm estimator, and  Bernoulli,  multinomial, and  Jackknife sketched estimators with $(r, \sigma)=(5,5)$.    The features, errors, and $\beta$ are generated in the same way as in Figure \ref{fig:fig_1}.  {Left panel:} The blue and green lines  are theoretical risk curves for the Bernoulli sketched estimator with $\sampleratio = 1-1/e$ and the multinomial sketched estimator respectively. 
{Right panel:}  The blue and green lines  are theoretical risk curves for the full-sample min-norm and Jackknife sketched estimators respectively. 
In both panels, symbols mark the corresponding finite-sample risks in the same way as in Figure \ref{fig:fig_2}. 
}
\label{fig:fig_3}
\end{figure}

%The blue dots and green crosses mark the finite-sample risks with $n=400$, $\gamma$ varying in $[0.1,10]$, and $d = [n \gamma]$. 

%The blue dots and green crosses mark the corresponding  finite-sample risks with the same setup as in the left panel. 

Figure \ref{fig:fig_3} provides a comparison between the Bernoulli and multinomial sketched estimators as well as a comparison between the full-sample min-norm estimator and the Jackknife estimator.  These comparisons yield two observations. First, the Jackknife sketched estimator and the full-sample min-norm estimator exhibit identical limiting risks, aligning with the aforementioned corollary. Second and perhaps surprisingly, multinomial sketching results in a slightly worse limiting risk than the Bernoulli sketched estimator with $\sampleratio= 1 - 1/e$ in the underparameterized regime, while the two risks agree in the overparameterized regime. This aligns with Corollary \ref{cor:isoexample}: In the underparameterized regime, multinomial sketching leads to an increased limiting variance when compared with Bernoulli sketching. 
This naturally raises the following question:
\begin{quote}
\it What is the optimal sketching matrix among all   sketching matrices in the form of \eqref{eq:sketching}?
\end{quote}
We answer the question above by  leveraging  the variance formula in Theorem \ref{thm:isosketching}. Specifically, the following result establishes the optimality of the Bernoulli sketching, as it minimizes  the variance formula and thus the limiting risks, among all other sketching techniques considered.

\begin{corollary}\label{coro:optimal_sketching}
Taking Bernoulli multipliers with a downsampling ratio of $\sampleratio$, corresponding to the limiting probability measure $\limitdistrweight = (1 - \sampleratio)\delta_{0} + \sampleratio\delta_{1}$, minimizes the limiting risk of $\hat\beta^1$ in Theorem \ref{thm:isosketching}  among all choices of multipliers satisfying Assumption~\ref{Assume:multip} with $B=1$ and downsampling ratio $\sampleratio$. %and the fixed downsampling rate $\sampleratio$.
\end{corollary}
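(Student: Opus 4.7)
The plan is to exploit the monotonicity of the risk expression in Theorem~\ref{thm:isosketching} together with a Jensen's-inequality bound that is saturated exactly by point-mass distributions. In the overparameterized regime $\aspratio/\sampleratio > 1$, the limiting risk formula depends only on the downsampling ratio $\sampleratio$ and not on the remainder of $\limitdistrweight$, so any two choices of multipliers with the same $\sampleratio$ yield identical risks there. It therefore suffices to optimize in the underparameterized regime $\aspratio/\sampleratio < 1$, where the risk is $\sigma^2\bigl(\aspratio/(1-\aspratio-f(\aspratio)) - 1\bigr)$. Since this expression is strictly increasing in $f(\aspratio)$ on the admissible domain $1-\aspratio-f(\aspratio)>0$, minimizing the risk reduces to minimizing $f(\aspratio)=\int (1+c_0 t)^{-2}\,\limitdistrweight(dt)$ over $\limitdistrweight$ subject to $\limitdistrweight(\{0\})=1-\sampleratio$.

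The key step is to reparameterize $\limitdistrweight = (1-\sampleratio)\delta_0 + \sampleratio \nu$, where $\nu$ is a probability measure supported on $[c_w,\infty)$, plug this into the self-consistent equation~\eqref{eq:uniqueMPlawsketching}, and read off
\begin{equation*}
\int \frac{1}{1+c_0 t}\,\nu(dt) \;=\; 1 - \aspratio/\sampleratio.
\end{equation*}
Jensen's inequality applied to the convex function $x\mapsto x^2$ under the probability measure $\nu$ then yields
\begin{equation*}
\int \frac{1}{(1+c_0 t)^2}\,\nu(dt) \;\geq\; \bigl(1-\aspratio/\sampleratio\bigr)^2,
\end{equation*}
with equality if and only if $1/(1+c_0 t)$ is $\nu$-almost surely constant, i.e., if and only if $\nu$ is a point mass in $[c_w,\infty)$. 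Propagating this through the decomposition gives $f(\aspratio)\geq (1-\sampleratio)+\sampleratio(1-\aspratio/\sampleratio)^2$, a lower bound that depends only on $\aspratio$ and $\sampleratio$.

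Finally, I will verify that the Bernoulli choice $\nu=\delta_1$ attains equality by a direct computation: it gives $c_0 = \aspratio/(\sampleratio-\aspratio)$ and $f(\aspratio)=1-2\aspratio+\aspratio^2/\sampleratio$, so that $1-\aspratio-f(\aspratio)=\aspratio(1-\aspratio/\sampleratio)$ and the resulting risk is $\sigma^2(\aspratio/\sampleratio)/(1-\aspratio/\sampleratio)$, matching Corollary~\ref{cor:isoexample}(ii). The main conceptual subtlety to keep track of is that $c_0$ itself depends on $\nu$ through the self-consistent equation, but this dependence is inert here: the equation pins down exactly the $\nu$-first moment of $(1+c_0 t)^{-1}$, and that is precisely the quantity Jensen's inequality uses to lower-bound the $\nu$-second moment, so the bound is uniform over admissible $\nu$. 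Note that any point mass $\delta_{t^*}$ with $t^*\geq c_w$ would also attain the minimum, so the minimizer is not unique, but Bernoulli is the canonical representative within this family.
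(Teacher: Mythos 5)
Your proof is correct and takes essentially the same route as the paper: decompose $\limitdistrweight = (1-\sampleratio)\delta_0 + \sampleratio\nu$, use the self-consistent equation to pin the first moment $\int(1+c_0 t)^{-1}\nu(dt) = 1-\aspratio/\sampleratio$, and lower-bound the second moment via a convexity inequality (your Jensen's inequality on $x\mapsto x^2$ is exactly the Cauchy--Schwarz/H\"older step the paper uses). Your explicit remark that $c_0$'s dependence on $\nu$ is harmless because the constraint fixes precisely the $\nu$-mean of $(1+c_0t)^{-1}$, together with the direct Bernoulli verification, are useful clarifications that the paper leaves implicit; the paper's stated bound $f(\aspratio)\geq 1-\sampleratio+\sampleratio(1-\aspratio/\sampleratio)$ has a typographical omission of the square, which your version correctly supplies.
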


\begin{table}[]
\centering
\caption{Run time in seconds  for  computing the Bernoulli, orthogonal, and  multinomial sketched estimators  with $\theta=1-1/e\approx 0.632,\, \gamma=1.2$, repeated for 500 rounds. }
\label{tab:1}
\begin{tabular}{@{}cccc@{}}
\toprule
Sketching         & Bernoulli & Orthogonal & Multinomial \\ 
\midrule
$n=400$ & 7.10      & 8.92       & 7.43         \\
$n=600$ & 15.70     & 19.80      & 16.00       \\
$n=800$ & 27.50     & 37.60      & 29.70      \\
\bottomrule
\end{tabular}
\end{table}

\iffalse
The above result also holds for correlated features with a general covariance matrix \Sigma$. This is because the risk for the correlated case is the same as the risk for the isotorpic case  in the underparameterized case while the risks for both cases in the overparameterized regime are independent of the sampling distribution \limitdistrweight$; see Theorem \ref{thm:corsketching} in Section \ref{sec:correlated}. 
\cite{chen2023sketched} first showed that using the orthogonal sketching matrix is optimal among all general sketching matrices that are of full rank. 
Our result complements the results by \cite{chen2023sketched} where the sketching matrix is in the form of \eqref{eq:sketching} and is mostly  singular (as long as $\sampleratio<1$). 
Comparing with the results by \cite{chen2023sketched}, we find that Bernoulli  and orthogonal sketching methods yield exactly the same limiting risk and thus are identical to each other in terms of generalization performance. Computationally, however, Bernoulli sketching is  more  efficient than orthogonal sketching as  the former avoids matrix multiplication to obtain the sketched data. 
\fi

The above corollary holds  even for the case of correlated features. This universality stems from the fact that, in the underparameterized regime, the risks for one sketched estimator in the correlated and isotropic cases are identical, while the risks for both cases in the overparameterized regime are independent of the multiplier distribution $\limitdistrweight$; see Theorem \ref{thm:corsketching} in Section \ref{sec:correlated}. 
Recently, \cite{chen2023sketched} established the optimality of using an orthogonal sketching matrix among  sketching matrices that are of full rank. Our findings complement theirs by considering sketching matrices in the form of \eqref{eq:sketching}, which are predominantly singular.

Upon comparing our results to those by \cite{chen2023sketched}, it becomes clear that both the Bernoulli and orthogonal sketching yield identical limiting risks, thereby sharing the same generalization performance. However, computationally, the Bernoulli sketching is  more efficient than its orthogonal counterpart, primarily because the former avoids the necessity for multiplying the data matrix by a dense sketching matrix. Table \ref{tab:1} compares the run time for computing the Bernoulli,  orthogonal, and multinomial sketched estimators using  an Apple M1 CPU with data generated in the same way as in Figure \ref{fig:fig_1}, which shows that Bernoulli sketching is faster than all other sketching methods, especially when the sample size is large. When increasing the sample size from $n=400$ to $n=800$, the computational efficiency gain of the Bernoulli sketching over the othorgonal sketching improves from 26\% to 36\%.

\subsection{Bagging stabilizes the generalization performance}

This subsection studies the exact risk of the bagged min-norm least square  estimator $\estimator$ introduced in Section \ref{sec:pre}. Recall that our bagged estimator can be formulated as the average of the sketched min-norm least square estimators.

\begin{theorem}[Bagging under isotropic features]\label{thm:isoboostrap}
Assume Assumptions~\ref{Assume:highdim}-\ref{Assume:beta4+mom} and $\covmat = I$. Then the out-of-sample prediction risk of $\estimator$ satisfies 
\$
\lim_{\nresample \rightarrow + \infty} \lim_{\npinfty} \riskcondition(\estimator)
= \begin{cases}
                \noiselev \frac{\aspratio}{1-\aspratio},& \aspratio < \sampleratio  \\
                \signallev \frac{\left(\aspratio - \sampleratio \right)^2}{\aspratio \left(\aspratio - \sampleratio^{2} \right)} + \noiselev  \frac{\sampleratio^2}{\aspratio - \sampleratio^{2}},& \aspratio > \sampleratio
             \end{cases} \qas 
\$
Specifically, the bias and variance satisfy 
\$
%\lim_{\npinfty} 
\biascondition (\estimator)
& \overset{{\rm a.s.}}{\rightarrow}
\begin{cases}
0,& \aspratio/\sampleratio < 1  \\
\signallev \frac{\left(\aspratio - \sampleratio \right)^2}{\aspratio \left(\aspratio - \sampleratio^{2} \right)},& \aspratio/\sampleratio > 1
\end{cases}, 
%\lim_{\npinfty} 
\quad 
\varcondition (\estimator)
 \overset{\as}{\rightarrow} 
\begin{cases}
\sigma^2 \frac{\aspratio}{1-\aspratio}, & \aspratio/\sampleratio < 1  \\
\noiselev \frac{\sampleratio^2}{\aspratio -  \sampleratio^2}, & \aspratio/\sampleratio > 1
\end{cases}. 
\$
\end{theorem}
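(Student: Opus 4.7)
By Lemma~\ref{lm:biasvar-beta}, both the bias and variance of $\estimator$ are symmetric double sums over $k,\ell\in\{1,\ldots,B\}$. My strategy is to split each sum into a diagonal part ($k=\ell$) and an off-diagonal part ($k\neq\ell$). The diagonal part equals $B^{-1}$ times the bias or variance of the single sketched estimator $\hat\beta^{1}$ already computed in Theorem~\ref{thm:isosketching}, so sending $B\to\infty$ kills this contribution. The limit is therefore determined entirely by the cross terms $\frac{1}{d}\tr(\projmat_{1}\covmat\projmat_{2})$ and $\frac{1}{n^{2}}\tr(\hcovmat_{1}\pinv X\transp S_{1}^{2}S_{2}^{2}X\hcovmat_{2}\pinv \covmat)$, evaluated on a single pair of asymptotically independent multiplier draws using Assumption~\ref{Assume:multip}.

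\paragraph{Underparameterized regime $\gamma<\theta$.} In this regime $S_{k}X$ has at least $d$ nonzero rows almost surely, so $\hcovmat_{k}$ is invertible, $\projmat_{k}=0$, and the bias vanishes exactly for every $B$. For the variance, $\hcovmat_{k}\pinv = \hcovmat_{k}\inv$ and the cross term reduces to $\frac{1}{n^{2}}\tr(\hcovmat_{1}\inv X\transp S_{1}^{2}S_{2}^{2}X\hcovmat_{2}\inv)$. I would condition on $X$, use that the diagonal entries of $S_{1}^{2}$ and $S_{2}^{2}$ are asymptotically independent with identical mean $\int t\,\limitdistrweight(dt)$, and apply the trace-concentration statement of Lemma~\ref{lm:concentrationontrace} twice. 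The mean $\int t\,\limitdistrweight(dt)$ cancels against the normalization already inside $\hcovmat_{k}$, leaving an expression that depends only on $\theta$ and $\gamma$ and should simplify to $\sigma^{2}\gamma/(1-\gamma)$, i.e., the variance of the full-sample min-norm estimator.

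\paragraph{Overparameterized regime $\gamma>\theta$.} The bias cross term is now nontrivial. I would represent each projection as a regularized resolvent limit $\projmat_{k}=\lim_{\lambda\to 0^{+}}\lambda(\hcovmat_{k}+\lambda I)^{-1}$, turning the problem into the asymptotic behavior of a product of two resolvents of rescaled sample covariance matrices that share $X$ but are driven by independent multipliers. Using the Stieltjes-transform fixed-point equation behind Theorem~\ref{thm:isosketching} to replace each resolvent by its deterministic equivalent, and exploiting isotropy $\covmat=I$ to collapse the middle factor to a scalar, one then evaluates the product and passes to $\lambda\to 0^{+}$; algebraic simplification should yield $\signallev(\gamma-\theta)^{2}/[\gamma(\gamma-\theta^{2})]$. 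The variance cross term is handled analogously by writing $\hcovmat_{k}\pinv X\transp S_{k}^{2}$ via $(I-\projmat_{k})$ and the resolvent, producing $\noiselev\theta^{2}/(\gamma-\theta^{2})$.

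\paragraph{Main obstacle.} The cross-term analysis is the genuinely hard step: although $S_{1}$ and $S_{2}$ are asymptotically independent, $\projmat_{1}$ and $\projmat_{2}$ are coupled through their common dependence on $X$, so neither the standard one-sample Marchenko--Pastur argument nor a free-independence shortcut (which would incorrectly give $(\gamma-\theta)^{2}/\gamma^{2}$ instead of $(\gamma-\theta)^{2}/[\gamma(\gamma-\theta^{2})]$) applies directly. The technical workhorse will be a two-resolvent deterministic equivalent of the form $\frac{1}{d}\tr\bigl[(\hcovmat_{1}+\lambda_{1}I)^{-1}M(\hcovmat_{2}+\lambda_{2}I)^{-1}\bigr] \to h(\lambda_{1},\lambda_{2})$ for appropriate deterministic $M$, derived by first conditioning on $X$, applying multiplier-side trace concentration, and then feeding the resulting $X$-dependent scalar into a second round of Marchenko--Pastur-type analysis. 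The isotropy assumption $\covmat=I$ is what makes the intermediate $X$-dependence collapse to a scalar; extending the argument to the correlated setting in Section~\ref{sec:correlated} should require a genuinely more involved two-resolvent identity.
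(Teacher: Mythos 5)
Your proposal follows essentially the same route as the paper's own proof: the diagonal terms are killed at rate $1/\nresample$, the off-diagonal bias and variance cross terms are rewritten through resolvents of the two sketched covariance matrices, and the limits are extracted from a crossed two-resolvent deterministic equivalent driven by asymptotically independent multipliers sharing the same $X$ --- exactly the paper's Lemma~\ref{lm:isotrolimitresolvent}, whose self-consistent equation together with that for $\stieltjes_1(z)$ yields the closed forms $(\gamma-\theta)^2/[\gamma(\gamma-\theta^2)]$ and $\theta^2/(\gamma-\theta^2)$ you anticipate. The steps you leave implicit --- reducing general multipliers to $0/1$ weights via the pseudo-inverse identity (Lemma~\ref{lm:Pseudo_products}) in the overparameterized regime, and justifying the exchange of $\zinfty$ with $\npinfty$ (done in the paper via Arzela--Ascoli and Moore--Osgood) --- are routine within the framework you describe.
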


\iffalse
\scolor{
\begin{proof}[Proof of Theorem \ref{thm:isoboostrap}]
This result is a specific case derived from the more general Theorem~\ref{thm:corsketching}. Here, we provide a simpler proof compared to Theorem~\ref{thm:corsketching} by establishing the  limit of the crossed Stieltjes transform $\frac{1}{\ndim}\tr((X\transp \sketchmat_\ca \transp \sketchmat_\ca X/\ndata - zI)\inv(X\transp \sketchmat_\cb \transp \sketchmat_\cb X - zI)\inv)$, as demonstrated in Lemma~\ref{lm:isotrolimitresolvent}.  
Here, $\sketchmat_\ca$ and $\sketchmat_\cb$ represent two sketching matrices corresponding to different subsamples $\cW_\ca$ and $\cW_\cb$ with $\ca \neq \cb$, respectively.

\scolor{
In accordance with the bias-variance decomposition Lemma~\ref{lm:biasvar}, these limiting risks correspond to the limiting cross terms $\frac{1}{\ndim} \tr\left(\projmat_{i} \covmat \projmat_{j} \right)$ and $\frac{1}{n^{2}} \tr\left( \hcovmat_{i}\pinv X\transp \sketchmat_{i}^2 \sketchmat_{j}^2 X \hcovmat_{j}\pinv \covmat \right)$ with $i \neq j$. Since the risk of the sketched ridgeless least square estimator is bounded, \scomment{it is not bounded?} Lemma~\ref{lm:biasvar} implies that the limiting risks converge to the limiting cross terms at the order of $O(1/B)$. 
}

\end{proof}
}
\fi

%%%%%%%%%%%%%%%%%%%%%%%%%%%%%%%%%%%%%%%%
%%%%%%%%%%bagging vs sketching%%%%%%%%%%
%%%%%%%%%%%%%%%%%%%%%%%%%%%%%%%%%%%%%%%%

The above theorem characterizes the limiting risk of the bagged min-norm least square estimator. We have  two  observations. First, comparing with the sketched min-norm least square estimator, one immediate advantage is that bagging makes the risk invariant to the limiting distributions of the multipliers.  Second, in contrast to the full-sample and sketched min-norm estimator whose limiting risks diverge to infinity as $\gamma$ and $\gamma/\theta$ approaches  to $1$  and $1$ respectively, the limiting risk of the bagged estimator approaches to
$
\sigma^2 \sampleratio/(1-\sampleratio)
$
from both sides and remains bounded by $\max\{r^2, \sigma^2 \sampleratio/(1-\sampleratio) \}$  in both regimes,  improving the stability and thus the generalization performance. The following corollary proves this rigorously. %establishes an upper bound for the limiting risks of bagged estimators. 

\begin{corollary}\label{coro:risk_bound_iso}
Suppose  $\aspratio \ne \sampleratio$.  The limiting risk of the bagged estimator $\estimator$ in Theorem \ref{thm:isoboostrap} satisfies 
\$
\lim_{\nresample \rightarrow + \infty} \lim_{\npinfty} \riskcondition(\estimator)
\leq \noiselev \frac{\sampleratio}{1 - \sampleratio} \vee \signallev. 
\$
\iffalse
\scolor{
Moreover, we have
\$
\lim_{\nresample \rightarrow + \infty} \lim_{\npinfty} \riskcondition(\estimator)
\leq \min \left\{\lim_{\nresample \rightarrow + \infty} \lim_{\npinfty} \riskcondition(\ridgeless), \lim_{\nresample \rightarrow + \infty} \lim_{\npinfty} \riskcondition(\hat\beta^1) \right\}.
\$
}
\fi
\end{corollary}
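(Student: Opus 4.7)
The plan is to substitute the explicit two-piece formula from Theorem~\ref{thm:isoboostrap} and bound each regime separately against $\noiselev \sampleratio/(1-\sampleratio) \vee \signallev$. In the underparameterized regime $\aspratio < \sampleratio$, the limiting risk equals $\noiselev \aspratio/(1-\aspratio)$. Since $t \mapsto t/(1-t)$ is strictly increasing on $(0,1)$ and $\aspratio < \sampleratio \leq 1$, this is bounded by $\noiselev \sampleratio/(1-\sampleratio)$, and hence by the claimed maximum. This case is essentially monotonicity and requires no further work.

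For the overparameterized regime $\aspratio > \sampleratio$, I would write the limiting risk as $\signallev\, \alpha(\aspratio) + \noiselev\, \beta(\aspratio)$, where
$$\alpha(\aspratio) := \frac{(\aspratio-\sampleratio)^2}{\aspratio(\aspratio-\sampleratio^2)}, \qquad \beta(\aspratio) := \frac{\sampleratio^2}{\aspratio-\sampleratio^2}.$$
The trick is to rescale the second coefficient as $\beta(\aspratio) = \bigl[\sampleratio/(1-\sampleratio)\bigr]\,\tilde\beta(\aspratio)$ with $\tilde\beta(\aspratio) := \sampleratio(1-\sampleratio)/(\aspratio-\sampleratio^2)$, so that the risk becomes $\signallev\, \alpha(\aspratio) + [\noiselev\, \sampleratio/(1-\sampleratio)]\, \tilde\beta(\aspratio)$. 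Both $\alpha$ and $\tilde\beta$ are non-negative, so if I can verify that $\alpha(\aspratio) + \tilde\beta(\aspratio) \leq 1$, then the risk is a sub-convex combination of $\signallev$ and $\noiselev\, \sampleratio/(1-\sampleratio)$, and is therefore bounded by their maximum automatically.

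The only substantive computation, and the natural candidate for the main obstacle, is the inequality $\alpha(\aspratio) + \tilde\beta(\aspratio) \leq 1$. Multiplying through by $\aspratio(\aspratio - \sampleratio^2) > 0$ reduces it to $(\aspratio - \sampleratio)^2 + \aspratio\sampleratio(1-\sampleratio) \leq \aspratio(\aspratio - \sampleratio^2)$; expanding both sides and cancelling the $\aspratio^2$ and $\aspratio\sampleratio^2$ terms collapses this down to $\sampleratio(\sampleratio - \aspratio) \leq 0$, which is immediate from $\aspratio > \sampleratio > 0$. Thus no deeper obstacle is anticipated: the corollary is a short algebraic consequence of the risk formula, and its conceptual content is precisely that the two pieces of the overparameterized risk combine with non-negative weights of total mass at most one, which is exactly what prevents a blow-up at the interpolation threshold $\aspratio = \sampleratio$.
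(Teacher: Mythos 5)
Your proof is correct and takes a genuinely different route from the paper's. The paper differentiates the overparameterized risk in $\aspratio$, writes the derivative as $f(\aspratio)/[\aspratio^2(\aspratio-\sampleratio^2)^2]$ with $f(\aspratio) = \signallev(\aspratio-\sampleratio)((2\sampleratio-\sampleratio^2)\aspratio - \sampleratio^3) - \noiselev\aspratio^2\sampleratio^2$, observes $f(\sampleratio)<0$ and that $f$ has at most one sign change on $[\sampleratio,\infty)$, and concludes the supremum is attained at one of the two boundary values $\lim_{\aspratio\searrow\sampleratio}$ and $\lim_{\aspratio\to\infty}$. You instead bypass calculus entirely by exhibiting the risk as $\signallev\,\alpha(\aspratio) + [\noiselev\sampleratio/(1-\sampleratio)]\,\tilde\beta(\aspratio)$ with $\alpha,\tilde\beta\ge 0$ and $\alpha+\tilde\beta\le 1$, so the bound by the maximum is automatic; your algebra (collapsing to $\sampleratio(\sampleratio-\aspratio)\le 0$) checks out. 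Your approach is shorter, avoids the derivative computation and sign-change counting, and yields a sharper conceptual statement --- the overparameterized risk is a sub-convex combination of $\signallev$ and the right-sided limit at the interpolation threshold, with $\alpha(\sampleratio)=0$, $\tilde\beta(\sampleratio)=1$ explaining why the two pieces join continuously there. One small caveat: the rescaling $\beta(\aspratio) = [\sampleratio/(1-\sampleratio)]\tilde\beta(\aspratio)$ implicitly requires $\sampleratio<1$; you should note that when $\sampleratio=1$ the right-hand side $\noiselev\sampleratio/(1-\sampleratio)$ is $+\infty$ and the bound holds trivially.
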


To comprehend how bagging enhances the stability over an individual estimator, we begin by comparing the bagged estimator with the sketched estimator. We focus on the Bernoulli sketched estimator due to its optimality.  Bagging serves to substantially reduce the limiting variance of the Bernoulli sketched estimator across both regimes $\gamma < \theta$ and $\gamma > \theta$. Specifically, the limiting variance is consistently reduced by a factor of at least $\theta$:
\$
\frac{ \varcondition(\estimator)}{ \varcondition(\hat \beta^1)}
&\overset{\as}{\rightarrow} 
\frac{ \gamma/(1-\gamma)}{{(\gamma/\theta)}\big/(1 - \gamma/\theta)} 1(\gamma < \theta) + \frac{1/(\gamma/\theta^2 - 1)}{{1}/(\gamma/\theta  -1)} 1(\gamma > \theta)
\leq \theta. 
\$
%indicating that the variance continues to decrease as the subsampling ratio $\sampleratio$ decreases.

Furthermore, as $\aspratio \rightarrow \sampleratio >1$ from either side, the variance reduction becomes even more pronounced,  resulting in an order of difference:
\$
\lim_{\aspratio \rightarrow \theta}\lim_{\npinfty}\frac{ \varcondition(\estimator)}{ \varcondition(\hat \beta^1)} 
\overset{\as}{=} 
\lim_{\aspratio \rightarrow \theta} \frac{ \gamma/(1-\gamma)}{{(\gamma/\theta)}\big/(1 - \gamma/\theta)} 1(\gamma < \theta) + \frac{1/(\gamma/\theta^2 - 1)}{{1}/(\gamma/\theta  -1)} 1(\gamma > \theta)
= 0. 
\$
Surprisingly, at least to us,   bagging not only reduces variance but also mitigates the implicit bias in the overparameterized regime:
\$
\lim_{\nresample \rightarrow + \infty} \lim_{\npinfty} \biascondition(\estimator)
&\overset{\as}{=}  \signallev \cdot \frac{\gamma/\theta - 1}{\gamma/\theta } \cdot \frac{\gamma/\theta - 1}{\gamma/\theta - \theta}
\leq \signallev \cdot \frac{\gamma/\theta - 1 }{\gamma/\theta}
\overset{\as}{=} 
\lim_{\npinfty} \biascondition (\hat \beta^1). 
\$

%\paragraph{Comparing with full-sample least square estimator in the underparameterized regime} 

%In the overparameterized regime, the variance term  corresponds to the variance of the full-sample  ridgeless least square estimator with an aspect ratio of $\aspratio / \sampleratio^2$. Moreover, the variance is bounded by $\noiselev \aspratio/(1-\aspratio)$, except in the case when $\aspratio = \sampleratio$, effectively mitigating the peak observed in the ridgeless scenario. This indicates that bagging in ridgeless regression provides robustness. Furthermore, the over-parameterized bias is smaller compared to that of sketched ridgeless least square estimators, which will be discussed in greater detail in Section~\ref{sec:corbagging}.

\begin{figure}[t]
\centering
\subfigure{
    \includegraphics[width=.45\linewidth]{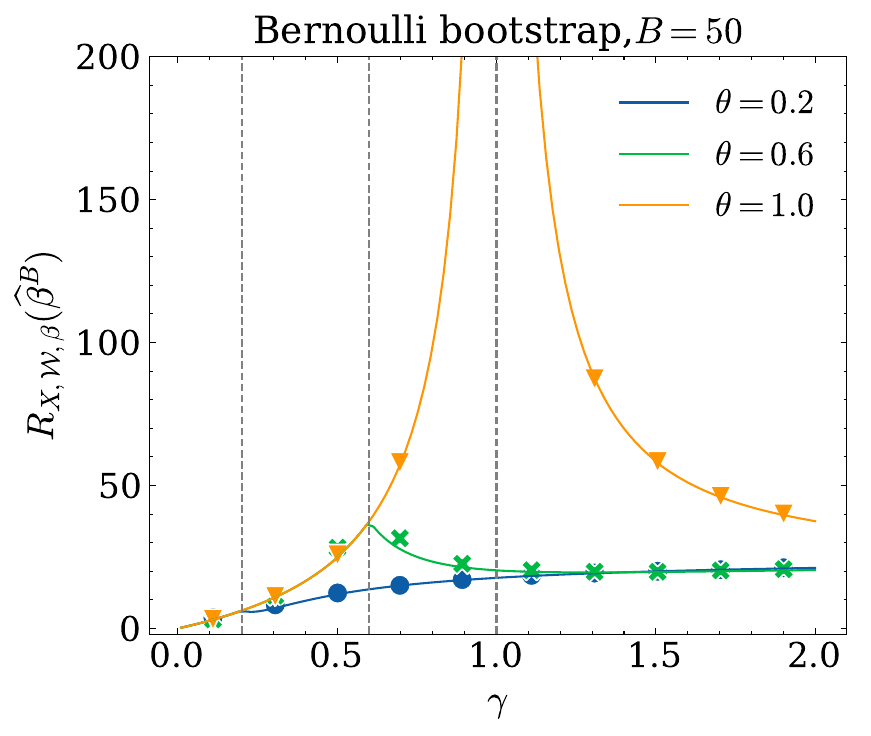}
}
% \hspace{-2em}
\subfigure{
    \includegraphics[width=.45\linewidth]{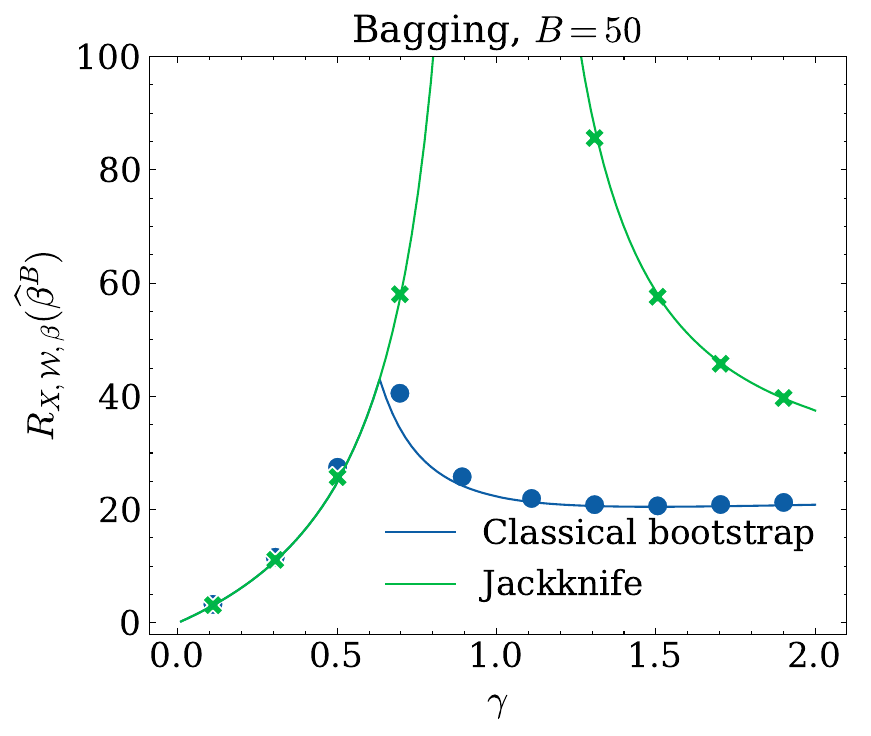}
}
    %\vspace{-1em}
    \caption{\small 
    Limiting risk curves for the Jackknife  estimator, and  Bernoulli and  classical bootstrap estimators with $(r, \sigma)=(5,5)$.    The features, errors, and $\beta$ are generated in the same way as in Figure \ref{fig:fig_1} and the number of bootstrap rounds is $B=50$.  {Left panel:} The blue, green, and orange lines  are theoretical risk curves for the Bernoulli bootstrap estimators with $\sampleratio = 0.2, 0.6, 1.0$ respectively. % where the  Bernoulli bootstrap estimator with $\sampleratio =  1.0$ is identical to the full-sample min-norm estimator. 
    {Right panel:}  The blue and green lines  are theoretical risk curves for the classical bootstrapped and Jackknife estimators respectively. 
In both panels, symbols mark the corresponding finite-sample risks in the same way as in Figure \ref{fig:fig_2}. 
}\label{fig:fig_4}
\end{figure}

%and thus the variance is reduced. %by at least $\sampleratio/\aspratio 1() +  \sampleratio^2 1()$ times. 

%\paragraph{The underparameterized regime} 
We then compare the limiting risk of the bagged  estimator with that of the full-sample min-norm estimator.  The limiting risks of these two estimators  are  identical  when $\aspratio < \sampleratio \leq 1$. %and do not depend on the sampling ratio $\sampleratio$ or the limiting distribution of the multipliers $\limitdistrweight$. 
However, when $\aspratio > \sampleratio$,  we have:
\$
\frac{\varcondition(\estimator)}{\varcondition (\ridgeless)}
\overset{\as}{\rightarrow}
\begin{cases}
    \frac{\noiselev /{(\aspratio/\sampleratio^2 - 1)}}{\noiselev \aspratio/(1-\aspratio)} \leq \frac{\sampleratio}{\aspratio} < 1 , & 1> \aspratio > \sampleratio \\
    \frac{\noiselev/{(\aspratio/\sampleratio^2 - 1)}}{\noiselev/(\aspratio - 1)} =  \frac{\aspratio - 1}{\aspratio/\sampleratio^2 - 1} \leq \sampleratio^2, & \aspratio >  1 \geq \sampleratio
\end{cases}. 
\$
This indicates that the variance is reduced  by at least a factor of $\sampleratio/\aspratio$ when $1>\aspratio>\sampleratio$, and by at least a factor of $\sampleratio^2$ when $\aspratio>1\geq \sampleratio$. 
Moreover, when $\aspratio\rightarrow 1$ from either side, we have 
\$
\lim_{\nresample \rightarrow + \infty} \lim_{\npinfty}
\frac{\varcondition(\estimator)}{\varcondition (\ridgeless)} 
\rightarrow 
\begin{cases}
    0, &  \sampleratio<\aspratio \nearrow 1 \\
    0, & \aspratio \searrow  1 > \sampleratio
\end{cases}.
\$
In other words, as $\aspratio$ approaches $1$, the variance of the bagged estimator $\estimator$ becomes of a smaller order compared with that of the full-sample min-norm estimator.

Figure \ref{fig:fig_4} plots the limiting risk curves as functions of $\gamma$ and the finite-sample risks for the Jackknife estimator, Bernoulli bootstrap estimators\footnote{With a slight abuse of notation, we shall refer to  Bernoulli (classical) bootstrap based bagged estimators as Bernoulli (classical) bootstrap estimators for simplicity.} with different downsampling ratios, and the classical bootstrap estimator, all with $(r, \sigma) = (5, 5)$. The symbols mark the finite-sample risks. It is evident that the limiting risks of Bernoulli bootstrap estimators with $\sampleratio = 0.2$ and $0.6$, as well as the classical bootstrap estimator, remain bounded. On the other hand, the Bernoulli bootstrap estimator with $\sampleratio = 1.0$ and the Jackknife estimator are identical to the full-sample min-norm estimator, whose limiting risk becomes unbounded at the interpolation threshold.

Figure \ref{fig:fig_5} provides a comparison of the limiting variances and risks between the bagged and full-sample min-norm estimators, both with $(r, \sigma)=(5,5)$. %The bagged estimator has $\sampleratio$ values of $0.2$ and $0.6$. 
When  $\sampleratio$ is taken as either $0.2$ or $0.6$, both the limiting variance and risk for the bagged estimator are of a smaller order than those for the full-sample min-norm estimator when $\aspratio \rightarrow 1$. These experimental results effectively validate our theoretical findings.

\begin{figure}[t!]
\centering

\subfigure{
\includegraphics[width=.45\textwidth]{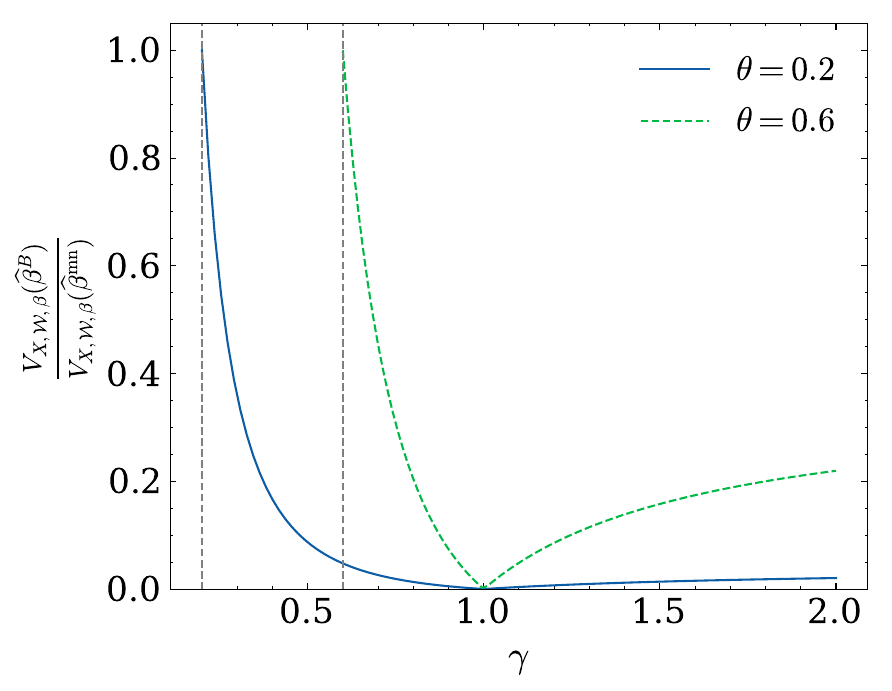}
}
\subfigure{
\includegraphics[width=.45\textwidth]{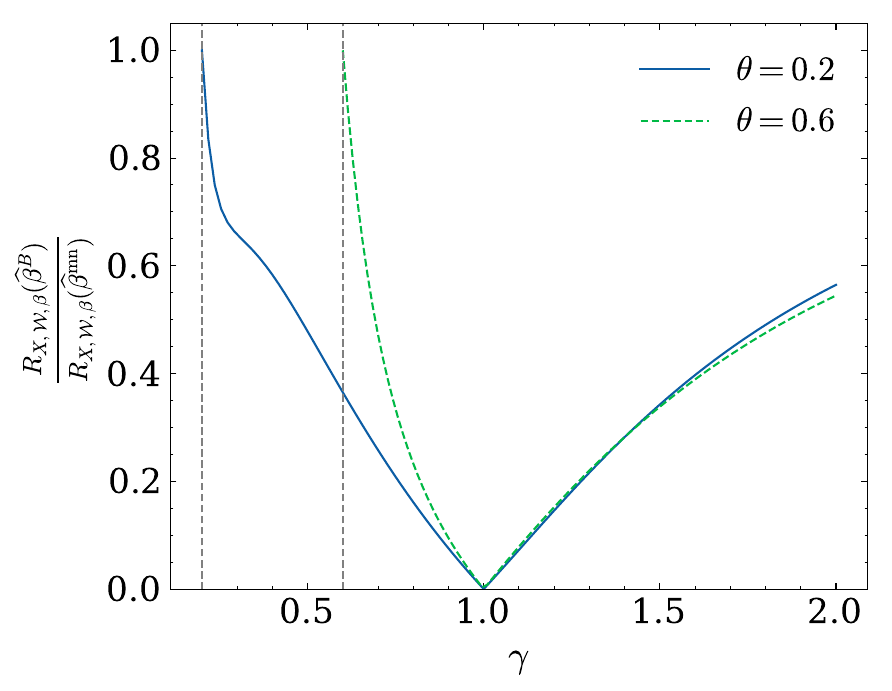}
}
\caption{The limiting  ratio curves between the bagged estimator and the full-sample min-norm estimator with $(r,\sigma)=(5,5)$. %{Left Panel:} The blue and green lines  are the  bias ratio curves for $\theta=0.2,0.6$ respectively.  
    {Left Panel:} The blue solid and green dashed lines  are the  variance ratio curves for $\theta=0.2,0.6$ respectively.  {Right Panel:} The blue solid and green dashed lines are the risk ratio curves for $\theta=0.2,0.6$ respectively.
    }
    \label{fig:fig_5}
\end{figure}

Finally,  Table \ref{tab:2} compares the run time for computing the Bernoulli, orthogonal, and classical  bootstrap\footnote{We use orthogonal bootstrap to refer to the procedure of generating subsamples in the form of $(SX, SY)$, where $S$ is an orthogonal sketching matrix.} estimators using  an Apple M1 CPU with data generated in the same way as in Figure \ref{fig:fig_1}, which shows that Bernoulli bootstrap is faster than all other bootstrap methods. 
When increasing the sample size from 400 to 800, the computational efficiency gains of the Bernoulli bootstrap over the orthogonal and classical bootstrap improves from {22\%} and 10\% to  47\% and 24\%, respectively.

% \begin{table}[]
% \centering
% \caption{Run time in seconds  for  computing the Bernoulli, orthogonal, classical, and Jackknife bootstrap estimators  with $\sampleratio = 1 - 1/e,\, \gamma=1.2, B=10$, repeated for 500 rounds. }
% \label{tab:2}
% \begin{tabular}{cccc}
% \toprule
% Boostrap &  Run time($n=400$) &Run time($n=600$)&Run time($n=800$) \\ \hline
% Bernoulli    & 51.10&95.90 &200.80                      \\
% Orthogonal                  & 59.40 &183.00& 323.00             \\
% Classical               & 60.01&150.70&371.6\\
% % Jackknife                 & 74.20\\
% \bottomrule
% \end{tabular}
% \end{table}

% \begin{table}[]
% \centering
% \caption{Run time in seconds  for  computing the Bernoulli, orthogonal, and classical bootstrap estimators  with $\sampleratio = 1 - 1/e,\, \gamma=1.2, B=10$, repeated for 500 rounds. }
% \label{tab:2}
% \begin{tabular}{@{}cccc@{}}
% \toprule
% Boostrap          & Bernoulli & Orthogonal & Classical \\ \midrule
% Run time($n=400$) & 45.90     & 63.90      & 49.80     \\
% Run time($n=600$) & 92.80     & 177.70     & 96.00    \\
% Run time($n=800$) & 200.80    & 323.40     & 214.40     \\ \bottomrule
% \end{tabular}
% \end{table}

\begin{table}[]
\centering
\caption{Run time in seconds  for  computing the Bernoulli, orthogonal, and classical bootstrap estimators  with $\sampleratio = 1 - 1/e,\, \gamma=1.2, B=10$, repeated for 500 rounds. }
\label{tab:2}
\begin{tabular}{@{}cccc@{}}
\toprule
{Bootstrap}     & Bernoulli & Orthogonal & Classical \\ \midrule
$n=400$ & 38.30     & 46.70      & 42.00     \\
$n=600$ & 93.20     & 125.10     & 103.80    \\
$n=800$ & 191.70    & 280.90     & 237.10     \\ \bottomrule
\end{tabular}
\end{table}

%%%%%%%%%%%%%%%%%%%%%%%%%%%%%%%%%%%%%%%%%%%%%%%%%
%%%%%%%%%%%%%%%%Ridge regression%%%%%%%%%%%%%%%%%
%%%%%%%%%%%%%%%%%%%%%%%%%%%%%%%%%%%%%%%%%%%%%%%%%

\subsection{Bagging as implicit regularization}

This subsection establishes an equivalence between the bagged ridgeless least square estimator and the ridge regression estimator. 

\begin{lemma}\label{lm:ridgeequivlence}
Assume Assumptions~\ref{Assume:highdim}-\ref{Assume:beta4+mom} and $\covmat = I$. Then 
\begin{equation*}
    \lim_{\nresample \rightarrow + \infty} \lim_{\npinfty} \EE \left[ \| \estimator - \hat{\beta}_{\lambda}\|_{2}^{2} \condition \variablecondition \right] = 0 \qas
\end{equation*}
where $\lambda = (1 - \sampleratio)(\aspratio/\sampleratio - 1) \vee 0.$ %\scomment{To S: check proof!}
\end{lemma}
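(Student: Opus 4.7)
The plan is to reduce the squared $L^2$ distance to two operator-level equivalences and then prove each via the Stieltjes-transform machinery already in play. First, write both estimators as linear maps of $Y$: $\estimator = A_{\cW}Y$ with $A_{\cW} := \tfrac{1}{\nresample}\sum_{k=1}^{\nresample}(\sketchmat_kX)\pinv \sketchmat_k$, and $\hat\beta_\lambda = A_\lambda Y$ with $A_\lambda := (X^\T X/n + \lambda I)^{-1}X^\T/n$. A direct calculation gives $A_{\cW}X = I - \bar\projmat$ for $\bar\projmat := \tfrac{1}{\nresample}\sum_k \projmat_k$, and $A_\lambda X = I - \lambda(X^\T X/n + \lambda I)^{-1}$. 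Substituting $Y = X\truesignal + E$ and taking the conditional expectation over $E$ yields
\begin{equation*}
\EE\bigl[\|\estimator - \hat\beta_\lambda\|_2^2 \bigm| \variablecondition\bigr] = \bigl\|\bigl[\lambda(X^\T X/n + \lambda I)^{-1} - \bar\projmat\bigr]\truesignal\bigr\|_2^2 + \noiselev\,\tr\bigl[(A_{\cW} - A_\lambda)(A_{\cW} - A_\lambda)^\T\bigr].
\end{equation*}
It therefore suffices to prove two statements: (i) the bias-type convergence $\bar\projmat \to \lambda(X^\T X/n + \lambda I)^{-1}$ against $\truesignal$, and (ii) the variance-type convergence $A_{\cW}A_{\cW}^\T \to A_\lambda A_\lambda^\T$ in trace, both holding after $\npinfty$ and then $\nresample \to \infty$.

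Second, I would average over sketches. Using the pairwise independence of $\{\cW_k\}$ from Assumption~\ref{Assume:multip}, the quadratic quantities $\bar\projmat^2 = \tfrac{1}{\nresample^2}\sum_{k,\ell}\projmat_k\projmat_\ell$ and $A_{\cW}A_{\cW}^\T$ each split into a diagonal piece of order $1/\nresample$, which is negligible as $\nresample \to \infty$ because the individual traces are bounded by Lemma~\ref{lm:biasvar-beta} and Theorem~\ref{thm:isosketching}, and off-diagonal cross-sketch contributions that concentrate around $\EE_{\cW_1,\cW_2}[\projmat_1 \projmat_2 \mid X]$ and $\EE_{\cW_1,\cW_2}[A_1 A_2^\T \mid X]$. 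This is exactly the cross-sketch structure already controlled in the proof of Theorem~\ref{thm:isoboostrap}, so step~(i) reduces to showing $\EE_{\cW_1}[\projmat_1 \mid X] \simeq \lambda(X^\T X/n + \lambda I)^{-1}$ in operator sense against $\truesignal$.

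Third, the single-sketch deterministic equivalent is a resolvent identity. Using the representation $\projmat_1 = \lim_{z\to 0^+} z(\hcovmat_1 + zI)^{-1}$, one reduces the needed identity to computing the limits of traces of the form $\tfrac{1}{d}\tr[(\hcovmat_1 + zI)^{-1} M]$ for $M \in \{I,\,(X^\T X/n + \lambda I)^{-1}\}$ and the ``mixed'' quantity $\tfrac{1}{n}\tr[(\hcovmat_1 + zI)^{-1}(X^\T X/n + \lambda I)^{-1}]$. These can be obtained by a leave-one-row-out argument on the rows of $X$ together with the uniform quadratic-form concentration of Lemma~\ref{lm:concentrationontrace}, exactly as in the proof of Theorem~\ref{thm:isosketching}. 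The sketched resolvent then satisfies a modified Marchenko--Pastur fixed point governed by the limiting multiplier measure $\tildelimitdistrweight$, while the ridge resolvent satisfies the standard fixed point at aspect ratio $\aspratio$; matching the two at $\lambda = (1-\sampleratio)(\aspratio/\sampleratio - 1) \vee 0$ gives the required identity.

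The main obstacle is the mixed-resolvent calculation: neither of the two pure Marchenko--Pastur self-consistent equations applies directly, since the factor couples a typically singular sketched covariance with a regularized full covariance. The specific value $\lambda = (1-\sampleratio)(\aspratio/\sampleratio - 1)_+$ emerges precisely as the unique regularization for which the two fixed-point equations become compatible in the joint limit, producing a common trace limit that simultaneously identifies $\EE_{\cW_1}[\projmat_1\mid X]$ with $\lambda(X^\T X/n + \lambda I)^{-1}$ and $\EE_{\cW_1,\cW_2}[A_1 A_2^\T\mid X]$ with $A_\lambda A_\lambda^\T$. This matching is the technical heart of the proof; once established, plugging into the displayed decomposition yields the claimed vanishing $L^2$ distance.
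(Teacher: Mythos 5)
Your proposal is correct and follows essentially the same route as the paper. The paper anchors the squared distance at $\truesignal$ via $\|\estimator-\hat\beta_\lambda\|_2^2=\|\estimator-\truesignal\|_2^2-2(\estimator-\truesignal)^\T(\hat\beta_\lambda-\truesignal)+\|\hat\beta_\lambda-\truesignal\|_2^2$ and then shows the cross term matches both risks, while you decompose directly into a bias part $\|(\lambda(X^\T X/n+\lambda I)^{-1}-\bar\projmat)\truesignal\|_2^2$ and a variance part $\noiselev\tr[(A_\cW-A_\lambda)(A_\cW-A_\lambda)^\T]$; expanding the squares shows the two decompositions are algebraically identical, both reducing to matching three scalar limits (the bagged risk, the ridge risk, and a mixed ridge--sketch cross-trace). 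Your ``$1/\nresample$ kills the diagonal'' step matches the paper's reduction to a single off-diagonal pair, and the ``mixed-resolvent calculation'' you flag as the technical heart is exactly what the paper supplies as Lemma~\ref{lm:ridgeresolvent} together with Lemmas~\ref{lm:ridgerisk}--\ref{lm:ridgeequibias}: a new self-consistent equation for $\stieltjes_{2,\lambda}(z)=\lim\frac{1}{d}\tr[(X^\T X/n+\lambda I)^{-1}(\hcovmat_k-zI)^{-1}]$ whose solution matches the pure sketched fixed point precisely when $\lambda=(1-\sampleratio)(\aspratio/\sampleratio-1)\vee 0$, with separate handling of $z<0$ and the $\zinfty$ limit in the overparameterized regime.
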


\begin{figure}[t!]
\centering
\subfigure{
    \includegraphics[width=.45\linewidth]{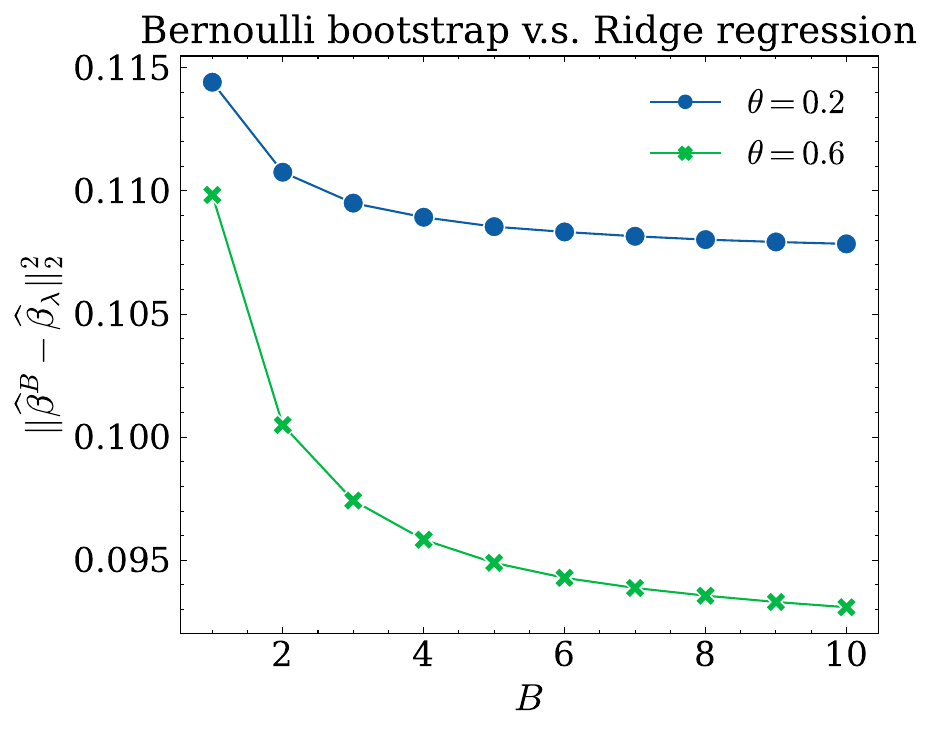}
}
% \hspace{-2em}
\subfigure{
    \includegraphics[width=.43\linewidth]{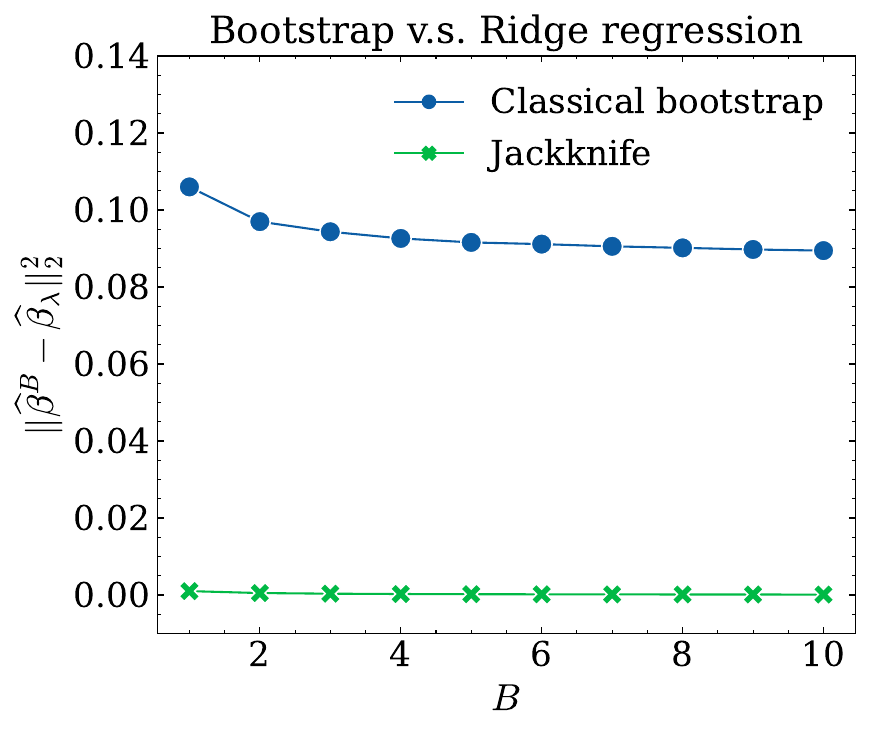}
}
\subfigure{
    \includegraphics[width=.43\linewidth]{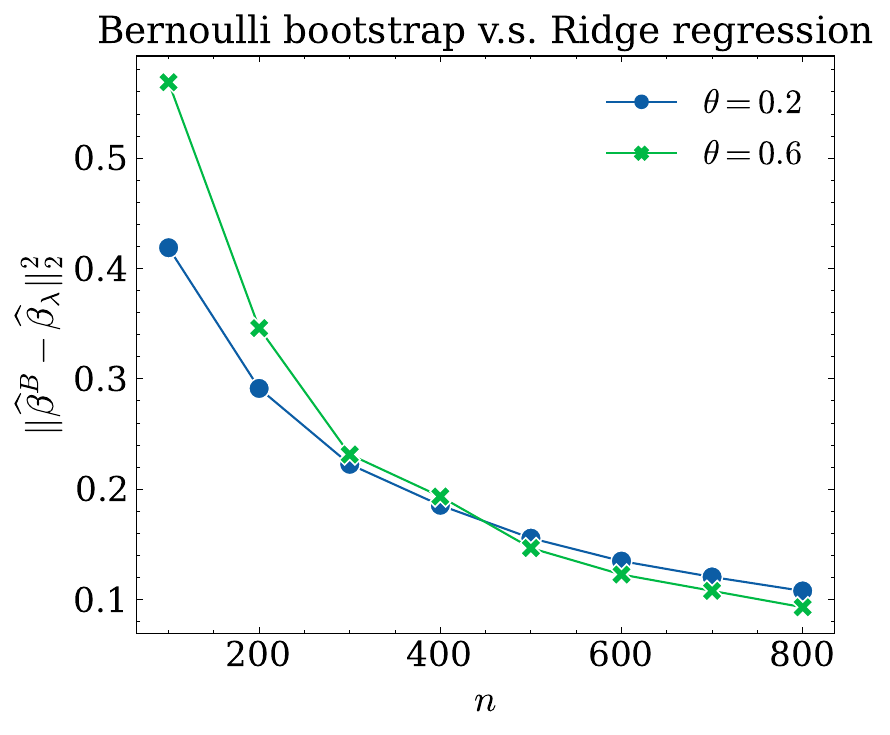}
}
\subfigure{
    \includegraphics[width=.43\linewidth]{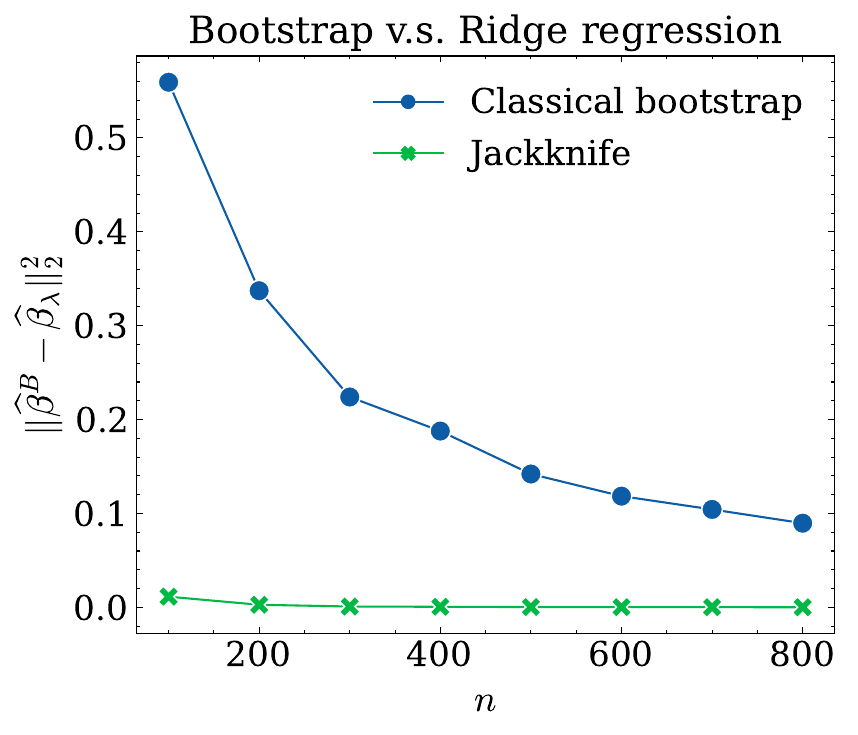}
}
    \caption{\small 
    The $\ell_2$ normed differences  between the bagged and the ridge  estimators with $(r,\sigma)=(5,5)$, $d = [n \gamma]$, and $\gamma = 1.2$ under isotropic features.  The features, errors, and $\beta$ are generated in the same way as in Figure \ref{fig:fig_1}.  {Upper left panel:} The blue and green lines are the finite-sample $\ell_2$ normed difference curves between the Bernoulli bootstrap estimator and its equivalent ridge regression estimator with $\sampleratio=0.2, 0.6$ respectively. Here $n=800$ and $B$ varies in $\{1,2,...,10\}$.   {Upper right panel:} The blue and green lines are finite-sample $\ell_2$ normed difference curves between the classical bootstrap, Jackknife estimators, and their equivalent ridge regression estimators respectively,  Here  $n=800$ and  $B$ vary in $\{1,2,...,10\}$. {Lower left panel:} The blue and green lines are the {finite-sample} $\ell_2$ normed difference curves between the Bernoulli bootstrap estimator and its equivalent ridge regression estimator with $\sampleratio=0.2, 0.6$ respectively. Here $B=10$, and $n$ varies  in $\{100,200,...,800\}$.   {Lower right panel:} The blue and green lines are the finite-sample $\ell_2$ normed difference curves between the classical bootstrap, Jackknife estimators,  and their equivalent ridge regression estimators with $ B=10$ and $n$ varying in $\{100,200,...,800\}$. 
    }
    \label{fig:fig_6}
\end{figure}

The lemma above demonstrates that the bagged estimator, when using a downsampling ratio of $\sampleratio$, can be seen as equivalent to the ridge regression estimator with a penalty parameter $\lambda = (1/\sampleratio - 1)(\aspratio - \sampleratio) \vee 0$. %as the number of bootstrap samples and the dimension of the problem tend to infinity. 
This equivalence is based on the expected $\ell_2$ normed difference between the estimators. In essence, bagging acts  as a form of implicit regularization. Figure \ref{fig:fig_6} depicts the $\ell_2$ normed differences between the bootstrap estimators and their  their corresponding ridge regression estimators. The observed differences decrease as the number of bootstrap samples $B$ or the sample size $n$ increases.

%%%%%%%%%%%%%%%%%%%%%%%%%%%%%%%%%%%%%%%%%%%%%%%%%
%%%%%%%%%%%%%%%Correlated features%%%%%%%%%%%%%%%
%%%%%%%%%%%%%%%%%%%%%%%%%%%%%%%%%%%%%%%%%%%%%%%%%

\section{Correlated features}\label{sec:correlated}

This section delves into the anaylsis of correlated features. In this case, the limiting risks are  implicitly determined by  some self-consistent equations. %which do not generally admit closed-form solutions.   %commonly referred to as self-consistent equations. Unfortunately, these equations do not generally admit closed-form solutions. 
We first introduce these equations. Given an aspect ratio $\aspratio > 0$, a downsampling ratio $0 < \sampleratio \leq 1$, a limiting spectral distribution $\limitdistr$ of $\Sigma$, and $z \leq 0$, we define $\compstieltjes(z)$ and $\tcompstieltjes(z)$ as the positive solutions to the self-consistent equations:
\#
\compstieltjes(z) &= \left( -z + \frac{\aspratio}{\sampleratio} \int \frac{t\, d\limitdistr(t)}{1 + \compstieltjes(z)t} \right)^{-1}\quad \text{and}   \label{eq:fixedpointsketching}  \\
\tcompstieltjes(z) &= \left( -z + \frac{\aspratio}{\sampleratio^{2}} \int \frac{t\, d\tlimitdistr(t)}{1 + \tcompstieltjes(z)t} \right)^{-1}, \label{eq:fixpointbootstrap}
\#
where $\tlimitdistr$ represents the limiting spectral distribution of $(I + \ensambleweight(0)\covmat)^{-1} \covmat$, and $\ensambleweight(0) = (1 - \sampleratio) \compstieltjes(0)$. Our first result provides   the existence, uniqueness, and differentiability of $\compstieltjes(z)$ and $\tcompstieltjes(z)$.

\begin{lemma}\label{lm:v0}
    For any $z \leq 0$, equations~\eqref{eq:fixedpointsketching} and \eqref{eq:fixpointbootstrap} have unique positive solutions $\compstieltjes(z)$ and $\tcompstieltjes(z)$.  Moreover, $\compstieltjes(z)$ and $\tcompstieltjes(z)$ are differentiable for any $z < 0$. When $\aspratio> \sampleratio$, $\compstieltjes(0):= \lim_{\zinfty} v(z)$, $\compstieltjes(0):= \lim_{\zinfty} \compstieltjes(z)$, $\compstieltjes'(0):= \lim_{\zinfty} \compstieltjes'(z)$, $\tcompstieltjes(0):= \lim_{\zinfty} \tcompstieltjes(z)$, and  $\tcompstieltjes'(0) :=\lim_{\zinfty} \tcompstieltjes'(z)$ exist. 
\end{lemma}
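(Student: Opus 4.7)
The plan is to recast each equation as the zero-set of an explicit smooth function, extract existence and uniqueness from strict monotonicity in $v$, extract differentiability and the sign of $\compstieltjes'$ from the implicit function theorem, and finally treat the limit at $z = 0^-$ by continuity, with the hypothesis $\aspratio > \sampleratio$ appearing exactly in the non-degeneracy check at $z = 0$.

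First, I would rewrite \eqref{eq:fixedpointsketching} as the equation $G(v,z) = 0$, where
\[
G(v, z) := v \left( -z + \frac{\aspratio}{\sampleratio} \int \frac{t \, d\limitdistr(t)}{1+vt} \right) - 1.
\]
Under Assumption~\ref{Assume:Covdistri} the measure $\limitdistr$ has bounded support in $[c_\lambda, C_\lambda]$, so $G$ is jointly $C^\infty$ on $(0,\infty) \times (-\infty, 0]$. A direct calculation gives
\[
\partial_v G(v,z) = -z + \frac{\aspratio}{\sampleratio} \int \frac{t \, d\limitdistr(t)}{(1+vt)^2} > 0
\]
for every $v > 0$, $z \leq 0$, so $G(\cdot, z)$ is strictly increasing. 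Since $G(0^+, z) = -1$ and $G(v, z) \to +\infty$ as $v \to \infty$ when $z < 0$, while $G(v, 0) \to \aspratio/\sampleratio - 1$, a unique positive root $\compstieltjes(z)$ exists for all $z < 0$, and also at $z = 0$ precisely when $\aspratio/\sampleratio > 1$, i.e.\ when $\aspratio > \sampleratio$.

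With existence in hand, the implicit function theorem applies on $(0,\infty) \times (-\infty, 0)$ since $\partial_v G > 0$, delivering $\compstieltjes \in C^1((-\infty, 0))$; moreover $\compstieltjes'(z) = -\partial_z G/\partial_v G = \compstieltjes(z)/\partial_v G(\compstieltjes(z), z) > 0$, so $\compstieltjes$ is strictly increasing in $z$. When $\aspratio > \sampleratio$, non-degeneracy $\partial_v G(\compstieltjes(0), 0) > 0$ still holds, so the implicit function theorem extends to a one-sided neighborhood of $0$; consequently $\compstieltjes$ and $\compstieltjes'$ are continuous up to $z = 0$ and the one-sided limits $\compstieltjes(0) = \lim_{\zinfty} \compstieltjes(z)$ and $\compstieltjes'(0) = \lim_{\zinfty} \compstieltjes'(z)$ exist.

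For equation \eqref{eq:fixpointbootstrap} I would run the same three-step argument with $\limitdistr$ replaced by $\tlimitdistr$ and $\aspratio/\sampleratio$ replaced by $\aspratio/\sampleratio^2$. This is justified because, once $\compstieltjes(0)$ is defined, the constant $\ensambleweight(0) = (1-\sampleratio)\compstieltjes(0) \in [0, \infty)$ is finite, and the eigenvalues of $(I + \ensambleweight(0)\covmat)^{-1}\covmat$ lie in
\[
\left[\frac{c_\lambda}{1 + \ensambleweight(0) C_\lambda},\; \frac{C_\lambda}{1 + \ensambleweight(0) c_\lambda}\right],
\]
so $\tlimitdistr$ inherits a compact support bounded away from $0$ and $\infty$. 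Since $\sampleratio \leq 1$, we have $\aspratio/\sampleratio^2 \geq \aspratio/\sampleratio > 1$, so the analog of the terminal inequality at $z = 0$ is again satisfied, and the preceding monotonicity and implicit-function-theorem arguments transfer verbatim to yield existence, uniqueness, differentiability, and the one-sided limits $\tcompstieltjes(0)$ and $\tcompstieltjes'(0)$. The main obstacle is the dependence of $\tlimitdistr$ on $\compstieltjes(0)$: it forces us to establish the limits for \eqref{eq:fixedpointsketching} first so that $\tlimitdistr$ is a well-defined compactly supported probability measure, after which the second equation is essentially a copy of the first with a shifted aspect ratio and a new limiting spectral distribution.
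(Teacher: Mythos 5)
Your proposal is correct, and it reaches the conclusion by a partially different route than the paper. The existence–uniqueness step is essentially the paper's: the paper rewrites \eqref{eq:fixedpointsketching} as $f(\compstieltjes(z),z)=1-\sampleratio/\aspratio$ with $f(x,z)=\frac{\sampleratio}{\aspratio}zx+\int\frac{1}{1+xt}\,\limitdistr(dt)$ and uses strict monotonicity in $x$, which is the same mechanism as your strictly increasing $G(\cdot,z)$ (and, like you, the solvability at $z=0$ really does hinge on $\aspratio>\sampleratio$). Where you diverge is in the differentiability and limit steps: the paper identifies $\compstieltjes(z)$ as the companion Stieltjes transform of the limiting spectral distribution of $XX^\transp/n$ via \cite[Theorem 2.7]{couillet2022}, gets analyticity of a Stieltjes transform off the support, uses Lemmas~\ref{lm:singularmat} and \ref{lm:lowerboundeigval} to push the support away from $0$ when $\aspratio>\sampleratio$, and handles the limits at $z=0^-$ by a dominated-convergence argument as in Lemma~\ref{lm:MPlawsketching}; you instead run the implicit function theorem on $G$, reading off $\compstieltjes'(z)=\compstieltjes(z)/\partial_v G(\compstieltjes(z),z)>0$ and obtaining the one-sided limits from nondegeneracy of $\partial_v G$ at $(\compstieltjes(0),0)$. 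Your route is more elementary and self-contained (no random-matrix input), gives differentiability for all $z<0$ without any appeal to $\aspratio>\sampleratio$, and your treatment of $\tcompstieltjes$ — first fixing $\compstieltjes(0)$, then checking that $\tlimitdistr$ has support in a compact interval bounded away from zero and that $\aspratio/\sampleratio^2>1$ — fills in the "follows similarly" that the paper leaves implicit; the paper's route buys the stronger structural fact that $\compstieltjes$ is a genuine Stieltjes transform (hence analytic and monotone), which it reuses in later proofs. One small point worth making explicit in your write-up: to conclude $\lim_{\zinfty}\compstieltjes(z)=\compstieltjes(0)$ you should note that the local IFT branch through $(\compstieltjes(0),0)$ must coincide with $\compstieltjes(z)$ for small negative $z$ by the uniqueness of the positive root at each such $z$; with that sentence added the argument is complete.
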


%%%%%%%%%%%%%%%%%%%%%%%%%%%%%%%%%%%%%%%%%%%%%%%%%%%%%%%%%%%%%%%%%%%%%%%%%%%%%%%%%
%%%%%%%%%%%%%%%%%%%%%Enhancing overparametrization effect%%%%%%%%%%%%%%%%%%%%%%%%
%%%%%%%%%%%%%%%%%%%%%%%%%%%%%%%%%%%%%%%%%%%%%%%%%%%%%%%%%%%%%%%%%%%%%%%%%%%%%%%%%

\subsection{Sketching under correlated features}

%%%%The following result showcases the asymptotic risk of the sketched ridgeless estimator.

We first study the exact risk of the sketched min-norm least square  estimator $\hat\beta^1$.  Recall $f(\aspratio)$ from Theorem~\ref{thm:isosketching}.

\begin{theorem}[Sketching under correlated features]\label{thm:corsketching}
     Assume Assumptions~\ref{Assume:highdim}-\ref{Assume:beta4+mom}. Then the out-of-sample prediction risk of $\hat\beta^1$ satisfies
     \begin{equation*}
    \lim_{\npinfty} \riskcondition (\hat\beta^1)
    = 
    \begin{cases}
    \sigma^2\left(\frac{\aspratio}{1-\aspratio - f(\aspratio)} - 1 \right),& \aspratio < \sampleratio  \\
    \signallev \frac{ \sampleratio}{\aspratio \compstieltjes(0)} + \noiselev \left(\frac{\compstieltjes'(0)}{\compstieltjes(0)^{2}} - 1\right),\quad& \aspratio > \sampleratio
    \end{cases} \qas
    \end{equation*}
 Specifically, the bias and variance satisfy 
\$
%\lim_{\npinfty} 
\biascondition (\estimator)
& \overset{{\rm a.s.}}{\rightarrow}
\begin{cases}
0,& \aspratio/\sampleratio < 1  \\
\signallev \frac{\sampleratio}{\aspratio \compstieltjes(0)},& \aspratio/\sampleratio > 1
\end{cases}, 
%\lim_{\npinfty} 
\quad 
\varcondition (\estimatorsketch)
 \overset{\as}{\rightarrow} 
\begin{cases}
\sigma^2 \left(\frac{\aspratio}{1-\aspratio - f(\aspratio)} -1 \right), & \aspratio/\sampleratio < 1  \\
\noiselev \left(\frac{\compstieltjes'(0)}{\compstieltjes(0)^2} - 1 \right), & \aspratio/\sampleratio > 1
\end{cases}. 
\$
\end{theorem}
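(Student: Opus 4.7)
By Lemma~\ref{lm:biasvar-beta} applied with $\nresample = 1$, the bias and variance of $\estimatorsketch$ reduce to
$$\biascondition(\estimatorsketch) = \frac{\signallev}{\ndim}\tr(\projmat_1 \covmat \projmat_1), \qquad \varcondition(\estimatorsketch) = \frac{\noiselev}{\ndata^2}\tr\bigl(\hcovmat_1\pinv X\transp \sketchmat_1^4 X \hcovmat_1\pinv \covmat\bigr),$$
where $\hcovmat_1 = \covmat^{1/2} (Z\transp \sketchmat_1^2 Z/\ndata) \covmat^{1/2}$ after writing $X = Z\covmat^{1/2}$, and $\projmat_1 = I - \hcovmat_1\pinv \hcovmat_1$ is the projection onto $\ker \hcovmat_1$. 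The plan is to handle the two regimes separately, and for each, to develop a deterministic equivalent for a resolvent-type trace that is then evaluated at $z\to 0^-$.

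\textbf{Underparameterized regime $(\aspratio<\sampleratio)$.} Here $Z\transp \sketchmat_1^2 Z/\ndata$ has effective sample size $\sampleratio\ndata > \ndim$ and is invertible almost surely, so $\hcovmat_1$ is invertible, $\projmat_1 = 0$, and the bias vanishes. For the variance, writing $W = \sketchmat_1^2$ and using cyclicity of the trace, every factor of $\covmat$ cancels, leaving
$$\varcondition(\estimatorsketch) = \frac{\noiselev}{\ndata^2}\tr\bigl((Z\transp W Z/\ndata)\inv Z\transp W^2 Z (Z\transp W Z/\ndata)\inv\bigr).$$
This is exactly the underparameterized variance expression already analyzed in the proof of Theorem~\ref{thm:isosketching}, whose limit is $\noiselev(\aspratio/(1-\aspratio-f(\aspratio)) - 1)$. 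Thus the underparameterized formula transfers verbatim from the isotropic case, establishing the first line of the claim.

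\textbf{Overparameterized regime $(\aspratio>\sampleratio)$.} Here $\hcovmat_1$ is rank-deficient, and I will pass through resolvents. Using $\projmat_1 = \lim_{z\to 0^-}(-z)(\hcovmat_1 - zI)\inv$ yields
$$\frac{1}{\ndim}\tr(\projmat_1\covmat\projmat_1) = \lim_{z\to 0^-}\frac{z^2}{\ndim}\tr\bigl((\hcovmat_1 - zI)^{-2}\covmat\bigr),$$
and similarly the variance becomes the $z\to 0^-$ limit of a trace of $(\hcovmat_1 - zI)\inv\covmat^{1/2} Z\transp W^2 Z \covmat^{1/2}(\hcovmat_1 - zI)\inv$, divided by $\ndata^2$. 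Since $\hcovmat_1 = \covmat^{1/2}(Z\transp W Z/\ndata)\covmat^{1/2}$ is a separable-covariance matrix of Silverstein type, I will establish the deterministic equivalent $(\hcovmat_1 - zI)\inv \asymp (-z)\inv(I + \compstieltjes(z)\covmat)\inv$ uniformly in a left neighborhood of $0$, where $\compstieltjes(z)$ solves the fixed-point equation~\eqref{eq:fixedpointsketching} (whose existence, uniqueness, and differentiability up to $z=0^-$ are guaranteed by Lemma~\ref{lm:v0}). Combined with its $z$-derivative, this reduces both traces to integrals against $\limitdistr$ which, upon invoking~\eqref{eq:fixedpointsketching} and its differentiated version, collapse to the closed forms $\signallev\sampleratio/(\aspratio \compstieltjes(0))$ and $\noiselev(\compstieltjes'(0)/\compstieltjes(0)^2 - 1)$, matching the claim. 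A useful sanity-check and reduction is that in this regime $\estimatorsketch$ interpolates the active observations $\{(x_i,y_i): w_{1,i}>0\}$, so $\estimatorsketch$ depends on the multipliers only through their support; this explains why only $\sampleratio$ (and not higher moments of $\limitdistrweight$) enters the limit, and it justifies simplifying $\hcovmat_1\pinv X\transp \sketchmat_1^4 X \hcovmat_1\pinv$ on the range of $\hcovmat_1$.

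\textbf{Main obstacle.} The hard part is the deterministic equivalent for the two-resolvent trace $\ndim\inv\tr((\hcovmat_1 - zI)^{-2}\covmat)$ with random diagonal weights, together with the uniform control as $z\nearrow 0^-$. This requires (i) extending Silverstein-type results for $\ndata\inv Z\transp W Z$ to our multiplier framework while tracking the effective aspect ratio $\aspratio/\sampleratio$; (ii) a uniform-in-$z$ concentration for quadratic forms in the rows of $Z$, supplied by Lemma~\ref{lm:concentrationontrace} and underpinning the $(8+\epsilon)$-th moment assumption in Assumption~\ref{Assume:Covdistri}; and (iii) an interchange-of-limits argument that sends $z\nearrow 0^-$ inside the almost sure limit, which is legitimate precisely because $\compstieltjes(0)$ and $\compstieltjes'(0)$ are finite when $\aspratio>\sampleratio$ by Lemma~\ref{lm:v0}, and fails exactly at $\aspratio=\sampleratio$, consistent with the risk blow-up there.
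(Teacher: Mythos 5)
Your proposal takes essentially the same route as the paper: the bias--variance decomposition from Lemma~\ref{lm:biasvar}/\ref{lm:biasvar-beta}, reduction of the underparameterized variance to the isotropic computation by cancelling $\covmat$ (the paper's Lemma~\ref{lm:sketchundervar}), the observation that in the overparameterized regime the estimator depends on the multipliers only through their support (the paper's Lemma~\ref{lm:Pseudo_products}), the resolvent representation at $z\to 0^-$ with a Ledoit--P\'ech\'e-type deterministic equivalent expressed through the companion transform $\compstieltjes$ (the paper's Lemma~\ref{lm:stieltjescov}), and Vitali plus Arzela--Ascoli/Moore--Osgood to handle derivatives and the exchange of limits, with Lemma~\ref{lm:v0} supplying $\compstieltjes(0),\compstieltjes'(0)$. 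The only cosmetic deviations are that the paper exploits idempotence of $\projmat_1$ to write the bias with a single resolvent, $-\tfrac{z}{\ndim}\tr\big((\hcovmat_1-zI)^{-1}\covmat\big)$, rather than your squared-resolvent form (equivalent, but needing one fewer derivative), and that your appeal to ``the proof of Theorem~\ref{thm:isosketching}'' for the underparameterized variance should be replaced by the direct Sherman--Morrison/self-consistent-equation argument, since in the paper that theorem is itself deduced from the present one.
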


%%%%%%%%%%%%%%%%%%%%%%%%%%%%%%%%%%%%%%%%%%%%%%%%%%%%%%%%%%
%%%%%%%%%%%%%%%%%%%%%%%%Features%%%%%%%%%%%%%%%%%%%%%%%%%
%%%%%%%%%%%%%%%%%%%%%%%%%%%%%%%%%%%%%%%%%%%%%%%%%%%%%%%%%%

\iffalse
Different from the isotropic feature case, the limiting  risk in the presence of correlated features does not admit closed-form solutions.% but can be computed numerically.  
In the special case of $\covmat = I$,  the limiting spectral distribution $H$ degenerates to the Dirac measure $\delta_1$. Then $v(0)$ and $v'(0)$ can be solved with closed forms as 
\$
\compstieltjes(0)= \frac{\sampleratio}{\aspratio - \sampleratio}
~~~\text{and}~~~
\compstieltjes'(0)= \frac{\sampleratio^2\aspratio}{(\aspratio - \sampleratio)^3}, 
\$
leading to 
\begin{equation*}
\lim_{\npinfty} \riskcondition (\hat\beta^1)
= 
\begin{cases}
\sigma^2\left(\frac{\aspratio}{1-\aspratio - f(\aspratio)} - 1 \right),& \aspratio < \sampleratio  \\
\signallev \frac{\aspratio/\sampleratio - 1}{\aspratio/\sampleratio } + \noiselev \frac{1}{\aspratio/\sampleratio - 1},\quad& \aspratio > \sampleratio
\end{cases} \qas
\end{equation*}
which reduces to Theorem \ref{thm:isosketching}  in Section~\ref{sec:isotropic}. In other words,  Theorem \ref{thm:corsketching} includes Theorem \ref{thm:isosketching} of Section~\ref{sec:isotropic} as a special case. Moreover, Corollary \ref{coro:optimal_sketching} continues to hold in the presence of  correlated features, that is, Bernoulli sketching optimizes  the limiting risk in Theorem \ref{thm:corsketching} among all sketching methods considered. 
\fi

The limiting risk in the presence of correlated features does not admit closed-form expressions in either regime, but it can be computed numerically. In the specific case of $\covmat = I$, the limiting spectral distribution $H$ simplifies to the Dirac measure $\delta_1$. This allows us to find the closed-form solutions for $v(0)$ and $v'(0)$, which are given by
\$
\compstieltjes(0)= \frac{\sampleratio}{\aspratio - \sampleratio}
~~~\text{and}~~~
\compstieltjes'(0)= \frac{\sampleratio^2\aspratio}{(\aspratio - \sampleratio)^3}, 
\$
resulting in the following limiting risk expressions:
\begin{equation*}
\lim_{\npinfty} \riskcondition (\hat\beta^1)
= 
\begin{cases}
\sigma^2\left(\frac{\aspratio}{1-\aspratio - f(\aspratio)} - 1 \right),& \aspratio < \sampleratio  \\
\signallev \frac{\aspratio/\sampleratio - 1}{\aspratio/\sampleratio } + \noiselev \frac{1}{\aspratio/\sampleratio - 1},\quad& \aspratio > \sampleratio
\end{cases} \qas
\end{equation*}
This result is consistent with Theorem \ref{thm:isosketching} in Section~\ref{sec:isotropic}. Essentially, Theorem \ref{thm:corsketching} encompasses Theorem \ref{thm:isosketching}  as a special case. Moreover, Corollary \ref{coro:optimal_sketching} remains valid with correlated features, implying  that Bernoulli sketching optimizes the limiting risk  in Theorem \ref{thm:corsketching}.

To comprehend the role of sketching in the correlated case, we juxtapose the Bernoulli sketched estimator, chosen due to its optimality, with the full-sample min-norm estimator. Let $\compstieltjes(z; x)$ denote the solution to the equation:
\#\label{def:vz}
\compstieltjes(z; x) &= \left( -z + x\int \frac{t\,  d\limitdistr(t)}{1 + \compstieltjes(z;x)t} \right)^{-1}, ~\text{for any}~ x>0. 
\#
With this new notation,  $\compstieltjes(z)$ defined via the self-consistent equation \eqref{eq:fixedpointsketching} can be rewritten as $v(z;\aspratio/\sampleratio)$. The limiting risk of the Bernoulli sketched estimator $\hat\beta^1_{\Bern}$ can then be expressed as:
\#\label{eq:ridgeless_corr}
\lim_{\npinfty} \riskcondition(\hat\beta^1)
&= 
\begin{cases}
\sigma^2 \frac{\aspratio/\sampleratio}{1-\aspratio/\sampleratio } ,& \aspratio/\sampleratio  < 1 \\
\signallev \frac{1}{\aspratio \compstieltjes(0; \aspratio/\sampleratio)} + \noiselev \left(\frac{\compstieltjes'(0; \aspratio/\sampleratio)}{\compstieltjes(0; \aspratio/\sampleratio)^2} - 1 \right),\quad& \aspratio/\sampleratio  > 1
\end{cases} \qas
\#
A variant of \cite[Theorem 3]{hastie2019surprises} characterizes the limiting risk of the full-sample min-norm estimator as
\#\label{eq:ridgeless_corr}
\lim_{\npinfty} \riskcondition(\ridgeless)
&= 
\begin{cases}
\sigma^2 \frac{\aspratio}{1-\aspratio } ,& \aspratio < 1 \\
\signallev \frac{1}{\aspratio \compstieltjes(0; \aspratio)} + \noiselev \left(\frac{\compstieltjes'(0; \aspratio)}{\compstieltjes(0; \aspratio)^2} - 1 \right),\quad& \aspratio > 1
\end{cases} \qas
\#

By comparing the aforementioned  limiting risks, it is evident that the limiting risk of $\hat\beta^1$ corresponds to that of the full-sample min-norm estimator, albeit with  the aspect ratio and interpolation threshold modified from $\aspratio$ and $\aspratio = 1$ to $\aspratio/\sampleratio$ and $\aspratio/\sampleratio = 1$ respectively. In other words, sketching alters the  aspect ratio and shifts the interpolation threshold, which is consistent with  findings in the isotropic case.

%%%%%%%%%%%%%%%%%%%%%%%%%%%%%%%%%%%%%%%%%%%%%%%%%%%%%%%%%%
%%%%%%%%%%%%%%%Figure 7%%%%%%%%%%%%%%%%
%%%%%%%%%%%%%%%%%%%%%%%%%%%%%%%%%%%%%%%%%%%%%%%%%%%%%%%%%%

\begin{figure}[t]
\centering
\subfigure{
    \includegraphics[width=.45\textwidth]{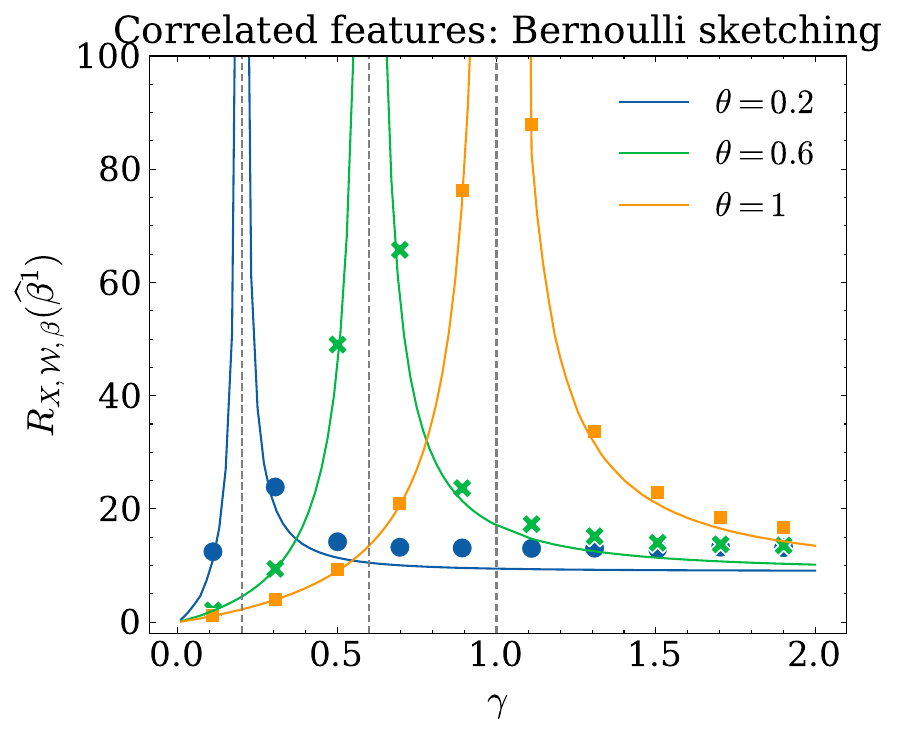}
}
% \hspace{-2em}
\subfigure{
    \includegraphics[width=.44\textwidth]{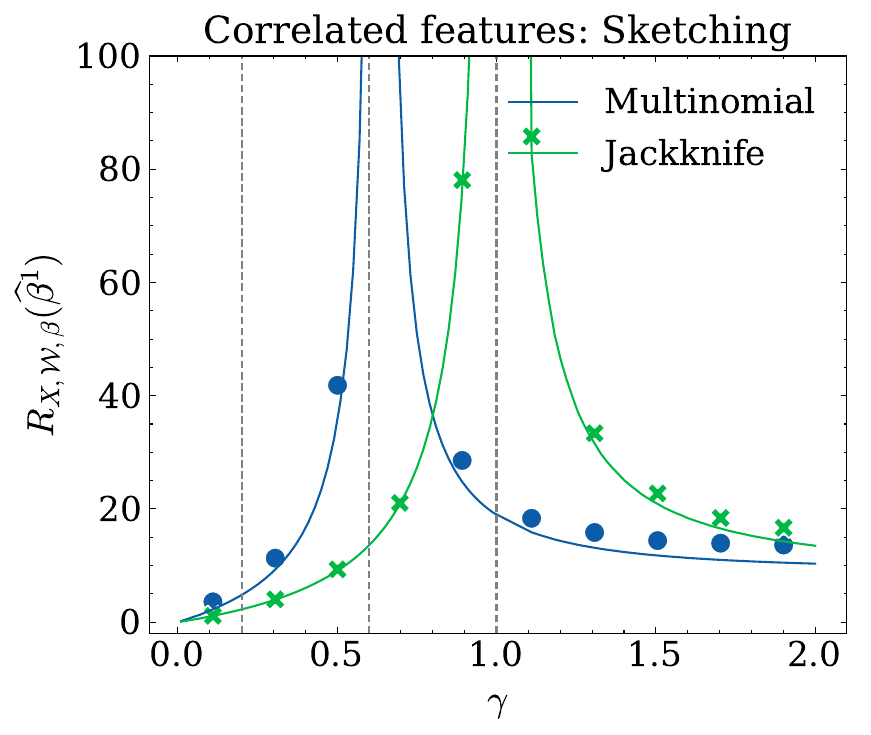}
}
    \caption{\small 
{
Limiting risk curves for Bernoulli sketched estimators  (left panel),  multinomial and Jackknife sketched estimators (right panel) with correlated features and $(r,\sigma)=(3,3)$.  Rows of $X\in \mathbb{R}^{n\times d}$ are  i.i.d drawn from $\mathcal{N}(0,\Sigma)$, $\beta\sim \cN(0, r^2I_d/d)$,  and $\Sigma$ has empirical spectral distribution $F^{\Sigma} (x)=\frac{1}{d}\sum_{i=1}^d 1(\lambda_i(\Sigma)\leq x)$ with $\lambda_i=2$ for $i = 1,...,[d/2]$, and $\lambda_i=1$ for $i = [d/2]+1,...,d$. Errors are generated in the same way as in Figure \ref{fig:fig_1}.  {Left panel:} The blue, green, and yellow lines  are theoretical risk curves for the Bernoulli sketched estimators with $\sampleratio = 0.2, 0.6, 1.0$ respectively. {Right panel:}  The blue and green lines  are theoretical risk curves for the classical sketched and Jackknife estimators respectively. 
In both panels, symbols mark the corresponding finite-sample risks in the same way as in Figure \ref{fig:fig_2}. }
}
    \label{fig:fig_7}
\end{figure}

Figure \ref{fig:fig_7}  plots the limiting risk curves for Bernoulli sketched,  multinomial,  and Jackknife sketched estimators with correlated features and $(r,\sigma)=(3,3)$. Each row of $X\in \mathbb{R}^{n\times d}$ is i.i.d drawn from $\mathcal{N}(0,\Sigma)$ and $\Sigma$ has empirical spectral distribution $F^{\Sigma} (x)=\frac{1}{d}\sum_{i=1}^d1(\lambda_i(\Sigma)\leq x)$  with $\lambda_i=2$ for $i = 1,...,[d/2]$, and $\lambda_i=1$ for $i = [d/2]+1,...,d$. It shows that the limiting risks of Bernoulli sketched estimators have the same shapes as that of the full-sample min-norm estimator but with modified  aspect ratios and interpolation thresholds.

%%%%%%%%%%%%%%%%%%%%%%%%%%%%%%%%%%%%%%%%%%%%%%%%%%%%%%%%%%%%
%%%%%%%%%%%%%%%%%%Bagging alters covariance%%%%%%%%%%%%%%%%%
%%%%%%%%%%%%%%%%%%%%%%%%%%%%%%%%%%%%%%%%%%%%%%%%%%%%%%%%%%%%

\subsection{Bagging under correlated features}\label{sec:corbagging}

This subsection first  studies  the out-of-sample prediction risk of the bagged  estimator under correlated features, and then establishes an equivalence between the bagged estimator and some full-sample min-norm  estimator. 
We begin with the characterization of the limiting risk. 
Recall $\compstieltjes(0)$,  $\tcompstieltjes(0)$, and $\tcompstieltjes'(0)$ from  Lemma \ref{lm:v0}.

%%%%%%%%%%%%%%%%%%%%%%%%%%%%%%%%%%%%%%%%%%%%%%%%%%%%%%%%%%%%%%
%%%%%%%%%%%%%Bagging under correlated features%%%%%%%%%%%%%%%%
%%%%%%%%%%%%%%%%%%%%%%%%%%%%%%%%%%%%%%%%%%%%%%%%%%%%%%%%%%%%%%

\begin{theorem}[Bagging under correlated features]\label{thm:corrbagging}
Assume Assumptions~\ref{Assume:highdim}-\ref{Assume:beta4+mom}. Then the out-of-sample prediction risk of $\estimator$ satisfies
\$
\lim_{\nresample \rightarrow + \infty} \lim_{\npinfty} \riskcondition(\estimator)
&= 
\begin{cases}
    \sigma^2\frac{\aspratio}{1-\aspratio}, & \aspratio < \sampleratio  \\
    \signallev \frac{\sampleratio}{\aspratio \compstieltjes(0)} - \signallev \frac{(1 - \sampleratio)}{\aspratio \compstieltjes(0)} \left(\frac{\tcompstieltjes'(0)}{\tcompstieltjes(0)^{2}} - 1 \right) + \noiselev \left(\frac{\tcompstieltjes'(0)}{\tcompstieltjes(0)^{2}} - 1\right), \quad& \aspratio > \sampleratio
\end{cases} \qas 
\$
Specifically, the bias and variance satisfy 
\$
%\lim_{\npinfty} 
\biascondition (\estimator)
&\overset{{\rm a.s.}}{\rightarrow}
\begin{cases}
0,& \aspratio/\sampleratio < 1  \\
\signallev \frac{\sampleratio}{\aspratio \compstieltjes(0)} - \signallev \frac{(1 - \sampleratio)}{\aspratio \compstieltjes(0)} \left(\frac{\tcompstieltjes'(0)}{\tcompstieltjes(0)^{2}} - 1 \right) ,& \aspratio/\sampleratio > 1
\end{cases}, \\
%\lim_{\npinfty} 
%~
\varcondition (\estimator)
& \overset{\as}{\rightarrow} 
\begin{cases}
\sigma^2 \frac{\aspratio}{1-\aspratio} , & \aspratio/\sampleratio < 1  \\
\noiselev \left(\frac{\tcompstieltjes'(0)}{\tcompstieltjes(0)^2} - 1 \right), & \aspratio/\sampleratio > 1
\end{cases}. 
\$
\end{theorem}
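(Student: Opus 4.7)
The overall strategy is to combine the bias--variance decomposition of Lemma~\ref{lm:biasvar-beta} with a two-resolvent deterministic-equivalent analysis for pairs of independently sketched sample covariances. Writing the bias and variance of $\estimator$ as $\nresample^{-2}$ times a double sum over $(k,\ell)$, split each sum into its diagonal ($k=\ell$) and off-diagonal ($k\neq\ell$) parts. The diagonal sum consists of $\nresample$ identical copies of the single-sketch bias or variance, whose limits are bounded by Theorem~\ref{thm:corsketching}, and therefore contribute only $O(1/\nresample)$ to the averages. Hence, as $\nresample\to\infty$, the limiting risk is determined by the common limit of any single off-diagonal summand, for which the pairwise asymptotic independence of $\cW_k$ and $\cW_\ell$ granted by Assumption~\ref{Assume:multip} can be exploited: conditionally on $X$, one sketched resolvent may be treated as ``frozen'' while deterministic equivalents are taken in the other.

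In the underparameterized regime $\aspratio<\sampleratio$ each sketched Gram has asymptotically full column rank, so $\projmat_k=0$ and the bias cross term vanishes. The variance cross term
\[
\frac{1}{\ndata^{2}}\tr\!\bigl(\hcovmat_k\pinv X\transp \sketchmat_k^{2}\sketchmat_\ell^{2} X \hcovmat_\ell\pinv \covmat\bigr),
\]
after writing $\sketchmat_k^{2}\sketchmat_\ell^{2}$ as the diagonal matrix of products of independent weights with expectation proportional to $\sampleratio^{2}$, concentrates to the same limit $\noiselev\aspratio/(1-\aspratio)$ that governs the full-sample min-norm estimator; the $\sampleratio^{2}$ factor from the weight product cancels the $1/\sampleratio^{2}$ scaling inside $\hcovmat_k\pinv,\hcovmat_\ell\pinv$. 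In the overparameterized regime $\aspratio>\sampleratio$, introduce the resolvents $R_k(z)=(\hcovmat_k-zI)\inv$ for $z<0$ and analyze the generalized Stieltjes functionals $(1/\ndim)\tr[\covmat R_k(z)\,M\,R_\ell(z)]$ for appropriate deterministic matrices $M$. Conditioning on $\cW_\ell$, substitute the standard single-resolvent deterministic equivalent $R_\ell(z)\asymp(\compstieltjes(z)\covmat-zI)\inv$, with $\compstieltjes(z)$ solving \eqref{eq:fixedpointsketching}; the remaining expectation over $\cW_k$ becomes a \emph{single}-resolvent problem against the modified population covariance $(I+\ensambleweight(0)\covmat)\inv\covmat$, whose limiting spectral measure is precisely $\tlimitdistr$, so the resulting self-consistent equation is exactly \eqref{eq:fixpointbootstrap} with unique positive solution $\tcompstieltjes(z)$ by Lemma~\ref{lm:v0}. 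Letting $z\to 0^{-}$, the bias limit assembles into $\signallev\sampleratio/(\aspratio\compstieltjes(0))-\signallev(1-\sampleratio)/(\aspratio\compstieltjes(0))\bigl(\tcompstieltjes'(0)/\tcompstieltjes(0)^{2}-1\bigr)$, where the subtracted term comes from the $\tlimitdistr$-weighted trace against $\covmat$, while differentiating the cross-resolvent identity in $z$ yields the variance limit $\noiselev\bigl(\tcompstieltjes'(0)/\tcompstieltjes(0)^{2}-1\bigr)$, producing the stated formula.

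The principal obstacle is the rigorous justification of replacing one of the two random resolvents by its deterministic equivalent inside the two-resolvent trace, which must be controlled uniformly against the other resolvent and against random test matrices such as $X\transp\sketchmat_k^{2}\sketchmat_\ell^{2}X$ arising in the variance cross term. This requires uniform concentration of quadratic forms of the type furnished by the $(8+\varepsilon)$-th moment condition in Assumption~\ref{Assume:Covdistri}, invoked via Lemma~\ref{lm:concentrationontrace}. A second delicate point is derivative control: obtaining $\tcompstieltjes'(0)$ from the cross resolvent limit as $z\to 0^{-}$ demands differentiating the self-consistent equation with uniform bounds in a left neighborhood of $0$, for which Lemma~\ref{lm:v0} supplies the necessary regularity. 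Once both points are in place, recognizing $\tlimitdistr$ as the pushforward of $\limitdistr$ under $t\mapsto t/(1+\ensambleweight(0)t)$ makes the emergence of \eqref{eq:fixpointbootstrap} transparent, and the specialization to $\covmat=I$ (where $\compstieltjes(0),\tcompstieltjes(0),\tcompstieltjes'(0)$ admit closed forms) recovers Theorem~\ref{thm:isoboostrap} as a sanity check.
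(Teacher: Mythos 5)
Your high-level architecture matches the paper: split the risk via Lemma~\ref{lm:biasvar-beta}, note the $k=\ell$ terms contribute $O(1/\nresample)$ so only the $k\neq\ell$ cross terms survive as $\nresample\to\infty$, and aim for a cross-resolvent deterministic equivalent. You also correctly anticipate that $\tlimitdistr$, $\tcompstieltjes$, and the aspect ratio $\aspratio/\sampleratio^{2}$ must appear in the limit.

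There is, however, a genuine gap in the mechanism you propose for the overparameterized cross term. You propose to ``freeze'' $R_\ell(z)$, replace it by a scalar-times-$\covmat$ deterministic equivalent, and then treat the resulting object as a single-resolvent trace in $R_k(z)$. This cannot produce the correct answer: even with $\cW_k$ and $\cW_\ell$ pairwise asymptotically independent, the two resolvents $R_k$ and $R_\ell$ are built from the same $X$ and share a common block of retained data points, and that overlap is exactly where the new quantities come from. If you actually replace $R_\ell$ by a fixed deterministic matrix and then analyze $R_k$ against it, the self-consistent equation that emerges is still the one for a single sketched sample covariance with aspect ratio $\aspratio/\sampleratio$ and population $\covmat$, which yields $\compstieltjes$ and $\limitdistr$ again --- not $\tcompstieltjes$, not $\tlimitdistr$, and not $\aspratio/\sampleratio^{2}$. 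Your sketch asserts the right destination but the recipe you describe does not lead there; the ``principal obstacle'' you flag (justifying a partial deterministic-equivalent replacement) is not an obstacle to be overcome but a sign that the step itself is the wrong one.

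What the paper actually does (Lemma~\ref{lm:Crossterm}) is first reduce, by Lemma~\ref{lm:Pseudo_products}, to $0/1$ multipliers, and then partition the sample indices into $\A$ (kept only in sample $k$), $\B$ (kept only in sample $\ell$), and $\C$ (kept in both), so that $\hcovmat_k=\hcovmat_\A+\hcovmat_\C$ and $\hcovmat_\ell=\hcovmat_\B+\hcovmat_\C$ with $\hcovmat_\C$ explicitly shared. Only the \emph{independent} pieces $\hcovmat_\A$ and $\hcovmat_\B$ are resolved into deterministic equivalents $-z\ensambleweight(z)\covmat$ via leave-one-out Sherman--Morrison arguments, while $\hcovmat_\C$ is kept as an honest random sample covariance of effective sample size $|\C|\approx\sampleratio^{2}\ndata$. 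After whitening by $(I+\ensambleweight(z)\covmat)^{-1/2}$, one is left with a single-resolvent variance problem for $\tcovmat_\C$ against the shrunken population covariance $(I+\ensambleweight(0)\covmat)^{-1}\covmat$, which is precisely what produces $\tlimitdistr$, $\tcompstieltjes$, and the aspect ratio $\aspratio/\sampleratio^{2}$. The bias cross term then additionally requires Lemma~\ref{lm:biassecond} to handle the $\hcovmat_\B$-against-$\covmat$ piece. Without this index-set decomposition, your proof has no step from which the $\sampleratio^{2}$-downsampling and the shrunken covariance can emerge. Your underparameterized sketch (``the $\sampleratio^{2}$ cancels'') is similarly informal; the paper instead whitens to isotropic features and runs the leave-one-out two-resolvent argument of Lemma~\ref{lm:isotrolimitresolvent} together with the self-consistent equation~\eqref{eq:fixpoint_iso_Bootstrap}, which is how the $\sampleratio$-dependence genuinely drops out.
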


\iffalse
\scolor{
In the underparameterized regime, similar to the isotropic case, the limiting risk of our bagged least square estimator remains the same as that of the ridgeless  estimator, regardless of what multipliers are used. % bootstrap procedure. 
In contrast, in the overparameterized regime, the limiting variance is determined by the noise variance $\noiselev$, with the aspect ratio $\aspratio/\sampleratio^{2}$ and covariance matrix $(I + \ensambleweight(0)\covmat)^{-1} \covmat$. {\color{red}When compared to sketched least squares, the limiting bias is reduced by 
\$
\frac{\signallev(1 - \sampleratio)}{\aspratio \compstieltjes_{1}(0)} \left(\frac{\tcompstieltjes'(0)}{\tcompstieltjes(0)^{2}} - 1 \right),
\$ 
which combines both the bias and variance terms.}  \scomment{What do you mean by bias is reduced by XXX? which combines both bias and variance terms. I find this hard to follow.} This observation demonstrates that bagging alleviates the double descent peak and results in a bounded risk curve, {\color{red}except at the interpolating point $\aspratio = \sampleratio$.} \scomment{What happens at the interpolation threshold?}
}
\fi

%%%%%%%%%%%%%%%%%%%%%%%%%%%%%%%%%%%%%%%
%%%%%%%%After the main theorem%%%%%%%%%
%%%%%%%%%%%%%%%%%%%%%%%%%%%%%%%%%%%%%%%

%\paragraph{Overview of the result}
%%%%Overview

%The above theorem characterizes the limiting risk of the bagged ridgeless least square estimator under correlated features when $B$ goes to infinity. 
Similarly to the isotropic case, the limiting risk for the bagged estimator is  independent of the choice of multipliers. Moreover, in contrast to the full-sample and sketched min-norm estimators whose limiting risks explode at the corresponding interpolation thresholds,
the limiting risk of the bagged estimator remains bounded.  Consequently,  the bagged estimator is stabler than both estimators in terms of the generalization performance.

%%%%the sketched least square estimator
%\paragraph{Comparing with the sketched least square estimator} 
This  stability improvement  comes from the variance reduction property of bagging, especially around the interpolation threshold.
When compared with the sketched min-norm estimator, bagging helps reduce the variance by at least a factor of $\theta$ everywhere, and even more substantially around the interpolation threshold. This  is characterized by the following lemma.

%Comparing with the sketched ridgeless least square estimator, bagging helps reduce the variance at least $\theta$ times everywhere, and to a smaller order near the interpolation threshold.  This is characterized in the following lemma. 

\begin{corollary}\label{cor:overbagsketch}
Assume Assumptions \ref{Assume:highdim}-\ref{Assume:Covdistri}. Then   
\$ 
\lim_{\nresample \rightarrow + \infty}\lim_{\npinfty} \frac{ \varcondition(\estimator)  }{ \varcondition(\estimatorsketch)} 
&\leq 
\frac{\sampleratio - \aspratio}{1 - \aspratio} \cdot  1(\gamma < \theta) + \frac{  {\tcompstieltjes'(0)}/{\tcompstieltjes(0)^2} - 1 }{  {\compstieltjes'(0)}/{\compstieltjes(0)^2} - 1   } \cdot 1\left( \gamma > \theta \right) \\
&\leq \theta. 
\$
When $\aspratio/\sampleratio \rightarrow 1$ and $\sampleratio \ne 1$, we have 
\$
\lim_{\nresample \rightarrow + \infty}\lim_{\npinfty} \frac{ \varcondition(\estimator)  }{ \varcondition(\estimatorsketch)} \rightarrow 0 \qas 
\$
\end{corollary}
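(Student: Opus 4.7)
Both inequalities follow by plugging in the variance formulas from Theorems~\ref{thm:corsketching} and \ref{thm:corrbagging} and handling the two regimes separately. Note that in the overparameterized regime the first inequality in the corollary is actually an equality (direct substitution), so only the bound by $\theta$ and the $\gamma/\theta \to 1$ limit need work there; in the underparameterized regime the first inequality is nontrivial because $V(\hat\beta^1)$ depends on $\mu_w$ while $V(\hat\beta^B)$ does not.

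\textbf{Underparameterized regime ($\gamma<\theta$).} From Theorems~\ref{thm:corsketching} and \ref{thm:corrbagging}, $V(\hat\beta^B)/\sigma^2 \to \gamma/(1-\gamma)$ is independent of $\mu_w$ and $\Sigma$, while $V(\hat\beta^1)/\sigma^2 \to \gamma/(1-\gamma-f(\gamma)) - 1$ depends on $\mu_w$ through $f(\gamma)$. I would invoke Corollary~\ref{coro:optimal_sketching} (which, as noted in the text following it, extends to correlated $\Sigma$ since the underparameterized sketched variance is $\Sigma$-free) to conclude $V(\hat\beta^1) \geq V(\hat\beta^1_{\Bern})$. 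For Bernoulli multipliers the self-consistent equation~\eqref{eq:uniqueMPlawsketching} collapses to $(1-\theta) + \theta/(1+c_0) = 1-\gamma$, giving $c_0 = \gamma/(\theta-\gamma)$ and $f(\gamma) = (1-\theta) + (\theta-\gamma)^2/\theta$, hence $V(\hat\beta^1_{\Bern})/\sigma^2 = \gamma/(\theta-\gamma)$. Dividing yields $V(\hat\beta^B)/V(\hat\beta^1) \leq V(\hat\beta^B)/V(\hat\beta^1_{\Bern}) = (\theta-\gamma)/(1-\gamma)$, and the further bound $(\theta-\gamma)/(1-\gamma) \leq \theta$ is equivalent to $\gamma(1-\theta) \geq 0$.

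\textbf{Overparameterized regime ($\gamma>\theta$).} It remains to show $(\tilde v'(0)/\tilde v(0)^2 - 1)/(v'(0)/v(0)^2 - 1) \leq \theta$. I would differentiate the self-consistent equations~\eqref{eq:fixedpointsketching} and \eqref{eq:fixpointbootstrap} implicitly at $z=0^-$ to express $v'(0)$ and $\tilde v'(0)$ as moments of $H$ and $\tilde H$ against $(1+v(0)t)^{-2}$ and $(1+\tilde v(0)t)^{-2}$ respectively. The crucial link is that $\tilde H$ is the pushforward of $H$ under $t \mapsto t/(1+k(0)t)$ with $k(0)=(1-\theta)v(0)$; a change of variables rewrites the moments defining $\tilde v(0),\tilde v'(0)$ as weighted moments of $H$, so that the ratio can be compared directly with $v'(0)/v(0)^2 - 1$. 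The $\theta$ factor should emerge from the $\gamma/\theta^2$ versus $\gamma/\theta$ discrepancy in the two fixed-point equations, with the final inequality reducing to a Cauchy--Schwarz-type comparison on $H$. As a sanity check, in the isotropic case $H=\delta_1$ this procedure produces the sharp ratio $\theta(\gamma-\theta)/(\gamma-\theta^2) \leq \theta$.

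\textbf{Limit $\gamma/\theta \to 1$ and main obstacle.} In the underparameterized regime $(\theta-\gamma)/(1-\gamma) \to 0$ as $\gamma \nearrow \theta$, giving the claim. For $\gamma \searrow \theta$, one sees from~\eqref{eq:fixedpointsketching} that $v(0) \to \infty$, so $v'(0)/v(0)^2 - 1 \to \infty$ (matching the divergence of the sketched risk at its interpolation threshold), while Theorem~\ref{thm:corrbagging} guarantees that $\tilde v'(0)/\tilde v(0)^2 - 1$ stays bounded (the bagged risk is continuous through $\gamma=\theta$); hence the ratio vanishes. The main obstacle is the overparameterized bound by $\theta$: the coupled pair of self-consistent equations must be differentiated and compared simultaneously, and the feedback through $k(0)=(1-\theta)v(0)$ in the definition of $\tilde H$ must be unwound carefully via the pushforward identity. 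Obtaining the sharp factor $\theta$ (rather than some softer constant) likely requires applying Cauchy--Schwarz in exactly the right form against the measure $(1+v(0)t)^{-2}\,dH(t)$.
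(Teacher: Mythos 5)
Your high-level plan is sound and matches the paper in both regimes, but your characterization of what makes the overparameterized bound work is off in a way worth correcting. You correctly anticipate that the pushforward identity (Lemma~\ref{lm:limitdistrequiv}) and the feedback $k(0)=(1-\theta)v(0)$ must be unwound, but you guess that the sharp factor $\theta$ will come from ``a Cauchy--Schwarz-type comparison on $H$.'' In fact no Cauchy--Schwarz is needed, and inserting one would give a slack constant. The missing observation is that $\tcompstieltjes(0)=\sampleratio\compstieltjes(0)$ (Lemma~\ref{lm:v0tv}). Once you have this, the derivative formula~\eqref{eq:vderivform} expresses both $\tcompstieltjes'(0)/\tcompstieltjes(0)^2-1$ and $\compstieltjes'(0)/\compstieltjes(0)^2-1$ in the form $x/(1-x)$ with arguments $\gamma I$ and $(\gamma/\theta)I$ respectively, for the \emph{same} integral $I=\int \compstieltjes(0)^2 t^2/(1+\compstieltjes(0)t)^2\, dH(t)$: the change of variables $t\mapsto t/(1+k(0)t)$ makes $\frac{\gamma}{\theta^2}\int \tcompstieltjes(0)^2 s^2/(1+\tcompstieltjes(0)s)^2\,d\tilde H(s)$ collapse to $\gamma I$ exactly, because $\theta\compstieltjes(0)\cdot\frac{t}{1+k(0)t}$ recombines with $k(0)=(1-\theta)\compstieltjes(0)$ to give $\compstieltjes(0)t/(1+\compstieltjes(0)t)$. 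The bound by $\theta$ is then the one-line algebraic fact that $\frac{\gamma I/(1-\gamma I)}{(\gamma/\theta)I/(1-(\gamma/\theta)I)}=\theta\frac{1-(\gamma/\theta)I}{1-\gamma I}\le\theta$ since $\gamma\le\gamma/\theta$ — monotonicity of $x\mapsto x/(1-x)$, not Cauchy--Schwarz. Relatedly, your argument for the $\gamma/\theta\to 1$ limit in the overparameterized regime (``the bagged risk is continuous, the sketched variance diverges'') is heuristic; with the identity above, the limit is explicit: $v(0)\to\infty$ forces $I\to 1$ by dominated convergence, so the ratio $\theta\frac{1-(\gamma/\theta)I}{1-\gamma I}\to\theta\cdot\frac{0}{1-\gamma}=0$. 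Your underparameterized argument via Corollary~\ref{coro:optimal_sketching} and the Bernoulli closed form is correct and matches the remark the paper relies on.
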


\begin{figure}[t]
\centering
\subfigure{
    \includegraphics[width=.45\textwidth]{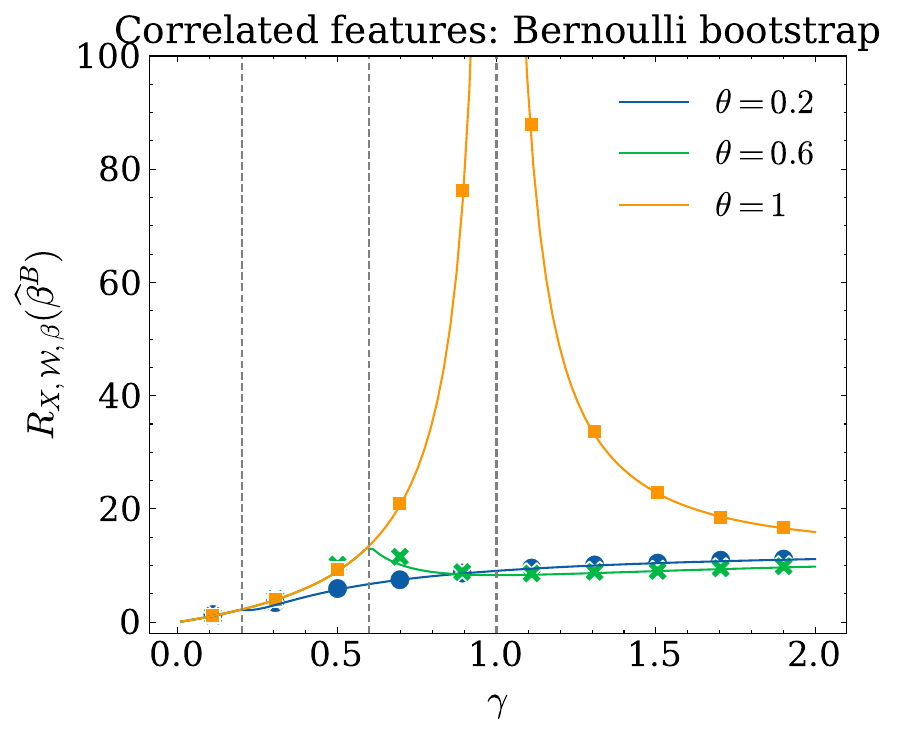}
}
% \hspace{-2em}
\subfigure{
    \includegraphics[width=.43\textwidth]{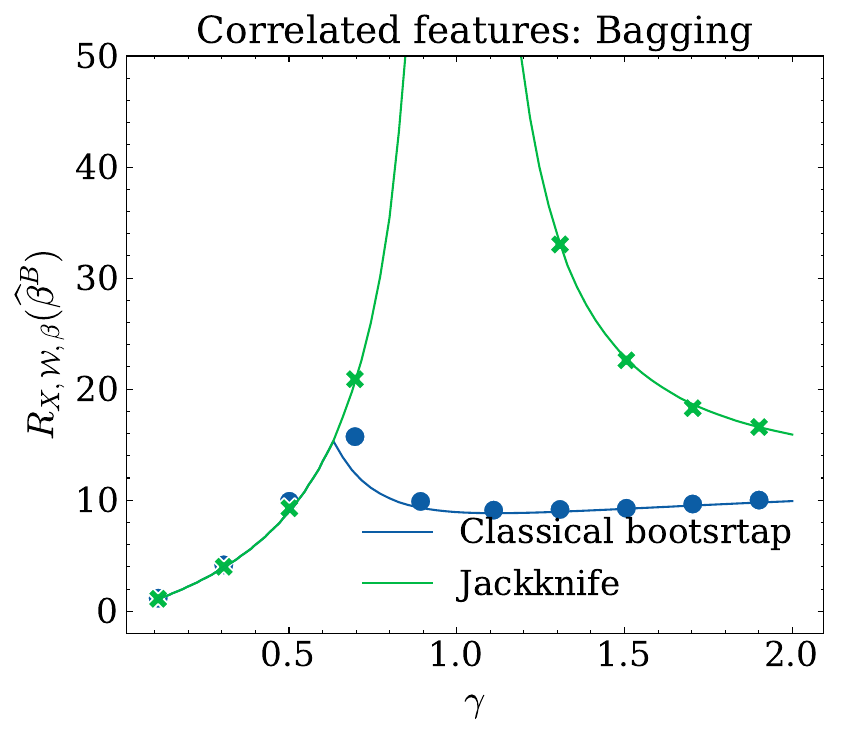}
}
    \caption{\small 
  %  Limiting risk curves for the bagged estimator with correlated features and $(r,\sigma)=(3,3), n\_train=400, n\_test=100$. The left figure depicts the risk changes with $\gamma$ under the setting of Bernoulli bootstrap, where the blue dots, green crosses, and yellow triangles individually mark $\theta=0.2,0.6,1.0$. Each row of $X\in \mathbb{R}^{n\_train\times d}$ was i.i.d drawn from $\mathcal{N}(0,\Sigma)$ and $\Sigma$ has empirical spectral distribution $F^{\Sigma} (x)=\frac{1}{d}\sum_{i=1}^d\mathbf{1}\{\lambda_i(\Sigma)\leq x\}$ with $\lambda_i=2$ for $i = 1,...,[d/2]$, and $\lambda_i=1$ for $i = [d/2]+1,...,d$. The right figure includes the risk changes in classical bootstrap (blue dots) and Jackknife bootstrap (green crosses) with $B=50$.
    {Limiting risk curves for the Jackknife estimator, Bernoulli and  classical bootstrap estimators with correlated features and $(r, \sigma)=(3,3)$. The features, errors, and $\beta$ are generated in the same way as in Figure \ref{fig:fig_7} and the number of bootstrap rounds is $B=50$.  {Left panel:} The blue,  green, and orange  lines are theoretical risk curves for the Bernoulli bootstrap estimators with $\sampleratio = 0.2, 0.6, 1.0$ respectively. 
{Right panel:}  The blue and green lines  are theoretical risk curves for the classical bootstrapped and Jackknife estimators respectively. 
In both panels, symbols mark the corresponding finite-sample risks in the same way as in Figure \ref{fig:fig_2}.}}
    \label{fig:fig_8}
\end{figure}

In addition to variance reduction, bagging also contributes to the reduction of implicit bias in the overparameterized regime:
\$
\lim_{\nresample \rightarrow + \infty} \lim_{\npinfty} \biascondition(\hat \beta^B)
&\overset{\as}{=}
\lim_{\npinfty} \biascondition (\hat \beta^1) -  \frac{\signallev(1 - \sampleratio)}{\aspratio \compstieltjes(0)} \left(\frac{\tcompstieltjes'(0)}{\tcompstieltjes(0)^{2}} - 1 \right)
\leq  \lim_{\npinfty} \biascondition (\hat \beta^1)
\$  
since ${\tcompstieltjes'(0)}/{\tcompstieltjes(0)^{2}} - 1 >0$. 
Under isotropic features, the self-consistent equations can be readily solved, leading to 
\$
\compstieltjes(0)= \frac{\sampleratio}{\aspratio - \sampleratio}, ~ \tcompstieltjes(0)= \frac{\sampleratio^{2}}{\aspratio - \sampleratio}, ~\textnormal{and}~~ \tcompstieltjes'(0)= \frac{\aspratio \sampleratio^{4}}{(\aspratio - \sampleratio)^{2} (\aspratio - \sampleratio^{2})}. 
\$ 
Consequently, the implicit bias in the overparameterized regime  is reduced by 
\$
r^2 \frac{\sampleratio(1 - \sampleratio)(\aspratio - \sampleratio)}{\aspratio (\aspratio - \sampleratio^{2})}. 
\$

Figure \ref{fig:fig_8} plots the {limiting risk curves for the Jackknife estimator, Bernoulli and  classical bootstrap estimators with correlated features and $(r, \sigma)=(3,3)$. Similar to the isotropic case, bagged estimators with $\sampleratio \ne 1$ have bounded limiting risks.

%%%%%%%%%%%%%%%%%%%%%%%%%%%%%%%%%%%%%%%%%%%%%%%%%%%%%%%%%%%%%%%
%%%%%%%%%%%%Bagging as implicit regularization%%%%%%%%%%%%%%%%%
%%%%%%%%%%%%%%%%%%%%%%%%%%%%%%%%%%%%%%%%%%%%%%%%%%%%%%%%%%%%%%%
\subsection{Bagging as implicit regularization}

In this subsection, we establish an  equivalence between the bagged estimator under model \eqref{eq:lm} and  the min-norm least square estimator under a  different model.  Specifically, let $\hat{\truesignal}_{\sampleratio}$ be the ridgeless least square estimator obtained using the following generative model %by using data generated from 
\#\label{eq:lm_modified}
\Tilde{Y} = \Tilde{X} \Tilde{\truesignal} + E \in \RR^{\lfloor\sampleratio^{2}\ndata\rfloor}, 
\#
where $\Tilde{X}= (\Tilde{x}_1, \ldots, \Tilde{x}_{[\sampleratio^2\ndata]} )\transp \in \RR^{[\sampleratio^{2}\ndata] \times \ndim}$
consists of \iid~feature vectors $\Tilde{x}_i$ with a size of $[\sampleratio^{2}\ndata]$ and a covariance matrix $\Tilde \Sigma = (I + \ensambleweight(0)\covmat)^{-1} \covmat$,  $\Tilde{\truesignal} = (I + \ensambleweight(0)\covmat)^{-1/2} \truesignal$, and $E=(\varepsilon_1, \ldots, \varepsilon_n)\transp\in \RR^n$ is the same as in model \eqref{eq:lm}.

%We assume that the signal vector $\truesignal$ is fixed rather than  random. 
Let $\covmat = \sum_{i}^{\ndim} \eigval_{i} u_{i} u_{i}\transp$ be the eigenvalue decomposition of the covariance matrix.   For a fixed deterministic signal $\truesignal$, define    the eigenvector empirical spectral distribution (VESD)  $G_{n}(s)$ as
\$
G_{n}(s) = \frac{1}{\Tilde{r}^{2}} \sum_{i=1}^{\ndim} \frac{1}{1 + \ensambleweight(0)\eigval_{i}}\langle \beta, u_{i} \rangle^{2} \, 1\left( \frac{\eigval_{i}}{1 + \ensambleweight(0)\eigval_{i}} \leq  s\right),
\$
where $\Tilde{r}^{2}$ = $\truesignal\transp (I + \ensambleweight(0)\covmat)^{-1} \truesignal$. We need the following assumption.

\begin{assumption}[Deterministic signal]\label{Assume:VESD}
 The signal $\truesignal$ is deterministic, and ${G_{n}}$ converges weakly to a probability distribution $G$.
\end{assumption}

Let $R_{\Tilde{X}}$ be defined similarly as $\riskcondition$ but conditioning only on $\Tilde{X}$. With these definitions, we are now ready to  present our main result in this subsection.

\begin{corollary}\label{cor:bootstrapequiv}
Assume Assumptions~\ref{Assume:highdim}-\ref{Assume:multip}, and~\ref{Assume:VESD}. Further assume $\Tilde{x}_i\, {\sim}\, \Tilde{x} = \Tilde{\Sigma}^{1/2}z$ %\wcomment{$\sim$?}
with $z$ satisfying Assumption \ref{Assume:Covdistri}. When $\aspratio > \sampleratio$, we have %\scomment{a.s. or almostly surely.}
    \begin{equation*}
        \lim_{\nresample \rightarrow + \infty} \lim_{\npinfty} \riskcondition(\estimator) = \lim_{\npinfty} R_{\Tilde{X}}(\hat{\truesignal}_{\sampleratio})  \qas
    \end{equation*}
\end{corollary}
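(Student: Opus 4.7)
}
The strategy is to compute both sides of the claimed identity using exact risk formulas and verify they agree term by term. Since Assumption \ref{Assume:VESD} replaces the isotropic Assumption \ref{Assume:beta4+mom}, I would first invoke the deterministic-signal version of Theorem \ref{thm:corrbagging} that the authors advertise in Section \ref{sec:extension}. In the overparametrized regime $\aspratio > \sampleratio$, this yields a variance term equal to $\noiselev(\tcompstieltjes'(0)/\tcompstieltjes(0)^{2}-1)$ and a bias term that, rather than being $r^{2}\cdot\sampleratio/(\aspratio\compstieltjes(0))$ minus the correction written in Theorem \ref{thm:corrbagging}, is expressed as an integral against the VESD $G$ with weights determined by $\compstieltjes(0)$, $\tcompstieltjes(0)$, and $\tcompstieltjes'(0)$.

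Next I would apply the deterministic-signal analog of the min-norm risk formula \eqref{eq:ridgeless_corr} to the modified problem \eqref{eq:lm_modified}, which has feature dimension $\ndim$, sample size $\lfloor \sampleratio^{2}\ndata\rfloor$, covariance $\tcovmat=(I+\ensambleweight(0)\covmat)^{-1}\covmat$, and signal $\Tilde{\truesignal}=(I+\ensambleweight(0)\covmat)^{-1/2}\truesignal$. The effective aspect ratio is $\aspratio/\sampleratio^{2}$, which is strictly greater than $1$ whenever $\aspratio>\sampleratio$ and $\sampleratio\le1$, so the modified problem is genuinely overparametrized and the overparametrized branch of \eqref{eq:ridgeless_corr} applies.

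Two structural checks then close the argument. First, the companion self-consistent equation for the Stieltjes transform of $\tcovmat$ at aspect ratio $\aspratio/\sampleratio^{2}$ is exactly \eqref{eq:fixpointbootstrap}; hence its solution coincides with $\tcompstieltjes$, and the variance terms on the two sides match verbatim. Second, using the eigendecomposition $\covmat=\sum_{i}\eigval_{i}u_{i}u_{i}\transp$, the eigenpairs of $\tcovmat$ are $(\eigval_{i}/(1+\ensambleweight(0)\eigval_{i}),\,u_{i})$, and $\langle\Tilde{\truesignal},u_{i}\rangle^{2}=(1+\ensambleweight(0)\eigval_{i})^{-1}\langle\truesignal,u_{i}\rangle^{2}$. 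A direct computation then shows that the VESD $G_{n}$ introduced before Assumption \ref{Assume:VESD} is precisely the VESD of $\Tilde{\truesignal}$ with respect to $\tcovmat$, normalized by $\Tilde{r}^{2}=\truesignal\transp(I+\ensambleweight(0)\covmat)^{-1}\truesignal$. Consequently, under Assumption \ref{Assume:VESD} the limiting spectral data feeding into the min-norm bias under \eqref{eq:lm_modified} coincide with those feeding into the bagged bias under \eqref{eq:lm}.

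The main obstacle will be the algebraic collapse in the bias comparison. Theorem \ref{thm:corrbagging} presents the limiting bias as a difference of two terms, whereas the min-norm risk formula applied to \eqref{eq:lm_modified} produces a single integral against $G$. Reconciling these requires the identity $\ensambleweight(0)=(1-\sampleratio)\compstieltjes(0)$, combined with the pushforward relations $\Tilde{\eigval}_{i}=\eigval_{i}/(1+\ensambleweight(0)\eigval_{i})$ and $\langle\Tilde{\truesignal},u_{i}\rangle^{2}=(1+\ensambleweight(0)\eigval_{i})^{-1}\langle\truesignal,u_{i}\rangle^{2}$, to rewrite the Theorem \ref{thm:corrbagging} bias as an integral over $G$ weighted by $(\tcompstieltjes'(0)/\tcompstieltjes(0)^{2})/(\aspratio/\sampleratio^{2})$ times $(1+\tcompstieltjes(0)s)^{-2}$, which is exactly the bias produced by the min-norm formula for $\hat{\truesignal}_{\sampleratio}$ at aspect ratio $\aspratio/\sampleratio^{2}$. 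Once this bookkeeping is performed, the bias and variance both match and the corollary follows.
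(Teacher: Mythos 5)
Your proposal is correct, but it verifies the identity at a different level than the paper does. The paper's own proof is a one-line structural identification: it points back to equations \eqref{eq:fixpointcrossterm} and \eqref{eq:biasequiv} (inside the proofs of Lemma \ref{lm:corBaggingvar} and Theorem \ref{thm:Overpara}), where the cross terms $B_{k,\ell}$ and $V_{k,\ell}$ are shown, at the resolvent level, to converge to precisely the bias and variance of a ridgeless regression on a sample of effective size $\sampleratio^{2}\ndata$ with covariance $(I+\ensambleweight(0)\covmat)^{-1}\covmat$ and signal $(I+\ensambleweight(0)\covmat)^{-1/2}\truesignal$; the equality of limiting risks then needs no further computation. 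You instead match the two sides at the level of the final limiting formulas: Theorem \ref{thm:Overpara} for the bagged risk, a Hastie-type deterministic-signal min-norm formula for model \eqref{eq:lm_modified}, plus the two structural checks that (i) the companion self-consistent equation at aspect ratio $\aspratio/\sampleratio^{2}$ with population spectrum $\tlimitdistr$ is \eqref{eq:fixpointbootstrap}, so the variances coincide, and (ii) $G_n$ is exactly the VESD of $\Tilde{\truesignal}$ with respect to $\tcovmat$ normalized by $\Tilde{r}^{2}$, so the bias integrals coincide. Both checks are right (including the observation that $\aspratio/\sampleratio^{2}>1$ whenever $\aspratio>\sampleratio$ and $\sampleratio\le 1$, and that the extra hypothesis on $\Tilde{x}$ supplies the moment conditions needed for the right-hand formula), and your route buys a more explicit, self-contained verification at the cost of invoking a deterministic-signal risk formula for \eqref{eq:lm_modified} — which the paper also implicitly relies on through the proof of Theorem \ref{thm:Overpara}.

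One remark on your closing paragraph: the ``algebraic collapse'' you flag as the main obstacle is not actually needed here. Under the corollary's hypotheses the signal is deterministic (Assumption \ref{Assume:VESD}), so Theorem \ref{thm:corrbagging} — whose bias appears as a difference of two terms — is not the applicable result; Theorem \ref{thm:Overpara} already gives the bagged bias as the single integral against $G$, and the min-norm bias under \eqref{eq:lm_modified} is the same integral, so they match directly. The reconciliation you describe is the content of Corollary \ref{cor:deterreduction} and only becomes relevant if one additionally assumes the isotropic random signal of Assumption \ref{Assume:beta4+mom}; including it does no harm, but it is superfluous for this statement.
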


 %the limiting risk of the bagged ridgeless least square estimator in the overparameterized regime under model \eqref{eq:lm} coincides with the limiting  risk of the ridgeless least square estimator under model \eqref{eq:lm_modified}. 

The above result reveals that when $\gamma > \sampleratio$, the limiting risk of the bagged estimator under the original model \eqref{eq:lm} is equivalent to that of the full-sample min-norm estimator under the new generative model \eqref{eq:lm_modified}. In this new model, the aspect ratio $\gamma$, the true signal $\truesignal$, and the covariance matrix $\Sigma$ are replaced by $\aspratio/\sampleratio^{2}$, $(I + \ensambleweight(0)\covmat)^{-1/2} \truesignal$, and $(I + \ensambleweight(0)\covmat)^{-1} \covmat$ respectively. In other words, the new model \eqref{eq:lm_modified} represents features with a shrunken covariance matrix and a shrunken signal. This suggests that bagging serves as a form of implicit regularization. When $\gamma < \sampleratio$, the limiting risk of the bagged estimator agrees with that of the full-sample min-norm estimator under the original model, regardless of the choice of multipliers.

%In other word, the features under with a shrunken covariance matrix and a shrunken signal, which suggests that  bagging serves as  a form of implicit regularization. 
%Specifically, model \eqref{eq:lm_modified} has features with a shrunken covariance matrix and a shrunken signal, which suggests that  bagging serves as  a form of implicit regularization. When $\gamma < \sampleratio$, the limiting risk of the  bagged  estimator agrees with that of the full-sample min-norm, regardless of what multipliers are used.

\iffalse  
\scolor{
 When $\gamma < \sampleratio$, the limiting risk of the  bagged  estimator agrees with that of the full-sample min-norm, regardless of what multipliers are used. When $\gamma > \sampleratio$, however,  the limiting variance  is equal to  that of the full-sample min-norm  estimator, but with the aspect ratio $\gamma$, the true signal $\truesignal$, and the covariance matrix $\Sigma$ replaced by $\aspratio/\sampleratio^{2}$, $(I + \ensambleweight(0)\covmat)^{-1/2} \truesignal$, and $(I + \ensambleweight(0)\covmat)^{-1} \covmat$ respectively; see Corollary \ref{cor:bootstrapequiv}.  
 }
 \fi

%%%%%%%%%%%%%%%%%%%%%%%%%%%%%%%%%%%%%%%%%%%%%%%%%%%%%%%%%%%%
%%%%%%%%%%%%%%%%%%%%%%%%Extensions %%%%%%%%%%%%%%%%%%%%%%%%%
%%%%%%%%%%%%%%%%%%%%%%%%%%%%%%%%%%%%%%%%%%%%%%%%%%%%%%%%%%%%
\section{Extensions}\label{sec:extension}

This section studies  the limiting risk under the deterministic signal case, characterizes the training error in terms of the out-of-sample limiting risk, and  calculates the adversarial risk, all for the  bagged estimator. 

%%%%%%%%%%%%%%%%%%%%%%%%%%%%%%%%%%%%%%%%%%%%%%%%%%%%%%%
%%%%%%%%%%%%%%%%%Deterministic signal%%%%%%%%%%%%%%%%%%
%%%%%%%%%%%%%%%%%%%%%%%%%%%%%%%%%%%%%%%%%%%%%%%%%%%%%%%

\subsection{Deterministic signal}

\iffalse
This subsection extends our analysis to the case where the signal vector $\beta$ is deterministic. Let $\covmat = \sum_{i}^{\ndim} \eigval_{i} u_{i} u_{i}\transp$ be the eigenvalue decomposition of the covariance matrix.  For a fixed deterministic signal $\truesignal$, define  the eigenvector empirical spectral distribution (VESD)  $G_{n}(s)$ as
\$
G_{n}(s) = \frac{1}{\Tilde{r}^{2}} \sum_{i=1}^{\ndim} \frac{1}{1 + \ensambleweight(0)\eigval_{i}}\langle \beta, u_{i} \rangle^{2} \, 1\left( \frac{\eigval_{i}}{1 + \ensambleweight(0)\eigval_{i}} \leq  s\right),
\$
where $\Tilde{r}^{2}$ = $\truesignal\transp (I + \ensambleweight(0)\covmat)^{-1} \truesignal$. We need  the following assumption. 

\begin{assumption}[Deterministic signal]\label{Assume:VESD}
    The signal $\truesignal$ is deterministic, and ${G_{n}}$ converges weakly to a probability distribution $G$.
\end{assumption}
\fi

We first present the limiting risk of the bagged least square estimator when the signal is deterministic as in Assumption \ref{Assume:VESD}. Recall  $\compstieltjes(0)$, $\tcompstieltjes(0)$, and  $\tcompstieltjes'(0)$ from Lemma \ref{lm:v0}.

\begin{theorem}[Deterministic signal]\label{thm:Overpara}
Assume Assumptions~\ref{Assume:highdim}-\ref{Assume:multip}, and~\ref{Assume:VESD}.  The out-of-sample prediction risk of $\estimator$ satisfies
\begin{equation*}
\lim_{\nresample \rightarrow + \infty} \lim_{\npinfty} \riskcondition(\estimator)
=
\begin{cases}
    \sigma^2\frac{\aspratio}{1-\aspratio}, & \aspratio < \sampleratio  \\
    \Tilde{r}^{2} \frac{\tcompstieltjes'(0)}{\tcompstieltjes(0)^{2}} \int \frac{s}{(1 + \tcompstieltjes(0)s)^{2}} \,d{G}(s) + \noiselev \left(\frac{\tcompstieltjes'(0)}{\tcompstieltjes(0)^{2}} - 1\right), \quad& \aspratio > \sampleratio
\end{cases} \qas 
\end{equation*}
Specifically, the bias and variance satisfy 
\$
%\lim_{\npinfty} 
\biascondition (\estimator)
& \overset{{\rm a.s.}}{\rightarrow}
\begin{cases}
0,& \aspratio/\sampleratio < 1  \\
\Tilde{r}^{2} \frac{\tcompstieltjes'(0)}{\tcompstieltjes(0)^{2}} \int \frac{s}{(1 + \tcompstieltjes(0)s)^{2}} \,d{G}(s), & \aspratio/\sampleratio > 1
\end{cases}, \\
%\lim_{\npinfty} 
\varcondition (\estimatorsketch)
 &\overset{\as}{\rightarrow} 
\begin{cases}
\sigma^2 \frac{\aspratio}{1-\aspratio}, & \aspratio/\sampleratio < 1  \\
\noiselev \left(\frac{\tcompstieltjes'(0)}{\tcompstieltjes(0)^{2}} - 1\right), & \aspratio/\sampleratio > 1
\end{cases}. 
\$
\end{theorem}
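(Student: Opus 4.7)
The plan is to apply the bias–variance decomposition from Lemma \ref{lm:biasvar}, noting that the variance expression
\$
\varcondition(\estimator) = \frac{\noiselev}{\nresample^{2}} \sum_{k,\ell} \frac{1}{n^{2}} \tr\left( \hcovmat_{k}\pinv X\transp \sketchmat_{k}^2 \sketchmat_{\ell}^2 X \hcovmat_{\ell}\pinv \covmat \right)
\$
does not depend on $\truesignal$. Hence the variance limit is already established in Theorem \ref{thm:corrbagging}: it equals $\sigma^2\aspratio/(1-\aspratio)$ when $\aspratio<\sampleratio$ and $\noiselev(\tcompstieltjes'(0)/\tcompstieltjes(0)^{2}-1)$ when $\aspratio>\sampleratio$. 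The entire task is therefore to analyze the bias
\$
\biascondition(\estimator) = \frac{1}{\nresample^{2}} \sum_{k,\ell} \truesignal\transp \projmat_{k} \covmat \projmat_{\ell} \truesignal
\$
under the deterministic signal and VESD assumption.

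When $\aspratio<\sampleratio$, each individual sketched Gram matrix $\hcovmat_k$ has full column rank almost surely in the limit, so $\projmat_k\to 0$ and the bias vanishes, matching the claim. When $\aspratio>\sampleratio$, I would split the double sum into $B$ diagonal terms $\truesignal^{\T}\projmat_{k}\Sigma\projmat_{k}\truesignal$ and $B(B-1)$ off-diagonal terms. Each individual term is $O(1)$, so the diagonal block contributes $O(1/B)$ and vanishes after first taking $\npinfty$ and then $B\to\infty$. The off-diagonal contribution is the dominant piece, and by the asymptotic pairwise independence of $\cW_{k},\cW_{\ell}$ from Assumption \ref{Assume:multip}, its almost-sure limit coincides with the limit of
\$
\truesignal\transp \projmat_{k}\Sigma\projmat_{\ell}\truesignal,\qquad \cW_k\perp\cW_\ell.
\$

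To handle this cross-term, I would use the resolvent representation $\projmat_k=\lim_{z\to 0^{-}}(-z)(\hcovmat_k-zI)^{-1}$ and compute the double Stieltjes limit
\$
\lim_{z,z'\to 0^{-}} z z'\cdot \truesignal\transp (\hcovmat_k-zI)^{-1} \Sigma (\hcovmat_\ell-z'I)^{-1} \truesignal.
\$
Conditioning first on $\hcovmat_k$ and applying the anisotropic deterministic equivalent for the independent resolvent $(\hcovmat_\ell-z'I)^{-1}$, then repeating for the remaining resolvent, one obtains a deterministic quadratic form involving $(I+\compstieltjes(z)\Sigma)^{-1}$ and its differentiated counterpart. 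The derivative in $z$ of the self-consistent equation \eqref{eq:fixedpointsketching} forces $\tcompstieltjes'(0)$ to appear (rather than $\compstieltjes'(0)$), because the product of two independent projections acts effectively on the shrunken covariance $(I+\ensambleweight(0)\Sigma)^{-1}\Sigma$ whose spectral law is $\tlimitdistr$. Diagonalising $\Sigma=\sum_i \eigval_i u_i u_i^{\T}$ and collecting the coefficients $\langle \truesignal,u_i\rangle^{2}$ with the reweighting $1/(1+\ensambleweight(0)\eigval_i)$ built into the VESD $G_n$, the limit rewrites as
\$
\Tilde{r}^{2}\frac{\tcompstieltjes'(0)}{\tcompstieltjes(0)^{2}}\int \frac{s}{(1+\tcompstieltjes(0)s)^{2}}\,dG(s),
\$
which is precisely the overparameterized bias in the statement.

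The main technical obstacle will be justifying the double deterministic-equivalent reduction with the product $\Sigma$ sandwiched between two independent resolvents, and exchanging the $z,z'\to 0^{-}$ limit with $\npinfty$. The first point requires an anisotropic local law uniform in the spectral parameter, which I would obtain by combining the uniform concentration of quadratic forms (Lemma \ref{lm:concentrationontrace}) with the leave-one-out expansion on the rows of $\sketchmat_k X$ and $\sketchmat_\ell X$, separately on two independent probability spaces. The second point uses the existence of the limits $\compstieltjes(0)$, $\tcompstieltjes(0)$, $\tcompstieltjes'(0)$ from Lemma \ref{lm:v0} together with monotonicity of the resolvent in $z\le 0$ to pass the limit through. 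Combining the bias limit with the variance already computed in Theorem \ref{thm:corrbagging} completes the proof in the overparameterized regime, and concatenating with the trivial underparameterized case yields the stated piecewise formula.
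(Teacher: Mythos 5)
Your skeleton (bias--variance split via Lemma~\ref{lm:biasvar}, the $O(1/B)$ diagonal terms, reduction to a single cross term $\truesignal\transp\projmat_k\covmat\projmat_\ell\truesignal$ for $k\neq\ell$, a resolvent representation, then a VESD limit) matches the paper's, and your treatment of the variance and of the underparameterized bias is fine. The gap is at the central step. You propose to condition on $\hcovmat_k$ and ``apply the anisotropic deterministic equivalent for the independent resolvent $(\hcovmat_\ell-z'I)^{-1}$'', but $\hcovmat_\ell$ is \emph{not} independent of $\hcovmat_k$: both are built from the same rows of $X$, and the overlap $\{i:\, w_{k,i}\neq 0,\ w_{\ell,i}\neq 0\}$ has asymptotic fraction $\sampleratio^2$. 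This dependence is not a nuisance to be absorbed into a local law; it is exactly what produces $\tcompstieltjes$ (the solution of \eqref{eq:fixpointbootstrap}, an equation with aspect ratio $\aspratio/\sampleratio^{2}$ over the shrunken law $\tlimitdistr$), the shrunken norm $\Tilde{r}^{2}$, and the reweighted VESD $G$; treating the two resolvents as independent yields a different (wrong) limit. The paper's mechanism is the decomposition $\hcovmat_k=\hcovmat_{\A}+\hcovmat_{\C}$, $\hcovmat_\ell=\hcovmat_{\B}+\hcovmat_{\C}$ with disjoint index sets, followed by a Lemma~\ref{lm:Crossterm}-type leave-one-out argument that replaces the non-shared pieces $\hcovmat_{\A},\hcovmat_{\B}$ by the deterministic shift $-z\ensambleweight(z)\covmat$, so that the cross bias becomes $\frac{z^{2}}{\ndim}\,\Tilde{\truesignal}\transp(\tcovmat_{\C}(z)-zI)^{-1}\Tilde{\covmat}(\tcovmat_{\C}(z)-zI)^{-1}\Tilde{\truesignal}$, i.e.\ the ridge-regression bias functional of a \emph{single} sample covariance matrix with $\approx\sampleratio^{2}\ndata$ samples and population covariance $(I+\ensambleweight(z)\covmat)^{-1}\covmat$. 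Your ``then repeating for the remaining resolvent'' cannot work as stated: after the disjoint parts are handled, both remaining resolvents involve the same matrix $\hcovmat_{\C}$, so what is left is a two-resolvent functional of one random matrix, which needs a derivative-type argument (the analogue of Theorem 5 of Hastie et al.), not a second first-order deterministic equivalent.

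Two further points. Your explanation of why $\tcompstieltjes'(0)$ appears is incorrect: differentiating the self-consistent equation \eqref{eq:fixedpointsketching} produces $\compstieltjes'(0)$, the quantity governing the single-sketch variance in Theorem~\ref{thm:corsketching}; $\tcompstieltjes$ and $\tcompstieltjes'$ solve the distinct equation \eqref{eq:fixpointbootstrap} (or its $x$-dependent form \eqref{eq:fixpointkx}) and enter only through the overlap reduction just described. And passing $z\to 0^{-}$ is not covered by Lemma~\ref{lm:v0} plus monotonicity alone: in the reduced problem the effective signal distribution depends on $z$ through $\ensambleweight(z)$, so you also need the weak convergence of the $z$-dependent VESDs to $G$ (the paper's Lemma~\ref{lm:limitVESD}, proved via Arzela--Ascoli and Moore--Osgood) together with $\tcompstieltjes(z,z)\to\tcompstieltjes(0)$ and $\tcompstieltjes'(z,z)\to\tcompstieltjes'(0)$ (Lemma~\ref{lm:fixpointequiv}). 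Without the overlap decomposition and these limit-exchange ingredients, the proposed proof does not close.
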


The theorem above can be viewed as a generalization of Theorem \ref{thm:corrbagging}, accounting for the interaction between $\beta$ and $\Sigma$. In comparison to Theorem \ref{thm:corrbagging}, the only differing term is the implicit bias term  in the overparameterized regime. Assuming  that $\beta$ satisfies Assumption \ref{Assume:beta4+mom}, the following corollary shows that this bias term simplifies  to the one in Theorem \ref{thm:corrbagging}. Consequently, the above theorem recovers Theorem \ref{thm:corrbagging} as a special case.

\begin{corollary}\label{cor:deterreduction}
%Assume Assumptions~\ref{Assume:highdim}, \ref{Assume:Covdistri}, \ref{Assume:beta4+mom}, and \ref{Assume:VESD}.
In addition to the assumptions in Theorem \ref{thm:Overpara}, assume Assumption \ref{Assume:beta4+mom}. Then, we have 
    \begin{equation*}
        \Tilde{r}^{2} \frac{\tcompstieltjes'(0)}{\tcompstieltjes(0)^{2}} \int \frac{s}{(1 + \tcompstieltjes(0)s)^{2}} \,d{G}(s) = \signallev \frac{\sampleratio}{\aspratio \compstieltjes(0)} - \signallev \frac{(1 - \sampleratio)}{\aspratio \compstieltjes(0)} \left(\frac{\tcompstieltjes'(0)}{\tcompstieltjes(0)^{2}} - 1 \right).
    \end{equation*}
Consequently, Theorem \ref{thm:Overpara} recovers Theorem \ref{thm:corrbagging} as a special case. 
\end{corollary}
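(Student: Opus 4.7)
The plan is to exploit the isotropy of $\truesignal$ in Assumption~\ref{Assume:beta4+mom} to collapse the VESD integral on the LHS into an integral against $\limitdistr$, and then match the resulting expression to the RHS through algebraic identities extracted from the two self-consistent equations~\eqref{eq:fixedpointsketching} and~\eqref{eq:fixpointbootstrap}. Because $\truesignal$ has i.i.d. entries with $\EE[\ndim\beta_j^{2}]=\signallev$, a concentration argument for quadratic forms yields $\Tilde{r}^{2}\to \signallev\int(1+\ensambleweight(0)t)^{-1}d\limitdistr(t)$ and
\$
\int g(s)\, dG(s) \longrightarrow \frac{1}{\int (1+\ensambleweight(0)t)^{-1}d\limitdistr(t)}\int g\!\left(\frac{t}{1+\ensambleweight(0)t}\right)\frac{d\limitdistr(t)}{1+\ensambleweight(0)t}
\$
for bounded continuous $g$. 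Specializing to $g(s)=s(1+\tcompstieltjes(0)s)^{-2}$ and simplifying, the entire LHS collapses to $\signallev\cdot(\tcompstieltjes'(0)/\tcompstieltjes(0)^{2})\int t\, d\limitdistr(t)/(1+(\ensambleweight(0)+\tcompstieltjes(0))t)^{2}$.

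The key algebraic step is to show $\tcompstieltjes(0)=\sampleratio\compstieltjes(0)$, which together with $\ensambleweight(0)=(1-\sampleratio)\compstieltjes(0)$ forces $\ensambleweight(0)+\tcompstieltjes(0)=\compstieltjes(0)$. To prove this I would push $\tlimitdistr$ back to $\limitdistr$ via the change of variables $s=t/(1+\ensambleweight(0)t)$ in~\eqref{eq:fixpointbootstrap} at $z=0$, converting it to $\tcompstieltjes(0)^{-1}=(\aspratio/\sampleratio^{2})\int t\, d\limitdistr(t)/(1+(\ensambleweight(0)+\tcompstieltjes(0))t)$. Substituting $\tcompstieltjes(0)=\sampleratio\compstieltjes(0)$ reduces this precisely to~\eqref{eq:fixedpointsketching} at $z=0$, and uniqueness of the positive solution (Lemma~\ref{lm:v0}) closes the identification.

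With this in hand, implicit differentiation of~\eqref{eq:fixpointbootstrap} at $z=0$, again after the same change of variables, yields
\$
\frac{\tcompstieltjes'(0)}{\tcompstieltjes(0)^{2}}=\frac{1}{1-\aspratio\compstieltjes(0)^{2}D},\qquad D:=\int \frac{t^{2}\, d\limitdistr(t)}{(1+\compstieltjes(0)t)^{2}}.
\$
Setting $C:=\int t(1+\compstieltjes(0)t)^{-2}\, d\limitdistr(t)$, the elementary identity $\int t(1+\compstieltjes(0)t)^{-1}d\limitdistr(t)=C+\compstieltjes(0)D$ combined with~\eqref{eq:fixedpointsketching} at $z=0$ gives $C=\sampleratio/(\aspratio\compstieltjes(0))-\compstieltjes(0)D$. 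A short calculation then shows both the LHS, namely $\signallev C/(1-\aspratio\compstieltjes(0)^{2}D)$, and the RHS of the corollary simplify to the same quantity, establishing the identity. The second claim, that Theorem~\ref{thm:Overpara} recovers Theorem~\ref{thm:corrbagging}, is immediate by substituting the identity into the overparametrized risk formula.

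The main obstacle is the first step: passing to the limit inside the VESD requires the random weights $\langle\truesignal,u_i\rangle^{2}$ to concentrate uniformly across the eigenbasis of $\covmat$, which I would handle using the $(4+\eta)$-th moment bound on $\beta_j$ from Assumption~\ref{Assume:beta4+mom} together with the spectral bounds in Assumption~\ref{Assume:Covdistri}. The remainder is purely algebraic manipulation of the two self-consistent equations.
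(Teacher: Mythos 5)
Your approach is correct and matches the paper's own proof in all essentials: both reduce the VESD integral to an $H$-integral via quadratic-form concentration for the isotropic $\truesignal$, both use $\tcompstieltjes(0)=\sampleratio\compstieltjes(0)$ (the paper's Lemma~\ref{lm:v0tv}, which you re-derive by the same change-of-variables argument from $\tilde{H}$ to $H$ plus uniqueness), and both close with the self-consistent equation~\eqref{eq:fixedpointsketching} at $z=0$ together with the derivative formula~\eqref{eq:vderivform}. Your introduction of the intermediate quantities $C,D$ and the two-sided comparison is merely a cosmetic repackaging of the paper's chained-equalities calculation.
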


%%%%%%%%%%%%%%%%%%%%%%%%%%%%%%%%%%%%%%%%%%%%%%%%%
%%%%%%%%%%%%%%%Training error%%%%%%%%%%%%%%%%%%%%
%%%%%%%%%%%%%%%%%%%%%%%%%%%%%%%%%%%%%%%%%%%%%%%%%

\subsection{Training error}\label{sec:Advrobust}

Let $\trainloss(\estimator; X,Y) =  \|Y - X\estimator\|^{2}_{2}$ be the training error. Then the following result characterizes the training error in terms of the out-of-sample prediction risk. %establishes a connection between  the training error and the out-of-sample prediction risk. 
\begin{theorem}[Training  error]\label{thm:trainingerr}
Assume Assumptions~\ref{Assume:highdim}-\ref{Assume:beta4+mom}.  When $\aspratio > \sampleratio$, the training error satisfies %\scomment{What about underparameterized regime?}
\begin{equation*}
    \lim_{\nresample \rightarrow + \infty} \lim_{n,p \rightarrow + \infty} \EE \left[\trainloss(\estimator; X,Y) \Big| X \right] 
    =
    (1 - \sampleratio)^{2} \lim_{\nresample \rightarrow + \infty} \lim_{n,p \rightarrow + \infty} \riskcondition(\estimator) + (1 - \sampleratio)^{2} \noiselev \qas
\end{equation*}
\end{theorem}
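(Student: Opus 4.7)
The plan is to exploit the key interpolation property enjoyed by each individual sketched min-norm estimator in the regime $\gamma>\theta$, and then to pass the bootstrap average through a law-of-large-numbers argument followed by a trace identification with the out-of-sample risk. Throughout, I will interpret the training error $L$ in the normalization $L/n$, since the right-hand side of the theorem is finite.

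First, I would use that $\gamma > \theta$ forces each sketched design $S_k X$ to have, asymptotically, more columns than active rows, so $\hat\beta_k$ interpolates the sketched sample: $x_i^\T \hat\beta_k = y_i$ whenever $w_{k,i}>0$. Thus the $i$-th training residual satisfies
\begin{equation*}
y_i - x_i^\T \estimator = \frac{1}{B}\sum_{k=1}^B \bigl(y_i - x_i^\T \hat\beta_k\bigr)\,\mathbf{1}(w_{k,i}=0).
\end{equation*}
Because $\{\cW_k\}$ are pairwise independent conditional on $(X,Y)$ by Assumption~\ref{Assume:multip}, applying a conditional LLN in $k$ (and using $\PP(w_{1,i}>0) \to \theta$ by the definition of the downsampling ratio) gives, almost surely as $B\to\infty$,
\begin{equation*}
y_i - x_i^\T \estimator \longrightarrow (1-\theta)\bigl(y_i - \hat{y}_i^{-i}\bigr), \qquad \hat{y}_i^{-i} := \EE_W\bigl[x_i^\T \hat\beta_1 \condition X,Y, w_{1,i}=0\bigr].
\end{equation*}
Squaring and summing gives $\tfrac{1}{n}L(\estimator;X,Y) \to (1-\theta)^2 \cdot \tfrac{1}{n}\sum_i (y_i - \hat{y}_i^{-i})^2$.

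Second, I would condition on $w_{1,i}=0$, which forces $\hat{y}_i^{-i}$ to be a functional of $(X, Y_{-i})$, rendering it independent of $\varepsilon_i$. Writing $y_i = x_i^\T \beta + \varepsilon_i$, the noise cross-term vanishes and
\begin{equation*}
\EE\bigl[(y_i - \hat{y}_i^{-i})^2 \condition X\bigr] = \EE\bigl[(x_i^\T \beta - \hat{y}_i^{-i})^2 \condition X\bigr] + \noiselev.
\end{equation*}
Let $\hat\beta^{(i)} := \EE_W[\hat\beta_1 \condition X,Y, w_{1,i}=0]$ so that $\hat{y}_i^{-i}=x_i^\T \hat\beta^{(i)}$ and $M_i := \EE[(\beta-\hat\beta^{(i)})(\beta-\hat\beta^{(i)})^\T \condition X]$. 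Then
\begin{equation*}
\frac{1}{n}\sum_{i=1}^n \EE\bigl[(x_i^\T \beta - \hat{y}_i^{-i})^2 \condition X\bigr] = \frac{1}{n}\sum_{i=1}^n x_i^\T M_i\, x_i.
\end{equation*}

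Third, I would identify this in-sample quadratic form with the out-of-sample risk. By the exchangeability of the observations and the pairwise-independence of bootstrap weights, replacing the leave-one-out covariance $M_i$ by the full-sample covariance $M := \EE[(\beta-\estimator)(\beta-\estimator)^\T \condition X]$ incurs only an $o(1)$ error in operator norm; this is the analogue of the leave-one-out stability exploited in Section~\ref{sec:correlated}, and can be justified by a Sherman--Morrison-style resolvent perturbation on the sketched sample covariance $\hcovmat_k$. Then the trace identity $\tfrac{1}{n}\sum_i x_i^\T M x_i = \tr(M\cdot \tfrac{1}{n}X^\T X)$ together with $\tfrac{1}{n}X^\T X \to \covmat$ (in the sense used in Lemma~\ref{lm:biasvar-beta}) yields $\tfrac{1}{n}\sum_i x_i^\T M_i x_i \to \tr(M\covmat) = \lim \riskcondition(\estimator)$. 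Combining with the $\noiselev$ term and the $(1-\theta)^2$ prefactor from Step~1 produces the claimed identity after letting $B\to\infty$ followed by $\npinfty$.

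The main obstacle is the leave-one-out stability in Step~3: showing $\|M_i - M\|\to 0$ uniformly in $i$, or equivalently that conditioning on $w_{1,i}=0$ does not perturb the bagged covariance. I would address this by noting that conditioning on $w_{1,i}=0$ amounts to removing a single row from $S_1 X$ (and only for one of infinitely many bootstraps), and then invoking a rank-one resolvent bound on $\hcovmat_k^+$ analogous to the concentration results used to establish Theorem~\ref{thm:corrbagging}; the remaining paragraphs of Steps~1--2 are standard LLN and orthogonality arguments.
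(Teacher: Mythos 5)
Your starting point coincides with the paper's: since $\aspratio>\sampleratio$ each sketched estimator interpolates its own bootstrap sample, so the $i$-th residual of $\estimator$ only receives contributions from draws with $w_{k,i}=0$, which is where the $(1-\sampleratio)^2$ factor comes from. The paper, however, keeps the double sum $\nresample^{-2}\sum_{k,\ell}$ and, for each pair $k\ne\ell$, restricts to indices with $w_{k,i}=w_{\ell,i}=0$; for such $i$ the matrices $\projmat_{k}\covmat\projmat_{\ell}$ and $\hcovmat_{k}\pinv X^\T S_k^2S_\ell^2X\hcovmat_{\ell}\pinv$ involve no row $i$, so the quadratic-form concentration of Lemma~\ref{lm:concentrationontrace} replaces $x_ix_i^\T$ by $\covmat$ while independence still holds, the resulting traces are exactly the cross terms $B_{k,\ell}+V_{k,\ell}$, and equation~\eqref{eq:corbagging1} identifies their limit with the out-of-sample risk.

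The genuine gap is in your Step 3. After you swap the leave-$i$-out covariance $M_i$ for the full-sample $M=\EE[(\truesignal-\estimator)(\truesignal-\estimator)^\T\mid X]$, the chain $\frac1n\sum_i x_i^\T Mx_i=\tr\bigl(M\,X^\T X/n\bigr)$ followed by ``$X^\T X/n\to\covmat$'' does not yield $\tr(M\covmat)$: $M$ is built from the whole sample and is strongly correlated with $X^\T X/n$, and in the proportional regime $X^\T X/n$ does not converge to $\covmat$ in any mode strong enough to be passed through an $X$-dependent matrix (its spectrum is Marchenko--Pastur-spread away from that of $\covmat$). Equating $\tr(M\,X^\T X/n)$ with $\tr(M\covmat)$ is exactly the conflation of in-sample and out-of-sample error that this theorem quantifies: for the full-sample min-norm estimator ($\sampleratio=1$, $\aspratio>1$) the first trace equals $\noiselev$ while the second is the prediction risk, which diverges as $\aspratio\searrow1$. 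The fix is to apply Lemma~\ref{lm:concentrationontrace} to $x_i^\T M_i x_i$ while $M_i$ is still measurable with respect to the data excluding row $i$ (this is what the paper does at the level of each pair $(k,\ell)$), and only afterwards argue that $\tr(M_i\covmat)$ has the bagged risk as its limit. Two secondary issues: your argument sends $\nresample\to\infty$ before $\npinfty$, whereas the statement takes the limits in the opposite order (the paper handles this by counting: the $B(B-1)$ cross terms dominate the $B$ diagonal ones); and your conditional LLN over $k$ uses genuine independence of the $\cW_k$ across bootstrap draws, which Assumption~\ref{Assume:multip} only provides in the weaker form of asymptotic pairwise independence of empirical measures, another reason the paper works with the $\nresample^{-2}\sum_{k,\ell}$ decomposition instead.
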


The above result implies  that,  in the overparameterized regime, the out-of-sample prediction risk can also be expressed  in terms of the training error:
\$
 \lim_{\nresample \rightarrow + \infty} \lim_{n,p \rightarrow + \infty} \riskcondition(\estimator) = \frac{1}{(1-\sampleratio)^2}  \lim_{\nresample \rightarrow + \infty} \lim_{n,p \rightarrow + \infty} \EE \left[\trainloss(\estimator; X,Y) \Big| X \right]  - \noiselev.
\$
In other words, the out-of-sample prediction risk of the  bagged least square interpolator is linear in the training error: Better the rescaled training error (rescaled by $1/(1-\sampleratio)^2$),  better the generalization performance. Hence, when employing bagged interpolators in practice,  we can simply choose the bagged interpolator with the smallest rescaled training error. Computationally expensive procedurse such as the  cross-validation  is not  necessary.

%We remark that Theorem \ref{thm:trainingerr} also holds for the full-sample min-norm estimator in the overparameterized regime, a linear interpolator.  Following the same argument as above, the out-of-sample prediction risk is a constant mul.  this partly explains the practice that training an overparameterized neural network to have a zero training error often leads to good generalization performance. %of swhy the training error is 

%%%%%%%%%%%%%%%%%%%%%%%%%%%%%%%%%%%%%%%%%%%%%%%%%
%%%%%%%%%%%%%Adversarial risk%%%%%%%%%%%%%%%%%%%
%%%%%%%%%%%%%%%%%%%%%%%%%%%%%%%%%%%%%%%%%%%%%%%%%

\subsection{Adversarial risk}\label{sec:Advrobust}

This subsection examines the adversarial robustness of the bagged min-norm least square estimator under adversarial attacks, focusing on the case where the features are isotropic. For any estimator $\hat\beta$, we introduce the $\ell_{2}$ adversarial risk as follows:
\begin{equation*}
    R^{\rm{adv}}(\hat{\beta}; \delta) := \EE\left[\max_{\|x\|_{2} \leq \delta} \left(x_\new\transp \truesignal  - (x_\new + x)\transp \hat{\beta}\right)^2 \right],
\end{equation*}
where the expectation is taken with respect to the test feature $x_{\new}$. 

Assuming that  $x_\new \sim \cN(0, I)$  and using \cite[Lemma 3.1]{javanmard2020precise}, the $\ell_2$ adversarial risk of any  estimator $\hat{\beta}$ can be expressed as
\#\label{eq:adv}
R^{\rm{adv}}(\hat{\beta}; \delta) = R(\hat{\beta}) + \delta^{2} \| \hat{\beta} \|_{2}^{2} + 2\sqrt{\frac{2}{\pi}}\delta \| \hat{\beta}\|_{2} \left( \noiselev +  R(\hat{\beta})   \right)^{1/2},
\#
where %the  risk $R(\hat{\beta})$ is defined as 
\$
R(\hat{\beta}):= \EE\left[\left(x_\new \transp \truesignal  - x_\new\transp \hat{\beta}\right)^2 \condition \truesignal, \hat{\beta} \right]
\$ with the conditional expectation taken  with respect to the test feature  $x_\new$. Hence, the adversarial risk of any estimator $\hat{\beta}$ depends on the risk $R(\hat\beta)$ and its norm $\|\hat\beta\|_2$.  
This newly defined  risk slightly differs from the one in equation~\eqref{eq:riskcon}. 
Our next result shows  that, with an additional assumption,  $R(\estimator)$ and $R(\ridgeless)$  are asymptotically equivalent to $\riskcondition(\estimator)$ and $\riskcondition(\ridgeless)$ respectively, which holds  in the general context of correlated features. 

\begin{assumption}\label{Assume:adv}
    Assume that the noises $\varepsilon_{i}$ have bounded $(4 + \epsilon)$-th moments for some $\epsilon>0$.
\end{assumption}

\begin{lemma}\label{lm:advriskequiv}
    Assume Assumptions~\ref{Assume:highdim}-\ref{Assume:beta4+mom}, and \ref{Assume:adv}. Then, we have almost surely %\scomment{General Sigma?}
\$
\lim_{\npinfty} R(\estimator) &= \lim_{\npinfty} \riskcondition(\estimator) ~~~\textrm{and} \\
\lim_{\npinfty} R(\ridgeless) &= \lim_{\npinfty} \riskcondition(\ridgeless).
\$
\end{lemma}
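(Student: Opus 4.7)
The plan is to show $R(\estimator) - \riskcondition(\estimator)\to 0$ a.s., which combined with the a.s.\ convergence of $\riskcondition(\estimator)$ from Theorems~\ref{thm:isoboostrap}--\ref{thm:corrbagging} yields the equality of limits. I will focus on the bagged case; the ridgeless case follows verbatim by taking $B=1$ and $\sketchmat_k=I$. The key observation is that $R$ and $\riskcondition$ differ only by the expectation over the noise $E$, and each bootstrap estimator is \emph{affine} in $E$:
\begin{equation*}
\hat\beta_k = A_k\beta + M_kE, \qquad A_k := (\sketchmat_kX)^+\sketchmat_kX,\quad M_k := (\sketchmat_kX)^+\sketchmat_k.
\end{equation*}
Setting $\bar A := B^{-1}\sum_k A_k$ and $\bar M := B^{-1}\sum_k M_k$, I obtain
\begin{align*}
R(\estimator) &= \|(\bar A-I)\beta\|_\Sigma^2 + 2\beta^\T(\bar A-I)^\T\Sigma\bar M E + E^\T\bar M^\T\Sigma\bar M E,\\
\riskcondition(\estimator) &= \|(\bar A-I)\beta\|_\Sigma^2 + \noiselev\tr(\bar M^\T\Sigma\bar M),
\end{align*}
so the difference splits as $L+Q$, where $L := u^\T E$ with $u := 2\bar M^\T\Sigma(\bar A-I)\beta$, and $Q := E^\T\Psi E-\noiselev\tr\Psi$ with $\Psi := \bar M^\T\Sigma\bar M$. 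Both $L$ and $Q$ are mean-zero in $E$ conditional on $(X,\cW,\beta)$.

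Next I will control the sizes of $u$ and $\Psi$ using results already established. From Theorem~\ref{thm:corrbagging} and the eigenvalue bounds of Assumptions~\ref{Assume:Covdistri}--\ref{Assume:multip}, away from $\aspratio=\sampleratio$ one has $\tr\Psi = \varcondition(\estimator)/\noiselev = O(1)$ a.s., and the Stieltjes-transform machinery behind Lemma~\ref{lm:v0} together with the self-consistent equations~\eqref{eq:fixedpointsketching}--\eqref{eq:fixpointbootstrap} gives $\|\Psi\|_{\mathrm{op}}=O(1/n)$ a.s. Hence $\|\Psi\|_F^2 \le \|\Psi\|_{\mathrm{op}}\tr\Psi = O(1/n)$. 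Similarly, $\|u\|_2^2 \le 4\|\Sigma\bar M\bar M^\T\Sigma\|_{\mathrm{op}}\cdot\|\beta\|_2^2 = O(1/n)$, using $\|\beta\|_2^2 = r^2+o(1)$ a.s.\ under Assumption~\ref{Assume:beta4+mom}.

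The concentration step uses the $(4+\epsilon)$-th moment hypothesis on $\varepsilon_i$ from Assumption~\ref{Assume:adv}. For the linear term, the Marcinkiewicz--Zygmund inequality gives
\begin{equation*}
\EE\bigl[|L|^{4+\epsilon}\,\big|\, X,\cW,\beta\bigr] \le C\|u\|_2^{4+\epsilon} = O\bigl(n^{-(2+\epsilon/2)}\bigr),
\end{equation*}
while for the centred quadratic form a Bai--Silverstein-type moment bound (see, e.g., Lemma~\ref{lm:concentrationontrace}) yields
\begin{equation*}
\EE\bigl[|Q|^{2+\epsilon/2}\,\big|\, X,\cW,\beta\bigr] \le C\|\Psi\|_F^{2+\epsilon/2} = O\bigl(n^{-(1+\epsilon/4)}\bigr).
\end{equation*}
Applying Markov's inequality at a threshold $t=n^{-\alpha}$ for a sufficiently small $\alpha>0$ makes the probabilities summable in $n$, and the Borel--Cantelli lemma delivers $L\to 0$ and $Q\to 0$ almost surely. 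Combining with the a.s.\ convergence of $\riskcondition(\estimator)$ completes the proof for the bagged estimator; the argument for $\ridgeless$ is identical after setting $B=1$ and $\sketchmat_k=I$.

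The main obstacle is the operator-norm control $\|\Psi\|_{\mathrm{op}}=O(1/n)$ away from the interpolation threshold: although the \emph{average} eigenvalue of $\Psi$ is $O(1/n)$ since $\tr\Psi=O(1)$, ruling out isolated spikes requires the same deterministic-equivalent machinery used to derive the limiting variance formulas. Close to $\aspratio=\sampleratio$ (or $\aspratio=1$ for the ridgeless case), this bound fails and the conclusion of the lemma is vacuous, consistent with the fact that the corresponding limiting risks themselves diverge there.
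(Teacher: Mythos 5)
Your proposal follows the same route as the paper's proof. Both expand $R(\estimator)=\|\estimator-\beta\|_\Sigma^2$ via the affine dependence $\hat\beta_k=A_k\beta+M_kE$, obtaining a $\beta$-quadratic term, a $\beta$--$E$ cross term, and an $E$-quadratic term, and then show each concentrates on its conditional expectation over $E$. The paper keeps the double sum over $(k,\ell)$ (its $T_{1,k,\ell},T_{2,k,\ell},T_{3,k,\ell}$) and invokes the packaged concentration Lemmas~\ref{lm:quadconcentration} and \ref{lm:quadcrossconcentration}, which hide exactly the Burkholder/Rosenthal plus Borel--Cantelli computation you spell out; you collapse the sum into $\bar A,\bar M$ and run it by hand. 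These are the same argument at two levels of abstraction: after rescaling $E$ by $1/\sqrt n$ so that Lemma~\ref{lm:quadconcentration} applies, the matrix required to have $O(1)$ operator norm is precisely $n\Psi$, which is your claim $\|\Psi\|_{\mathrm{op}}=O(1/n)$. Your cross term $L$ could equally be dispatched in one line by Lemma~\ref{lm:quadcrossconcentration}, which is what the paper does for $T_{2,k,\ell}$.

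The one thing you get wrong is the attribution of the operator-norm bound. You flag $\|\Psi\|_{\mathrm{op}}=O(1/n)$ as the main obstacle and credit it to ``the Stieltjes-transform machinery behind Lemma~\ref{lm:v0} together with the self-consistent equations~\eqref{eq:fixedpointsketching}--\eqref{eq:fixpointbootstrap}.'' But those equations only control traces, i.e.\ averaged spectral functionals; they cannot rule out an isolated eigenvalue spike, which is the issue you correctly identify. What actually rules it out, and what the paper uses, is the deterministic Bai--Yin edge estimate: since
\begin{equation*}
\|M_k\|_{\mathrm{op}}=\big\|(\sketchmat_kX)^+\sketchmat_k\big\|_{\mathrm{op}}\le\frac{\|\sketchmat_k\|_{\mathrm{op}}}{\sqrt{n\,\eigval^+_{\min}(\hcovmat_k)}},
\end{equation*}
Lemma~\ref{lm:lowerboundeigval} (which rests on Lemma~\ref{lm:Bai-yin}) bounds $\eigval^+_{\min}(\hcovmat_k)$ away from zero whenever $\gamma/\theta$ stays away from $1$, giving $\|\bar M\|_{\mathrm{op}}=O(1/\sqrt n)$ and hence $\|\Psi\|_{\mathrm{op}}\le\|\Sigma\|_{\mathrm{op}}\|\bar M\|_{\mathrm{op}}^2=O(1/n)$ without any resolvent analysis. (For unbounded multipliers such as the multinomial, the paper first reduces to binary weights via Lemma~\ref{lm:Pseudo_products} before taking norms.) Your remark that the conclusion is vacuous at $\gamma=\theta$ (or $\gamma=1$ for the ridgeless case) is correct, and it is precisely the point where this Bai--Yin lower bound degenerates.
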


We proceed to characterize the norm of the bagged least square estimator and illustrate how bagging leads to a reduction in the norm, resulting in enhanced adversarial robustness.

\begin{lemma}\label{lemma:adversarial}
Assume Assumptions~\ref{Assume:highdim}-\ref{Assume:beta4+mom},  \ref{Assume:adv}, and $\covmat = I$. Then we have
\begin{equation*}
    \lim_{\nresample \rightarrow + \infty} \lim_{\npinfty}  \| \estimator \|_{2}^{2} 
    = 
    \begin{cases}
        \signallev + \noiselev \frac{\aspratio}{1-\aspratio}, & \aspratio < \sampleratio  \\
        \signallev \frac{\sampleratio^{2} (\aspratio + 1 - 2\sampleratio)}{\aspratio(\aspratio - \sampleratio^{2})} + \noiselev  \frac{\sampleratio^2}{\aspratio - \sampleratio^{2}}, & \aspratio > \sampleratio
    \end{cases} \qas 
\end{equation*}
\end{lemma}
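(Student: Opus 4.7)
\textbf{Proof plan for Lemma \ref{lemma:adversarial}.}
The plan is to reduce the computation of $\|\estimator\|_2^2$ to quantities already controlled by Theorem \ref{thm:isoboostrap} plus one new cross term. Under isotropic features, I would expand
\[
\|\estimator\|_2^2 = \|\beta\|_2^2 + 2(\estimator-\beta)^\top\beta + \|\estimator-\beta\|_2^2.
\]
For the first term, Assumption \ref{Assume:beta4+mom} gives $\|\beta\|_2^2\to r^2$ almost surely by the strong law (the entries of $\beta$ are i.i.d.\ with variance $r^2/d$ and bounded $(4+\eta)$-th moment). For the third term, when $\Sigma=I$ one has $\|\estimator-\beta\|_2^2 = \|\estimator-\beta\|_\Sigma^2$; taking its conditional expectation over $\varepsilon_i$ and $x_{\new}$ yields $R_{X,\mathcal{W},\beta}(\estimator)$, whose limit is given by Theorem \ref{thm:isoboostrap}, and a standard concentration-of-quadratic-forms argument (in the spirit of Lemma \ref{lm:concentrationontrace}) upgrades this to almost sure convergence of the random quantity itself. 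So everything hinges on the cross term $(\estimator-\beta)^\top\beta$.

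Next I would use the explicit representation $\hat\beta_k = P_k\beta + (X^\top S_k^2 X)^+ X^\top S_k^2 E$, where $P_k = I-\Pi_k$, to write
\[
(\estimator-\beta)^\top\beta = -\frac{1}{B}\sum_{k=1}^{B}\beta^\top\Pi_k\beta + \frac{1}{B}\sum_{k=1}^{B} E^\top S_k^2 X(X^\top S_k^2 X)^+\beta.
\]
The second (noise) sum is linear in $E$ with conditional mean zero given $(X,\mathcal{W},\beta)$, and its conditional variance is $O(1)\cdot \tr\!\big((X^\top S_k^2 X)^+\beta\beta^\top (X^\top S_k^2 X)^+ X^\top S_k^4 X\big)/B$ which is bounded and $\to 0$ after sending $B\to\infty$ (or, alternatively, by concentration along $n$); so this term vanishes almost surely. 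For the first sum, the asymptotic pairwise independence of the $\mathcal{W}_k$'s together with Assumption \ref{Assume:beta4+mom} lets me apply a LLN across $k$ and then concentration of the quadratic form $\beta^\top\Pi_k\beta$ around its conditional mean $(r^2/d)\tr(\Pi_k)$, reducing the problem to computing $\lim \tr(\Pi_k)/d$.

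The rank count for $\Pi_k$ is the pivotal structural input. In the regime $\gamma<\theta$, the sketched design $S_k X\in\mathbb{R}^{n\times d}$ has (with probability one under Assumption \ref{Assume:multip}) roughly $\theta n > d$ active rows, so $X^\top S_k^2 X$ is invertible, $P_k=I$, $\Pi_k=0$, and $\tr(\Pi_k)/d\to 0$. In the regime $\gamma>\theta$, the active row count is $\theta n < d$, so $\rank(X^\top S_k^2 X)=\theta n$ almost surely and $\tr(\Pi_k)/d = (d-\theta n)/d \to 1-\theta/\gamma$. Combining these with concentration gives
\[
\lim_{B\to\infty}\lim_{\npinfty} \beta^\top P_k\beta \overset{\as}{=}
\begin{cases} r^2, & \gamma<\theta,\\ r^2\theta/\gamma, & \gamma>\theta.\end{cases}
\]

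Putting the three limits together, the $\gamma<\theta$ case reduces immediately to $r^2+\sigma^2\gamma/(1-\gamma)$. For $\gamma>\theta$, substituting the bagged risk from Theorem \ref{thm:isoboostrap} yields
\[
r^2\frac{(\gamma-\theta)^2}{\gamma(\gamma-\theta^2)} + \sigma^2\frac{\theta^2}{\gamma-\theta^2} + 2r^2\frac{\theta}{\gamma} - r^2,
\]
and the algebraic simplification $(\gamma-\theta)^2 + 2\theta(\gamma-\theta^2) - \gamma(\gamma-\theta^2) = \theta^2(\gamma+1-2\theta)$ gives exactly the stated expression. I expect the main obstacle to be the uniform (in $k$) concentration arguments needed to justify interchanging the $B\to\infty$ and $n,d\to\infty$ limits for both the noise cross-term and the quadratic form $\beta^\top\Pi_k\beta$; the rank calculation and the algebraic simplification are routine, but the concentration step requires invoking the $(8+\epsilon)$-moment bound of Assumption \ref{Assume:Covdistri} and the bounded moments of $\beta$ in Assumption \ref{Assume:beta4+mom}, together with the asymptotic pairwise independence in Assumption \ref{Assume:multip}.
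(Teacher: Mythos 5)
Your proposal is correct, and it takes a genuinely different route from the paper's proof. The paper expands $\|\estimator\|_2^2$ directly into a signal--signal term, a signal--noise cross term, and a noise--noise term, then handles the signal--signal piece via the identity $P_kP_\ell = I - \Pi_k - \Pi_\ell + \Pi_k\Pi_\ell$ (reducing to the already-computed bias traces $\tr(\Pi_k\Pi_\ell)/d$ and $\tr(\Pi_k)/d$). You instead write $\|\estimator\|_2^2 = \|\beta\|_2^2 + 2(\estimator-\beta)^\top\beta + \|\estimator-\beta\|_2^2$, which imports the hard part wholesale from the risk formula of Theorem~\ref{thm:isoboostrap} and leaves only the single new ingredient $\tr(\Pi_k)/d \to 1-\theta/\gamma$ (the rank deficit of the sketched Gram matrix), which is the same quantity as the one-sample implicit bias $r^2(\gamma/\theta-1)/(\gamma/\theta)$ already appearing in Theorem~\ref{thm:isosketching}. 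Your final algebraic check is correct. Your route buys economy: less fresh computation, more reuse. The paper's route is more self-contained and avoids having to upgrade $\|\estimator-\beta\|_2^2$ from a conditional expectation to an a.s.\ limit (which is what the separate Lemma~\ref{lm:advriskequiv}, under the extra moment Assumption~\ref{Assume:adv}, provides).

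One small inaccuracy: for the noise cross-term $\frac{1}{B}\sum_k E^\top S_k^2 X(X^\top S_k^2X)^+\beta$, the claimed conditional variance of order $O(1)/B$ is not right, because all $B$ summands are linear in the \emph{same} noise vector $E$, so they do not decorrelate across $k$; averaging in $B$ alone does not shrink this term. The correct reason it vanishes is the alternative you parenthetically mention — each summand $E^\top S_k^2 X(X^\top S_k^2X)^+\beta$ goes to zero almost surely as $n,d\to\infty$ by concentration of the bilinear form in the two independent random vectors $E$ and $\beta$ (this is exactly what the paper's Lemma~\ref{lm:quadcrossconcentration} delivers). So the conclusion stands, but the primary justification you gave should be replaced by the concentration argument.
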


We compare the norm of the bagged estimator   with that of the min-norm  estimator. Using a variant of \cite[Corollary 1]{hastie2019surprises}, we obtain that the squared  $\ell_2$-norm of the  min-norm  estimator satisfies
\begin{equation*}
    \lim_{\npinfty}  \| \ridgeless \|_{2}^{2} 
    = 
    \begin{cases}
        \signallev + \noiselev \frac{\aspratio}{1-\aspratio},  & \aspratio < 1  \\
        \signallev \frac{1}{\aspratio} + \noiselev  \frac{1}{\aspratio - 1},& \aspratio > 1
    \end{cases} \qas 
\end{equation*}
Bagging shrinks the norm of the full-sample min-norm estimator:
\$
%\lim_{\npinfty} 
\lim_{\nresample \rightarrow + \infty} \lim_{\npinfty} \frac{ \| \estimator \|_{2}^{2}}{ \| \ridgeless \|_{2}^{2} }
%\overset{\as}{\rightarrow} 
\leq 
\begin{cases}
 \frac{\sampleratio}{\aspratio} < 1 , & 1> \aspratio > \sampleratio \\
\max\left\{ \frac{\sampleratio}{\aspratio^2}, \sampleratio^2\right\}< 1, & \aspratio >  1 > \sampleratio
\end{cases}. 
\$
%Indeed, the bagged least square estimator no longer strictly interpolates the data even when $\gamma$.    \scomment{How did you see this?}

%all $ R(\estimator)$ and $\|\estimator \|_{2}^{2}$ available, 

With the limiting risks and squared norms of $\estimator$ and $\ridgeless$ available 
we can calculate the limiting adversarial risks of $\estimator$ and $\ridgeless$ according to equation \eqref{eq:adv}.  Figure \ref{fig:fig_9} compares the $\ell_2$ norms and  limiting adversarial risks of the bagged estimator and the full-sample min-norm estimator under isotropic features. In all of our cases, the bagged estimator exhibits smaller norms and smaller adversarial risks when compared with the full-sample min-norm estimator, demonstrating its superior adversarial robustness.

\iffalse
\begin{align*}
\lim_{\npinfty\atop \nresample \rightarrow + \infty} R^{\rm{adv}}(\estimator; \delta) 
&= 
\lim_{\npinfty\atop \nresample \rightarrow + \infty} R(\estimator) + \delta^{2} \lim_{\npinfty\atop \nresample \rightarrow + \infty}  \| \estimator \|_{2}^{2} \\
&\quad\quad + 2\sqrt{\frac{2}{\pi}}\delta  \lim_{\npinfty\atop \nresample \rightarrow + \infty} \| \estimator\|_{2} \left( \noiselev +  \lim_{\npinfty\atop \nresample \rightarrow + \infty} R(\estimator)\right)^{1/2}. 
\end{align*}
The limiting adversarial risk of $\ridgeless$ has a similar expression. %Since both the risk $R(\estimator)$ and norm $\|\estimator\|$ are smaller than the risk and norm of $\ridgeless$, the limiting adversarial risk of $\estimator$ is smaller than that of $\ridgeless$. 
\fi

\begin{figure}[t!]
\centering

\subfigure{
    \includegraphics[width=.45\textwidth]{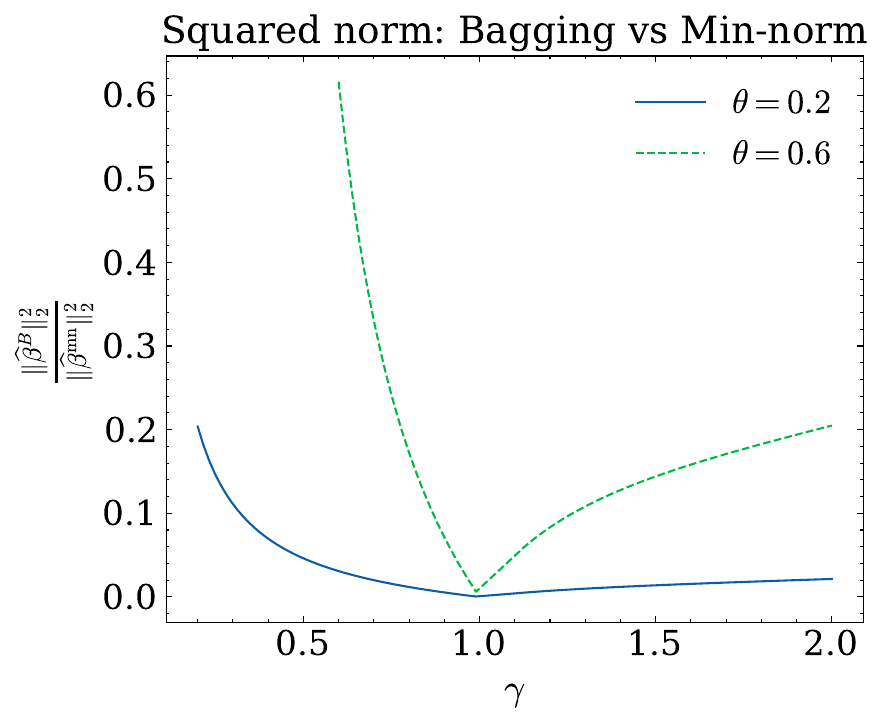}
}
% \hspace{-2em}
\subfigure{
    \includegraphics[width=.46 \textwidth]{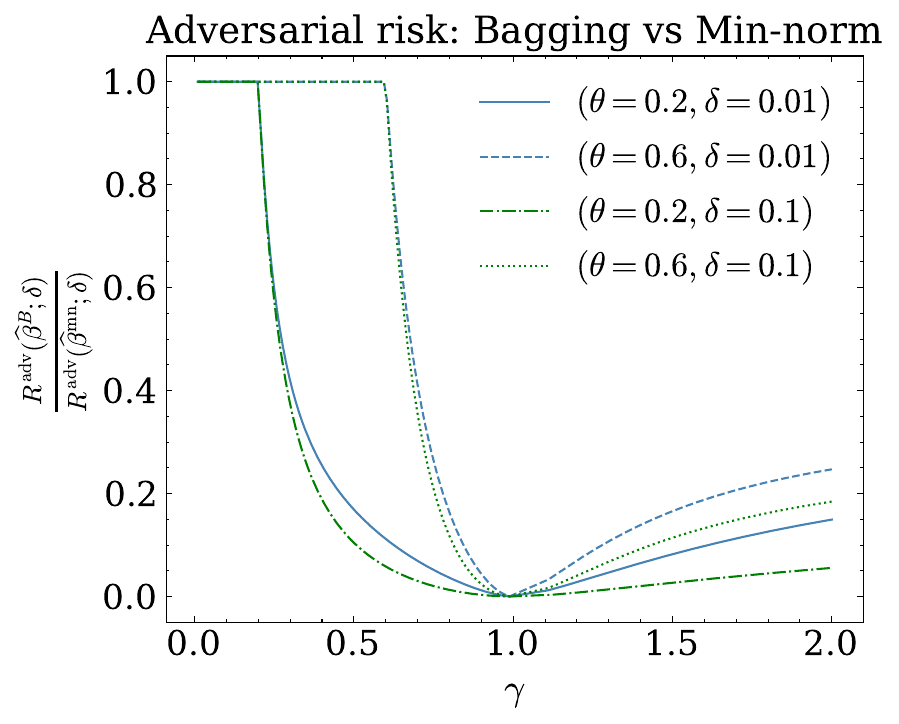}
}
%\subfigure{
%    \includegraphics[width=.45 \textwidth]{pic/lemma5-3(2)_2.pdf}
%}
% \caption{\small
% Comparing the bagged estimator with the full-sample min-norm estimator with isotropic features,  $(r,\sigma)=(3,3)$, and $\gamma$ varying $[0,2]$. {Left panel:} The blue solid  and green dashed lines are theoretical squared norm ratio curves between  the bagged estimator  with $\theta=0.2,0.6$, and the full-sample min-norm estimator, respectively. {Right panel:} 
% \protect\includegraphics[scale=.17]{pic/line1.png},
% \protect\includegraphics[scale=.2]{pic/line2.png},
% \protect\includegraphics[scale=.2]{pic/line3.png}, and 
% \protect\includegraphics[scale=.2]{pic/line4.png} lines are theoretical adversarial risk ratio curves between the bagged estimators with  $(\theta,\delta) = (0.2,0.01), (0.6,0.01),(0.6,0.1), (0.6,0.1)$,  and the full-sample min-norm estimator, respectively.
% }\label{fig:fig_9}

\caption{\small
Comparing the bagged estimator with the full-sample min-norm estimator with isotropic features,  $(r,\sigma)=(3,3)$, and $\gamma$ varying $[0,2]$. {Left panel:} The blue solid  and green dashed lines are theoretical squared norm ratio curves between  the bagged estimator  with $\theta=0.2,0.6$, and the full-sample min-norm estimator, respectively. {Right panel:} 
{\color{RoyalBlue} \full},
{\color{RoyalBlue} \dashed},
{\color{Green} \chain}, and 
{\color{Green} \dotted} lines are theoretical adversarial risk ratio curves between the bagged estimators with  $(\theta,\delta) = (0.2,0.01), (0.6,0.01),(0.6,0.1), (0.6,0.1)$,  and the full-sample min-norm estimator, respectively.
}\label{fig:fig_9}

\end{figure}

%%%%%%%%%%%%%%%%%%%%%%%%%%%%%%%%%%%%%%%%%%%%%%%%%%%%%%%%%%%%
%%%%%%%%%%%%%%%%%%%%%%%%Discussions%%%%%%%%%%%%%%%%%%%%%%%%%
%%%%%%%%%%%%%%%%%%%%%%%%%%%%%%%%%%%%%%%%%%%%%%%%%%%%%%%%%%%%

\section{Conclusions and discussions}\label{sec:Discus}

%%%%Summary
\paragraph{Summary}
Interpolators often exhibit high instability  as their test risks explode under certain model configurations. This paper delves into the mechanisms through which ensembling improves the stability and thus the generalization performance of individual interpolators. Specifically, we focus on bagging, a widely-used ensemble technique that can be implemented in parallel. Leveraging a form of multiplier bootstrap, we introduce the bagged min-norm least square estimator, which can then be formulated  as an average of sketched min-norm least square estimators. Our  multiplier bootstrap includes  the classical bootstrap with replacement as a special case, and introduces an intriguing variant which we term the Bernoulli bootstrap.

Focusing  on the proportional regime $\ndim \asymp \ndata$, where $\ndata$ denotes the sample size and $\ndim$ signifies feature dimensionality, we precisely  characterize  the out-of-sample prediction risks of both sketched and bagged  estimators in both underparameterized and overparameterized regimes. Our findings underscore the statistical roles of sketching and bagging. Specifically, sketching  modifies the aspect ratio and shifts the interpolation threshold when compared with the full-sample  min-norm estimator. Nevertheless, the risk of the sketched estimator is still unbounded around the interpolation threshold due to rapidly increasing  variance.  On the contrary, bagging effectively mitigates this variance escalation, resulting in  bounded limiting  risks.

\paragraph{General models and loss functions}
 We identify several promising avenues for  future research.
The presented multipiler-bootstrap-based bagged estimator $\estimator$ holds potential for broader applicability beyond the linear regression model and squared loss. However, in such cases,  the individual estimator \eqref{eq:multi_loss} may not possess a closed-form representation as in \eqref{eq:sketched}.  Analyzing these more general estimators poses a significant challenge.

\paragraph{Random feature subsampling}
This study primarily focuses on bagging, yet another prominent ensemble learning technique is feature subsampling, notably employed in random forests \citep{breiman2001random}. In the context of random forests, adaptive feature subsampling has demonstrated significant superiority over bagging in  low signal-to-noise ratio settings \citep{mentch2020randomization}. Consequently, it would be immensely valuable to gain a comprehensive understanding of the statistical implications of adaptive feature subsampling, both individually and in combination with bagging. Recent work by \cite{lejeune2020implicit} has delved into the exact risk analysis of underparameterized least square estimators with uniformly random feature subsampling  under isotropic features. However, it remains an open question how to investigate the effects of uniformly random feature subsampling, and even adaptive feature subsampling, under correlated features.

\paragraph{Other sketching matrices} 
 While this paper has focused on sketched estimators with diagonal and predominantly singular sketching matrices, an extension of our results to encompass other sketching matrices is worth exploring. Notably, sketching matrices, such as \iid~and orthogonal sketching matrices, as investigated by \cite{chen2023sketched}, could be considered.   It would be intriguing to investigate whether the invariance and impilcit regularization effect of bagging hold under these alternative sketching procedures.

\paragraph{Application to streaming data}
Finally, compared with the classical bootstrap,  the  proposed Bernoulli bootstrap is expected to lend itself well to  streaming data and growing data sets, since the total number of samples does not need to be known in advance of beginning to take bootstrap samples. We shall explore this  in future work.

\iffalse

\iffalse
\subsection{Subsampling features}
Given the symmetry of the sketching matrix and covariance matrix, we expect that in the over-parameterized regime, the limiting loss does not depend on the feature subsampling rate. However, when considering a general covariance matrix, the situation may be different. In this case, we can represent the data matrix as $SZ\covmat T$, where $T$ is the corresponding feature subsampling matrix. Consequently, instead of a diagonal sketching matrix $S$, we have to deal with the more general matrix $\covmat T$. This introduces new technical challenges and requires the development of additional tools and techniques.
\fi

\iffalse
\subsection{Other ideas}
\cite{dobriban2020wonder} calculate the optimal weight for the data splitting for ridge regression, which is better than simple averaging. Does optimally weighted bagging outperform ridge?
\fi

\fi

%\input{Data_splitting}

\bibliographystyle{apalike}
\bibliography{ref}
%\end{document}

\newpage
\appendix 
%Notations for Appendices

\renewcommand{\theequation}{S.\arabic{equation}}
\numberwithin{equation}{section}
\renewcommand{\thetable}{S.\arabic{table}}
\renewcommand{\thefigure}{S.\arabic{figure}}
\renewcommand{\thesection}{S.\arabic{section}}
\renewcommand{\thelemma}{S.\arabic{lemma}}
\renewcommand{\thecorollary}{S.\arabic{corollary}}
\renewcommand{\theassumption}{S.\arabic{assumption}}

\vspace{30pt}

\noindent{\bf \LARGE Appendix}
\section{Basics}\label{app:basics}

This subsection introduces necessary concepts  and results that will be used throughout the appendix. We first introduce the Stieltjes transform. 
Under certain conditions we can reconstitute the measure  $\mu$ starting from its Stieltjes transformation thanks to the inverse formula of Stieltjes-Perron.  

\begin{definition}[Stieltjes transform]
The Stieltjes transform $m_\mu(z)$ of a measure $\mu$ with support $\supp(\mu)$ is the function of the complex variable $z$ defined outside $\supp(\mu)$ by the formula
\$
m_\mu(z) =\int_{\supp(\mu)} \frac{\mu(dt)}{t-z}, ~z\in \CC\setminus\supp(\mu). 
\$
When $\mu$ is clear from the context, we shall omit the subscript $\mu$ and write $m_\mu(z)$ as $m(z)$. 
\end{definition}

\iffalse
For example, if the density $\rho$ of the measure $\mu$ is continuous throughout $\supp(\mu)$, one will have inside this interval that 
\$
\rho (x)=\lim _{\varepsilon \to 0^{+}}{\frac {m_{\mu }(x-i\varepsilon )- m_{\mu }(x+i\varepsilon )}{2\pi i }}. 
\$
\fi

We discuss three Stieljes transforms that will be used in the proofs for sketching, baggging, and ridge equivalence.  
Let $S_k$ be a  possibly singular diagonal sketching matrix in the form of \eqref{eq:sketching} whose diagonal entries consist of multipliers $\cW$ satisfying Assumption \ref{Assume:multip}. Recall that $X$ is  the data matrix. Let $\hcovmat_k = X\transp S_k\transp S_k X/n$ be the sketched covariance matrix, and $\mu_{\hcovmat_k}= \sum_{i=1}^d \delta_{\lambda_i(\hcovmat_k)}$ be the empirical spectral measure of $\hcovmat_k$. Then the associate Stieltjes transform is 
\$
\stieltjes_{1,n}(z) 
&= \int_{\supp(\mu_{\hcovmat_k})} \frac{1}{t-z} \mu_{\hcovmat_k}(d t) 
 = \frac{1}{d} \sum_{i=1}^d \frac{1}{\lambda_i(\hcovmat_k) - z}= \frac{1}{d}\tr\left((\hcovmat_k - zI)\inv\right). 
\$
Define $m_1(z)$ as a solution to the following self-consistent equation:
\begin{equation}\label{eq:MPlawsketching}
\stieltjes_{1}(z) \EE_{\limitdistrweight}\left[ \frac{\weight}{1 + \aspratio \weight \stieltjes_{1}(z)} \right] - z\stieltjes_{1}(z) = 1, ~~~z < 0. 
\end{equation}
Our first lemma concerns the almost sure convergence of $\mu_{\hcovmat}$ to some probability measure $\mu$ characterized by $\stieltjes_{1}(z)$ and the existence of $\stieltjes_{1}(0)$, which will be used for proving results on sketched estimators.  

%Let $\mu_{n}$ be the empirical spectral distribution of  $X\transp \sketchmat\transp \sketchmat X/ \ndata $. 

\begin{lemma}\label{lm:MPlawsketching}
Assume Assumptions \ref{Assume:highdim}-\ref{Assume:multip}, and $\covmat = I$. As $\npinfty$, we have
\$
 \mu_{\hcovmat_k} \rightarrow \mu \qas
\$
 where $\mu$ is the  probability distribution defined by the Stieltjes transform $\stieltjes_{1}(z)$ and  $\stieltjes_{1}(z)$ is  the unique positive solution to \eqref{eq:MPlawsketching}.  Consequently, $\stieltjes_{1,n}(z) \rightarrow \stieltjes_{1}(z)$ almost surely. %\scomment{Why does $m_1(z)$ have unique positive solution?}
When $\aspratio < \sampleratio$,  $m_1(0):= \lim_{z\rightarrow 0-} m_1(z)$ exist and satisfies  equation \eqref{eq:MPlawsketching} at $z=0$. %equation~\eqref{eq:MPlawsketching}.
\end{lemma}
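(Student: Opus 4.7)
The plan is to recognize $\hcovmat_k = \ndata\inv\sum_{i=1}^\ndata \weight_{k,i} x_i x_i\transp$ as a generalized (weighted) sample covariance matrix and to derive a self-consistent fixed-point equation for its Stieltjes transform via the standard Marchenko--Pastur argument. Since $\cW_k$ is independent of $X$ by Assumption \ref{Assume:multip}, I will condition on $\cW_k$ throughout and exploit the almost-sure weak convergence of the empirical measure of $\{\weight_{k,i}\}$ to $\limitdistrweight$. The main inputs I would assemble are the Sherman--Morrison identity, the uniform quadratic-form concentration stated in Lemma~\ref{lm:concentrationontrace} (which is the reason for the $(8+\epsilon)$-th moment in Assumption \ref{Assume:Covdistri}), and a monotonicity argument for uniqueness.

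Fix $z<0$ and set $Q(z)=(\hcovmat_k-zI)\inv$, $Q^{(i)}(z)=(\hcovmat_k-\ndata\inv\weight_{k,i}x_ix_i\transp-zI)\inv$. Sherman--Morrison gives $x_i\transp Q(z)x_i = x_i\transp Q^{(i)}(z)x_i / (1+\ndata\inv\weight_{k,i}x_i\transp Q^{(i)}(z)x_i)$, and the algebraic identity $\tr(Q(z)\hcovmat_k)=\ndim+z\tr Q(z)=\ndim(1+z\stieltjes_{1,\ndata}(z))$, expanded as $\ndata\inv\sum_i \weight_{k,i}x_i\transp Q(z)x_i$, yields
\$
1 + z \stieltjes_{1,\ndata}(z) = \frac{1}{\ndata}\sum_{i=1}^\ndata \frac{\weight_{k,i}\cdot\tfrac{1}{\ndim}x_i\transp Q^{(i)}(z)x_i}{1 + \aspratio\weight_{k,i}\cdot\tfrac{1}{\ndim}x_i\transp Q^{(i)}(z)x_i}.
\$
Since $\cov(x_i)=I$ and $Q^{(i)}(z)$ is independent of $x_i$, Lemma~\ref{lm:concentrationontrace} gives $\max_i|\tfrac{1}{\ndim}x_i\transp Q^{(i)}(z)x_i-\stieltjes_{1,\ndata}^{(i)}(z)|\overset{\as}{\to}0$ with $\stieltjes_{1,\ndata}^{(i)}(z)=\tfrac{1}{\ndim}\tr Q^{(i)}(z)$, and a rank-one bound yields $|\stieltjes_{1,\ndata}^{(i)}(z)-\stieltjes_{1,\ndata}(z)|=O(1/\ndim)$. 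Substituting $\stieltjes_{1,\ndata}(z)$ for $\tfrac{1}{\ndim}x_i\transp Q^{(i)}(z)x_i$ and using the a.s.\ weak convergence of the empirical measure of $\{\weight_{k,i}\}$ to $\limitdistrweight$, the right-hand side converges almost surely to $\stieltjes_{1,\ndata}(z)\cdot\int \weight(1+\aspratio\weight\stieltjes_{1,\ndata}(z))\inv\limitdistrweight(d\weight)$, so any a.s.\ subsequential limit $\stieltjes_1(z)$ satisfies \eqref{eq:MPlawsketching}.

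For uniqueness, I would observe that for $z<0$ the map $s\mapsto s\int \weight(1+\aspratio\weight s)\inv\limitdistrweight(d\weight) - zs$ is continuous and strictly increasing on $s\ge 0$, vanishes at $s=0$, and diverges to $+\infty$; hence \eqref{eq:MPlawsketching} has a unique positive solution $\stieltjes_1(z)$, forcing $\stieltjes_{1,\ndata}(z)\overset{\as}{\to}\stieltjes_1(z)$ for every $z<0$, and the weak convergence $\mu_{\hcovmat_k}\to\mu$ follows from the Stieltjes continuity theorem. For the $z=0$ extension when $\aspratio<\sampleratio$, roughly $\sampleratio\ndata>\ndim$ of the weights are bounded below by $c_\weight$ a.s.\ (Assumption \ref{Assume:multip}), which together with standard extreme-eigenvalue bounds for isotropic sample covariance matrices gives $\lambda_{\min}(\hcovmat_k)\ge c>0$ a.s., so $\stieltjes_1(z)$ extends monotonically and continuously to $z=0$ and the equation persists in the limit.

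The main technical obstacle is the uniform-in-$i$ concentration $\max_i|\tfrac{1}{\ndim}x_i\transp Q^{(i)}(z)x_i-\stieltjes_{1,\ndata}^{(i)}(z)|\overset{\as}{\to}0$: a union bound over $\ndata$ indices requires per-index fluctuations to decay faster than $1/\ndata$, which is precisely what the strengthening from $(4+\epsilon)$- to $(8+\epsilon)$-th moments in Lemma~\ref{lm:concentrationontrace} delivers. A secondary complication is that randomness arises from both $X$ and $\cW_k$; conditioning on $\cW_k$ first (using its independence from $X$) and applying Fubini aligns the exceptional null sets from the two sources.
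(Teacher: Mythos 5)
Your proposal is correct in substance but proves the lemma by a genuinely different route than the paper. The paper's own proof is essentially a citation: for $z<0$ it invokes Theorem 2.7 of \cite{couillet2022} (a deterministic-equivalent result for weighted sample covariance models with deterministic, bounded weights) and then relaxes the deterministic/bounded hypothesis on the multipliers via the arguments of Section 4.4 of \cite{zhang2007spectral}; only the $z=0$ statement is argued directly, using Lemmas \ref{lm:singularmat} and \ref{lm:lowerboundeigval} to get $\lambda_{\min}(\hcovmat_k)$ bounded away from zero a.s.\ when $\aspratio<\sampleratio$ and then dominated convergence to pass \eqref{eq:MPlawsketching} to $z=0$. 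You instead re-derive the $z<0$ convergence from scratch by the leave-one-out/Sherman--Morrison scheme: the trace identity $1+z\stieltjes_{1,\ndata}(z)=\ndata^{-1}\sum_i \weight_{k,i}x_i^{\T}Q(z)x_i/\ndim$, quadratic-form concentration (Lemma \ref{lm:concentrationontrace}, which is indeed where the $(8+\epsilon)$-moment enters), rank-one resolvent perturbation, a subsequential-limit argument to land on \eqref{eq:MPlawsketching}, monotonicity for uniqueness, and the Stieltjes continuity theorem; your $z=0$ treatment coincides with the paper's. What your route buys is self-containedness and a transparent accounting of where each assumption is used; in particular, because the weights only ever enter through the bounded continuous functions $w\mapsto w\stieltjes/(1+\aspratio w\stieltjes)$, the almost-sure weak convergence in Assumption \ref{Assume:multip} suffices directly and you never need the separate relaxation step for random or unbounded multipliers that the paper's citation forces. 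What the paper's route buys is brevity. One point to tighten if you write this out in full: when you replace $\ndim^{-1}x_i^{\T}Q^{(i)}(z)x_i$ by $\stieltjes_{1,\ndata}(z)$ and then pass the empirical average over $\{\weight_{k,i}\}$ to $\int(\cdot)\,d\limitdistrweight$, the test function depends on the random, $n$-dependent quantity $\stieltjes_{1,\ndata}(z)$; your subsequential-limit framing handles this, but you should make explicit the joint continuity (or uniform-in-$\stieltjes$ Lipschitz bound on the compact range $[0,1/|z|]$) that lets you take the two limits together, and note that a limit value $\stieltjes^{*}=0$ is excluded because it would force the right-hand side to $0$ while the left-hand side tends to $1$.
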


\iffalse
\scolor{
Let  $\stieltjes_{1,n}'(z)$ be the first-order derivative of $m_{1,n}(z)$, that is, \scomment{result on this?} %the Stieltjes transforms of $\hcovmat$ as follows: 
\$
\stieltjes_{1,n}'(z) = \int \frac{1}{(t-z)^2} \mu_{\hcovmat}(dt)= \frac{1}{d}\tr\left((\hcovmat - zI)^{-2}\right). 
\$
}
\fi

For two sketching matrices $S_k$ and $S_\ell$, let  $\hcovmat_k = X\transp S_k\transp S_k X/n$ and $\hcovmat_\ell  = X\transp S_\ell \transp S_\ell X/n$  be the corresponding sketched covariance matrices. Let  $\mu_{\hcovmat_k}= \sum_{i=1}^d \delta_{\lambda_i(\hcovmat_k)}/d $  and $\mu_{\hcovmat_\ell}= \sum_{i=1}^d \delta_{\lambda_i(\hcovmat)}/d$  be the empirical spectral measures of $\hcovmat_k$ and $\hcovmat_\ell$ respectively.  
Define $\stieltjes_{2,\ndata} (z)$ as 
\$
\stieltjes_{2,\ndata} (z):= {\ndim}^{-1} \tr((\hcovmat_{k} - zI)^{-1}(\hcovmat_{\ell} - zI)^{-1}). 
\$
Then  the following result establishes the almost sure convergence of $\stieltjes_{2,\ndata} (z)$, which will be used  for proving results on bagged estimators. 

%Recall that $\stieltjes_{1}(z)$ is the unique positive solution to equation~\eqref{eq:MPlawsketching}. \scolor{ Recall $\stieltjes_{2,\ndata} (z)= {\ndim}^{-1} \tr((\hcovmat_{k} - zI)^{-1}(\hcovmat_{\ell} - zI)^{-1})$ as defined in Lemmma \ref{lm:convergprodresolvent}. } 

\begin{lemma}\label{lm:isotrolimitresolvent}
    Assume Assumptions~\ref{Assume:highdim}-\ref{Assume:multip} and $\covmat = I$. For any $z < 0$, as $\npinfty$,
    \begin{equation*}
        \lim_{\npinfty} \stieltjes_{2,\ndata} (z) = \stieltjes_{2} (z) \qas
    \end{equation*}
    where $\stieltjes_{2} (z)$ satisfies %\scomment{unique positiveness of $m_2(z)$?}
    \begin{equation}\label{eq:fixpoint_iso_Bootstrap}
    \stieltjes_{2}(z) \EE_{\limitdistrweight} \left[ \frac{1}{1 + \aspratio \weight \stieltjes_{1}(z)}\right] \EE_{\limitdistrweight} \left[ \frac{\weight}{1 + \aspratio \weight \stieltjes_{1}(z)}\right] - z \stieltjes_{2}(z) = \stieltjes_{1}(z). 
    \end{equation}
   The above result  also holds for $z = 0$ when $\aspratio < \sampleratio$.
\end{lemma}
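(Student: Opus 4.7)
\textbf{Proof plan for Lemma~\ref{lm:isotrolimitresolvent}.}
My plan is to derive a self-consistent equation for $\stieltjes_2(z)$ by a leave-one-out / Sherman-Morrison argument applied simultaneously to both resolvents $G_k(z) := (\hcovmat_k - zI)^{-1}$ and $G_\ell(z) := (\hcovmat_\ell - zI)^{-1}$. The starting identity is the resolvent identity $\hcovmat_k G_k = I + zG_k$, which upon right-multiplying by $G_\ell$ and taking normalized trace gives
\begin{equation*}
\tfrac{1}{d}\tr\bigl(\hcovmat_k G_k G_\ell\bigr) \;=\; \tfrac{1}{d}\tr(G_\ell) \;+\; z\,\stieltjes_{2,n}(z).
\end{equation*}
The right-hand side converges almost surely to $\stieltjes_1(z) + z\,\stieltjes_2(z)$ by Lemma~\ref{lm:MPlawsketching}; the goal is to compute the limit of the left-hand side.

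For the left-hand side, write $\hcovmat_k = n^{-1}\sum_i w_{k,i}\,x_i x_i^\T$, so that
\begin{equation*}
\tfrac{1}{d}\tr(\hcovmat_k G_k G_\ell) \;=\; \tfrac{1}{nd}\sum_{i=1}^n w_{k,i}\, x_i^\T G_k G_\ell x_i.
\end{equation*}
I will peel $x_i$ out of both $G_k$ and $G_\ell$ using Sherman-Morrison twice. Letting $G_k^{(i)}$, $G_\ell^{(i)}$ denote the resolvents of the leave-one-out covariance matrices $\hcovmat_k - (w_{k,i}/n)x_ix_i^\T$ and $\hcovmat_\ell - (w_{\ell,i}/n)x_ix_i^\T$, the standard identity $x_i^\T G_k = x_i^\T G_k^{(i)}/(1 + (w_{k,i}/n) x_i^\T G_k^{(i)} x_i)$, together with its analogue for $G_\ell x_i$, yields
\begin{equation*}
\tfrac{w_{k,i}}{n} x_i^\T G_k G_\ell x_i \;=\; \frac{(w_{k,i}/n)\, x_i^\T G_k^{(i)} G_\ell^{(i)} x_i}{\bigl(1+(w_{k,i}/n)\,x_i^\T G_k^{(i)} x_i\bigr)\bigl(1+(w_{\ell,i}/n)\,x_i^\T G_\ell^{(i)} x_i\bigr)}.
\end{equation*}
Since $x_i$ is independent of $G_k^{(i)}$, $G_\ell^{(i)}$ and $x_i$ has i.i.d.\ coordinates with unit variance, a uniform-in-$i$ trace-concentration estimate (Lemma~\ref{lm:concentrationontrace}) replaces each quadratic form by its trace: $n^{-1}x_i^\T G_k^{(i)}x_i \to \gamma\,\stieltjes_1(z)$, similarly for $\ell$, and $n^{-1}x_i^\T G_k^{(i)}G_\ell^{(i)} x_i \to \gamma\,\stieltjes_2(z)$, where the last limit is identified by a rank-one perturbation argument showing $\stieltjes_{2,n}(z)-d^{-1}\tr(G_k^{(i)}G_\ell^{(i)}) = o(1)$ almost surely and uniformly in $i$.

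Inserting these limits, using $1/d = 1/(\gamma n)$, and using the pairwise-independence clause of Assumption~\ref{Assume:multip} to evaluate the sum over $i$ as a product of expectations under $\limitdistrweight$, I obtain
\begin{equation*}
\tfrac{1}{d}\tr(\hcovmat_k G_k G_\ell) \;\to\; \stieltjes_2(z)\,\EE_{\limitdistrweight}\!\left[\tfrac{w}{1+\gamma w\,\stieltjes_1(z)}\right]\,\EE_{\limitdistrweight}\!\left[\tfrac{1}{1+\gamma w\,\stieltjes_1(z)}\right] \qas
\end{equation*}
Equating this with $\stieltjes_1(z)+z\,\stieltjes_2(z)$ gives the self-consistent equation~\eqref{eq:fixpoint_iso_Bootstrap}. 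Existence and uniqueness of the positive solution $\stieltjes_2(z)$ follow by solving this equation linearly for $\stieltjes_2(z)$ once $\stieltjes_1(z)$ is fixed by Lemma~\ref{lm:MPlawsketching}; non-negativity and the deterministic limit together give the a.s.\ convergence.

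The extension to $z=0$ in the underparameterized regime $\aspratio<\sampleratio$ is handled by a continuity argument: in that regime, Lemma~\ref{lm:MPlawsketching} and a lower bound on the least nonzero eigenvalue of $\hcovmat_k$ (following from a Bai-Silverstein-type hard-edge estimate for the sketched Wishart, using that the nonzero multipliers are bounded below by $c_w$) ensure that $\stieltjes_{2,n}(z)$ is uniformly bounded and equicontinuous on a neighborhood of $0^-$, so the a.s.\ convergence at $z<0$ extends to $z=0$ by letting $z\nearrow 0$. The chief technical obstacle is establishing the uniform-in-$i$ concentration of the three quadratic forms, in particular $x_i^\T G_k^{(i)}G_\ell^{(i)} x_i$, since one resolvent factor depends on the $\cW_\ell$-randomness which is only pairwise independent of $\cW_k$; this is where the $(8+\epsilon)$-th moment assumption on the entries of $z$ is used, together with a union bound over $i\le n$ and monotonicity of the resolvent norms on $z<0$.
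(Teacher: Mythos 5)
Your proposal is correct and follows essentially the same route as the paper's proof: the same resolvent identity (with the roles of $k$ and $\ell$ merely swapped), the double Sherman--Morrison leave-one-out step, the uniform trace-concentration lemma, the rank-one perturbation stability of $\stieltjes_{2,n}(z)$, and the convergence of the joint empirical measure of $(\cW_k,\cW_\ell)$ to the product measure $\limitdistrweight\otimes\limitdistrweight$, yielding the same linear equation for $\stieltjes_2(z)$. The only cosmetic difference is the $z=0$ case: you let $z\nearrow 0$ via uniform boundedness and equicontinuity, whereas the paper simply reruns the $z<0$ argument directly at $z=0$ on the almost-sure invertibility event when $\aspratio<\sampleratio$; both are valid.
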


Let 
$
 \stieltjes_{2, \lambda, n}(z):= \tr( (X\transp X/n + \lambda I)\inv (X\transp \sketchmat_{k}\transp \sketchmat_{k}X/n - z)\inv)/\ndim. 
$
Then the following result establishes the almost sure convergence of $ \stieltjes_{2, \lambda, n}(z)$ and  will be used for proving  the ridge equivalence result. %scomment{moved to later?}

\begin{lemma}\label{lm:ridgeresolvent}
    Assume Assumptions~\ref{Assume:highdim}-\ref{Assume:multip} and $\covmat = I$. %Let $X_k := \sketchmat_k X$ be a subsample. %and $\stieltjes(\lambda)$ be the Stieltjes transform of the limiting spectral distribution of full-sample matrix $X$. Then 
    Then, for any $z < 0$, we have, as $\npinfty$,
    \begin{equation*}
        \lim_{\npinfty} \stieltjes_{2, \lambda, n}(z) = \stieltjes_{2, \lambda}(z) \qas
    \end{equation*}
    where $\stieltjes_{2, \lambda}(z)$ satisfies the equation
    \begin{equation}\label{eq:fixpoint_ridgeequiv}
        \stieltjes(-\lambda) = \stieltjes_{2, \lambda}(z) \frac{1}{1 + \aspratio \stieltjes(-\lambda)} \EE_{\limitdistrweight} \left[ \frac{\weight}{1 + \aspratio \weight \stieltjes_{1}(z)}\right] - z\stieltjes_{2,\lambda}(z),
    \end{equation}
    where $\stieltjes(-\lambda)$ is the Stieltjes transform of the limiting spectral distribution of $X\transp X/n$ evaluated at $-\lambda$. When  $\aspratio < \sampleratio$, the above result also holds for $z = 0$. 
\end{lemma}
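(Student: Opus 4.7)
The plan is to derive a fixed-point equation for the random sequence $\stieltjes_{2,\lambda,n}(z)$ via a leave-one-out decomposition combined with concentration of quadratic forms, and then invoke uniqueness of the resulting linear equation to identify the limit. Write $M_1 = X\transp X/n + \lambda I$ and $M_2 = X\transp \sketchmat_k\transp \sketchmat_k X/n - zI$, let $x_1,\ldots,x_n$ denote the rows of $X$ (so $X = Z$ under $\covmat = I$), and let $w_i = w_{k,i}$ denote the diagonal entries of $\sketchmat_k\transp \sketchmat_k$. Starting from the algebraic identity $M_2\inv(X\transp\sketchmat_k\transp\sketchmat_k X/n) = I + zM_2\inv$, premultiplying by $M_1\inv$ and taking normalized traces yields
\begin{equation*}
\frac{1}{nd}\sum_{i=1}^{n} w_i\, x_i\transp M_1\inv M_2\inv x_i \;=\; \tfrac{1}{d}\tr(M_1\inv) + z\, \stieltjes_{2,\lambda,n}(z),
\end{equation*}
whose first term on the right converges almost surely to $\stieltjes(-\lambda)$ by the classical Marchenko--Pastur theorem for $X\transp X/n$.

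Next, introduce the leave-one-out matrices $M_1^{(-i)} = M_1 - x_ix_i\transp/n$ and $M_2^{(-i)} = M_2 - w_i\, x_ix_i\transp/n$, which are independent of $x_i$. Two applications of the Sherman--Morrison identity give the clean decomposition
\begin{equation*}
x_i\transp M_1\inv M_2\inv x_i \;=\; \frac{x_i\transp M_1^{(-i)\inv} M_2^{(-i)\inv} x_i}{\bigl(1 + x_i\transp M_1^{(-i)\inv} x_i/n\bigr)\bigl(1 + w_i\, x_i\transp M_2^{(-i)\inv} x_i/n\bigr)}.
\end{equation*}
Under the $(8+\epsilon)$-moment assumption of Assumption~\ref{Assume:Covdistri}, Lemma~\ref{lm:concentrationontrace} yields the uniform concentration $\max_i |x_i\transp A\, x_i - \tr(A)|/n \overset{\as}{\rightarrow} 0$ for $A \in \{M_1^{(-i)\inv},\, M_2^{(-i)\inv},\, M_1^{(-i)\inv} M_2^{(-i)\inv}\}$, whose operator norms are bounded by $1/\lambda$, $1/|z|$, and $1/(\lambda |z|)$ respectively when $z<0$. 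Combining this with $\tr(M_1^{(-i)\inv})/n \to \aspratio\, \stieltjes(-\lambda)$, $\tr(M_2^{(-i)\inv})/n \to \aspratio\, \stieltjes_1(z)$ (Lemma~\ref{lm:MPlawsketching}), and the fact that $\tr(M_1^{(-i)\inv} M_2^{(-i)\inv})/d$ equals $\stieltjes_{2,\lambda,n}(z)$ up to an $O(1/d)$ rank-one perturbation, the almost sure weak convergence of the empirical measure of $(w_i)_{1\le i\le n}$ to $\limitdistrweight$ gives
\begin{equation*}
\frac{1}{nd}\sum_{i=1}^{n} w_i\, x_i\transp M_1\inv M_2\inv x_i \;\overset{\as}{\rightarrow}\; \frac{\stieltjes_{2,\lambda}^{\infty}(z)}{1 + \aspratio\, \stieltjes(-\lambda)}\,\EE_{\limitdistrweight}\!\left[\frac{\weight}{1 + \aspratio \weight\, \stieltjes_1(z)}\right]
\end{equation*}
for any subsequential almost sure limit $\stieltjes_{2,\lambda}^{\infty}(z)$ of $\stieltjes_{2,\lambda,n}(z)$. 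Equating this with $\stieltjes(-\lambda) + z\, \stieltjes_{2,\lambda}^{\infty}(z)$ yields exactly equation~\eqref{eq:fixpoint_ridgeequiv}, and since this equation is \emph{linear} in $\stieltjes_{2,\lambda}^{\infty}$ with a bracketed coefficient that is strictly positive for $z<0$, the limit is unique and must equal $\stieltjes_{2,\lambda}(z)$.

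The main technical obstacle is establishing the uniform-in-$i$ concentration of quadratic forms involving the non-symmetric matrix $M_1^{(-i)\inv} M_2^{(-i)\inv}$ (handled by symmetrizing via $x_i\transp A x_i = x_i\transp (A+A\transp)/2\, x_i$) while simultaneously keeping the denominators bounded away from zero; for $z<0$ both factors are bounded below by $1$, but the uniformity in $i$ is precisely where the strengthened $(8+\epsilon)$-moment hypothesis over the usual $(4+\epsilon)$-moment one is invoked, via a union bound after a Markov-type tail estimate on the quadratic forms. To extend the result to $z = 0$ under $\aspratio < \sampleratio$, we note that the sketched sample covariance $\hcovmat_k$ has minimum eigenvalue bounded away from zero almost surely by a Bai--Yin-type argument (since each bootstrap sample has an effective aspect ratio $\aspratio/\sampleratio < 1$), so $\|M_2\inv\|_2$ remains bounded as $\zinfty$; combined with the differentiability of $\stieltjes_1$ and $\stieltjes_{2,\lambda}$ on $(-\epsilon, 0]$ in the spirit of Lemma~\ref{lm:v0}, the convergence passes from $z<0$ to $z=0$ by a dominated-convergence-type continuity argument.
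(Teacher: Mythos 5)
Your proof is correct and takes essentially the same route as the paper, which disposes of this lemma in one sentence by noting it is the argument of Lemma~\ref{lm:isotrolimitresolvent} with the second sketching resolvent replaced by the ridge resolvent $(X^\T X/n+\lambda I)^{-1}$. Your explicit spell-out — the resolvent identity $M_2^{-1}\hcovmat_k = I + zM_2^{-1}$ premultiplied by $M_1^{-1}$, the double Sherman--Morrison leave-one-out, uniform quadratic-form concentration via Lemma~\ref{lm:concentrationontrace}, and then identification of the limit through the linear self-consistent equation — is precisely that adaptation; passing through subsequential limits and invoking uniqueness of the linear equation is a slightly cleaner way to conclude than the paper's terse "the limit exists and satisfies," but the substance is the same.
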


%We then prove Lemma \ref{lm:v0} on  $\compstieltjes(z)$ and $\tcompstieltjes(z)$. We first restate the lemma. 
\iffalse
\begin{lemma}
    For $z \leq 0$, equations~\eqref{eq:fixedpointsketching} and \eqref{eq:fixpointbootstrap} have unique positive solutions $\compstieltjes(z)$ and $\tcompstieltjes(z)$.  Moreover, $\compstieltjes(z)$ and $\tcompstieltjes(z)$ are differentiable for any $z < 0$. When $\aspratio> \sampleratio$, $\compstieltjes(0):= \lim_{\zinfty} v(z)$, $\compstieltjes(0):= \lim_{\zinfty} \compstieltjes(z)$, $\compstieltjes'(0):= \lim_{\zinfty} \compstieltjes'(z)$, $\tcompstieltjes(0):= \lim_{\zinfty} \tcompstieltjes(z)$, and  $\tcompstieltjes'(0)=\lim_{\zinfty} \tcompstieltjes'(z)$ exist. 
\end{lemma}
\fi

%%%%%%%%%%%%%%%%%%%%%%%%%%%%%%%%%%%%%%%%%%%%%%%%%%
%%%%%%%%%%%%%%%%%%%%%Proofs%%%%%%%%%%%%%%%%%%%%%%%
%%%%%%%%%%%%%%%%%%%%%%%%%%%%%%%%%%%%%%%%%%%%%%%%%%
\subsection{Proofs for Section \ref{app:basics}}

This subsection proves Lemmas \ref{lm:MPlawsketching}-\ref{lm:ridgeresolvent} in order.

%%%%%%%%%%%%%%%%%%%%%%%%%%%%%%%%%%%%%%%%%%%%%%%%%%
%%%%%%%%%%%%%%%%%%%%Lemma S.1.3%%%%%%%%%%%%%%%%%%%
%%%%%%%%%%%%%%%%%%%%%%%%%%%%%%%%%%%%%%%%%%%%%%%%%%

\subsubsection{Proof of Lemma \ref{lm:MPlawsketching}}

\begin{proof}[Proof of Lemma \ref{lm:MPlawsketching}]
We prove this lemma in two steps. 

\paragraph{Almost sure convergence of $\mu_n$.} %and characterization of $m_1(z)$}

If the multipliers are deterministic and bounded, Theorem 2.7 by~\cite{couillet2022} proves that  equation \eqref{eq:MPlawsketching} has the unique positive solution  $\stieltjes_{1}(z)$ for $z < 0$ and  $\mu_{\hcovmat_k}$ converges almost surely  to $\mu$ which is defined by the Stieltjes transform $\stieltjes(z)$.  The deterministic and bounded assumption on the multipliers can be relaxed to Assumption \ref{Assume:relaxed_assumption}  by using the arguments presented in Section 4.4 of~\citep{zhang2007spectral}. 
%\scomment{collect these reults and assumptions in the preliminary section. }

\paragraph{Proving that  $\stieltjes_{1}(0) := \lim_{\zinfty} \stieltjes_{1}(z)$ exits. }

When $\aspratio < \sampleratio$, Lemma~\ref{lm:singularmat} and Lemma~\ref{lm:lowerboundeigval} imply  that the sketched sample covariance matrix 
$X\transp \sketchmat_k\transp \sketchmat_k X/ \ndata$ 
is almost surely invertible as $\npinfty$, and its smallest eigenvalue is strictly greater than zero in the limit. Consequently, $\stieltjes_{1}(0): = \lim_{\zinfty} \stieltjes_{1}(z)$ exists and satisfies
    \begin{align}\label{eq:m0limit}
        \stieltjes_{1}(0) &= \lim_{\zinfty} \stieltjes_{1}(z) =  \lim_{\zinfty}\int \frac{1}{t-z} d\mu(t)= \int \frac{1}{t} d\mu(t),
    \end{align}
    where the last  line follows from the dominated convergence theorem. Furthermore, for $z \leq 0$, when $\aspratio < \sampleratio$, $\stieltjes_{1}(z)$ is bounded. Applying the dominated convergence theorem, we conclude that $\stieltjes(0)=\lim_{\zinfty} \stieltjes_{1}(z)$ satisfies equation~\eqref{eq:MPlawsketching}. Hence, the desired result follows. %\wcomment{However, the fixed-point equation in Theorem 1.2.1 in~\citep{zhang2007spectral} does not have such form and requires $z \in \mathbb{C^{+}}$. (need to prove the uniqueness of the solution?). Another approach, through Theorem 2.7 in~\citep{couillet2022}, directly yields the above fixed-point equation, but it requires the multipliers to be bounded and $z \in \mathbb{C}/\RR^{+}$.} 
\end{proof}

%%%%%%%%%%%%%%%%%%%%%%%%%%%%%%%%%%%%%%%%%%%%%%%%%%
%%%%%%%%%%%%%%%%%%%%Lemma S.1.3%%%%%%%%%%%%%%%%%%%
%%%%%%%%%%%%%%%%%%%%%%%%%%%%%%%%%%%%%%%%%%%%%%%%%%

\subsubsection{Proof of Lemma \ref{lm:isotrolimitresolvent}}
\begin{proof}[Proof of Lemma \ref{lm:isotrolimitresolvent}]
    Let $z < 0$. We start by expressing $(\hcovmat_{k} - zI)^{-1}$ as
    \begin{align*}
    (\hcovmat_{k} - zI)^{-1} &= (\hcovmat_{k} - zI)^{-1} (\hcovmat_{\ell} - zI)^{-1} (\hcovmat_{\ell} - zI)\\
    &= \frac{1}{n} \sum_{i=1}^{\ndata} \weight_{\ell,i} (\hcovmat_{k} - zI)^{-1} (\hcovmat_{\ell} - zI)^{-1} x_{i}x_{i}\transp - z (\hcovmat_{k} - zI)^{-1}(\hcovmat_{\ell} - zI)^{-1}.
    \end{align*}
    Taking the trace and applying the Sherman–Morrison formula, we obtain
    \begin{align*}
    \tr (\hcovmat_{k} - zI)^{-1} &= \frac{1}{\ndata} \sum_{i=1}^{\ndata} \weight_{\ell,i} \frac{\tr( x_{i}\transp (\hcovmat_{k,-i} - zI)^{-1} (\hcovmat_{\ell,-i} - zI)^{-1} x_{i})}{(1 + \frac{1}{\ndata} \weight_{k,i} x_{i}\transp (\hcovmat_{k,-i} - zI)^{-1} x_{i}) (1 + \frac{1}{\ndata} \weight_{\ell,i} x_{i}\transp (\hcovmat_{\ell,-i} - zI)^{-1} x_{i})}\\
    &\quad\quad\quad  - z\tr (\hcovmat_{k} - zI)^{-1} (\hcovmat_{\ell} - zI)^{-1}.
    \end{align*}
Applying Lemma~\ref{lm:boundrankoneperturb} and Lemma~\ref{lm:concentrationontrace}, we obtain 
    \begin{align*}
        & \frac{1}{\ndim} \tr (\hcovmat_{k} - zI)^{-1} - \frac{1}{\ndata} \sum_{i=1}^{\ndata} \weight_{\ell,i} \frac{ \frac{1}{\ndim} \tr( (\hcovmat_{k} - zI)^{-1} (\hcovmat_{\ell} - zI)^{-1})}{(1 + \frac{1}{\ndata} \weight_{k,i}  \tr(\hcovmat_{k} - zI)^{-1} ) (1 + \frac{1}{\ndata} \weight_{\ell,i}  \tr(\hcovmat_{\ell} - zI)^{-1})}\\
        &\quad\quad\quad - z \,\frac{1}{\ndim} \tr \left((\hcovmat_{k} - zI)^{-1} (\hcovmat_{\ell} - zI)^{-1}\right) \rightarrow 0 \qas 
    \end{align*}
 as $\npinfty$. Applying Lemma~\ref{lm:MPlawsketching} then acquires 
   \begin{equation*}
        \stieltjes_{1}(z) - \frac{1}{\ndata} \sum_{i=1}^{\ndata} \weight_{\ell,i} \frac{  \stieltjes_{2,n}(z)}{ (1 + \aspratio \weight_{k,i} \stieltjes_{1} (z)) (1 + \aspratio \weight_{\ell,i} \stieltjes_{1} (z))}  + z \stieltjes_{2,n}(z)\rightarrow 0 \qas
   \end{equation*}
% as $\npinfty$.
Using Lemma \ref{lm:MPlawsketching}, we obtain 
   \begin{equation*}
        \frac{\weight_{\ell,i}}{ (1 + \aspratio \weight_{k,i} \stieltjes_{1} (z)) (1 + \aspratio \weight_{\ell,i} \stieltjes_{1} (z))} \leq \frac{1}{\aspratio \stieltjes_{1}(z)}<\infty. 
   \end{equation*}
 Hence, using  Assumption~\ref{Assume:multip} and the dominated convergence theorem, we obtain
   \begin{equation*}
        \stieltjes_{1}(z) -  \stieltjes_{2,n}(z) \EE_{\limitdistrweight} \left[ \frac{1}{1 + \aspratio \weight \stieltjes_{1}(z)}\right] \EE_{\limitdistrweight} \left[ \frac{\weight}{1 + \aspratio \weight \stieltjes_{1}(z)}\right] + z \stieltjes_{2,n}(z) \rightarrow 0 \qas
   \end{equation*}
 Consequently, the limit of $\stieltjes_{2,n}(z)$ exists and satisfies equation~\eqref{eq:fixpoint_iso_Bootstrap}. This completes the proof for the case of  $z < 0$. 
   
   In the case where $\aspratio < \sampleratio$ and $z = 0$, we can assume both sample covariance matrices are invertible almost surely as $\npinfty$, as indicated by Lemma~\ref{lm:singularmat}. Then applying  the same argument as in the case of $z<0$ finishes the proof. 
\end{proof}

%%%%%%%%%%%%%%%%%%%%%%%%%%%%%%%%%%%%%%%%%%%%%%%%%%
%%%%%%%%%%%%%%%%%%%%Lemma S.1.4%%%%%%%%%%%%%%%%%%%
%%%%%%%%%%%%%%%%%%%%%%%%%%%%%%%%%%%%%%%%%%%%%%%%%%
\subsubsection{Proof of Lemma \ref{lm:ridgeresolvent}}
\begin{proof}[Proof of Lemma~\ref{lm:ridgeresolvent}]
    The proof of Lemma~\ref{lm:ridgeresolvent} is similar to that of Lemma~\ref{lm:isotrolimitresolvent}. The only difference is that the ridge regression has term $(X\transp X + \lambda I)\inv$, which can be written as $(X\transp S SX + \lambda I)\inv$ with sketching matrix $S = I$. One can check that all arguments in Lemma~\ref{lm:isotrolimitresolvent} carry through.
\end{proof}

%%%%%%%Technical lemmas
\subsection{Technical lemmas}
This subsection proves technical lemmas that are used in the proofs of the supporting lemmas in the previous subsection.

%%%%%Tech Lemma 1

\begin{lemma}\label{lm:singularmat}
    Let $\Omega_{\ndata} := \{ \omega \in \Omega :  (X\transp \sketchmat\transp \sketchmat X/\ndata)(\omega)~\text{is invertible} \}$. Assume Assumptions~\ref{Assume:highdim}-\ref{Assume:multip}. Then
    \begin{equation*} 
         \PP \left( \lim_{\npinfty} \Omega_{\ndata} \right)  = \begin{cases}
            1, \quad&~ \aspratio < \sampleratio,\\
            0, \quad&~ \aspratio > \sampleratio.
        \end{cases}
    \end{equation*}
\end{lemma}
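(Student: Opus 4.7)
The plan is to reduce the invertibility question to a count of ``active'' rows, plus a classical fact about the smallest singular value of a tall random matrix. Let $N_n := \#\{i : w_{k,i} \ne 0\}$. Because $w_{k,i} \in \{0\} \cup [c_w, \infty)$ almost surely and the empirical measure of the multipliers converges weakly to $\mu_w$ almost surely (Assumption~\ref{Assume:multip}), I would apply weak convergence to a continuous bounded function that vanishes on $\{0\}$ and equals $1$ on $[c_w,\infty)$ (such a function exists because the two sets are separated) to obtain $N_n/n \to 1 - \mu_w(\{0\}) = \theta$ almost surely. Together with $d/n \to \gamma$ (Assumption~\ref{Assume:highdim}), this yields $N_n < d$ eventually when $\gamma > \theta$ and $N_n > d$ eventually when $\gamma < \theta$, both almost surely.

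For the case $\gamma > \theta$, write $X^{\top}\sketchmat^{\top}\sketchmat X = \sum_{i=1}^{n} w_{k,i}\, x_i x_i^{\top}$, which is a sum of at most $N_n$ rank-one matrices. Hence $\operatorname{rank}(X^{\top}\sketchmat^{\top}\sketchmat X) \le N_n < d$ for all $n$ large enough, so the matrix is singular from some index onward on a set of full probability, giving $\PP(\lim \Omega_n) = 0$.

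For the case $\gamma < \theta$, collect the active rows of $X$ into $\tilde X \in \mathbb{R}^{N_n \times d}$ and let $D$ be the diagonal matrix of the non-zero $w_{k,i}$, so that $D \succeq c_w I_{N_n}$. Then $X^{\top}\sketchmat^{\top}\sketchmat X = \tilde X^{\top} D \tilde X \succeq c_w\, \tilde X^{\top}\tilde X$, so it suffices to prove $\tilde X^{\top}\tilde X$ is invertible eventually. Since $\mathcal W$ is independent of $X$ and the rows of $X$ are i.i.d.\ copies of $x = \Sigma^{1/2} z$, conditionally on $\mathcal W$ the rows of $\tilde X$ are $N_n$ i.i.d.\ copies of $x$. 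Because $d/N_n \to \gamma/\theta \in (0,1)$ almost surely and $\lambda_{\min}(\Sigma) \ge c_\lambda > 0$, invoking the Bai--Yin theorem on the smallest singular value of a tall i.i.d.\ matrix (which requires fourth moments on the entries of $z$, whereas Assumption~\ref{Assume:Covdistri} provides $(8+\epsilon)$-th moments) yields $\lambda_{\min}(\tilde X^{\top}\tilde X / N_n) \to c_\lambda (1 - \sqrt{\gamma/\theta})^2 > 0$ almost surely, so $\tilde X^{\top}\tilde X$ is invertible for all large $n$ and $\PP(\lim \Omega_n) = 1$.

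The main technical obstacle will be applying Bai--Yin with the random row count $N_n$. The cleanest route is to pass to a deterministic lower bound $m_n := \lfloor (\theta+\gamma)/2 \cdot n \rfloor$, observe that $N_n > m_n > d$ eventually almost surely, and use monotonicity of $\tilde X^{\top}\tilde X$ in the PSD order when rows are appended to reduce to a deterministic aspect ratio $d/m_n \to 2\gamma/(\theta+\gamma) \in (0,1)$; exchangeability of the i.i.d.\ rows of $X$ ensures that any such $m_n$-row subblock has the same joint distribution as $m_n$ i.i.d.\ copies of $x$, so the classical Bai--Yin bound applies directly.
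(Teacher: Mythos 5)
Your proof follows essentially the same route as the paper's: both count the active rows $N_n = |A_0|$, show $N_n/n \to \theta$ almost surely from Assumption~\ref{Assume:multip}, conclude singularity by a rank bound when $\gamma > \theta$, and invoke the Bai--Yin bound together with $w_{k,i} \ge c_w$ and $\lambda_{\min}(\Sigma) \ge c_\lambda$ when $\gamma < \theta$. You patch two small technicalities the paper glosses over — the indicator $1(w \ne 0)$ is not continuous, and you justify convergence of its empirical average via the gap between $0$ and $c_w$, and you stabilize the random row count via a deterministic surrogate $m_n$ with PSD monotonicity — but neither changes the substance of the argument.
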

\begin{proof}[Proof of Lemma \ref{lm:singularmat}]
    Let $A_{0} := \{i:~\sketchmat_{i,i} \neq 0\}$. The smallest eigenvalue can be lower bounded as %\scomment{do we need to put the assumption on the $w_i$ or the limiting law?} \scomment{should only depend on the limiting law...}
\begin{align*}
        \eigval_{\min}(X\transp \sketchmat\transp \sketchmat X/\ndata) &= \eigval_{\min}\left( \sum_{i=1}^{n} \weight_{i} x_{i}x_{i}\transp/\ndata \right)\\
        &= \eigval_{\min}\left( \sum_{i \in A_{0}} \weight_{i} x_{i}x_{i}\transp/\ndata \right)\\
        & \geq c_{w} \eigval_{\min}\left( \sum_{i \in A_{0}} x_{i}x_{i}\transp/\ndata \right)\\
        & \geq c_{w} c_{\lambda} \eigval_{\min}\left( \sum_{i \in A_{0}} z_{i}z_{i}\transp/\ndata \right). 
\end{align*}
Applying Lemma \ref{lm:Bai-yin}, when $\lim |A_{0}|/ \ndim> 1$, we obtain %\scomment{revise}
    \begin{equation*}
        \eigval_{\min}(X\transp \sketchmat\transp \sketchmat X/\ndata) \geq c_{w} c_{\lambda} \left( 1 - \sqrt{ \lim \ndim/ |A_{0}|} \right) \lim |A_{0}|/\ndata> 0,
    \end{equation*}
    which implies that $X\transp \sketchmat\transp \sketchmat X/\ndata$ is invertible in the limit. 
    
     On the other hand, for any vector $v \in \RR^{\ndim}$, we have
    \begin{align*}
        v\transp X\transp \sketchmat\transp \sketchmat Xv &= \sum_{i \in A_{0}} \weight_{i}v\transp x_{i}x_{i}\transp v\\
        &= \sum_{i \in A_{0}} \weight_{i} (x_{i}\transp v)^2.
    \end{align*}  
Therefore, when $|A_{0}| < \ndim$, there exists a non-zero vector $v$ such that $v\transp X \sketchmat\transp \sketchmat\transp Xv = 0$.

The cardinality of $A_{0}$ can be expressed as:
\begin{equation*}
    |A_{0}| = \sum_{i = 1}^{\ndata} 1\left({\weight_{i} \neq 0}\right).
\end{equation*}

Using Assumption~\ref{Assume:multip}, we have
\begin{equation}\label{eq:SSLNA0}
    |A_{0}|/\ndata = \frac{1}{\ndata} \sum_{i = 1}^{\ndata} 1\left({\weight_{i} \neq 0}\right) \rightarrow \EE_{\limitdistrweight} 1\left({\weight \neq 0}\right) = \sampleratio~ \qas
\end{equation} 
as $\npinfty$. This finishes the proof.
\end{proof}

%%%%%Tech Lemma 2

\begin{lemma}\label{lm:lowerboundeigval}
Assume Assumptions \ref{Assume:highdim}-\ref{Assume:multip}.  There exists a  positive constant  $c$ such that, as $\npinfty$, %he following inequality holds almost surely
\begin{equation*}
\lim_{\npinfty}\eigval^{+}_{\min}(X\transp \sketchmat\transp \sketchmat X/\ndata) \geq  c \left(1 - \sqrt{\aspratio /\sampleratio} \right)^2 \qas
\end{equation*}
where $\eigval^{+}_{\min}(A)$ is the smallest non-zero eigenvalue of $A$. %the sample covariance matrix.
\end{lemma}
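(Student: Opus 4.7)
The plan is to reduce the smallest non-zero eigenvalue of $X^{\T} S^{\T} S X$ to a classical Bai--Yin problem on a randomly indexed submatrix.

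First, I would isolate the active subsample. Let $A_{0} := \{i : w_{i} \neq 0\}$, let $X_{A_{0}} \in \RR^{|A_{0}| \times d}$ collect the rows of $X$ indexed by $A_{0}$, and set $D := \mathrm{diag}(w_{i})_{i \in A_{0}}$. Assumption~\ref{Assume:multip} ensures $D \succeq c_{w} I_{|A_{0}|}$, so
\begin{equation*}
X^{\T} S^{\T} S X \,=\, X_{A_{0}}^{\T} D X_{A_{0}} \,\succeq\, c_{w}\, X_{A_{0}}^{\T} X_{A_{0}}.
\end{equation*}
Since the diagonal entries of $D$ are strictly positive, $\ker(X_{A_{0}}^{\T} D X_{A_{0}}) = \ker(X_{A_{0}}) = \ker(X_{A_{0}}^{\T} X_{A_{0}})$, so the two PSD matrices have equal rank, and Weyl's monotonicity gives
\begin{equation*}
\lambda_{\min}^{+}(X^{\T} S^{\T} S X) \,\geq\, c_{w}\, \lambda_{\min}^{+}(X_{A_{0}}^{\T} X_{A_{0}}).
\end{equation*}

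Next, I would peel off the covariance. Writing $X_{A_{0}} = Z_{A_{0}} \Sigma^{1/2}$ with $Z_{A_{0}}$ the corresponding rows of $Z$, and using $\sigma_{\min}(\Sigma^{1/2}) \geq c_{\lambda}^{1/2}$ together with the pseudoinverse identity $X_{A_{0}}^{+} = \Sigma^{-1/2} Z_{A_{0}}^{+}$ (valid because $\Sigma^{1/2}$ is invertible under Assumption~\ref{Assume:Covdistri}), one has $\sigma_{\min}^{+}(X_{A_{0}}) \geq c_{\lambda}^{1/2} \sigma_{\min}^{+}(Z_{A_{0}})$, and hence
\begin{equation*}
\lambda_{\min}^{+}(X_{A_{0}}^{\T} X_{A_{0}}) \,\geq\, c_{\lambda}\, \lambda_{\min}^{+}(Z_{A_{0}}^{\T} Z_{A_{0}}).
\end{equation*}

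Finally, I would invoke Bai--Yin on $Z_{A_{0}}$. By Assumption~\ref{Assume:multip} and the computation \eqref{eq:SSLNA0}, $|A_{0}|/n \to \theta$ almost surely. Conditioning on the $\sigma$-algebra generated by the multipliers $\cW$ and exploiting independence of $\cW$ from $Z$, the submatrix $Z_{A_{0}}$ has iid entries with zero mean, unit variance, and bounded $(8+\varepsilon)$-th moment (Assumption~\ref{Assume:Covdistri}). When $\gamma < \theta$, $Z_{A_{0}}$ is eventually tall, and Lemma~\ref{lm:Bai-yin} gives $n^{-1}\lambda_{\min}(Z_{A_{0}}^{\T} Z_{A_{0}}) \to \theta(1 - \sqrt{\gamma/\theta})^{2}$ a.s. When $\gamma > \theta$, $Z_{A_{0}}$ is eventually wide, so $\lambda_{\min}^{+}(Z_{A_{0}}^{\T} Z_{A_{0}}) = \lambda_{\min}(Z_{A_{0}} Z_{A_{0}}^{\T})$ and Lemma~\ref{lm:Bai-yin} applied to the transpose yields $n^{-1}\lambda_{\min}^{+}(Z_{A_{0}}^{\T} Z_{A_{0}}) \to \gamma(1-\sqrt{\theta/\gamma})^{2}$. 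Both limits coincide with $(\sqrt{\theta}-\sqrt{\gamma})^{2} = \theta(1 - \sqrt{\gamma/\theta})^{2}$, so chaining the three bounds delivers
\begin{equation*}
\lim_{\npinfty} \lambda_{\min}^{+}(X^{\T} S^{\T} S X/n) \,\geq\, c_{w} c_{\lambda}\, \theta\, (1 - \sqrt{\gamma/\theta})^{2} \quad \text{a.s.},
\end{equation*}
proving the claim with $c = c_{w} c_{\lambda} \theta$.

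The main technical subtlety to handle carefully is the application of Bai--Yin to the randomly indexed submatrix $Z_{A_{0}}$, whose size $|A_{0}|$ is itself random. The clean way is to condition on the multiplier $\sigma$-algebra, treat $Z_{A_{0}}$ as a fresh iid matrix of size $|A_{0}| \times d$ along $\cW$-almost-every sample path, and combine the deterministic proportional-regime Bai--Yin statement with the a.s.\ convergence $|A_{0}|/n \to \theta$ (with a standard subsequence/continuity argument on the map $\eta \mapsto (1 - \sqrt{\gamma/\eta})^{2}$) to pass to the random setting.
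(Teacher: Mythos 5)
Your proposal is correct and follows essentially the same route as the paper's proof: restrict to the active index set $A_{0}$, absorb the multipliers via the lower bound $c_{w}$, peel off $\covmat$ through its smallest eigenvalue, and apply the Bai--Yin result (Lemma~\ref{lm:Bai-yin}) to $Z_{A_{0}}$ together with $|A_{0}|/\ndata \rightarrow \sampleratio$ from \eqref{eq:SSLNA0}; your kernel-equality/Weyl treatment of the first reduction is in fact cleaner than the paper's eigenvector argument, and keeping the factor $\sampleratio$ in the constant is more careful than the paper's final line. One small repair: the identity $(Z_{A_{0}}\covmat^{1/2})\pinv = \covmat^{-1/2}Z_{A_{0}}\pinv$ you invoke holds only when $Z_{A_{0}}$ has full column rank (the underparameterized regime), so in the overparameterized regime you should instead justify $\eigval^{+}_{\min}\big(\covmat^{1/2}Z_{A_{0}}\transp Z_{A_{0}}\covmat^{1/2}\big) \geq \eigval_{\min}(\covmat)\,\eigval^{+}_{\min}\big(Z_{A_{0}}\transp Z_{A_{0}}\big)$ directly, e.g.\ by Ostrowski's theorem (congruence by the invertible $\covmat^{1/2}$ preserves rank and multiplies each nonzero eigenvalue by a factor lying in $[\eigval_{\min}(\covmat),\eigval_{\max}(\covmat)]$), which is the same inequality the paper uses without comment.
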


\begin{proof}[Proof of Lemma \ref{lm:lowerboundeigval}]
    Given Assumption~\ref{Assume:Covdistri}, we have $x_i = \covmat^{1/2}_{\ndim} z_i$. Let $A_{0} := \{i~|~\sketchmat_{i,i} \neq 0\}$, and $X_{A_0}\transp X_{A_0} = \sum_{i \in A_{0}} x_{i} x_{i}\transp$ and $Z_{A_0}\transp Z_{A_0} = \sum_{i \in A_{0}} z_{i} z_{i}\transp$. %Consider a unit vector 
    Let $v$ be the unit vector corresponding to the smallest non-zero eigenvalue of $X\transp \sketchmat\transp \sketchmat X/\ndata$. Then we have  %\scomment{???}%we can write
\begin{align*}
\lim_{\npinfty} \eigval^{+}_{\min}(X\transp \sketchmat\transp \sketchmat X/\ndata) 
    &= \lim_{\npinfty} {v\transp X\transp \sketchmat\transp \sketchmat X v/\ndata}\\
    &= \lim_{\npinfty} {\sum_{i \in A_{0}} \weight_{i} (x_{i}\transp v)^2/\ndata}\\
    &\geq \lim_{\npinfty} {c_w} {\sum_{i \in A_{0}} (x_{i}\transp v)^2/\ndata}\\
    &=  \lim_{\npinfty} {c_w} {v\transp X_{A_0}\transp X_{A_0} v/\ndata}\\
    &\geq \lim_{\npinfty} {c_w} \eigval_{\min}(\covmat) \eigval^{+}_{\min}(Z_{A_0}\transp Z_{A_0}/\ndata)\\
    &\geq \lim_{\npinfty} c_\lambda {c_w} \left(1 - \sqrt{\aspratio / \sampleratio} \right)^2,
\end{align*}
where the first inequality follows from Assumption~\ref{Assume:multip}, and the second inequality uses  the fact that $v\transp X\transp S\transp S X v \neq 0$, which implies $v\transp X_{A_0}\transp X_{A_0} v/n \geq \eigval^{+}_{\min}( X_{A_0}\transp X_{A_0}/n )$. Finally, the last line uses Lemma \ref{lm:Bai-yin}, along with Assumption~\ref{Assume:highdim} and equation~\eqref{eq:SSLNA0}. 
%\wcomment{$\lim \eigval$?}
\end{proof}

%%%%%Tech Lemma 3
\begin{lemma}\label{lm:boundrankoneperturb} Consider two subsamples $X_{k} := \sketchmat_{k} X $ and $X_{\ell} := \sketchmat_{\ell}X $. Let $z < 0$. Then the following holds
\begin{gather*}
\left| \tr\left((\hcovmat_{h, -i} - zI)^{-1}  - (\hcovmat_{h} - zI)^{-1}\right) \right| \leq -\frac{1}{z}~~~\textrm{and}\\
    \left| \tr((\hcovmat_{k} - zI)\inv(\hcovmat_{\ell} - zI)\inv) - \tr((\hcovmat_{k, -i} - zI)\inv(\hcovmat_{\ell, -i} - zI)\inv)\right| \leq \frac{2}{z^2}, 
\end{gather*}
where $\hcovmat_{h, -i} := \hcovmat_{h} - \frac{1}{\ndata}\weight_{h,i} x_{i} x_{i}\transp$ for $h = k, \ell$. Furthermore, when $\aspratio < \sampleratio$, there exists a positive constant $c$ such that, under Assumptions \ref{Assume:highdim}-\ref{Assume:multip}, as $\npinfty$, we have
\begin{equation*}
    \left| \tr((\hcovmat_{k})^{-1}(\hcovmat_{\ell})^{-1}) - \tr((\hcovmat_{k, -i})^{-1}(\hcovmat_{\ell, -i} )^{-1})\right| \leq c \left(1 - \sqrt{\aspratio /\sampleratio} \right)^{-4} ~ \qas 
\end{equation*}
\end{lemma}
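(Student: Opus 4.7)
\medskip

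\noindent\textbf{Proof plan.} The engine of every bound is the Sherman--Morrison identity together with the spectral fact that for $z<0$ the eigenvalues of $(\hcovmat_h-zI)^{-1}$ lie in $(0,1/|z|]$, so that
\[
(\hcovmat_{h,-i}-zI)^{-2}\ \preceq\ \tfrac{1}{|z|}\,(\hcovmat_{h,-i}-zI)^{-1}.
\]
I would introduce the shorthand $a_{h}:=\tfrac{1}{n}w_{h,i}x_i^\top(\hcovmat_{h,-i}-zI)^{-1}x_i\ge 0$ and $b_{h}:=\tfrac{1}{n}w_{h,i}x_i^\top(\hcovmat_{h,-i}-zI)^{-2}x_i\ge 0$, and exploit the two standing inequalities $0\le b_h\le a_h/|z|$ and $1+a_h\ge 1$ throughout.

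\medskip

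\noindent\textbf{Step 1 (first bound).} Since $\hcovmat_{h}=\hcovmat_{h,-i}+\tfrac{1}{n}w_{h,i}x_ix_i^\top$ is a rank-one update of $\hcovmat_{h,-i}$, Sherman--Morrison applied to $(\hcovmat_{h}-zI)^{-1}$ gives
\[
\tr\!\left((\hcovmat_{h}-zI)^{-1}-(\hcovmat_{h,-i}-zI)^{-1}\right)=-\frac{b_h}{1+a_h}.
\]
Bounding $|b_h/(1+a_h)|\le b_h\le a_h/|z|\le 1/|z|$ yields the claim.

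\medskip

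\noindent\textbf{Step 2 (second bound).} Write the product difference as a telescoping sum
\[
\hcovmat_{k}^{\bullet}\hcovmat_{\ell}^{\bullet}-\hcovmat_{k,-i}^{\bullet}\hcovmat_{\ell,-i}^{\bullet}
=\bigl(\hcovmat_{k}^{\bullet}-\hcovmat_{k,-i}^{\bullet}\bigr)\hcovmat_{\ell}^{\bullet}+\hcovmat_{k,-i}^{\bullet}\bigl(\hcovmat_{\ell}^{\bullet}-\hcovmat_{\ell,-i}^{\bullet}\bigr),
\]
where $A^{\bullet}:=(A-zI)^{-1}$. Applying Sherman--Morrison to each difference turns the trace into a quadratic form of the type $\tfrac{1}{n}w_{h,i}\,x_i^\top(\hcovmat_{h,-i}-zI)^{-1}(\hcovmat_{h'}-zI)^{-1}(\hcovmat_{h,-i}-zI)^{-1}x_i$ divided by $1+a_h$. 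Using $\|(\hcovmat_{h'}-zI)^{-1}\|_2\le 1/|z|$ followed by the inequality $b_h\le a_h/|z|$ shows each term is dominated by $a_h/[z^2(1+a_h)]\le 1/z^2$, producing the advertised bound $2/z^2$.

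\medskip

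\noindent\textbf{Step 3 (third bound at $z=0$).} Here the key observation is that for $\aspratio<\sampleratio$, Lemma~\ref{lm:singularmat} guarantees $\hcovmat_k,\hcovmat_\ell,\hcovmat_{k,-i},\hcovmat_{\ell,-i}$ are all almost surely invertible in the limit (removing a single observation changes $|A_0|$ by at most one and so does not affect the asymptotic ratio $|A_0|/n\to\sampleratio>\aspratio$). Lemma~\ref{lm:lowerboundeigval}, applied to each of these four matrices, then provides a uniform lower bound $c(1-\sqrt{\aspratio/\sampleratio})^2$ on their smallest non-zero eigenvalues. Repeating the argument of Step~2, but replacing the control $\|(\hcovmat_h-zI)^{-1}\|_2\le 1/|z|$ by $\|\hcovmat_h^{-1}\|_2\le c^{-1}(1-\sqrt{\aspratio/\sampleratio})^{-2}$ and similarly $\hcovmat_{h,-i}^{-2}\preceq c^{-1}(1-\sqrt{\aspratio/\sampleratio})^{-2}\hcovmat_{h,-i}^{-1}$, delivers the claimed bound with constant $2c^{-2}$.

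\medskip

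\noindent\textbf{Main obstacle.} The only non-trivial issue is Step~3: the Sherman--Morrison manipulation only makes sense once invertibility of both the full and leave-one-out Gram matrices is guaranteed, and the operator norm bound we need depends on the \emph{smallest nonzero} eigenvalue being bounded below uniformly in $i$ and in the leave-one-out perturbation. I would handle this by first restricting to the almost-sure event on which Lemma~\ref{lm:lowerboundeigval} holds simultaneously for all four matrices (a finite union of almost-sure events) and observing that for large $n$ the Bai--Yin lower bound is insensitive to dropping one row, so the constant $c$ can be chosen once for all. Everything else is a mechanical application of Sherman--Morrison and the two elementary spectral inequalities noted above.
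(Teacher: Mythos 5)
Your proof is correct in substance and follows essentially the same route as the paper: Sherman--Morrison for the rank-one leave-one-out perturbation of each resolvent, the telescoping identity $AB-A'B'=(A-A')B+A'(B-B')$ for the product of resolvents, and operator-norm control from the smallest-eigenvalue bound (Lemma~\ref{lm:lowerboundeigval}) for the $z=0$ case. Where the paper invokes Von Neumann's trace inequality to bound $|\tr(M(N-N'))|$ by $\|M\|_2\,\tr(N-N')$ (using that $N-N'\succeq 0$ is rank one), you instead unwrap the rank-one factor explicitly and bound the resulting quadratic form $w_{h,i}x_i^\top(\hcovmat_{h,-i}-zI)^{-1}(\hcovmat_{h'}-zI)^{-1}(\hcovmat_{h,-i}-zI)^{-1}x_i/n$ directly; this is a more elementary route to the same bound. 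Your Step 3 is, if anything, more careful than the paper's: you explicitly note that dropping one row changes $|A_0|$ by at most one, so the asymptotic downsampling ratio and hence the Bai--Yin lower bound are unaffected, which the paper takes for granted.

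One small slip in Step~1: the chain $|b_h/(1+a_h)|\le b_h\le a_h/|z|\le 1/|z|$ is wrong at the last inequality, since $a_h$ need not be $\le 1$. The correct order is to keep the denominator and write $b_h/(1+a_h)\le (a_h/|z|)/(1+a_h)=|z|^{-1}\cdot a_h/(1+a_h)\le 1/|z|$. You do exactly this in Step~2 (``$a_h/[z^2(1+a_h)]\le 1/z^2$''), so this is a presentational slip rather than a conceptual one, but as written the chain does not prove the claim.
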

\begin{proof}[Proof of Lemma \ref{lm:boundrankoneperturb}]
We first prove the first two  results. For any real positive semi-definite matrix $A\in \RR^{\ndim \times \ndim}$ and any $z < 0$, we have $\big\|(A - zI)^{-1}\big\|_{2} \leq  - 1/z$. 
Using the Sherman-Morrison formula, we obtain that
\$
(\hcovmat_{h, -i} - zI)^{-1}  - (\hcovmat_{h} - zI)^{-1} %- (\hcovmat_{\ell, -i} - zI)^{-1}
&=   \frac{(\hcovmat_{h, -i}- z I)^{-1}\weight_{h, i}x_i x_i^\T (\hcovmat_{h, -i}- z I)^{-1}/n  }{1 + \weight_{h, i} x_{i}\transp (\hcovmat_{h, -i }- z I)^{-1} x_{i} / \ndata}
%\geq  0, 
\$
%where $A \geq 0$ indicates that $A$ 
is positive semi-definite. Thus applying Lemma \ref{lm:petubation} acquires 
\$
0\leq \tr\left((\hcovmat_{h, -i} - zI)^{-1}  - (\hcovmat_{h} - zI)^{-1}\right) \leq -\frac{1}{z}. 
\$
This finishes the proof for the first result. 
Let $\lambda_1(A), \ldots, \lambda_\ndim(A)$ be the eigenvalues of $A$ in decreasing order. Then %\scomment{more details. }
\begin{align*}
&\left| \tr((\hcovmat_{k} - zI)^{-1}(\hcovmat_{\ell} - zI)^{-1}) - \tr((\hcovmat_{k, -i} - zI)^{-1}(\hcovmat_{\ell, -i} - zI)^{-1}) \right|\\
&=\left| \tr( (\hcovmat_{k} - zI)^{-1} ((\hcovmat_{\ell} - zI)^{-1} - (\hcovmat_{\ell, -i} - zI)^{-1} ) ) + \tr( (\hcovmat_{\ell, -i} - zI)^{-1} ((\hcovmat_{k} - zI)^{-1} - (\hcovmat_{k, -i} - zI)^{-1} ) ) \right|\\
&\leq  \sum_{j=1}^\ndim \lambda_j\left( (\hcovmat_{k} -zI)^{-1} \right) \lambda_j \left( (\hcovmat_{\ell, -i} -zI)^{-1} - (\hcovmat_{\ell} -zI)^{-1}  \right) \\
&\quad\quad\quad + \sum_{j=1}^\ndim \lambda_j\left( (\hcovmat_{\ell, -i } -zI)^{-1} \right) \lambda_j \left( (\hcovmat_{k, -i} -zI)^{-1} - (\hcovmat_{k} -zI)^{-1}  \right)   \\
&\leq  - \frac{1}{z}
        \left(
            \tr( (\hcovmat_{\ell, -i} - zI)^{-1} - (\hcovmat_{\ell} - zI)^{-1} ) + \tr( (\hcovmat_{k, -i} - zI)^{-1} - (\hcovmat_{k} - zI)^{-1} )     
        \right) \\
&\leq \frac{2}{z^2},
\end{align*}
where the last second line uses the Von Neumann's trace inequality, aka Lemma \ref{lm:Von_neumann}. 

We then prove the last result. Because $\aspratio < \sampleratio$ and by Lemma \ref{lm:singularmat}, $\hcovmat_h$ and $\hcovmat_{h, -i}$ are invertible almost surely as $\npinfty$.  Using the Sherman-Morrison formula, aka Lemma \ref{lm:Sherman–Morrison}, we have for either $h = k, \ell$,  
\begin{align*}
\left| \tr((\hcovmat_{h})^{-1} - (\hcovmat_{h, -i})^{-1}) \right|
    &= \left| 
            \tr\left(
                \left( (\hcovmat_{h, - i})^{-1}  - \frac{(\hcovmat_{h, -i})^{-1}\weight_{h, i}x_i x_i^\T (\hcovmat_{h, -i})^{-1}/n  }{1 + \weight_{h, i} x_{i}\transp (\hcovmat_{h, -i })^{-1} x_{i} / \ndata}  \right) 
                - (\hcovmat_{h, -i})^{-1}  
                \right) \right| \\
    &= \tr\left(\frac{(\hcovmat_{h, -i})^{-1}\weight_{h,i} x_{i} x_{i}\transp  (\hcovmat_{h, -i})^{-1} / \ndata}{1 + \weight_{h,i} x_{i}\transp (\hcovmat_{h, -i})^{-1} x_{i} / \ndata} \right)\\
    &= \frac{ \weight_{h,i} \|(\hcovmat_{h, -i})^{-1} x_{i} \|_{2}^{2}/ \ndata}{1 + \weight_{h,i} x_{i}\transp (\hcovmat_{h, -i})^{-1} x_{i}/ \ndata}\\
    &\leq \frac{ \|(\hcovmat_{h, -i})^{-1} x_{i} \|_{2}^{2}}{ x_{i}\transp (\hcovmat_{h, -i})^{-1} x_{i}}\\
    &\leq \|(\hcovmat_{h, -i})^{-1}\|_{2}\\
    &= \frac{1}{\lambda_{\min}\left(\hcovmat_{h, -i} \right)} \\
    &\leq c_0 \left(1 - \sqrt{\aspratio/ \sampleratio}\right)^{-2} \qas 
\end{align*}
where the last line follows from Lemma~\ref{lm:lowerboundeigval}. 
Then following the same argument to the proof of the first result, we obtain for some constant $c$
\$
& \left| \tr((\hcovmat_{k})^{-1}(\hcovmat_{\ell})^{-1}) - \tr((\hcovmat_{k, -i})^{-1}(\hcovmat_{\ell, -i} )^{-1})\right|
\leq c\left(   1 - \sqrt{\gamma/\theta}   \right)^{-4}. 
\$
This finishes the proof.
\end{proof}

\section{Proofs for Section \ref{sec:pre}}

%%%%%%%%%%%%%%%%%%%%%%%%%%%%%%%%%%%%%%%%%%%%%%%%%%%%%%%%
%%%%%%%%%%%%%%%%%%%%%%Lemma 2.1%%%%%%%%%%%%%%%%%%%%%%%%%
%%%%%%%%%%%%%%%%%%%%%%%%%%%%%%%%%%%%%%%%%%%%%%%%%%%%%%%%

\subsection{Proof of Lemma~\ref{lm:biasvar}}

\begin{proof}[Proof of Lemma \ref{lm:biasvar}]
Using Definition~\eqref{eq:Bestimator}, we obtain 
\begin{align*}
   \estimator &= \frac{1}{\nresample} \sum_{k=1}^{\nresample} \estimatork\\
    &= \frac{1}{\nresample} \sum_{k=1}^{\nresample} (X\transp \sketchmat_{k}\transp \sketchmat_{k} X)\pinv X\transp \sketchmat_{k}\transp \sketchmat_{k} X \truesignal + (X\transp \sketchmat_{k}\transp \sketchmat_{k} X)\pinv X\transp \sketchmat_{k}\transp \sketchmat_{k} E.
\end{align*}

For the bias part,
\begin{align*}
    \biascondition\left(\estimator \right) &= \left\| \EE(\estimator \condition \variablecondition) -\truesignal\right\|_\Sigma^2\\
    &= \left\| \frac{1}{\nresample} \sum_{k=1}^{\nresample} (\hcovmat_{k}\pinv \hcovmat_{k} - I) \truesignal\right\|_\Sigma^2\\
    &= \frac{1}{\nresample^{2}} \sum_{k,\ell} \truesignal^{\top} \projmat_{k} \covmat \projmat_{\ell} \truesignal.
\end{align*}

For the variance part, since $\varepsilon_{i}$ are \iid, we have %\scomment{Revise?}
\begin{align*}
    \varcondition(\estimator)&= \tr\left[ \cov\left(\estimator \condition \variablecondition \right)\Sigma \right]\\
    &= \tr\left[ \cov\left(\frac{1}{\nresample} \sum_{k=1} (X\transp \sketchmat_{k}\transp \sketchmat_{k} X)\pinv X\transp \sketchmat_{k}\transp \sketchmat_{k} E \condition \variablecondition \right)\Sigma \right]\\
    &= \frac{\noiselev}{\nresample^{2}} \sum_{k,\ell}\tr\left(\frac{1}{n^{2}} \hcovmat_{k}\pinv X\transp \sketchmat_{k}^2 \sketchmat_{\ell}^2 X \hcovmat_{\ell}\pinv \covmat \right).
\end{align*}
\end{proof}

%%%%%%%%%%%%%%%%%%%%%%%%%%%%%%%%%%%%%%%%%%%%%%%%%%%%%%%%
%%%%%%%%%%%%%%%%%%%%%%Lemma 2.2%%%%%%%%%%%%%%%%%%%%%%%%%
%%%%%%%%%%%%%%%%%%%%%%%%%%%%%%%%%%%%%%%%%%%%%%%%%%%%%%%%

\subsection{Proof for Lemma \ref{lm:limitdistrMultinomial}}

\begin{proof}[Proof of Lemma \ref{lm:limitdistrMultinomial}]
%\scolor{This proof extends the proof  of Proposition 4.10 in~\citep{el2010} to show the almost sure convergence.}
Let $\pi_{1}, \dots, \pi_{\ndata}$ be independent and identically distributed random variables following a Poisson distribution with parameter 1. Define $\Pi_{\ndata} = \sum_{i=1}^{\ndata} \pi_{i}$. Using the same arguments as in the proof of Proposition 4.10 in~\citep{el2010}, we can establish that
    \begin{equation*}
        (\pi_{1}, \dots, \pi_{\ndata}) \,|\,   \Pi_{\ndata} = n \sim  {\rm Multinormial}(\ndata; 1/\ndata,\ldots, 1/\ndata).
    \end{equation*}
    Now, consider a bounded function $f$, and let $W \sim \text{Poisson}(1)$ be a random variable. We have
    \begin{align*}
        \Pr\left( \left| \frac{1}{\ndata} \sum_{i=1}^{\ndata} f(\weight_{i}) - \EE f(W) \right| \geq \epsilon \right) &= \Pr\left( \left| \frac{1}{\ndata} \sum_{i=1}^{\ndata} f(\pi_{i}) - \EE f(W) \right| \geq \epsilon \condition \Pi_{\ndata} = n \right)\\
        &\leq \Pr\left( \left| \frac{1}{\ndata} \sum_{i=1}^{\ndata} f(\pi_{i}) - \EE f(W) \right| \geq \epsilon \right)/P(\Pi_{\ndata} = n).
    \end{align*}
    Since $f$ is bounded and $\EE f(\pi_{i}) = \EE f(W)$, it can be shown that
    \begin{align*}
        \EE \left( \frac{1}{\ndata} \sum_{i=1}^{\ndata} f(\pi_{i}) - \EE f(W) \right)^4 &= \sum_{i=1}^{\ndata} \frac{1}{n^4} \EE \left( f(\pi_{i}) - \EE f(W) \right)^4 + \sum_{i \neq j} \frac{1}{n^4} \EE \left( f(\pi_{i}) - \EE f(W) \right)^{2} \EE \left( f(\pi_{j}) - \EE f(W) \right)^{2} \\&
        \leq O(n^{-2}).
    \end{align*}
    Considering that $\Pi_{\ndata}$ follows a Poisson distribution with parameter $n$, and $\Pr(\Pi_{\ndata} = \ndata) \sim 1/\sqrt{2 \pi \ndata}$, we can deduce that $\Pr\left( \left| \frac{1}{\ndata} \sum_{i=1}^{\ndata} f(\weight_{i}) - \EE f(W) \right| \geq \epsilon \right) \leq O(n^{-3/2})$. By applying the Borel–Cantelli lemma, we have    
    \begin{equation*}
        \int f(w) d{\tildelimitdistrweight}(w) \rightarrow \EE f(W) \qas
    \end{equation*}
    as $\npinfty$.
\end{proof}

%%%%%%%%%%%%%%%%%%%%%%%%%%%%%%%%%%%%%%%%%%%%%%%%%%%%%%%%
%%%%%%%%%%%%%%%%%%%%%%Lemma 2.3%%%%%%%%%%%%%%%%%%%%%%%%%
%%%%%%%%%%%%%%%%%%%%%%%%%%%%%%%%%%%%%%%%%%%%%%%%%%%%%%%%

\subsection{Proof of Lemma~\ref{lm:biasvar-beta}}
\begin{proof}[Proof of Lemma \ref{lm:biasvar-beta}]
By  Lemma~\ref{lm:biasvar}, we have 
\begin{equation*}
    \biascondition\left(\estimator \right) = \frac{1}{\nresample^{2}} \sum_{k,\ell} \truesignal^{\top} \projmat_{k} \covmat \projmat_{\ell} \truesignal.
\end{equation*}
Since the eigenvalues of the projection matrices $\projmat_{k} = I-\hcovmat_{k}\pinv \hcovmat_{k}$ are either zero or one, and $\|\Sigma\|_2\leq C$ for all $\ndim$, the norms of $\projmat_{k} \covmat \projmat_{\ell}$  are uniformly bounded. The desired result then follows from Lemma~\ref{lm:quadconcentration}.
\end{proof}
%{\color{red}
%\$
%?=? \frac{\sigma^2}{\nresample^{2}} \sum_{i,j}\tr\left( n^{2} \hcovmat_{i}\pinv X\transp \weightmat_{i} \weightmat_{j} X \hcovmat_{j}\pinv \covmat \right)
%\$
%}

\section{Proofs for Section \ref{sec:isotropic}}
%%%%%%%%%%%%%%%%%%%%%%%%%%%%%%%%%%%%%%%%%%%%%%%%%
%%%%%%%%%%%%%%%%Isotropic%%%%%%%%%%%%%%%%%%%%%%%%
%%%%%%%%%%%%%%%%%%%%%%%%%%%%%%%%%%%%%%%%%%%%%%%%%

This section proves the results in Section \ref{sec:isotropic}.

\subsection{Proof of Lemma \ref{lemma:unique_sketching_iso}}

Lemma \ref{lemma:unique_sketching_iso}  follows from the following stronger result. % proving a stronger result as follows.  
\begin{lemma}\label{lm:uniqueMPlawsketching}
    When $\aspratio < \sampleratio$, $\aspratio \stieltjes_{1}(0)$ is the unique positive solution to  equation~\eqref{eq:uniqueMPlawsketching}. 
\end{lemma}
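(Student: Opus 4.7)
The plan is to identify the equation \eqref{eq:uniqueMPlawsketching} as a simple algebraic reformulation of the self-consistent equation \eqref{eq:MPlawsketching} at $z=0$, and then establish uniqueness by a monotonicity argument on the function $g(x) := \int (1+xt)^{-1}\,\limitdistrweight(dt)$.

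First, by Lemma \ref{lm:MPlawsketching}, since $\aspratio<\sampleratio$, the limit $\stieltjes_1(0) = \lim_{z\to 0^-} \stieltjes_1(z)$ exists, is positive, and satisfies equation \eqref{eq:MPlawsketching} evaluated at $z=0$, namely
\begin{equation*}
\stieltjes_{1}(0) \,\EE_{\limitdistrweight}\!\left[ \frac{\weight}{1 + \aspratio \weight \stieltjes_{1}(0)} \right] = 1.
\end{equation*}
Setting $c := \aspratio\,\stieltjes_1(0) > 0$, multiplying the above by $\aspratio$, and using the identity $\frac{cw}{1+cw}=1-\frac{1}{1+cw}$ together with $\EE_{\limitdistrweight}[1]=1$, I obtain
\begin{equation*}
\EE_{\limitdistrweight}\!\left[ \frac{1}{1 + c\weight} \right] = 1 - \aspratio,
\end{equation*}
which is exactly \eqref{eq:uniqueMPlawsketching} at $x = c = \aspratio\stieltjes_1(0)$. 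This proves existence.

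For uniqueness, I will analyze $g(x) := \int (1+xt)^{-1}\,\limitdistrweight(dt)$ on $(0,\infty)$. By Assumption \ref{Assume:multip}, $\limitdistrweight$ is supported on $\{0\}\cup[c_w,\infty)$ with $\limitdistrweight(\{0\}) = 1-\sampleratio$ and mass $\sampleratio>0$ on $[c_w,\infty)$. Differentiating under the integral (justified by the uniform bound $|t/(1+xt)^2|\le 1/(xc_w)$ on the support away from $0$ and the fact that the atom at $0$ contributes nothing to the derivative) gives $g'(x) = -\int \frac{t}{(1+xt)^2}\limitdistrweight(dt) < 0$ for all $x>0$, so $g$ is strictly decreasing on $(0,\infty)$. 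The dominated convergence theorem yields $g(0^+)=1$ and $g(\infty) = \limitdistrweight(\{0\}) = 1-\sampleratio$. Since $\aspratio < \sampleratio$, we have $1-\aspratio \in (1-\sampleratio,\,1)$, which lies in the strict range of the continuous, strictly decreasing $g$; hence $g(x) = 1-\aspratio$ admits a unique positive solution.

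There is no serious obstacle here: the only subtlety is that the algebraic manipulation requires $\stieltjes_1(0)$ to be both finite and strictly positive, and that we are legitimately allowed to evaluate \eqref{eq:MPlawsketching} at $z=0$; both are supplied by Lemma \ref{lm:MPlawsketching} under the assumption $\aspratio<\sampleratio$. Combined with the monotonicity argument, this identifies $\aspratio\stieltjes_1(0)$ as the unique positive solution, completing the proof.
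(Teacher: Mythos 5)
Your proof is correct and follows essentially the same route as the paper's: identify $\aspratio\stieltjes_1(0)$ as a solution by rewriting the self-consistent equation at $z=0$, then establish uniqueness via monotonicity of $x\mapsto\int(1+xt)^{-1}\limitdistrweight(dt)$. In fact your version is slightly more careful than the paper's, which asserts $\lim_{x\to\infty}f(x)=0$; the correct limit is $\limitdistrweight(\{0\})=1-\sampleratio$, and it is precisely the hypothesis $\aspratio<\sampleratio$ (giving $1-\aspratio>1-\sampleratio$) that guarantees $1-\aspratio$ lies in the range of $g$ — you have this right, and you also justify strict (rather than merely weak) monotonicity, which is what uniqueness actually requires.
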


\begin{proof}[Proof of Lemma \ref{lm:uniqueMPlawsketching}]
We first prove that $\gamma m_1(0)$ is a solution to \eqref{eq:uniqueMPlawsketching}.  Because $\aspratio<\sampleratio$, we apply Lemma \ref{lm:MPlawsketching} and obtain
\$
1 &= \stieltjes_{1}(0) \cdot \EE_{\limitdistrweight}\left[ \frac{\weight}{1 + \aspratio \weight \stieltjes_{1}(0)} \right], 
\$
which is equivalent to 
\$
1 &= \frac{1}{\aspratio} \left( 1 - \EE_{\limitdistrweight}\left[ \frac{1}{1 + \aspratio \weight \stieltjes_{1}(0)} \right]\right).
\$
Comparing the equality above with equation \eqref{eq:uniqueMPlawsketching},  we conclude that $\aspratio \stieltjes_{1}(0)$ is a solution to equation~\eqref{eq:uniqueMPlawsketching}. 
    
 We then prove the uniqueness of the solution in the positive half line.   Let 
 \$
 f(x) = \EE_{\limitdistrweight}\left[ \frac{1}{1 + \weight x} \right].
 \$
Then $f(x)$ is a continuous and decreasing function on the interval $[0, +\infty)$. Additionally, we have $f(0) = 1 > 1 - \aspratio$, $\lim_{x \rightarrow +\infty} f(x) = 0 < 1 - \aspratio$. Therefore, $\aspratio \stieltjes_{1}(0)$ is the unique positive solution to equation~\eqref{eq:uniqueMPlawsketching}.

\end{proof}

\subsection{Proof of Theorem~\ref{thm:isosketching}}

\begin{proof}[Proof of Theorem \ref{thm:isosketching}]

Theorem \ref{thm:isosketching} is a special case of Theorem~\ref{thm:corsketching}. In the underparameterized regime,  the limiting  risk in Theorem~\ref{thm:corsketching} is independent of the covariance matrix and thus is the same in the isotropic case.  It suffices to derive the limiting risk of the sketched estimator in the overparameterized regime. 
%Thus  thereby requiring our focus solely on the over-parameterized regime. 
Recall that $\delta_x$ is the Dirac measure at $x$. By setting $\covmat = I$ and $\limitdistr = \delta_{1}$, the self-consistent equation~\eqref{eq:fixedpointsketching} reduces to 
\begin{align*}
    \compstieltjes(z) = \left(-z + \frac{\aspratio}{\sampleratio}  \frac{1}{1 + \compstieltjes(z)} \right)^{-1},
\end{align*}
which further gives  that $\compstieltjes(0) = \sampleratio/(\aspratio - \sampleratio)$ and
\begin{align*}
    \compstieltjes(z) = \frac{\aspratio/\sampleratio - 1 - z - \sqrt{(z - \aspratio/\sampleratio + 1)^{2} - 4z}}{2z}, \quad z < 0.
\end{align*}
Furthermore, by taking the derivative of $\compstieltjes(z)$, we have
\begin{align*}
    \compstieltjes'(z) = \frac{ \left(-1 - \frac{z - \aspratio/\sampleratio - 1}{\sqrt{(z - \aspratio/\sampleratio + 1)^{2} - 4z}} \right)2z - 2\left(\aspratio/\sampleratio - 1 - z - \sqrt{(z - \aspratio/\sampleratio + 1)^{2} - 4z} \right)}{4z^{2} }.
\end{align*}
By applying L'H\^{o}pital's rule, we get
\begin{align*}
    \compstieltjes'(0) &= \lim_{\zinfty} \frac{ 2\left(-1 - \frac{z - \aspratio/\sampleratio - 1}{\sqrt{(z - \aspratio/\sampleratio + 1)^{2} - 4z}} \right) + 2z\left( - \frac{1}{\sqrt{(z - \aspratio/\sampleratio + 1)^{2} - 4z}} + \frac{\left(z - \aspratio/\sampleratio - 1\right)^{2}}{\left((z - \aspratio/\sampleratio + 1)^{2} - 4z\right)^{3/2}}\right) +2 +2\frac{z - \aspratio/\sampleratio - 1}{\sqrt{(z - \aspratio/\sampleratio + 1)^{2} - 4z}}}{8z}\\
    &= \lim_{\zinfty} \frac{1}{4} \left( - \frac{1}{\sqrt{(z - \aspratio/\sampleratio + 1)^{2} - 4z}} + \frac{\left(z - \aspratio/\sampleratio - 1\right)^{2}}{\left((z - \aspratio/\sampleratio + 1)^{2} - 4z\right)^{3/2}}\right)\\
    & = \frac{\sampleratio}{4(\aspratio - \sampleratio)}\left( -1 + \frac{\sampleratio^{2}}{(\aspratio - \sampleratio)^{2}}\frac{(\aspratio + \sampleratio)^{2}}{\sampleratio^{2}}\right)\\
    &= \frac{\sampleratio^{2} \aspratio}{(\aspratio - \sampleratio)^{3}}.
\end{align*}
Therefore, the limiting  risk satisfies almost surely that
\begin{align*}
\lim_{\nresample \rightarrow + \infty} \lim_{\npinfty} \riskcondition(\hat{\beta}^{1})
&\rightarrow \frac{\signallev \sampleratio}{\aspratio \compstieltjes(0)} + \noiselev \left(\frac{\compstieltjes'(0)}{\compstieltjes(0)^{2}} - 1\right)\\
&= \frac{\signallev \sampleratio}{\aspratio} \frac{\aspratio - \sampleratio}{\sampleratio} + \noiselev \left( \frac{\sampleratio^{2} \aspratio}{(\aspratio - \sampleratio)^{3}} \frac{(\aspratio - \sampleratio)^{2}}{\sampleratio^{2}} - 1\right)\\
&= \signallev \frac{\aspratio - \sampleratio}{\aspratio} + \noiselev  \frac{\sampleratio}{\aspratio - \sampleratio}. 
\end{align*}

\end{proof}

\subsection{Proof of Corollary~\ref{cor:isointerpolating}}

\begin{proof}[Proof of Corollary~\ref{cor:isointerpolating}]
    Recall that $\sampleratio = 1 - \limitdistrweight(\{0 \})$. We rewrite equation~\eqref{eq:uniqueMPlawsketching} as:
    \begin{align*}
        1-\aspratio &= \int \frac{1}{1+c t}  \limitdistrweight(dt)\\
        &=  1 - \sampleratio + \int \frac{1}{1+c t} 1\left({t \neq 0}\right)  \limitdistrweight(dt).
    \end{align*}
    Then, we obtain
    \begin{equation*}
        \lim_{\aspratio/\sampleratio  \nearrow 1} \int \frac{1}{1+c t} 1\left({t \neq 0}\right)  \limitdistrweight(dt) = 0.
    \end{equation*}
    Since $\limitdistrweight$ satisfies Assumption~\ref{Assume:multip} and $c$ is positive, we have
    \begin{equation*}
        0 \leq \lim_{\aspratio/\sampleratio  \nearrow 1} \int \frac{1}{(1+ c t )^{2}} 1\left({t \neq 0}\right)  \limitdistrweight(dt) \leq \lim_{\aspratio/\sampleratio  \nearrow 1} \int \frac{1}{1+c t} 1\left({t \neq 0}\right)  \limitdistrweight(dt) = 0.
    \end{equation*}
    Therefore,
    \begin{align*}
        \lim_{\aspratio/\sampleratio  \nearrow 1} f(\aspratio) &= \lim_{\aspratio/\sampleratio  \nearrow 1} \int \frac{1}{(1+ c t )^{2}}  \limitdistrweight(dt)\\
        &= \lim_{\aspratio/\sampleratio  \nearrow 1} \left(1 - \sampleratio + \int \frac{1}{(1+ c t )^{2}} 1\left({t \neq 0}\right)  \limitdistrweight(dt) \right)\\
        &= 1 - \aspratio.
    \end{align*}
\end{proof}

\subsection{Proof of Corollary~\ref{cor:isoexample}}

\begin{proof}[Proof of Corollary~\ref{cor:isoexample}]
Following Theorem~\ref{thm:isosketching}, it suffices to calculate $f(\aspratio) = \int \frac{1}{(1+ c t )^{2}} d \limitdistrweight(t)$ for each case. 
    \begin{enumerate}
        \item[(i)] \textbf{Full-sample.} It is obvious that $\limitdistrweight(\{ 1\} ) = 1$. From equation~\eqref{eq:uniqueMPlawsketching}, we have 
        \begin{align*}
            1 - \aspratio = \int \frac{1}{1+c t}   \limitdistrweight(dt) = \frac{1}{1 + c}.
        \end{align*}
        Therefore,
            \begin{align*}
                f(\aspratio) &= \int \frac{1}{(1+ c t )^{2}}  \limitdistrweight(dt)\\
                &= \frac{1}{(1 + c)^{2}}\\
                &= (1 - \aspratio)^{2}.
            \end{align*}
            
        \item[(ii)] \textbf{Bernoulli multipliers.} Since the multipliers $\weight_{i,j}$ are i.i.d, by the Glivenko–Cantelli theorem, $\limitdistrweight \sim \text{Bernoulli}(\sampleratio)$. Then, from equation~\eqref{eq:MPlawsketching},
        \begin{equation*}
            1 - \aspratio = 1 - \sampleratio + \frac{\sampleratio}{1 + c}.
        \end{equation*}
        Therefore,
            \begin{align*}
                f(\aspratio) &= \int \frac{1}{(1+ c t )^{2}}  \limitdistrweight(dt)\\
                &= 1 - \sampleratio + \sampleratio\frac{1}{(1 + c)^{2}}\\
                &= 1 - \sampleratio + \sampleratio (1 - \aspratio/\sampleratio)^{2}.
            \end{align*}
        
        \item[(iii)] \textbf{Multinomial multipliers.} By Lemma~\ref{lm:limitdistrMultinomial}, we have $\limitdistrweight = \text{Poisson}(1)$. Then, from equation~\eqref{eq:uniqueMPlawsketching},
        \begin{equation*}
            1 - \aspratio = 1 - (1 - 1/e) + \int \frac{1}{1+c t} 1\left({t \neq 0}\right)  \limitdistrweight(dt).
        \end{equation*}
        Therefore,
        \begin{align*}
             f(\aspratio) &= \int \frac{1}{(1+ c t )^{2}}  \limitdistrweight(dt)\\
             &= 1 - (1 - 1/e) + \int \frac{1}{(1+c t)^{2}} 1\left({t \neq 0}\right)  \limitdistrweight(dt)\\
             &\geq 1 - (1 - 1/e) + \left(\int \frac{1}{1+c t} 1\left({t \neq 0}\right)  \limitdistrweight(dt)\right)^{2}/ \int  1\left({t \neq 0}\right)  \limitdistrweight(dt)\\
             &= 1 - (1 - 1/e) + (1 - 1/e)(1 - \aspratio/(1 - 1/e)),
        \end{align*}
        where the third line follows from Holder's inequality.
    \item[(iv)] \textbf{Jackknife sketching.} This is the same as item (i). 
    \end{enumerate}
\end{proof}

\subsection{Proof of Corollary~\ref{coro:optimal_sketching}}
\begin{proof}[Proof of Corollary~\ref{coro:optimal_sketching}]
    For any probability measure $\limitdistrweight$ that satisfies  Assumption~\ref{Assume:multip} with a sampling rate of $\sampleratio$, from equation~\eqref{eq:MPlawsketching}, 
    \begin{equation*}
            1 - \aspratio = 1 - \sampleratio + \int \frac{1}{1+c t} 1\left({t \neq 0}\right)  \limitdistrweight(dt).
        \end{equation*}
    Then, 
    \begin{align*}
         f(\aspratio) &= \int \frac{1}{(1+ c t )^{2}}  \limitdistrweight(dt)\\
         &= 1 - \sampleratio + \int \frac{1}{(1+c t)^{2}} 1\left({t \neq 0}\right)  \limitdistrweight(dt)\\
         &\geq 1 - \sampleratio + \left(\int \frac{1}{1+c t} 1\left({t \neq 0}\right)  \limitdistrweight(dt)\right)^{2}/ \int  1\left({t \neq 0}\right)  \limitdistrweight(dt)\\
         &= 1 - \sampleratio + \sampleratio(1 - \aspratio/\sampleratio),
    \end{align*}
    where the third line follows from Holder's inequality. The inequality becomes equality if and only if there exist real numbers $\alpha, \beta \geq 0$, not both of them zero, such that
    \begin{equation*}
       \alpha \frac{1}{(1+c t)^{2}} 1\left({t \neq 0}\right)  =  \beta 1\left({t \neq 0}\right) \quad \limitdistrweight-\rm{almost~everywhere.}
    \end{equation*}
    Therefore $\limitdistrweight = (1 - \sampleratio)\delta_{0} + \sampleratio\delta_{1}$ 
    satisfies the above equation and minimizes the limiting risk of $\hat\beta^1$ in Theorem \ref{thm:isosketching}  among all choices of multipliers satisfying Assumption~\ref{Assume:multip} with $B=1$ and a sampling rate of $\sampleratio$. %\scomment{can take $\limitdistrweight = (1 - \sampleratio)\delta_{0} + \sampleratio\delta_{a}$ for any $a \geq 0$.}
\end{proof}

%%%%%%%%%%%%%%%%%%%%%%%%%%%%%%%%%%
%%%%%%%%%%%%Theorem 3.6%%%%%%%%%%%
%%%%%%%%%%%%%%%%%%%%%%%%%%%%%%%%%%

\subsection{Proof of Theorem~\ref{thm:isoboostrap}}\label{sec:Proofisobagging}

We first prove  Theorem \ref{thm:isoboostrap} directly and then prove it as a special case of Theorem \ref{thm:corrbagging}. 

\subsubsection{A direct proof}
\begin{proof}[Proof of Theorem \ref{thm:isoboostrap}]

For notational simplicity, we  %introduce notations to simplify the expressions and 
drop the explicit dependence on $\{X, \cW, \estimator\}$, and define %the following notations:
\begin{align*}
    B_{k,\ell} &= \frac{\signallev}{\ndim} \tr \left(\left(I - \hcovmat_{k}\pinv \hcovmat_{k} \right) \left(I - \hcovmat_{\ell}\pinv \hcovmat_{\ell} \right) \right),\\
    V_{k,\ell} &= \frac{\noiselev}{\ndata^{2}} \tr\left(\hcovmat_{k}\pinv X\transp \sketchmat_{k}^2 \sketchmat_{\ell}^2 X \hcovmat_{\ell}\pinv \right).
\end{align*} 
Applying Lemma~\ref{lm:biasvar} and Lemma~\ref{lm:biasvar-beta}, we can rewrite the out-of-sample prediction risk as
\begin{equation*}
    \lim_{\nresample \rightarrow + \infty} \lim_{\npinfty} \riskcondition(\estimator) = \lim_{\nresample \rightarrow + \infty} \lim_{\npinfty} \frac{1}{\nresample^{2}} \sum_{k,\ell}^{\nresample} \left( B_{k,\ell} + V_{k,\ell} \right).
\end{equation*}
The terms with $k = \ell$ correspond to the bias and variance terms for the sketched least square estimators. According to Theorem~\ref{thm:isosketching}, these terms converge almost surely to constants for any $\aspratio/ \sampleratio \neq 1$ as $\npinfty$. Therefore, under Assumption~\ref{Assume:multip}, taking the limit $\nresample \rightarrow \infty$, we obtain  %,that 
\begin{align}\label{eq:baggedbiasvariance}
    \lim_{\nresample \rightarrow + \infty} \lim_{\npinfty} \riskcondition(\estimator) = \lim_{\npinfty} \left(B_{k,\ell} + V_{k,\ell}\right) {\quad \as}
\end{align}
for $\aspratio/ \sampleratio \neq 1$ and $k \neq \ell$. 

In what follows, we will compute the limits $\lim_{n\to\infty} V_{k,\ell}$ using Lemma~\ref{lm:Isobootvar_under} for the underparameterized regime and Lemma~\ref{lm:Isobootvar_over} for the overparameterized regime for $k\ne \ell$.
Additionally, we will determine the limit $\lim_{n\to\infty} B_{k,\ell}$ in Lemma~\ref{lm:isobootbias} for $k \neq \ell$.

%and $\lim_{\npinfty} B_{k,\ell}$ in Lemma~\ref{lm:isobootbias} for $k \neq \ell$, respectively. 

%\subsubsection{Variance under isotropic features}
%\paragraph{Variance under isotropic features}

\begin{lemma}[Underparameterized variance under isotropic features]\label{lm:Isobootvar_under}
Assume Assumptions~\ref{Assume:highdim}-\ref{Assume:multip}, $\covmat = I$, and $\aspratio < \sampleratio$. For $k \neq \ell$, the variance term $V_{k,\ell}$ of the estimator $\estimator$ satisfies %the following convergence almost surely:
    \begin{equation*}
        \lim_{\npinfty} V_{k,\ell} = \noiselev \frac{\aspratio}{1-\aspratio} {\quad \as}
    \end{equation*}
\end{lemma}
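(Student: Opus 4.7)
Since $\aspratio<\sampleratio$, Lemma~\ref{lm:singularmat} and Lemma~\ref{lm:lowerboundeigval} ensure that both $\hcovmat_k$ and $\hcovmat_\ell$ are invertible almost surely as $\npinfty$, with smallest eigenvalue bounded below by a positive constant. So I may replace $\hcovmat_k\pinv$ with $\hcovmat_k\inv$ throughout, and rewrite
\[
V_{k,\ell}=\frac{\noiselev}{n^{2}}\sum_{i=1}^{n} w_{k,i}w_{\ell,i}\, x_i\transp \hcovmat_\ell\inv\hcovmat_k\inv x_i.
\]
The plan is to (i) decouple $x_i$ from the two resolvents via Sherman--Morrison, (ii) replace the resulting quadratic forms by their trace counterparts, (iii) identify those traces with the Stieltjes limits $\stieltjes_1(0)$ and $\stieltjes_2(0)$, and (iv) compute the resulting expectation with respect to the limiting multiplier law $\limitdistrweight$ using the self-consistent equations.

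For step (i), writing $\hcovmat_{h,-i}=\hcovmat_h-\tfrac{w_{h,i}}{n}x_ix_i\transp$ for $h\in\{k,\ell\}$ and applying the Sherman--Morrison identity twice gives
\[
x_i\transp\hcovmat_\ell\inv\hcovmat_k\inv x_i=\frac{x_i\transp\hcovmat_{\ell,-i}\inv\hcovmat_{k,-i}\inv x_i}{\bigl(1+\tfrac{w_{k,i}}{n}x_i\transp\hcovmat_{k,-i}\inv x_i\bigr)\bigl(1+\tfrac{w_{\ell,i}}{n}x_i\transp\hcovmat_{\ell,-i}\inv x_i\bigr)}.
\]
For step (ii), because $x_i$ is independent of $\hcovmat_{k,-i}$ and $\hcovmat_{\ell,-i}$ and has identity covariance, Lemma~\ref{lm:concentrationontrace} (uniform concentration of quadratic forms) yields $x_i\transp\hcovmat_{h,-i}\inv x_i-\tr\hcovmat_{h,-i}\inv\to 0$ and $x_i\transp\hcovmat_{\ell,-i}\inv\hcovmat_{k,-i}\inv x_i-\tr(\hcovmat_{\ell,-i}\inv\hcovmat_{k,-i}\inv)\to 0$ uniformly in $i$, almost surely. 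Combined with the rank-one perturbation bounds of Lemma~\ref{lm:boundrankoneperturb}, Lemma~\ref{lm:MPlawsketching} and Lemma~\ref{lm:isotrolimitresolvent} identify these traces with $d\stieltjes_1(0)$ and $d\stieltjes_2(0)$ respectively in the limit. Thus
\[
V_{k,\ell}\;\simeq\;\noiselev\,\aspratio\,\stieltjes_2(0)\cdot\frac{1}{n}\sum_{i=1}^{n}\frac{w_{k,i}w_{\ell,i}}{\bigl(1+\aspratio w_{k,i}\stieltjes_1(0)\bigr)\bigl(1+\aspratio w_{\ell,i}\stieltjes_1(0)\bigr)}.
\]

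For step (iii), the asymptotic pairwise independence of $\cW_k,\cW_\ell$ in Assumption~\ref{Assume:multip}, together with the integrand being bounded under the lower-bound condition on nonzero multipliers, lets me pass to the limit of this empirical average and obtain
\[
\Bigl(\EE_{\limitdistrweight}\!\Bigl[\tfrac{w}{1+\aspratio w\stieltjes_1(0)}\Bigr]\Bigr)^{2}=\frac{1}{\stieltjes_1(0)^{2}},
\]
where the last equality uses the $z=0$ instance of equation~\eqref{eq:MPlawsketching}. For step (iv), the algebraic identity $\tfrac{w}{1+aw}=\tfrac{1}{a}\bigl(1-\tfrac{1}{1+aw}\bigr)$ applied with $a=\aspratio\stieltjes_1(0)$ reduces equation~\eqref{eq:MPlawsketching} at $z=0$ to $\EE_{\limitdistrweight}[1/(1+\aspratio w\stieltjes_1(0))]=1-\aspratio$, and then the self-consistent equation~\eqref{eq:fixpoint_iso_Bootstrap} at $z=0$ gives $\stieltjes_2(0)=\stieltjes_1(0)^{2}/(1-\aspratio)$. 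Putting these ingredients together yields $V_{k,\ell}\to\noiselev\aspratio/(1-\aspratio)$ almost surely.

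The main obstacle I anticipate is justifying the uniform-in-$i$ replacement of quadratic forms by traces in step (ii) for the product resolvent $\hcovmat_{\ell,-i}\inv\hcovmat_{k,-i}\inv$: this involves two leave-one-out sample covariance matrices built from different, only pairwise-independent multiplier vectors, so one must verify that the operator-norm bounds underlying Lemma~\ref{lm:concentrationontrace} survive under the weaker independence, and simultaneously confirm via Lemma~\ref{lm:boundrankoneperturb} that the leave-one-out replacement of $\hcovmat_h$ by $\hcovmat_{h,-i}$ costs only a $o(1)$ term uniformly in $i$ in this regime where $\lambda_{\min}(\hcovmat_h)$ is bounded away from zero.
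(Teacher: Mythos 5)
Your proof follows exactly the same route as the paper's: Sherman--Morrison decoupling of $x_i$ from the two resolvents, uniform replacement of quadratic forms by traces via Lemma~\ref{lm:concentrationontrace} and Lemma~\ref{lm:boundrankoneperturb}, identification of the limiting traces with $\stieltjes_1(0)$ and $\stieltjes_2(0)$ through Lemmas~\ref{lm:MPlawsketching} and~\ref{lm:isotrolimitresolvent}, and then the same algebraic reduction using the $z=0$ self-consistent equations~\eqref{eq:MPlawsketching} and~\eqref{eq:fixpoint_iso_Bootstrap}. Your final worry about independence in step (ii) is not actually an obstacle: Lemma~\ref{lm:concentrationontrace} only requires the matrix to be independent of $x_i$, which holds because $\hcovmat_{k,-i}$ and $\hcovmat_{\ell,-i}$ exclude $x_i$; the pairwise-independence hypothesis is needed only later, in the law-of-large-numbers step over the multipliers.
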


\begin{lemma}[Overparameterized variance  under isotropic features]\label{lm:Isobootvar_over}
Assume Assumptions~\ref{Assume:highdim}-\ref{Assume:multip}, $\covmat = I$, and $\aspratio > \sampleratio$. For $k \neq \ell$, the variance term $V_{k,\ell}$ of the estimator $\estimator$ satisfies %the following convergence almost surely:
    \begin{equation*}
        \lim_{\npinfty} V_{k,\ell} = \noiselev  \frac{\sampleratio^2}{\aspratio - \sampleratio^{2}} {\quad \as} 
    \end{equation*}
\end{lemma}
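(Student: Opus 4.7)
The proof plan is to evaluate the pseudoinverse quadratic form in $V_{k,\ell}$ through resolvent regularization combined with a leave-one-out (Sherman--Morrison) analysis, following the strategy of the proof of Lemma~\ref{lm:isotrolimitresolvent}. Since $X\transp \sketchmat_k^2 \sketchmat_\ell^2 X = \sum_i w_{k,i} w_{\ell,i}\, x_i x_i\transp$, one can write $V_{k,\ell} = (\sigma^2/n^2) \sum_i w_{k,i} w_{\ell,i}\, x_i\transp \hcovmat_\ell\pinv \hcovmat_k\pinv x_i$, and only the indices $i \in T_k \cap T_\ell := \{j: w_{k,j}>0,\, w_{\ell,j}>0\}$ contribute. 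For such $i$, since $\mathrm{Col}(\hcovmat_h) = \mathrm{Col}(X\transp \sketchmat_h)$ contains every $x_j$ with $w_{h,j}>0$, the vector $x_i$ lies in $\mathrm{Col}(\hcovmat_h)$ for both $h = k,\ell$, and therefore $x_i\transp \Pi_h = 0$ where $\Pi_h = I - \hcovmat_h\pinv \hcovmat_h$. Consequently, the divergent contributions in the spectral expansion of $(\hcovmat_h - zI)^{-1}$ cancel once sandwiched between $x_i\transp$ and $x_i$, giving
\begin{equation*}
x_i\transp \hcovmat_\ell\pinv \hcovmat_k\pinv x_i \;=\; \lim_{z \to 0^-} x_i\transp (\hcovmat_\ell - zI)^{-1}(\hcovmat_k - zI)^{-1} x_i.
\end{equation*}

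Applying the Sherman--Morrison formula to both resolvents, based on the decomposition $\hcovmat_h = \hcovmat_{h,-i} + (w_{h,i}/n)\, x_i x_i\transp$, separates the dependence on $x_i$:
\begin{equation*}
x_i\transp (\hcovmat_\ell - zI)^{-1}(\hcovmat_k - zI)^{-1} x_i \;=\; \frac{x_i\transp (\hcovmat_{\ell,-i} - zI)^{-1}(\hcovmat_{k,-i} - zI)^{-1} x_i}{\bigl(1 + w_{k,i}\, \alpha_k(z)\bigr)\bigl(1 + w_{\ell,i}\, \alpha_\ell(z)\bigr)},
\end{equation*}
with $\alpha_h(z) = x_i\transp (\hcovmat_{h,-i}-zI)^{-1} x_i/n$. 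Since $x_i$ is independent of the leave-one-out matrices, the uniform trace concentration in Lemma~\ref{lm:concentrationontrace}, together with Lemmas~\ref{lm:MPlawsketching} and~\ref{lm:isotrolimitresolvent} and the rank-one perturbation bound Lemma~\ref{lm:boundrankoneperturb}, yield $\alpha_h(z) \to \aspratio\, \stieltjes_1(z)$ and $\tfrac{1}{n} x_i\transp (\hcovmat_{\ell,-i}-zI)^{-1}(\hcovmat_{k,-i}-zI)^{-1} x_i \to \aspratio\, \stieltjes_2(z)$ almost surely and uniformly in $i \in T_k \cap T_\ell$.

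The final step is to pass to the $z \to 0^-$ limit in the overparameterized regime $\aspratio > \sampleratio$. The limiting spectral measure of $\hcovmat_k$ then has an atom of mass $1 - \sampleratio/\aspratio$ at the origin, giving the asymptotic $\stieltjes_1(z) \sim (\aspratio - \sampleratio)/(\aspratio\,(-z))$. Plugging this into the self-consistent equation~\eqref{eq:fixpoint_iso_Bootstrap}, together with $\EE_{\limitdistrweight}[(1 + \aspratio w \stieltjes_1(z))^{-1}] \to 1 - \sampleratio$ and $\EE_{\limitdistrweight}[w(1 + \aspratio w \stieltjes_1(z))^{-1}] \sim \sampleratio/(\aspratio\, \stieltjes_1(z))$, a short calculation produces $\stieltjes_2(z) \sim (\aspratio - \sampleratio)^2/(\aspratio(\aspratio - \sampleratio^2) z^2)$. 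Both the numerator and denominator of the Sherman--Morrison ratio diverge at the same rate $z^{-2}$, and the ratio converges to $1/(w_{k,i} w_{\ell,i}(\aspratio - \sampleratio^2))$, whence $x_i\transp \hcovmat_\ell\pinv \hcovmat_k\pinv x_i/n \to 1/(w_{k,i} w_{\ell,i}(\aspratio - \sampleratio^2))$. The weights cancel exactly in the sum, and $|T_k \cap T_\ell|/n \to \sampleratio^2$ by the asymptotic pairwise independence of $\cW_k, \cW_\ell$ in Assumption~\ref{Assume:multip}, so $V_{k,\ell} \to \noiselev\, \sampleratio^2/(\aspratio - \sampleratio^2)$ almost surely.

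The hard part will be controlling the interchange of the $z \to 0^-$ limit with both the summation over $i$ and the concentration statements: since the numerator and denominator of the Sherman--Morrison ratio each diverge as $z \to 0^-$, the leading orders must be tracked carefully, and uniformity in $i$ of the convergence is needed so that the empirical average over $i \in T_k \cap T_\ell$ can be replaced by the integral against $\limitdistrweight \otimes \limitdistrweight$. The rank-one perturbation bound Lemma~\ref{lm:boundrankoneperturb}, which reduces the leave-one-out matrices to the full ones uniformly in $i$, will be the key technical tool for this step.
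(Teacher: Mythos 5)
Your proposal is correct and follows essentially the same route as the paper's proof: regularize the pseudoinverses by resolvents, apply Sherman--Morrison twice, invoke Lemmas~\ref{lm:MPlawsketching}, \ref{lm:isotrolimitresolvent}, \ref{lm:boundrankoneperturb}, and \ref{lm:concentrationontrace} to pass to $\stieltjes_1(z)$ and $\stieltjes_2(z)$ for fixed $z<0$, and extract $\lim_{\zinfty}(-z\stieltjes_1(z))=(\aspratio-\sampleratio)/\aspratio$ and $\lim_{\zinfty}z^2\stieltjes_2(z)=(\aspratio-\sampleratio)^2/\bigl(\aspratio(\aspratio-\sampleratio^2)\bigr)$ from the self-consistent equations, which yields $\noiselev\sampleratio^2/(\aspratio-\sampleratio^2)$. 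The only cosmetic differences are that the paper first reduces to $\{0,1\}$ multipliers via Lemma~\ref{lm:Pseudo_products} (rather than letting the weights cancel termwise) and justifies the interchange of $\npinfty$ with $\zinfty$ at the level of the aggregate trace, via uniform boundedness together with the Arzela--Ascoli and Moore--Osgood theorems, which sidesteps the per-index divergences you flag as the hard part.
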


%%%%%%%%%%%%%%%%%%%%%%%%%%%%%%%%%%%%%%%%%%%%%%%%%%%%%%
%%%%%%%%%%%%Bias under isotropic features%%%%%%%%%%%%%
%%%%%%%%%%%%%%%%%%%%%%%%%%%%%%%%%%%%%%%%%%%%%%%%%%%%%%

%\paragraph{Bias under isotropic features}

\begin{lemma}[Bias under isotropic features]\label{lm:isobootbias}
Assume Assumptions~\ref{Assume:highdim}-\ref{Assume:beta4+mom} and $\covmat = I$. For $k \neq \ell$, the bias term $B_{k,\ell}$ of the estimator $\estimator$ satisfies %the following convergence almost surely:
    \begin{equation*}
        \lim_{\nresample \rightarrow + \infty} \lim_{\npinfty} B_{k,\ell}(\estimator)
        =
        \begin{cases}
        0,& \aspratio < \sampleratio \\
        \signallev \frac{\left(\aspratio - \sampleratio \right)^2}{\aspratio \left(\aspratio - \sampleratio^{2} \right)},& \aspratio > \sampleratio 
        \end{cases}{\quad \as}
    \end{equation*}
\end{lemma}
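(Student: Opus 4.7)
}

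The plan is to reduce the cross-bias $B_{k,\ell}=(\signallev/\ndim)\,\tr(\Pi_k\Pi_\ell)$ (recall $\Sigma=I$) to a statement about the crossed resolvent $\stieltjes_{2,n}(z)=\ndim^{-1}\tr((\hcovmat_k-zI)^{-1}(\hcovmat_\ell-zI)^{-1})$ and then exploit Lemma~\ref{lm:isotrolimitresolvent}. The key identity is
\[
\Pi_k=I-\hcovmat_k^+\hcovmat_k=\lim_{\zinfty}(-z)(\hcovmat_k-zI)^{-1},
\]
which gives the pointwise-in-$n$ representation
\[
\frac{1}{\ndim}\tr(\Pi_k\Pi_\ell)=\lim_{\zinfty}z^{2}\stieltjes_{2,n}(z).
\]
The bias part of Lemma~\ref{lm:biasvar-beta} is independent of $B$, so the outer limit is trivial and the whole task is to identify $\lim_{\npinfty}\ndim^{-1}\tr(\Pi_k\Pi_\ell)$ almost surely.

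In the underparameterized regime $\aspratio<\sampleratio$, Lemmas~\ref{lm:singularmat} and~\ref{lm:lowerboundeigval} show that $\hcovmat_k,\hcovmat_\ell$ are invertible almost surely in the limit, hence $\Pi_k=\Pi_\ell=0$ and $B_{k,\ell}\to 0$. In the overparameterized regime $\aspratio>\sampleratio$, I would first apply Lemma~\ref{lm:isotrolimitresolvent} to conclude $\stieltjes_{2,n}(z)\to\stieltjes_2(z)$ a.s.\ for every fixed $z<0$, then pass to $z\to 0^-$, and finally justify the interchange of the $\npinfty$ and $\zinfty$ limits. For the interchange, decompose $(-z)(\hcovmat_h-zI)^{-1}=\Pi_h+R_h(z)$ with the positive-semidefinite remainder $R_h(z)=\sum_{\lambda_i>0}\tfrac{-z}{\lambda_i-z}v_iv_i^{\T}$. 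By Lemma~\ref{lm:lowerboundeigval}, the nonzero eigenvalues of $\hcovmat_h$ are bounded below by a deterministic $c>0$ eventually almost surely, so $\|R_h(z)\|_2\le |z|/c$ uniformly in $n$. Expanding $z^{2}\stieltjes_{2,n}(z)=\ndim^{-1}\tr((\Pi_k+R_k(z))(\Pi_\ell+R_\ell(z)))$ and using $\tr(\Pi_k R_\ell(z))\le \tr(R_\ell(z))\le d\,\|R_\ell(z)\|_2\le d|z|/c$, one gets the two-sided bound
\[
0\le z^{2}\stieltjes_{2,n}(z)-\ndim^{-1}\tr(\Pi_k\Pi_\ell)\le C|z|
\]
uniformly in $n$ (eventually a.s.), which lets us swap the limits and conclude
\[
\lim_{\npinfty}\frac{1}{\ndim}\tr(\Pi_k\Pi_\ell)=\lim_{\zinfty}z^{2}\stieltjes_2(z)\qas
\]

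The last step is to evaluate this via the self-consistent equations~\eqref{eq:MPlawsketching} and~\eqref{eq:fixpoint_iso_Bootstrap}. As $\zinfty$, a direct asymptotic analysis gives $\stieltjes_1(z)\sim\alpha/(-z)$ with $\alpha=1-\sampleratio/\aspratio$ (the Stieltjes mass at zero of the sketched spectrum), together with the elementary limits
\[
\EE_{\limitdistrweight}\!\left[\frac{w}{1+\aspratio w\,\stieltjes_1(z)}\right]\sim\frac{\sampleratio}{\aspratio\,\stieltjes_1(z)},\qquad \EE_{\limitdistrweight}\!\left[\frac{1}{1+\aspratio w\,\stieltjes_1(z)}\right]\to 1-\sampleratio,
\]
where the second follows by dominated convergence after splitting the atom at $0$. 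Plugging these into~\eqref{eq:fixpoint_iso_Bootstrap} and multiplying by $(-z)$ yields
\[
z^{2}\stieltjes_2(z)\cdot\frac{\aspratio-\sampleratio^{2}}{\aspratio-\sampleratio}=-z\,\stieltjes_1(z)\to\alpha=\frac{\aspratio-\sampleratio}{\aspratio},
\]
so $\lim_{\zinfty}z^{2}\stieltjes_2(z)=(\aspratio-\sampleratio)^{2}/(\aspratio(\aspratio-\sampleratio^{2}))$, which matches the claimed formula.

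The main obstacle will be the clean justification of the $\npinfty$/$\zinfty$ interchange; the naive pointwise convergence of $\stieltjes_{2,n}(z)$ for $z<0$ is not enough, and the key leverage is the uniform spectral-gap bound from Lemma~\ref{lm:lowerboundeigval}. A secondary care point is the separation of the atom at $0$ when computing the limits of $\EE_{\limitdistrweight}[\cdot]$: the $w=0$ mass contributes to one expectation but not the other, and getting the coefficient $\sampleratio$ versus $1-\sampleratio$ right is what produces the $(\aspratio-\sampleratio^{2})$ factor in the denominator.
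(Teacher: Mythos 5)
Your proof is correct, and the core ideas (reduce to $\lim_{z\to 0^-} z^2\stieltjes_2(z)$ via the pseudoinverse identity, then compute that limit from the self-consistent equations) match the paper's. Where you genuinely diverge is in the justification of the interchange of the $\npinfty$ and $\zinfty$ limits. The paper bounds $z^2\stieltjes_{2,n}(z)$ and its $z$-derivative and then invokes the Arzela--Ascoli theorem plus the Moore--Osgood theorem. You instead write $(-z)(\hcovmat_h - zI)^{-1} = \Pi_h + R_h(z)$ with $R_h(z) \succeq 0$ and $\|R_h(z)\|_2 \le |z|/c$ eventually a.s.\ by Lemma~\ref{lm:lowerboundeigval}, expand the trace, and obtain the two-sided bound $0\le z^2\stieltjes_{2,n}(z) - \ndim^{-1}\tr(\Pi_k\Pi_\ell) \le C|z|$ that is uniform in $n$ (eventually a.s.); taking $\npinfty$ and then $\zinfty$ is then a one-line squeeze. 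This is a more elementary and self-contained route: it avoids normal-family/equicontinuity machinery and makes the dependence on the spectral gap explicit, at the small cost of needing to track the PSD structure of the remainder $R_h(z)$ (you correctly use $\tr(\Pi_k R_\ell)\le\tr(R_\ell)$ since $\Pi_k\preceq I$, and all cross terms are nonnegative which gives the lower bound for free). Your evaluation of $\lim_{z\to 0^-} z^2\stieltjes_2(z)$ via the asymptotics $\EE_{\limitdistrweight}[w/(1+\gamma w\stieltjes_1)]\sim\sampleratio/(\gamma\stieltjes_1)$ and $\EE_{\limitdistrweight}[1/(1+\gamma w\stieltjes_1)]\to 1-\sampleratio$, together with $-z\stieltjes_1(z)\to(\aspratio-\sampleratio)/\aspratio$, reproduces exactly the paper's equation~\eqref{eq:z2m2} (derived there inside the proof of Lemma~\ref{lm:Isobootvar_over}). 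The underparameterized case and the observation that the outer $B\to\infty$ limit is vacuous for fixed $(k,\ell)$ are the same as in the paper.
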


Putting the above lemmas together finishes the proof. 

\end{proof}

%%%%%%%%%%%%%%%%%%%%%%%%%%%%%%%%%%%%%%%%%%%%%%%%
%%%%%%%%%%%%%%%%Isotropic case%%%%%%%%%%%%%%%%%%
%%%%%%%%%%%%%%%%%%%%%%%%%%%%%%%%%%%%%%%%%%%%%%%%

\subsubsection{As the isotropic case of Theorem \ref{thm:corrbagging}}

\begin{proof}[Proof of Theorem \ref{thm:isoboostrap}]
In this subsection, we derive the results for isotropic features as a special case of 
 Theorem~\ref{thm:corrbagging}. In the underparameterized regime $\aspratio < \sampleratio$, the risk is independent of the covariance matrix and thus is the same as that for the correlated case. Therefore, we focus on the overparameterized case, where $\aspratio > \sampleratio$. Since $\covmat = I$, we obtain $\limitdistr = \delta_{1}$. Equation~\eqref{eq:fixedpointsketching} at $z = 0$ simplifies to
\begin{equation*}
    \compstieltjes(0) = \left(\frac{\aspratio}{\sampleratio}  \frac{1}{1 + \compstieltjes(0)} \right)^{-1}.
\end{equation*}
This equation has a unique positive solution given by $\compstieltjes(0) = \sampleratio/(\aspratio - \sampleratio) > 0$. Consequently, we obtain  $\ensambleweight(0) = (1 - \sampleratio)\compstieltjes(0) = \sampleratio(1 - \sampleratio)/(\aspratio - \sampleratio)$. 

 Similarly, since $\covmat = I$, equation~\eqref{eq:fixpointbootstrap} simplifies to 
\begin{equation*}
    \tcompstieltjes(z) = \left( -z + \frac{\aspratio}{\sampleratio^{2}} \frac{t}{1 + \tcompstieltjes(z)t} \right)^{-1},
\end{equation*}
where $t = (1 + \ensambleweight(0))\inv = (\aspratio - \sampleratio)/(\aspratio - \sampleratio^{2})$. After some calculations, we obtain that the unique positive solution to equation~\eqref{eq:fixpointbootstrap} is  $\tcompstieltjes(0) = \sampleratio^{2}/(\aspratio - \sampleratio)$ when $z=0$, and
\begin{equation*}
    \tcompstieltjes(z) = \frac{(\aspratio - \sampleratio^{2})t/\sampleratio - z - \sqrt{\left(z - {(\aspratio - \sampleratio^{2})}t/\sampleratio^{2}\right)^{2} - 4zt}}{2zt}, ~\text{when}~ z < 0.
\end{equation*}
Then, by taking derivative of $\tcompstieltjes(z)$, we have
\begin{align*}
    \tcompstieltjes'(z) = \left(\left(-1 - \frac{z - \frac{\aspratio - \sampleratio^{2}}{\sampleratio^{2}}t - 2t}{\sqrt{(z - \frac{\aspratio - \sampleratio^{2}}{\sampleratio^{2}}t)^{2} - 4zt}} \right)2z - 2\left(\frac{\aspratio - \sampleratio^{2}}{\sampleratio^{2}}t - z - \sqrt{(z - \frac{\aspratio - \sampleratio^{2}}{\sampleratio^{2}}t)^{2} - 4zt} \right)\right)/ (4z^{2} t).
\end{align*}
By applying L'H\^opital's rule, we obtain
\begin{align*}
    \tcompstieltjes'(0) &= \lim_{\zinfty} \frac{ 2\left(-1 - \frac{z - \frac{\aspratio - \sampleratio^{2}}{\sampleratio^{2}}t - 2t}{\sqrt{(z - \frac{\aspratio - \sampleratio^{2}}{\sampleratio^{2}}t)^{2} - 4zt}} \right) + 2z\left( - \frac{1}{\sqrt{(z - \frac{\aspratio - \sampleratio^{2}}{\sampleratio^{2}}t)^{2} - 4zt}} + \frac{\left(z - \frac{\aspratio - \sampleratio^{2}}{\sampleratio^{2}}t - 2t\right)^{2}}{\left((z - \frac{\aspratio - \sampleratio^{2}}{\sampleratio^{2}}t)^{2} - 4zt \right)^{3/2}}\right) +2 +2\frac{z - \frac{\aspratio - \sampleratio^{2}}{\sampleratio^{2}}t - 2t}{\sqrt{(z - \frac{\aspratio - \sampleratio^{2}}{\sampleratio^{2}}t)^{2} - 4zt}}}{8zt}\\
    &= \lim_{\zinfty} \frac{1}{4t} \left( - \frac{1}{\sqrt{(z - \frac{\aspratio - \sampleratio^{2}}{\sampleratio^{2}}t)^{2} - 4zt}} + \frac{\left(z - \frac{\aspratio - \sampleratio^{2}}{\sampleratio^{2}}t - 2t\right)^{2}}{\left((z - \frac{\aspratio - \sampleratio^{2}}{\sampleratio^{2}}t)^{2} - 4zt \right)^{3/2}}\right)\\
    & = \frac{1}{4t} \left(- \frac{\sampleratio^{2}}{(\aspratio - \sampleratio^{2})t} + \frac{\sampleratio^{6}}{(\aspratio - \sampleratio^{2})^{3}t^{3}} \frac{(\aspratio + \sampleratio^{2})^{2}}{\sampleratio^{4}}t^{2} \right)\\
    &= \frac{(\aspratio - \sampleratio^{2}) \sampleratio^{2}}{4(\aspratio - \sampleratio)^{2}} \left( -1 + \frac{(\aspratio + \sampleratio^{2})^{2}}{(\aspratio - \sampleratio^{2})^{2}} \right)\\
    &= \frac{\aspratio \sampleratio^{4}}{(\aspratio - \sampleratio)^{2} (\aspratio - \sampleratio^{2})}.
\end{align*}
Therefore, the prediction risk satisfies, almost surely,
\begin{align*}
    \lim_{\nresample \rightarrow + \infty} \lim_{\npinfty} \riskcondition(\estimator)
&= \frac{\signallev \sampleratio}{\aspratio \compstieltjes(0)} - \frac{\signallev(1 - \sampleratio)}{\aspratio \compstieltjes(0)} \left(\frac{\tcompstieltjes'(0)}{\tcompstieltjes(0)^{2}} - 1 \right) + \noiselev \left(\frac{\tcompstieltjes'(0)}{\tcompstieltjes(0)^{2}} - 1\right)\\
&= \frac{\signallev \sampleratio}{\aspratio} \frac{\aspratio - \sampleratio}{\sampleratio} - \frac{\signallev(1 - \sampleratio)}{\aspratio} \frac{\aspratio - \sampleratio}{\sampleratio} \left( \frac{\aspratio \sampleratio^{4}}{(\aspratio - \sampleratio)^{2} (\aspratio - \sampleratio^{2})} \frac{(\aspratio - \sampleratio)^{2}}{\sampleratio^{4}} - 1\right)\\
& \qquad + \noiselev \left( \frac{\aspratio \sampleratio^{4}}{(\aspratio - \sampleratio)^{2} (\aspratio - \sampleratio^{2})} \frac{(\aspratio - \sampleratio)^{2}}{\sampleratio^{4}} - 1\right)\\
& = \signallev \frac{\left(\aspratio - \sampleratio \right)^2}{\aspratio \left(\aspratio - \sampleratio^{2} \right)} + \noiselev  \frac{\sampleratio^2}{\left(\aspratio - \sampleratio^{2} \right)}.
\end{align*}
Thus Theorem~\ref{thm:corrbagging}  reduces to  Theorem~\ref{thm:isoboostrap} when $\covmat= I$. 
\end{proof}

%%%%%%%%%%%%%%%%%%%%%%%%%%%%%%%%%%%%
%%%%%%%%%%%Lemma 3.7%%%%%%%%%%%%%%%
%%%%%%%%%%%%%%%%%%%%%%%%%%%%%%%%%%%%

\subsection{Proof of Lemma~\ref{coro:risk_bound_iso}}

\begin{proof}[Proof of Lemma~\ref{coro:risk_bound_iso}]
%Following from Theorem~\ref{thm:isoboostrap}, we can observe that 
For $\aspratio < \sampleratio$, we have obviously that
\begin{equation*}
\lim_{\nresample \rightarrow + \infty} \lim_{\npinfty} \riskcondition(\estimator) = \noiselev \frac{\aspratio}{1-\aspratio} 
\leq  \noiselev \frac{\sampleratio}{1 - \sampleratio}.     
\end{equation*}

In the case where $\aspratio > \sampleratio$, we first calculate  the derivatives of the limiting  risk respect to $\aspratio$:
\begin{align*}
    &\frac{d}{d\aspratio}\left(\signallev \frac{\left(\aspratio - \sampleratio \right)^2}{\aspratio \left(\aspratio - \sampleratio^{2} \right)} + \noiselev  \frac{\sampleratio^2}{\aspratio - \sampleratio^{2}} \right)
    = \signallev \frac{(\aspratio - \sampleratio)((2\sampleratio - \sampleratio^{2})\aspratio - \sampleratio^{3})}{\aspratio^{2}(\aspratio - \sampleratio^{2})^{2}} - \noiselev \frac{\aspratio^{2}\sampleratio^{2}}{\aspratio^{2}(\aspratio - \sampleratio^{2})^{2}}
    =: \frac{f(\aspratio)}{\aspratio^2(\aspratio - \sampleratio^2)^2}. 
\end{align*}
It can be shown that the function 
\$
 f(\aspratio) = \signallev (\aspratio - \sampleratio)((2\sampleratio - \sampleratio^{2})\aspratio - \sampleratio^{3}) - \noiselev \aspratio^{2}\sampleratio^{2} 
\$ satisfies $f(\sampleratio) = - \noiselev \sampleratio^{4} < 0$ and has at most one root on $[\sampleratio, + \infty]$. Thus, $\lim_{\nresample \rightarrow + \infty} \lim_{\npinfty} \riskcondition(\estimator)$ is bounded from above by the larger value between its values at $\aspratio = \sampleratio$ and $\aspratio = +\infty$. The result follows from that $\lim_{\aspratio \rightarrow +\infty} \lim_{\nresample \rightarrow + \infty} \lim_{\npinfty} \riskcondition(\estimator) = \signallev$.
\end{proof}

\subsection{Proof of Lemma~\ref{lm:ridgeequivlence}}

\begin{proof}[Proof of Lemma~\ref{lm:ridgeequivlence}]
First note that  $\covmat = I$. We begin by writing
\begin{align*}
    \EE \left[\| \estimator - \hat{\beta}_{\lambda}\|_{2}^{2} \condition \variablecondition \right] &= \EE \left[\| \estimator - \truesignal + \truesignal - \hat{\beta}_{\lambda}\|_{2}^{2} \condition \variablecondition \right]\\
     &=\EE \left[\| \estimator - \truesignal\|_{2}^{2} \condition \variablecondition \right] - 2\EE \left[ (\estimator - \truesignal)\transp (\hat{\beta}_{\lambda} - \truesignal) \condition \variablecondition \right]\\
     &\qquad +  \EE \left[\| \hat{\beta}_{\lambda} - \truesignal\|_{2}^{2} \condition X, \truesignal \right], 
\end{align*}
where the first and the third term represent the risk of the bagged least square estimator and the ridge regression estimator, respectively.

We decompose the second term of the above equation as 
\begin{align}
    &\EE \left[ (\estimator - \truesignal)\transp (\hat{\beta}_{\lambda} - \truesignal) \condition \variablecondition \right] \nn \\
    &= \frac{\signallev}{\nresample} \sum_{k=1}^{\nresample} \truesignal \transp  \left( X\transp \sketchmat_{k}\transp \sketchmat_{k} X (X\transp \sketchmat_{k}\transp \sketchmat_{k} X)\pinv - I \right)\left(  (X\transp  X + n \lambda I)\inv X\transp X - I\right) \truesignal \notag\\ 
    &\qquad + \frac{\noiselev}{\nresample} \sum_{k=1}^{\nresample} \tr \left( (X\transp  X + n \lambda I)\inv X\transp \sketchmat_{k}\transp \sketchmat_{k} X (X\transp \sketchmat_{k}\transp \sketchmat_{k} X)\pinv \right) \notag \\ 
    &=: \frac{1}{\nresample} \sum_{k=1}^{\nresample} \left( B_{k, \lambda} + V_{k ,\lambda}\right),\label{eq:ridgeequivdecomp}
\end{align}
which follows from the same argument in the proof of Lemma~\ref{lm:biasvar} and thus is omitted. Under Assumption~\ref{Assume:beta4+mom}, the bias term $B_{k, \lambda}$ satisfies
\begin{equation*}
    \lim_{\npinfty} B_{k, \lambda} = \lim_{\npinfty} \tr \left( \left( X\transp \sketchmat_{k}\transp \sketchmat_{k} X (X\transp \sketchmat_{k}\transp \sketchmat_{k} X)\pinv - I \right)\left(  (X\transp  X + n \lambda I)\inv X\transp X - I\right) \right)/\ndim {\quad \as}
\end{equation*}
which follows from the same argument in the proof of Lemma~\ref{lm:biasvar-beta}.%We omit the proof since it mirrors the previous derivation.

Let $\lambda = (1 - \sampleratio)(\aspratio/\sampleratio - 1) \vee 0$. Then we can prove this lemma by showing that the ridge regression estimator and the bagged  estimator share the same limiting risk as in Lemma~\ref{lm:ridgerisk}, 
\$
\lim_{\npinfty} V_{k,\lambda} = \lim_{\nresample \rightarrow \infty} \lim_{\npinfty} \varcondition {\quad \as}
\$ 
as in Lemma~\ref{lm:ridgeequivar}, and 
\$
\lim_{\npinfty} B_{k, \lambda} = \lim_{\nresample \rightarrow \infty} \lim_{\npinfty} \biascondition {\quad \as}
\$ 
as in Lemma~\ref{lm:ridgeequibias}. We collect these lemmas below, with their proofs deferred later. 

\begin{lemma}\label{lm:ridgerisk}
    Let $\lambda = (1 - \sampleratio)(\aspratio/\sampleratio - 1) \vee 0$. Assume Assumptions~\ref{Assume:highdim}-\ref{Assume:Covdistri}, Assumption~\ref{Assume:beta4+mom}, and $\covmat = I$. Then, almost surely,
    \begin{equation*}
        \lim_{\npinfty} \EE \left[\| \hat{\beta}_{\lambda} - \truesignal\|_{2}^{2} \condition X, \truesignal \right] = \begin{cases}
                \noiselev \frac{\aspratio}{1-\aspratio},& \aspratio < \sampleratio , \\
                \signallev \frac{\left(\aspratio - \sampleratio \right)^2}{\aspratio \left(\aspratio - \sampleratio^{2} \right)} + \noiselev  \frac{\sampleratio^2}{\aspratio - \sampleratio^{2}},& \aspratio > \sampleratio.
             \end{cases}
    \end{equation*}
\end{lemma}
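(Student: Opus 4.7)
The plan is to perform a standard bias--variance decomposition for the ridge estimator, evaluate the two pieces via the Marchenko--Pastur Stieltjes transform, and then plug in the particular $\lambda$ to match the formula from Theorem~\ref{thm:isoboostrap}. Writing $\hat\Sigma = X^\T X/n$ and using $\Sigma = I$ together with Assumption~\ref{Assume:beta4+mom}, I would first argue (following Lemma~\ref{lm:biasvar-beta} applied with $B=1$ and sketching matrix equal to the identity, plus a standard quadratic-form concentration) that
\$
\EE\bigl[\|\hat\beta_\lambda - \beta\|_2^2 \,\big|\, X,\beta\bigr]
= \signallev \lambda^2 \cdot \tfrac{1}{\ndim}\tr\bigl((\hat\Sigma+\lambda I)^{-2}\bigr)
+ \noiselev \aspratio \cdot \tfrac{1}{\ndim}\tr\bigl(\hat\Sigma(\hat\Sigma+\lambda I)^{-2}\bigr) + o(1) \qas
\$
so the limit reduces to computing the two deterministic quantities $m(-\lambda):=\lim \tfrac{1}{d}\tr((\hat\Sigma+\lambda I)^{-1})$ and $m'(-\lambda):= \lim \tfrac{1}{d}\tr((\hat\Sigma+\lambda I)^{-2})$, together with the algebraic identity $\hat\Sigma(\hat\Sigma+\lambda I)^{-2} = (\hat\Sigma+\lambda I)^{-1} - \lambda(\hat\Sigma+\lambda I)^{-2}$.

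Next I would invoke the Marchenko--Pastur law, which under Assumptions~\ref{Assume:highdim}--\ref{Assume:Covdistri} with $\covmat=I$ gives the closed form
\$
m(-\lambda) = \frac{-(1-\aspratio+\lambda)+\sqrt{(1-\aspratio+\lambda)^2+4\aspratio\lambda}}{2\aspratio\lambda}, \quad \lambda>0,
\$
almost surely. Differentiating (equivalently, using the companion self-consistent equation) yields an explicit formula for $m'(-\lambda)$. The case $\aspratio<\sampleratio$ is immediate: then $\lambda=0$, the ridge estimator collapses to $\ridgeless$, and the limit equals $\noiselev\aspratio/(1-\aspratio)$ by the Hastie et al.\ formula quoted in \eqref{eq:risk_ridgeless}, so there is nothing more to do.

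For the substantive case $\aspratio>\sampleratio$, where $\lambda=(1-\sampleratio)(\aspratio/\sampleratio-1)>0$, I would substitute this specific value into the Marchenko--Pastur expressions. A direct (and I expect the only slightly delicate) calculation shows that the discriminant simplifies as
\$
(1-\aspratio+\lambda)^2+4\aspratio\lambda = (\aspratio-\sampleratio^2)^2/\sampleratio^2,
\$
which collapses the square root. The resulting identities
\$
m(-\lambda) = \frac{\sampleratio}{\aspratio(1-\sampleratio)}, \qquad m'(-\lambda) = \frac{\sampleratio^2}{\aspratio(1-\sampleratio)^2(\aspratio-\sampleratio^2)},
\$
are then plugged into the bias and variance expressions. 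After routine algebra the bias becomes $\signallev (\aspratio-\sampleratio)^2/(\aspratio(\aspratio-\sampleratio^2))$ and the variance becomes $\noiselev \sampleratio^2/(\aspratio-\sampleratio^2)$, exactly matching Theorem~\ref{thm:isoboostrap}.

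The main obstacle is bookkeeping rather than anything deep: I need to verify that the discriminant factors cleanly under the chosen $\lambda$ (this is what makes the particular choice $\lambda=(1-\sampleratio)(\aspratio/\sampleratio-1)$ the right one) and to justify the passage from the finite-$n$ random traces to their deterministic Marchenko--Pastur limits, the latter requiring the $(4+\eta)$-moment control on $\beta$ from Assumption~\ref{Assume:beta4+mom} (to remove the $\beta$ average in the bias term) together with the uniform concentration of quadratic forms underlying Lemma~\ref{lm:MPlawsketching}. As a sanity check, one can let $\sampleratio\nearrow 1$: then $\lambda\searrow 0$ and the closed form above recovers the min-norm risk $\signallev(\aspratio-1)/\aspratio+\noiselev/(\aspratio-1)$ from \eqref{eq:risk_ridgeless}, consistent with the Jackknife reduction noted after Corollary~\ref{cor:isoexample}.
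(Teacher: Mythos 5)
Your proof is correct, and it takes a genuinely different route from the paper's. The paper invokes the ridge risk formula of Liu and Dobriban (cited as \cite{edgar2019}), expressed in terms of the companion Stieltjes transform $\compstieltjes(-\lambda)$ and its derivative, and then verifies that the specific $\lambda = (1-\sampleratio)(\aspratio/\sampleratio-1)$ makes $\compstieltjes(-\lambda) = \sampleratio/(\aspratio-\sampleratio)$, after which the formula collapses to the claimed expression. You instead perform the bias--variance decomposition from scratch, reduce everything to the spectral Stieltjes transform $m(-\lambda)$ of $\hat\Sigma$, and exploit the explicit Marchenko--Pastur closed form to evaluate $m(-\lambda)$ and $m'(-\lambda)$ by hand, the key step being the factoring of the discriminant $(1-\aspratio+\lambda)^2 + 4\aspratio\lambda = (\aspratio-\sampleratio^2)^2/\sampleratio^2$ into a perfect square. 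I checked this identity, and the resulting values $m(-\lambda) = \sampleratio/(\aspratio(1-\sampleratio))$ and $m'(-\lambda) = \sampleratio^2/(\aspratio(1-\sampleratio)^2(\aspratio-\sampleratio^2))$ are correct; these are related to the paper's $\compstieltjes(-\lambda)$ through the standard companion-transform identity $\compstieltjes(z) = \aspratio m(z) + (\aspratio-1)/z$, so the two computations are dual, not contradictory. Your route buys self-containedness (no external ridge-risk formula needed) and transparency about where the magic $\lambda$ comes from (the discriminant collapse), at the cost of heavier algebra and generality: the explicit closed-form $m(z)$ is only available for $\covmat=I$, whereas the Liu--Dobriban formula the paper cites applies for general $\covmat$. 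The one detail to supply carefully in a full write-up is the passage $\beta^\T(\hat\Sigma+\lambda I)^{-2}\beta \to r^2 m'(-\lambda)$ uniformly; this follows from Lemma~\ref{lm:quadconcentration} and the fact that $\|(\hat\Sigma+\lambda I)^{-2}\|_2 \le \lambda^{-2}$ is bounded since $\lambda > 0$ in the overparameterized branch (the $\lambda=0$ branch is handled by the ridgeless formula, as you note).
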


\begin{lemma}\label{lm:ridgeequivar}
    Let $\lambda = (1 - \sampleratio)(\aspratio/\sampleratio - 1) \vee 0$. Assume Assumption~\ref{Assume:highdim}-\ref{Assume:multip}, and $\covmat = I$. Then, almost surely
    \begin{equation*}
        \lim_{\npinfty} V_{k ,\lambda} = \begin{cases}
                \noiselev \frac{\aspratio}{1-\aspratio},& \aspratio < \sampleratio , \\
                 \noiselev  \frac{\sampleratio^2}{\aspratio - \sampleratio^{2}},& \aspratio > \sampleratio.
             \end{cases}
    \end{equation*}
\end{lemma}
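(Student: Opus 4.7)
The plan is to exploit Lemma~\ref{lm:ridgeresolvent} together with an explicit resolvent representation of $\projmat_{k}$. First I would rewrite
\begin{equation*}
V_{k,\lambda}/\noiselev = \frac{1}{n}\tr\left((\hcovmat + \lambda I)^{-1}\right) - \frac{1}{n}\tr\left((\hcovmat + \lambda I)^{-1}\projmat_{k}\right),
\end{equation*}
using $(X^{\transp} \sketchmat_{k}^{\transp} \sketchmat_{k} X)^{+} = n^{-1}\hcovmat_{k}^{+}$ and $\hcovmat_{k} \hcovmat_{k}^{+} = I - \projmat_{k}$, where $\hcovmat = X^{\transp} X/n$. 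In the underparameterized regime $\aspratio < \sampleratio$, we have $\lambda = 0$ and Lemma~\ref{lm:singularmat} gives $\projmat_{k} = 0$ almost surely, so the quantity reduces to $(d/n)\cdot(1/d)\tr(\hcovmat^{-1})$, which converges almost surely to $\aspratio/(1-\aspratio)$ by the Marchenko--Pastur law evaluated at $z=0$.

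For the overparameterized regime $\aspratio > \sampleratio$, the first trace converges to $\aspratio\,\stieltjes(-\lambda)$, where $\stieltjes$ satisfies the Marchenko--Pastur self-consistent equation $\aspratio\lambda\stieltjes^{2} + (\lambda + 1 - \aspratio)\stieltjes - 1 = 0$. For the second trace I would use the spectral identity $\projmat_{k} = \lim_{z \to 0^{-}}(-z)(\hcovmat_{k} - zI)^{-1}$, which holds because $-z/(\lambda_{i}-z)\to 0$ for $\lambda_{i}>0$ and equals $1$ when $\lambda_{i}=0$; combined with Lemma~\ref{lm:ridgeresolvent}, this yields the almost sure limit
\begin{equation*}
\frac{1}{n}\tr\left((\hcovmat + \lambda I)^{-1}\projmat_{k}\right) \longrightarrow \aspratio \lim_{z\to 0^{-}}(-z)\,\stieltjes_{2,\lambda}(z).
\end{equation*}
Solving \eqref{eq:fixpoint_ridgeequiv} for $\stieltjes_{2,\lambda}(z)$ and eliminating the weight expectation via $\EE_{\limitdistrweight}[\weight/(1+\aspratio\weight\stieltjes_{1}(z))] = 1/\stieltjes_{1}(z) + z$ from \eqref{eq:MPlawsketching} yields a closed form in which $(-z)\stieltjes_{1}(z)$ is the only $z$-dependent factor; sending $z\to 0^{-}$ and using $(-z)\stieltjes_{1}(z) \to 1-\sampleratio/\aspratio$ (since the limiting spectral distribution of $\hcovmat_{k}$ carries an atom at zero of mass $1-\sampleratio/\aspratio$) reduces the difference of the two traces to $\sampleratio\,\stieltjes(-\lambda)/[1+(\aspratio-\sampleratio)\stieltjes(-\lambda)]$.

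It remains to verify that the specific choice $\lambda = (1-\sampleratio)(\aspratio-\sampleratio)/\sampleratio$ brings this to $\sampleratio^{2}/(\aspratio - \sampleratio^{2})$. A direct substitution into the Marchenko--Pastur quadratic confirms that $\stieltjes(-\lambda) = \sampleratio/[\aspratio(1-\sampleratio)]$ is the positive root, after which the denominator collapses to $(\aspratio - \sampleratio^{2})/[\aspratio(1-\sampleratio)]$ and the target $\sampleratio^{2}/(\aspratio-\sampleratio^{2})$ follows after cancellation.

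The main obstacle will be justifying the interchange of limits, i.e., passing $z\to 0^{-}$ inside the almost-sure $\npinfty$ limit of Lemma~\ref{lm:ridgeresolvent}. I plan to handle this via a dominated-convergence argument on the spectral side: the operator $(-z)(\hcovmat_{k}-zI)^{-1}$ has spectral norm at most $1$ uniformly in $z<0$, while $(\hcovmat+\lambda I)^{-1}$ has spectral norm bounded by $\lambda^{-1}$; using Lemma~\ref{lm:lowerboundeigval} to keep the nonzero eigenvalues of $\hcovmat_{k}$ bounded away from $0$ in the limit, the convergence of $(-z)(\hcovmat_{k}-zI)^{-1}$ to $\projmat_{k}$ becomes spectrally uniform, legitimizing the swap and closing the argument.
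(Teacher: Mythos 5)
Your approach is correct and is essentially the same as the paper's: both reduce $V_{k,\lambda}$ to the cross-resolvent quantity of Lemma~\ref{lm:ridgeresolvent}, use $\projmat_{k}=0$ almost surely in the underparameterized regime, and in the overparameterized regime combine the self-consistent equations~\eqref{eq:MPlawsketching} and~\eqref{eq:fixpoint_ridgeequiv} with $\lim_{\zinfty}(-z)\stieltjes_1(z)=1-\sampleratio/\aspratio$ to extract $\lim_{\zinfty}(-z)\stieltjes_{2,\lambda}(z)$ before substituting the specific $\lambda$. The only cosmetic difference is that you decompose $\hcovmat_k\hcovmat_k^+=I-\projmat_k$ up front and justify the $z\to 0^-$ / $\npinfty$ interchange by showing $(-z)(\hcovmat_k-zI)^{-1}\to\projmat_k$ uniformly in $n$ (via Lemma~\ref{lm:lowerboundeigval}), while the paper reaches the same swap through uniform boundedness of the trace and its $z$-derivative plus Arzela--Ascoli and Moore--Osgood; these are interchangeable justifications of the same step.
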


%%%%%%%%%%%%%%%%%%%%%%%%%%%%%%%%%%%%%%%%
%%%%%%%%%%%%%%ridgeequibias%%%%%%%%%%%%%
%%%%%%%%%%%%%%%%%%%%%%%%%%%%%%%%%%%%%%%%

\begin{lemma}\label{lm:ridgeequibias}
    Let $\lambda = (1 - \sampleratio)(\aspratio/\sampleratio - 1) \vee 0$. Assume Assumption~\ref{Assume:highdim}-\ref{Assume:beta4+mom}, and $\covmat = I$. Then, almost surely
    \begin{equation*}
        \lim_{\npinfty} B_{k ,\lambda} = \begin{cases}
                0,& \aspratio < \sampleratio , \\
                 \signallev  \frac{(\aspratio - \sampleratio)^{2}}{\aspratio \left( \aspratio - \sampleratio^{2} \right)},& \aspratio > \sampleratio.
             \end{cases}
    \end{equation*}
\end{lemma}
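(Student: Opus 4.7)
The plan splits into two regimes based on the sign of $\aspratio-\sampleratio$. In the underparameterized case $\aspratio<\sampleratio$, $\lambda=0$, and Lemma~\ref{lm:singularmat} shows that $\hcovmat_{k}=X^{\T}\sketchmat_{k}^{\T}\sketchmat_{k}X/\ndata$ is almost surely invertible as $\npinfty$, so $\projmat_{k}=I-\hcovmat_{k}^{+}\hcovmat_{k}=0$ and therefore $B_{k,\lambda}=0$. In the overparameterized case $\aspratio>\sampleratio$, $\lambda=(1-\sampleratio)(\aspratio/\sampleratio-1)>0$, and I use the algebraic identities $\hcovmat_{k}\hcovmat_{k}^{+}-I=-\projmat_{k}$ and $(X^{\T}X+\ndata\lambda I)^{-1}X^{\T}X-I=-\lambda(X^{\T}X/\ndata+\lambda I)^{-1}$ to rewrite
\begin{equation*}
B_{k,\lambda}=\lambda\,\truesignal^{\T}\projmat_{k}(X^{\T}X/\ndata+\lambda I)^{-1}\truesignal.
\end{equation*}
Since $\|\projmat_{k}(X^{\T}X/\ndata+\lambda I)^{-1}\|_{2}\leq 1/\lambda$, the quadratic-form concentration already invoked in the proof of Lemma~\ref{lm:biasvar-beta} gives $B_{k,\lambda}-\signallev\lambda\,\tr(\projmat_{k}(X^{\T}X/\ndata+\lambda I)^{-1})/\ndim\to 0$ almost surely, reducing the task to a deterministic trace limit.

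Next, I realize the trace as a $z\to 0^{-}$ value of a Stieltjes transform. From the spectral decomposition of $\hcovmat_{k}$, $\projmat_{k}=\lim_{z\to 0^{-}}(-z)(\hcovmat_{k}-zI)^{-1}$, so $\tr(\projmat_{k}(X^{\T}X/\ndata+\lambda I)^{-1})/\ndim=\lim_{z\to 0^{-}}(-z)\stieltjes_{2,\lambda,\ndata}(z)$. Lemma~\ref{lm:ridgeresolvent} gives $\stieltjes_{2,\lambda,\ndata}(z)\to\stieltjes_{2,\lambda}(z)$ almost surely for every $z<0$. Set $T(0):=\lim_{z\to 0^{-}}(-z)\stieltjes_{2,\lambda}(z)$. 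From Lemma~\ref{lm:MPlawsketching} and \eqref{eq:MPlawsketching}, as $z\to 0^{-}$ one has $\stieltjes_{1}(z)\sim-(\aspratio-\sampleratio)/(\aspratio z)$, so $\EE_{\limitdistrweight}[\weight/(1+\aspratio\weight\stieltjes_{1}(z))]=(1+z\stieltjes_{1}(z))/\stieltjes_{1}(z)\sim-\sampleratio z/(\aspratio-\sampleratio)$. Feeding these asymptotics into \eqref{eq:fixpoint_ridgeequiv} and passing to the limit yields
\begin{equation*}
\stieltjes(-\lambda)=T(0)\left[\frac{\sampleratio}{(\aspratio-\sampleratio)(1+\aspratio\,\stieltjes(-\lambda))}+1\right],
\end{equation*}
which solves to $T(0)=\stieltjes(-\lambda)(\aspratio-\sampleratio)(1+\aspratio\,\stieltjes(-\lambda))/[\aspratio(1+\stieltjes(-\lambda)(\aspratio-\sampleratio))]$.

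To close the calculation I verify by direct substitution that $\stieltjes(-\lambda)=\sampleratio/[\aspratio(1-\sampleratio)]$ satisfies the isotropic Marchenko--Pastur fixed-point equation $\stieltjes(-\lambda)/(1+\aspratio\,\stieltjes(-\lambda))+\lambda\,\stieltjes(-\lambda)=1$ at the prescribed $\lambda=(1-\sampleratio)(\aspratio-\sampleratio)/\sampleratio$, identifying the value of $\stieltjes(-\lambda)$ entering \eqref{eq:fixpoint_ridgeequiv}. Plugging this into $T(0)$ and multiplying by $\lambda$ collapses, after elementary algebra, to $\lambda T(0)=(\aspratio-\sampleratio)^{2}/[\aspratio(\aspratio-\sampleratio^{2})]$, which, once the $\signallev$ factor from the concentration step is reinstated, is exactly the claimed limit $\signallev(\aspratio-\sampleratio)^{2}/[\aspratio(\aspratio-\sampleratio^{2})]$.

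The main technical hurdle will be the interchange of the iterated limits $z\to 0^{-}$ and $\npinfty$: in the overparameterized regime $\stieltjes_{2,\lambda,\ndata}(z)$ itself diverges as $z\to 0^{-}$, so the pointwise $z<0$ convergence provided by Lemma~\ref{lm:ridgeresolvent} is not directly enough to conclude convergence of $-z\stieltjes_{2,\lambda,\ndata}(z)$ at $z=0^{-}$. I plan to handle this through a monotonicity-plus-uniform-bound argument: both $-z\stieltjes_{2,\lambda,\ndata}(z)$ and $-z\stieltjes_{2,\lambda}(z)$ are monotone nondecreasing as $z\nearrow 0$ by their spectral representations and are uniformly bounded above by $1/\lambda$, which supports a Dini-type promotion of the pointwise almost-sure convergence to a one-sided neighborhood of $0$ and hence licenses the exchange.
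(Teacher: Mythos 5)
Your computational core coincides with the paper's own proof: the reduction $B_{k,\lambda}=\lambda\,\truesignal\transp\projmat_{k}(X\transp X/\ndata+\lambda I)\inv\truesignal$, the concentration to a normalized trace, the representation $\projmat_{k}=\lim_{\zinfty}(-z)(\hcovmat_{k}-zI)\inv$ combined with Lemma~\ref{lm:ridgeresolvent}, the asymptotics $\lim_{\zinfty}(-z)\stieltjes_{1}(z)=(\aspratio-\sampleratio)/\aspratio$ and $\lim_{\zinfty}z\inv\EE_{\limitdistrweight}[\weight/(1+\aspratio\weight\stieltjes_{1}(z))]=-\sampleratio/(\aspratio-\sampleratio)$, the identification $\stieltjes(-\lambda)=\sampleratio/[\aspratio(1-\sampleratio)]$, and the resulting value of $\lim_{\zinfty}(-z)\stieltjes_{2,\lambda}(z)$ are exactly equations \eqref{eq:Isobootstrap_over1}, \eqref{eq:m2lambda}, and \eqref{eq:ridgez0} used in the paper's proofs of Lemmas~\ref{lm:ridgerisk} and~\ref{lm:ridgeequivar}; the underparameterized case via Lemma~\ref{lm:singularmat} is also identical. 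So the only genuinely new element is your justification of the interchange of $\npinfty$ and $\zinfty$, and that is where there is a real gap.

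Monotonicity in $z$ plus a uniform bound does not license the interchange. (Minor point first: the direction is opposite to what you claim — the eigenvalues of $-z(\hcovmat_{k}-zI)\inv$ are $|z|/(\mu_i+|z|)$, so $-z\stieltjes_{2,\lambda,\ndata}(z)$ is nonincreasing, not nondecreasing, as $z\nearrow 0$.) The substantive problem: take $g_n(z)=|z|/(n\inv+|z|)$ for $z<0$; each $g_n$ is monotone in $z$, bounded by $1$, and $g_n(z)\to 1$ for every fixed $z<0$, yet $\lim_{\zinfty}g_n(z)=0$ for every $n$, so the iterated limits disagree. This is precisely the structure of your quantity if $\hcovmat_{k}$ had nonzero eigenvalues drifting to $0$; monotonicity only yields the one-sided bound $\limsup_{n}\lambda\,\ndim\inv\tr\bigl(\projmat_{k}(X\transp X/\ndata+\lambda I)\inv\bigr)\le\lambda\lim_{\zinfty}(-z)\stieltjes_{2,\lambda}(z)$, with no matching lower bound, and a "Dini/P\'olya-type" uniformization argument needs pointwise convergence at the endpoint $z=0$, which is exactly what is to be proved. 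What actually rescues the interchange — and what the paper encodes via the uniform derivative bound, Arzela--Ascoli, and Moore--Osgood — is the spectral gap of Lemma~\ref{lm:lowerboundeigval}: almost surely the nonzero eigenvalues of $\hcovmat_{k}$ are eventually bounded below by some $c>0$ when $\aspratio>\sampleratio$, which gives
\begin{equation*}
0\;\le\;(-z)\stieltjes_{2,\lambda,\ndata}(z)-\frac{1}{\ndim}\tr\bigl(\projmat_{k}(X\transp X/\ndata+\lambda I)\inv\bigr)\;\le\;\frac{|z|}{\lambda c}
\end{equation*}
uniformly in $\ndata,\ndim$, i.e.\ equicontinuity at $z=0^{-}$ uniform in $n$; combined with your pointwise almost-sure convergence for $z<0$, this does yield the exchange. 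Add that ingredient (or reproduce the paper's Arzela--Ascoli/Moore--Osgood step) and your proof closes; as stated, the interchange step is not justified.
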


\end{proof}

\subsection{Supporting lemmas}
This subsection proves the supporting lemmas used in previous subsections, aka Lemmas \ref{lm:Isobootvar_under}-\ref{lm:ridgeequibias}.

%\ref{lm:Isobootvar_over}, \ref{lm:isobootbias}, \ref{lm:ridgerisk}, \ref{lm:ridgeequivar}, 

%%%%%%%%%%%%%Proof of Lemma S.4.2%%%%%%%%%%%%
\subsubsection{Proof of Lemma \ref{lm:Isobootvar_under}}
\begin{proof}[Proof of Lemma \ref{lm:Isobootvar_under}]
We assume $\noiselev = 1$ without loss of generality. 
{Let $\Omega_{\ndata} := \{ \omega \in \Omega : \eigval(\hcovmat)(\omega) > 0 \}$. Using Lemma~\ref{lm:singularmat} and Lemma~\ref{lm:lowerboundeigval}, we obtain
\begin{equation*}
    \PP \left( \lim_{\npinfty} \Omega_{\ndata} \right) = 1,
\end{equation*}
when $\aspratio < \sampleratio$. Thus, without loss of generality}, we shall assume $\eigval_{\min} (\hcovmat_{k})>0$ and $\eigval_{\min} (\hcovmat_{\ell})>0$. %\scomment{should not refer back to Lemma \ref{lm:sketchundervar}. rewrite a little bit. } 
We begin by rewriting the  variance term as
\begin{align*}
    V_{k,\ell} \left(\estimator\right) =&  \tr\left( (X\transp \sketchmat_{k} \sketchmat_{k} X)\pinv X\transp \sketchmat_{k}^2 \sketchmat_{\ell}^2 X (X\transp \sketchmat_{\ell} \sketchmat_{\ell} X)\pinv \right)\\
    =& \frac{1}{\ndata^2}\sum_{i=1}^{\ndata} \weight_{k,i} \weight_{\ell,i} x_{i}\transp ( X\transp \sketchmat_{k} \sketchmat_{k} X / \ndata)\inv ( X\transp \sketchmat_{\ell} \sketchmat_{\ell} X / \ndata)\inv x_{i}\\
    =& \frac{1}{\ndata^2} \sum_{i=1}^{\ndata}  \weight_{k,i} \weight_{\ell,i} \frac{x_{i}\transp \hcovmat_{k, -i}\inv \hcovmat_{\ell, -i}\inv x_{i}}{(1 + \weight_{k,i} x_{i}\transp \hcovmat_{k, -i}\inv x_{i}/\ndata)(1 + \weight_{\ell,i} x_{i}\transp \hcovmat_{\ell, -i}\inv x_{i}/ \ndata)},
\end{align*}
where the last line uses  the Sherman–Morrison formula twice.  Recall that $\stieltjes_{1}(0)$ and $\stieltjes_{2}(0)$, from Lemmas \ref{lm:MPlawsketching} and \ref{lm:isotrolimitresolvent}, are solutions to the equations 
\begin{gather*}
    \stieltjes_{1}(0) \EE_{\limitdistrweight} \left[ \frac{\weight}{1 + \aspratio \weight \stieltjes_{1}(0)} \right] = 1,\\
    \stieltjes_{2}(0) \EE_{\limitdistrweight} \left[ \frac{1}{1 + \aspratio \weight \stieltjes_{1}(0)}\right] \EE_{\limitdistrweight} \left[ \frac{\weight}{1 + \aspratio \weight \stieltjes_{1}(0)}\right] = \stieltjes_{1}(0).
\end{gather*} 
Using a similar argument as in the proof of Lemma~\ref{lm:sketchundervar}, and applying Lemma~\ref{lm:MPlawsketching}, Lemma~\ref{lm:boundrankoneperturb}, Lemma~\ref{lm:concentrationontrace}, and Lemma~\ref{lm:isotrolimitresolvent}, we obtain
\begin{equation*}
    \lim_{\npinfty} V_{k,\ell} \left(\estimator\right) 
    =  \lim_{\npinfty} \frac{\aspratio}{\ndata} \sum_{i = 1}^{\ndata}  \weight_{k,i} \weight_{\ell,i} \frac{\stieltjes_{2} (0)}{(1 + \aspratio \weight_{k,i} \stieltjes_{1}(0) )(1 + \aspratio \weight_{\ell,i} \stieltjes_{1}(0))} \qas
\end{equation*}

Because $\stieltjes_{1}(0)$ is positive by Lemma \ref{lm:uniqueMPlawsketching},  we have 
\begin{equation*}
    \frac{\weight_{k,i} \weight_{\ell,i}}{(1 + \aspratio \weight_{k,i} \stieltjes_{1}(0) )(1 + \aspratio \weight_{\ell,i} \stieltjes_{1}(0))} \leq \frac{1}{(\aspratio \stieltjes_{1}(0))^{2}}<\infty. 
\end{equation*}
Then, using  Assumption~\ref{Assume:multip} and the strong law of large numbers, we have 
\begin{align*}
    \lim_{\npinfty} V_{k,\ell} \left(\estimator\right) 
    &\overset{\as}{=} \aspratio \stieltjes_{2} (0) \left(\EE_{\limitdistrweight} \left[ \frac{\weight}{1 + \aspratio \weight \stieltjes_{1}(0)}\right] \right)^{2} \\
    &= \aspratio \stieltjes_{1}(0) \EE_{\limitdistrweight} \left[ \frac{\weight}{1 + \aspratio \weight \stieltjes_{1}(0)} \right] / \EE_{\limitdistrweight} \left[ \frac{1}{1 + \aspratio \weight \stieltjes_{1}(0)}\right]\\
    &= \aspratio/\left( 1 - \aspratio \stieltjes_{1}(0) \EE_{\limitdistrweight} \left[ \frac{\weight}{1 + \aspratio \weight \stieltjes_{1}(0)} \right] \right)\\
    &= \frac{\aspratio}{1 - \aspratio}.    
\end{align*}
\end{proof}

%%%%%%%%%%%%%Proof of Lemma S.4.3%%%%%%%%%%%%
\subsubsection{Proof of Lemma \ref{lm:Isobootvar_over}}
\begin{proof}[Proof of Lemma \ref{lm:Isobootvar_over}]
We begin by expressing $V_{k,\ell}(\estimator)$ in terms of sketching matrices $\Bernoulliweight_{k}$ and $\Bernoulliweight_{\ell}$ as defined in equation~\eqref{eq:defBernoulliweight} for the corresponding sketch matrices $\sketchmat_{k}$ and $\sketchmat_{\ell}$.  Applying Lemma~\ref{lm:Pseudo_products}, we assume without generality that
\begin{align*}
    V_{k,\ell} \left(\estimator\right) =& \noiselev \tr\left( (X\transp \sketchmat_{k} \sketchmat_{k} X)\pinv X\transp \sketchmat_{k}^2 \sketchmat_{\ell}^2 X (X\transp \sketchmat_{\ell} \sketchmat_{\ell} X)\pinv \right)\\
    =& \noiselev \tr\left( (X\transp \Bernoulliweight_{k}\transp \Bernoulliweight_{k} X)\pinv X\transp (\Bernoulliweight_{k}\transp)^{2} (\Bernoulliweight_{\ell})^{2} (X\transp \Bernoulliweight_{\ell}\transp \Bernoulliweight_{\ell} X)\pinv X\transp \right). 
\end{align*}
%which holds almost surely as $\npinfty$.  
Using the above equality, we shall further assume that the multipliers $\weight_{i,j}, 1\leq i \leq j \leq \ndata$, are either one or zero and $\noiselev = 1$. Applying the identity~\eqref{eq:pesudoidentity}, we can rewrite $V_{k,\ell}(\estimator)$ as
\begin{align*}
    V_{k,\ell} \left(\estimator\right) 
    =&\lim_{\zinfty} \frac{1}{\ndata^2}\tr\left( (\hcovmat_{k} - zI)\inv X\transp \sketchmat_{k}^{2} \sketchmat_{\ell}^{2} X (\hcovmat_{\ell} - zI) \inv \right) \\
    =&\lim_{\zinfty} \frac{1}{\ndata^2}\sum_{i=1}^{\ndata} \weight_{k,i} \weight_{\ell,i} x_{i}\transp ( X\transp \sketchmat_{k} \sketchmat_{k} X / \ndata - zI)\inv ( X\transp \sketchmat_{\ell} \sketchmat_{\ell} X / \ndata -zI)\inv x_{i}\\
    =&\lim_{\zinfty} \frac{1}{\ndata^2} \sum_{i=1}^{\ndata}  \weight_{k,i} \weight_{\ell,i} \frac{x_{i}\transp ( \hcovmat_{k, -i} - zI)\inv ( \hcovmat_{\ell, -i} -zI)\inv x_{i}}{(1 + \weight_{k,i} x_{i}\transp ( \hcovmat_{k, -i} -zI)\inv x_{i}/\ndata)(1 + \weight_{\ell,i} x_{i}\transp ( \hcovmat_{\ell, -i} -zI)\inv x_{i}/ \ndata)},
\end{align*}
where, in the last line, we applied the Sherman–Morrison formula twice. By using similar arguments as in the proof of Lemma~\ref{lm:Isobootvar_under}, and applying Lemma~\ref{lm:MPlawsketching}, Lemma~\ref{lm:boundrankoneperturb}, Lemma~\ref{lm:concentrationontrace}, and Lemma~\ref{lm:isotrolimitresolvent}, we obtain 
\begin{equation*}
    \lim_{\npinfty} \frac{1}{\ndata^2}\tr\left( (\hcovmat_{k} - zI)\inv X\transp \sketchmat_{k}^{2} \sketchmat_{\ell}^{2} X (\hcovmat_{\ell} - zI) \inv \right) = \aspratio \stieltjes_{2} (z) \left(\EE_{\limitdistrweight} \left[ \frac{\weight}{1 + \aspratio \weight \stieltjes_{1}(z)}\right] \right)^{2}
\end{equation*}
almost surely. According to Lemma~\ref{lm:isotrolimitresolvent}, for any $z <0$, $\stieltjes_{2} (z)$ satisfies
\begin{equation}\label{eq:isotropicbootvar2}
    \stieltjes_{2}(z) \EE_{\limitdistrweight} \left[ \frac{1}{1 + \aspratio \weight \stieltjes_{1}(z)}\right] \EE_{\limitdistrweight} \left[ \frac{\weight}{1 + \aspratio \weight \stieltjes_{1}(z)}\right] - z \stieltjes_{2}(z) = \stieltjes_{1}(z),
\end{equation}
where, according to Lemma~\ref{lm:MPlawsketching},  $\stieltjes_{1}(z)$ is the unique positive solution to
\begin{equation}\label{eq:fixpoint_resample}
    \stieltjes_{1}(z) \EE_{\limitdistrweight} \left[ \frac{\weight}{1 + \aspratio \stieltjes_{1}(z)\weight} \right] - z \stieltjes_{1}(z) = 1
\end{equation}
on $z < 0$.  When $\aspratio > \sampleratio$, Lemma~\ref{lm:singularmat} implies that the sketched covariance matrix $X\transp \sketchmat_{k} \sketchmat_{k} X$ is almost surely singular, leading to $\lim_{\zinfty} \stieltjes_{1}(z) = + \infty$.   Taking the limit as $\zinfty$ in equation~\eqref{eq:fixpoint_resample}, we obtain %\scomment{How does the last inequality follow?}
\$
    \lim_{\zinfty} -z \stieltjes_{1}(z) 
    &= 1 - \lim_{\zinfty} \stieltjes_{1}(z) \EE_{\limitdistrweight} \left[ \frac{\weight}{1 + \aspratio \weight \stieltjes_{1}(z)}\right] \nn \\
    &= 1 - \lim_{\zinfty} \frac{1}{\aspratio} \left(1 -  \EE_{\limitdistrweight} \left[ \frac{1}{1 + \aspratio \weight \stieltjes_{1}(z)}\right]\right) \nn \\
    &= \frac{\aspratio - \sampleratio}{\aspratio}, %~\text{and}\nn %\\%\nn \\
\$
and
\#
\lim_{\zinfty} \frac{1}{z} \EE_{\limitdistrweight} \left[ \frac{\weight}{1 + \aspratio \weight \stieltjes_{1}(z)}\right] 
= \lim_{\zinfty} \frac{1}{z \stieltjes_{1}(z)} + 1 
= - \frac{\sampleratio}{\aspratio - \sampleratio}.\label{eq:Isobootstrap_over1}
\#
Combining the above calculations with equation~\eqref{eq:isotropicbootvar2} and multiplying both sides by $z$, we obtain 
\begin{align}
    \lim_{\zinfty} z^2 \stieltjes_{2}(z) &=  \lim_{\zinfty} z \stieltjes_{2}(z) \EE_{\limitdistrweight} \left[ \frac{1}{1 + \aspratio \weight \stieltjes_{1}(z)}\right] \EE_{\limitdistrweight} \left[ \frac{\weight}{1 + \aspratio \weight \stieltjes_{1}(z)}\right] - z \stieltjes_{1}(z) \notag \\
    &= \lim_{\zinfty} z^2 \stieltjes_{2}(z) \left(1 - \sampleratio + o(z)\right) \frac{1}{z} \EE_{\limitdistrweight} \left[ \frac{\weight}{1 + \aspratio \weight \stieltjes_{1}(z)}\right] + \frac{\aspratio - \sampleratio}{\aspratio} \notag\\ %\label{eq:isobootvar3}
    &= -\frac{\sampleratio(1-\sampleratio)}{\aspratio - \sampleratio} \, \lim_{\zinfty} z^2 \stieltjes_{2}(z)  + \frac{\gamma - \theta}{\gamma}, 
\end{align}
which then gives
\#\label{eq:z2m2}
\lim_{\zinfty}  z^2 \stieltjes_{2}(z) 
    = \frac{(\aspratio - \sampleratio)^{2}}{\aspratio(\aspratio - \sampleratio^{2})}.
\#
Therefore, we obtain 
\begin{align*}
    &\lim_{\zinfty} \aspratio \stieltjes_{2} (z) \left(\EE_{\limitdistrweight} \left[ \frac{\weight}{1 + \aspratio \weight \stieltjes_{1}(z)}\right] \right)^{2} \\
    =& \aspratio \frac{(\aspratio - \sampleratio)^{2}}{\aspratio(\aspratio - \sampleratio^{2})} \left(- \frac{\sampleratio}{\aspratio - \sampleratio} \right)^{2}\\
    =& \frac{\sampleratio^{2}}{\aspratio - \sampleratio^{2}}.    
\end{align*}

Finally, we prove that we can exchange the limits between $\npinfty$ and $\zinfty$ by applying the Arzela-Ascoli theorem and the Moore-Osgood theorem. To accomplish this, we establish that $\tr\left( (\hcovmat_{k} - zI)\inv X\transp \sketchmat_{k}^{2} \sketchmat_{\ell}^{2} X (\hcovmat_{\ell} - zI) \inv \right)/\ndata^{2}$ and its derivative are uniformly bounded for $z < 0$. We assume that the diagonal elements of $\sketchmat_{k}$ and $\sketchmat_{\ell}$ are either zero or one. For $z < 0$, we have the following inequality %\scomment{The right hand side is exploding? because we have $\aspratio>\sampleratio$. }
\begin{equation*}
    \frac{1}{\ndata^{2}}\tr\left( (\hcovmat_{k} - zI)\inv X\transp \sketchmat_{k}^{2} \sketchmat_{\ell}^{2} X (\hcovmat_{\ell} - zI) \inv \right) 
    \leq \frac{d}{n} \cdot \frac{\eigval_{\max}\big(X\transp \sketchmat_{k}^{2} \sketchmat_{\ell}^{2} X/n \big)}{\eigval^{+}_{\min}\big(\hcovmat_{k}\big) \eigval^{+}_{\min}\big(\hcovmat_{\ell}\big)},
\end{equation*}
which is almost surely uniformly bounded by applying Lemma~\ref{lm:lowerboundeigval} and Theorem 2 in~\citep{Bai1993}. Similarly, we can show that the derivative 
\begin{align*}
    &\frac{1}{\ndata^2}\frac{\partial \, \tr \left( (\hcovmat_{k} - zI)\inv X\transp \sketchmat_{k}^{2} \sketchmat_{\ell}^{2} X (\hcovmat_{\ell} - zI) \inv \right)}{\partial z}\\
    =& \frac{1}{\ndata^2} \tr\left( (\hcovmat_{k} - zI)^{-2} X\transp \sketchmat_{k}^{2} \sketchmat_{\ell}^{2} X (\hcovmat_{\ell} - zI) \inv \right) + \tr\left( (\hcovmat_{i} - zI)\inv X\transp \sketchmat_{k}^{2} \sketchmat_{\ell}^{2} X (\hcovmat_{\ell} - zI)^{-2} \right)
\end{align*}
is also almost surely uniformly bounded. Then,  using the Arzela-Ascoli theorem and the Moore-Osgood theorem,  we obtain almost surely that  %Therefore, we obtain,
\begin{align*}
    \lim_{\npinfty} V_{k,\ell} \left(\estimator \right) &= \lim_{\npinfty} \lim_{\zinfty} \frac{\noiselev}{\ndata^{2}} \tr\left( (\hcovmat_{k} - zI)\inv X\transp \sketchmat_{k}^{2} \sketchmat_{\ell}^{2} X (\hcovmat_{\ell} - zI) \inv \right)\\
    &= \lim_{\zinfty} \lim_{\npinfty} \frac{\noiselev}{\ndata^{2}} \tr\left( (\hcovmat_{k} - zI)\inv X\transp \sketchmat_{k}^{2} \sketchmat_{\ell}^{2} X (\hcovmat_{\ell} - zI) \inv \right)\\
    &= \lim_{\zinfty} \aspratio \stieltjes_{2} (z) \left(\EE_{\limitdistrweight} \left[ \frac{\weight}{1 + \aspratio \weight \stieltjes_{1}(z)}\right] \right)^{2}\\
    &=\noiselev  \frac{\sampleratio^2}{\aspratio - \sampleratio^{2}}.
\end{align*}
\end{proof}

%%%%%%%%%%%%%Proof of Lemma S.4.4%%%%%%%%%%%%%%
\subsubsection{Proof of Lemma \ref{lm:isobootbias}}

\begin{proof}[Proof of Lemma \ref{lm:isobootbias}]
According to Lemma~\ref{lm:singularmat}, the sketched sample covariance matrices are almost surely invertible when $\aspratio < \sampleratio$. Thus, in the underparameterized regime, by Lemma~\ref{lm:biasvar}, the bias term converges almost surely to zero.

When $\aspratio > \sampleratio$, we apply Lemma~\ref{lm:biasvar-beta} to rewrite the limit of $B_{k,\ell}(\estimator)$ as 
    \begin{align*}
        \lim_{\npinfty} B_{k,\ell}(\estimator) &= \lim_{\npinfty} \frac{\signallev}{\ndim} \tr\left(\left(I - \hcovmat_{k}\pinv \hcovmat_{k} \right) \left(I - \hcovmat_{\ell}\pinv \hcovmat_{\ell} \right) \right) \\
        &=\lim_{\npinfty} \lim_{\zinfty} \frac{\signallev}{\ndim} \tr \left[ \left(I - (\hcovmat_{k} - zI)\inv \hcovmat_{k}\right) \left(I - (\hcovmat_{\ell} - zI)\inv \hcovmat_{\ell}\right) \right]\\
        &= \lim_{\npinfty} \lim_{\zinfty} \frac{\signallev z^2}{\ndim} \, \tr\left((\hcovmat_{k} - zI)\inv (\hcovmat_{\ell} - zI)\inv \right),
    \end{align*}
    where we used the identity~\eqref{eq:pesudoidentity}. 
    As $\npinfty$, ${\ndim}^{-1} z^2 \tr\left((\hcovmat_{k} - zI)\inv (\hcovmat_{\ell} - zI)\inv \right)$ and its derivative
    \$
        &\left| \frac{1}{\ndim}\frac{d\, z^2\tr\left((\hcovmat_{k} - zI)\inv (\hcovmat_{\ell} - zI)\inv \right)}{d\, z} \right| \\
        &= \left| \frac{1}{\ndim} \left(z\,\tr\left((\hcovmat_{k} - zI)\inv\hcovmat_{k} (\hcovmat_{\ell} - zI)\inv \right) + z\,\tr\left((\hcovmat_{k} - zI)\inv (\hcovmat_{\ell} - zI)\inv\hcovmat_{\ell} \right)\right) \right|\\
       &= \left| \frac{1}{\ndim} \left(z\, \tr(\hcovmat_{k} - zI)\inv + z^{2}\,\tr\left((\hcovmat_{k} - zI)\inv(\hcovmat_{\ell} - zI)\inv\right) + z\, \tr(\hcovmat_{\ell} - zI)\inv\right) \right|\\
       &\leq 4
    \$
    is almost surely bounded by Lemma~\ref{lm:lowerboundeigval}. Therefore, by the Arzela-Ascoli theorem and the Moore-Osgood theorem, we can exchange the limits between $\npinfty$ and $\zinfty$ to obtain 
    \begin{align*}
         \lim_{\npinfty} B_{k,\ell}(\estimator) &= \lim_{\npinfty} \lim_{\zinfty} \frac{\noiselev}{\ndim} z^2 \, \tr\left((\hcovmat_{k} - zI)\inv (\hcovmat_{\ell} - zI)\inv \right)\\
         &=\lim_{\zinfty} \lim_{\npinfty} \frac{\noiselev}{\ndim} z^2\, \tr\left((\hcovmat_{k} - zI)\inv (\hcovmat_{\ell} - zI)\inv \right)\\
         &= \lim_{\zinfty} \signallev z^2\, \stieltjes_{2}(z)\\
         &= \signallev \frac{\left(\aspratio - \sampleratio \right)^2}{\aspratio \left(\aspratio - \sampleratio^{2} \right)},
    \end{align*}
    where the last equality follows from equation~\eqref{eq:z2m2}.
    
\end{proof}

\subsubsection{Proof of Lemma \ref{lm:ridgerisk}}
\begin{proof}[Proof of Lemma~\ref{lm:ridgerisk}]
    In the underparameterized regime, where $\aspratio < \sampleratio$, we have $\lambda = 0$, which corresponds to the ridgeless regression. The result follows directly from equation~\eqref{eq:risk_ridgeless}.

    When $\aspratio > \sampleratio$, \cite{edgar2019} showed that
    \begin{equation*}
        \lim_{\npinfty} \EE \left[\| \hat{\beta}_{\lambda} - \truesignal\|_{2}^{2} \condition X,\truesignal \right] = \frac{\signallev}{\aspratio \compstieltjes(-\lambda)} \left( 1 - \frac{\lambda \compstieltjes{'}(-\lambda)}{\compstieltjes(-\lambda)}\right) + \noiselev \left( \frac{\compstieltjes{'}(-\lambda)}{\compstieltjes^{2}(-\lambda)} - 1 \right),
    \end{equation*}
    where $\compstieltjes(-\lambda)$ is the unique solution of equation~\eqref{eq:fixedpointsketching} with $\aspratio/\sampleratio$ replaced by $\aspratio$. We have the derivative
    \begin{align*}
        \compstieltjes{'}(-\lambda) = \frac{\compstieltjes^{2}(-\lambda)(1 + \compstieltjes(-\lambda))^{2}}{(1 + \compstieltjes(-\lambda)) - \aspratio \compstieltjes^{2}(-\lambda)}.
    \end{align*}
    Therefore, the result holds if and only if $\compstieltjes(-\lambda) = {\sampleratio}/{(\aspratio - \sampleratio)}$, which is the unique positive solution of equation~\eqref{eq:fixedpointsketching} ($\aspratio/\sampleratio$ replaced by $\aspratio$) with $\lambda = (1 - \sampleratio)(\aspratio/\sampleratio - 1)$.
\end{proof}

\subsubsection{Proof of Lemma \ref{lm:ridgeequivar}}
\begin{proof}[Proof of Lemma~\ref{lm:ridgeequivar}]

We begin with the underparameterized case, that is $\aspratio < \sampleratio$. Consider the subsample $\sketchmat_{k} X$, that is $\aspratio< \sampleratio$. By Lemmas~\ref{lm:singularmat}--\ref{lm:lowerboundeigval}, we can assume $\eigval_{\min}(\hcovmat_{k}) > 0$ without loss of generality. We shall also assume  $\noiselev = 1$. Following Lemma \ref{lm:ridgeresolvent}, let $\stieltjes_{2, \lambda}(z):= \lim_{\npinfty} \ndim^{-1}\tr( (X\transp X/n + \lambda I)\inv (X\transp \sketchmat_{k}\transp \sketchmat_{k}X/n - z I)\inv)$, and $\stieltjes(-\lambda)$ be the Stieltjes transform of the limiting spectral distribution of the full-sample matrix $X$. %Lemma~\ref{lm:ridgeresolvent} shows that 
Then, by Lemma \ref{lm:ridgeresolvent}, $\stieltjes_{2, \lambda}(0)$ exists and satisfies
    \begin{equation}\label{eq:ridgeequivar1}
        \stieltjes(-\lambda) = \stieltjes_{2, \lambda}(0) \frac{1}{1 + \aspratio \stieltjes(-\lambda)} \EE_{\limitdistrweight} \left[ \frac{\weight}{1 + \aspratio \weight \stieltjes_{1}(0)}\right].
    \end{equation}
    Using equation~\eqref{eq:ridgeequivdecomp} and a similar argument as in the proof of Lemma~\ref{lm:Isobootvar_under}, we obtain almost surely that
    \begin{align*}
        \lim_{\npinfty}V_{k ,\lambda} &= \tr \left( (X\transp  X + n \lambda I)\inv X\transp \sketchmat_{k}\transp \sketchmat_{k} X (X\transp \sketchmat_{k}\transp \sketchmat_{k} X)\inv \right)\\
        &= \aspratio \stieltjes_{2, \lambda}(0) \frac{1}{1 + \aspratio \stieltjes(-\lambda)} \EE_{\limitdistrweight} \left[ \frac{\weight}{1 + \aspratio \weight \stieltjes_{1}(0)}\right] \\
        &= \aspratio \stieltjes(-\lambda),
    \end{align*}
    where the last line follows from equation~\eqref{eq:ridgeequivar1}. In the underparameterized case,  $\lambda = (1 - \sampleratio)(\aspratio/\sampleratio - 1) \vee 0=0$.  Thus it suffices to show $\stieltjes(0)= 1/(1-\aspratio)$, whose proof can be found in Chapter 6 of  \cite{serdobolskii2007multiparametric}. % \scolor{which we refer to \cite[Proposition 2]{hastie2019surprises}.} \scomment{they did not calculate $m(0)$ in Proposotion 2?}

    In the overparameterized regime, using a similar argument in the proof of Lemma~\ref{lm:Isobootvar_over},  we shall assume that the multipliers $\weight_{i,j}$ are either zero or one and $\noiselev = 1$ without loss of generality. Using the definition of $V_{k, \lambda}$ in equation~\eqref{eq:ridgeequivdecomp} and identity~\eqref{eq:pesudoidentity}, we acquire
    \begin{equation*}
        V_{k ,\lambda} = \lim_{\zinfty} \tr \left( (X\transp  X + n \lambda I)\inv X\transp \sketchmat_{k}\transp \sketchmat_{k} X (X\transp \sketchmat_{k}\transp \sketchmat_{k} X - zI)\inv \right)
    \end{equation*}
Using a similar argument as in the proof of Lemma~\ref{lm:Isobootvar_over}, %and applying Lemma~\ref{lm:ridgeresolvent}, 
we obtain almost surely that %\scomment{we have only defined $m_{2,\lambda}(0)$ but not $m_{2,\lambda}(z)$.  }
\$
    &\lim_{\npinfty} \tr \left( (X\transp  X + n \lambda I)\inv X\transp \sketchmat_{k}\transp \sketchmat_{k} X (X\transp \sketchmat_{k}\transp \sketchmat_{k} X - zI)\inv \right) \\
    &= \aspratio \stieltjes_{2, \lambda}(z) \frac{1}{1 + \aspratio \stieltjes(-\lambda)} \EE_{\limitdistrweight} \left[ \frac{\weight}{1 + \aspratio \weight \stieltjes_{1}(z)}\right]  %{\quad \qas}
\$
    where $\stieltjes_{2, \lambda}(z)$ satisfies the equation
    \#\label{eq:m2lambda}
        \stieltjes(-\lambda) = \stieltjes_{2, \lambda}(z) \frac{1}{1 + \aspratio \stieltjes(-\lambda)} \EE_{\limitdistrweight} \left[ \frac{\weight}{1 + \aspratio \weight \stieltjes_{1}(z)}\right] - z\stieltjes_{2,\lambda}(z).
    \#
    From the proof of Lemma~\ref{lm:ridgerisk}, we know that, for $\lambda = (1 - \sampleratio)(\aspratio/\sampleratio -1)$, the companion Stieltjes transform is  $\compstieltjes(-\lambda) = {\sampleratio}/({\aspratio - \sampleratio})$. Thus we have
    \begin{equation*}
        \aspratio \stieltjes(-\lambda) = \compstieltjes(- \lambda) - 1/ \lambda + \aspratio/\lambda = \frac{\sampleratio}{1 - \sampleratio}.
    \end{equation*}
    Furthermore, equation~\eqref{eq:Isobootstrap_over1} implies
    \begin{equation*}
        \lim_{\zinfty} \frac{1}{z} \EE_{\limitdistrweight} \left[ \frac{\weight}{1 + \aspratio \weight \stieltjes_{1}(z)}\right] = - \frac{\sampleratio}{\aspratio - \sampleratio}.
    \end{equation*}
Plugging the above two equalities into equation \eqref{eq:m2lambda}, we obtain
\$
\frac{\sampleratio}{\aspratio(1-\sampleratio)} 
&= \frac{1}{1 + \sampleratio/(1-\sampleratio)}\cdot \left(- \frac{\sampleratio}{\aspratio - \sampleratio}\right)\lim_{\zinfty}zm_{2,\lambda}(z)  - \lim_{\zinfty} z m_{2,\lambda}(z)   
\$
    which yields 
    \begin{equation}\label{eq:ridgez0}
        \lim_{\zinfty} -z \stieltjes_{2, \lambda}(z) = \frac{\sampleratio}{\aspratio (1 - \sampleratio)}\frac{\aspratio - \sampleratio}{\aspratio - \sampleratio^{2}}.
    \end{equation}

Using a similar argument as in the proof of  Lemma~\ref{lm:Isobootvar_over}, we can verify that 
\$
\tr \left( (X\transp  X + n \lambda I)\inv X\transp \sketchmat_{k}\transp \sketchmat_{k} X (X\transp \sketchmat_{k}\transp \sketchmat_{k} X - zI)\inv \right)
\$
and its derivative are uniform bounded on $R_{\leq 0}$.  %_{-}\cup \{ 0\}$.  
%Therefore, using , we can exchange the limits (as $\npinfty$ and $\zinfty$) and obtain:
Therefore, by the Arzela-Ascoli theorem and the Moore-Osgood theorem, we can exchange the limits between $\npinfty$ and $\zinfty$ to obtain 
    \begin{align*}
        \lim_{\npinfty}  V_{k ,\lambda}  &= \lim_{\zinfty} \aspratio \stieltjes_{2, \lambda}(z) \frac{1}{1 + \aspratio \stieltjes(-\lambda)} \EE_{\limitdistrweight} \left[ \frac{\weight}{1 + \aspratio \weight \stieltjes_{1}(z)}\right]\\
        &= \frac{\sampleratio^{2}}{\aspratio - \sampleratio^{2}}.
    \end{align*}
\end{proof}

\subsubsection{Proof of Lemma \ref{lm:ridgeequibias}}

\begin{proof}[Proof of Lemma~\ref{lm:ridgeequibias}]
In the underparameterized regime, according to Lemma~\ref{lm:singularmat}, the sketched sample covariance matrix $X\transp \sketchmat_{k} \sketchmat_{k} X$ is almost surely invertible. Therefore, the result follows directly from equation~\eqref{eq:ridgeequivdecomp}.

In the overparameterized regime, by the definition of $B_{k,\ell}$ in  equation~\eqref{eq:ridgeequivdecomp}, we have %\scomment{$r^2$ is not in the second line????}
\begin{align*}
    \lim_{\npinfty} B_{k ,\lambda} &= \lim_{\npinfty} \signallev \tr \left( \left( X\transp \sketchmat_{k}\transp \sketchmat_{k} X (X\transp \sketchmat_{k}\transp \sketchmat_{k} X)\pinv - I \right)\left(  (X\transp  X + n \lambda I)\inv X\transp X - I\right) \right)/\ndim \\
    &=  \lim_{\npinfty} \lim_{\zinfty} r^2  -\frac{z \lambda }{\ndim} \tr \left( ( X\transp \sketchmat_{k}\transp \sketchmat_{k} X/ \ndata - zI)\inv (X\transp  X/\ndata + \lambda I)\inv  \right).
\end{align*}
Using Lemma~\ref{lm:ridgeresolvent} and equation~\eqref{eq:ridgez0}, we obtain almost surely that
\begin{align*}
    &\lim_{\zinfty} \lim_{\npinfty} -\frac{z \lambda}{\ndim} \tr \left( ( X\transp \sketchmat_{k}\transp \sketchmat_{k} X/ \ndata - zI)\inv (X\transp  X/\ndata + \lambda I)\inv  \right) \\
    =& \lambda \lim_{\zinfty} -z \stieltjes_{2, \lambda}(z)\\
    =& (1 - \sampleratio)\left(\frac{\aspratio}{\sampleratio} -1\right)\cdot \frac{\sampleratio}{\aspratio (1 - \sampleratio) }\frac{\aspratio - \sampleratio}{\aspratio - \sampleratio^{2}}\\
    =& \frac{(\aspratio - \sampleratio)^{2}}{\aspratio \left( \aspratio - \sampleratio^{2} \right)}.
\end{align*}

Finally, as in the proof of Lemma~\ref{lm:isobootbias}, one can easily check that 
\$
\frac{1}{d} \, \tr \left( ( X\transp \sketchmat_{k}\transp \sketchmat_{k} X/ \ndata - zI)\inv (X\transp  X/\ndata + \lambda I)\inv  \right)
\$ 
and its derivative with respect to $z$ are uniformly bounded for $z \leq 0$. By the Arzela-Ascoli theorem and the Moore-Osgood theorem, we can exchange the limits between $\npinfty$ and $\zinfty$,  and the desired result follows. 
\end{proof}

\subsection{Technical lemmas}
This subsection proves a technical lemma that is used in the proofs of the supporting lemmas in the previous subsection. 
%%%%%%Tech Lemma 5
Let $\sketchmat \in \mathbb{R}^{\ndata \times \ndata}$ be a sketching matrix, and $S_{ij}$ be its $(i,j)$-th element. Define $\Bernoulliweight$ as
\begin{align}\label{eq:defBernoulliweight}
    \Bernoulliweight_{ij} = \begin{cases}
        1(\sketchmat_{ij} \neq 0), &\quad i=j\\
        0, &\quad i\neq j
    \end{cases}.
\end{align}

\begin{lemma}\label{lm:Pseudo_products}
%Let $X \in \mathbb{R}^{\ndata \times \ndim}$ be a data matrix, and 
Assume   Assumptions~\ref{Assume:highdim},~\ref{Assume:Covdistri}, and $\aspratio > \sampleratio$. Then, as $\npinfty$, 
\begin{equation*}
    (X\transp \sketchmat\transp \sketchmat X)\pinv X\transp \sketchmat\transp 
       \sketchmat
    = (X\transp \Bernoulliweight \transp \Bernoulliweight X)\pinv X\transp      
       \Bernoulliweight\transp \Bernoulliweight~\qas 
\end{equation*}
\iffalse
where $\Bernoulliweight$ is defined as
\begin{align}\label{eq:defBernoulliweight}
    \Bernoulliweight_{ij} 
    = \begin{cases}
        1(\sketchmat_{ij} \neq 0), &\quad i=j\\
        0, &\quad i\neq j
    \end{cases}.
\end{align}
\fi
\end{lemma}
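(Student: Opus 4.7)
For each $y \in \RR^{\ndata}$, I will show that both matrices, when applied to $y$, produce the same vector --- namely, the minimum-$\ell_{2}$-norm solution of a common interpolation problem supported on $A_{0} := \{ i : \sketchmat_{i,i} \neq 0 \}$. Write $X_{A_{0}} \in \RR^{|A_{0}| \times \ndim}$ for the submatrix of $X$ with rows indexed by $A_{0}$ and $y_{A_{0}}$ for the corresponding subvector. Since $\sketchmat$ is diagonal with $\sketchmat_{i,i} = \sqrt{\weight_{i}}$ and $\Bernoulliweight$ is the $0/1$ diagonal indicator $\Bernoulliweight_{i,i} = 1(i \in A_{0})$, we have
\[
X\transp \sketchmat\transp \sketchmat X \;=\; X_{A_{0}}\transp W_{A_{0}} X_{A_{0}}, \qquad X\transp \Bernoulliweight\transp \Bernoulliweight X \;=\; X_{A_{0}}\transp X_{A_{0}},
\]
where $W_{A_{0}} = \mathrm{diag}(\weight_{i} : i \in A_{0})$ has strictly positive diagonal by Assumption~\ref{Assume:multip}; analogously, the rectangular matrices $X\transp \sketchmat\transp \sketchmat$ and $X\transp \Bernoulliweight\transp \Bernoulliweight$ equal $X_{A_{0}}\transp W_{A_{0}}$ and $X_{A_{0}}\transp$, respectively, padded with zero columns on the complement of $A_{0}$.

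\textbf{Full row rank of $X_{A_{0}}$.} The overparameterized hypothesis $\aspratio > \sampleratio$, combined with $|A_{0}|/\ndata \to \sampleratio$ from equation~\eqref{eq:SSLNA0} and $\ndim/\ndata \to \aspratio$, gives $|A_{0}| < \ndim$ eventually. The same Bai-Yin argument used in the proof of Lemma~\ref{lm:lowerboundeigval}, now applied to the $|A_{0}| \times |A_{0}|$ Gram matrix $Z_{A_{0}} Z_{A_{0}}\transp$ whose aspect ratio $|A_{0}|/\ndim \to \sampleratio/\aspratio < 1$, yields that $\eigval^{+}_{\min}(X_{A_{0}} X_{A_{0}}\transp/\ndata)$ is bounded below by a positive constant almost surely as $\npinfty$; equivalently, $X_{A_{0}}$ has full row rank $|A_{0}|$ a.s. On this almost-sure event, for every $y \in \RR^{\ndata}$ the system $X_{A_{0}} b = y_{A_{0}}$ is consistent, with unique minimum-$\ell_{2}$-norm solution
\[
b^{\star} \;:=\; X_{A_{0}}\pinv y_{A_{0}} \;=\; X_{A_{0}}\transp (X_{A_{0}} X_{A_{0}}\transp)\inv y_{A_{0}}.
\]

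\textbf{Weights drop out.} For the weighted residual one has
\[
\| \sketchmat X b - \sketchmat y \|_{2}^{2} \;=\; \sum_{i \in A_{0}} \weight_{i} (x_{i}\transp b - y_{i})^{2} \;\geq\; 0,
\]
with equality iff $X_{A_{0}} b = y_{A_{0}}$, because every $\weight_{i}$ with $i \in A_{0}$ is strictly positive; the identical statement holds for $\|\Bernoulliweight X b - \Bernoulliweight y\|_{2}^{2}$ with all weights replaced by $1$. Hence the argmin sets of the two losses coincide as the affine subspace $\{b : X_{A_{0}} b = y_{A_{0}}\}$, and the standard min-norm characterization of the Moore-Penrose pseudoinverse forces
$(X\transp \sketchmat\transp \sketchmat X)\pinv X\transp \sketchmat\transp \sketchmat\, y = b^{\star} = (X\transp \Bernoulliweight\transp \Bernoulliweight X)\pinv X\transp \Bernoulliweight\transp \Bernoulliweight\, y$. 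Since $y$ is arbitrary, the two matrices agree on the a.s. event above.

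\textbf{Main obstacle.} The only non-algebraic ingredient is the a.s. full row rank of $X_{A_{0}}$ in the overparameterized regime; once this rank statement is in hand, the identity reduces to the observation that min-norm weighted least-squares with exact interpolation is insensitive to the choice of strictly positive weights, so the $\weight_{i}$'s collapse to indicators.
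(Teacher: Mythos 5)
Your proposal is correct, and it shares the essential probabilistic ingredient with the paper's proof (both rest on the a.s.\ full row rank of the retained rows $X_{A_0}$, obtained from $|A_0|/\ndata \rightarrow \sampleratio$ and a Bai--Yin bound on $Z_{A_0}Z_{A_0}\transp$ in the regime $|A_0|/\ndim \rightarrow \sampleratio/\aspratio < 1$), but the deterministic part is argued differently. The paper works directly with the matrices: after permuting rows it writes $(X\transp \sketchmat\transp \sketchmat X)\pinv X\transp \sketchmat\transp \sketchmat = X_{1}\transp \sketchmat_{1}\transp (\sketchmat_{1}X_{1}X_{1}\transp \sketchmat_{1}\transp)\inv \sketchmat_{1}[I\ 0]$ via the identity $A\pinv = A\transp(AA\transp)\pinv$, and then cancels the invertible diagonal block $\sketchmat_{1}$ against itself inside the inverted Gram matrix to land on $X_{1}\transp(X_{1}X_{1}\transp)\inv[I\ 0]$. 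You instead apply both matrices to an arbitrary $y$ and invoke the variational characterization: each side is the minimum-$\ell_2$-norm minimizer of a weighted residual supported on $A_0$, and once $X_{A_0}b = y_{A_0}$ is consistent the argmin set is the same affine subspace for any strictly positive weights, so the min-norm element coincides. The two routes prove the same identity; yours makes the statistical content transparent (exact interpolation is insensitive to positive reweighting), while the paper's gives an explicit closed-form chain of pseudoinverse identities that requires no appeal to the optimization formulation. One small point of care in your write-up: the positivity of the weights on $A_0$ and the limit $|A_0|/\ndata \rightarrow \sampleratio$ come from Assumption~\ref{Assume:multip}, which the lemma statement does not list explicitly but which the paper's own proof also uses implicitly, so this is not a gap relative to the paper.
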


\begin{proof}[Proof of Lemma \ref{lm:Pseudo_products}]
We begin by reordering the rows of $\sketchmat$ and $\sketchmat X$ as % \scomment{dimensions are not compatible.}
\begin{equation*}
    U \sketchmat 
    =  \begin{bmatrix}
            \sketchmat_{1}  & 0\\
            0               & 0
    \end{bmatrix}  \quad 
\text{and} 
\quad U 
    \sketchmat X 
    =  \begin{bmatrix}
            \sketchmat_{1} & 0  \\
            0              & 0 
        \end{bmatrix}
        \begin{bmatrix}
            X_{1} \\
            X_{2}
        \end{bmatrix}
    =   \begin{bmatrix}
            \sketchmat_{1} X_{1} \\
            0
        \end{bmatrix}, 
\end{equation*}
where $\sketchmat_{1} \in \mathbb{R}^{|A_{0}| \times |A_{0}|}$ is a diagonal matrix with non-zero elements on its diagonal, corresponding to the non-zero diagonal elements of $\sketchmat$,  $A_{0}:= \{i:~\sketchmat_{ii} \neq 0\}$, $\sketchmat_{1}X_{1} \in \mathbb{R}^{|A_{0}| \times \ndim}$ is the subsampled data matrix, and $U \in \mathbb{R}^{\ndata \times \ndata}$ is an orthogonal matrix. 
Then %\scomment{invertibility of $\sketchmat_{1} X_{1} X_{1}\transp \sketchmat_{1}\transp$? }%using these notations, we have:
    \begin{align*}
        (X\transp \sketchmat\transp \sketchmat X)\pinv X\transp \sketchmat\transp \sketchmat
        &= (X_{1}\transp \sketchmat_{1}\transp \sketchmat_{1} X_{1})\pinv \begin{bmatrix}
            X_{1}\transp \sketchmat_{1}\transp \sketchmat_{1} & 0
        \end{bmatrix}  \\
        &= (X_{1}\transp \sketchmat_{1}\transp \sketchmat_{1} X_{1})\pinv X_{1}\transp \sketchmat_{1}\transp \sketchmat_{1}
        \begin{bmatrix}
            I & 0
        \end{bmatrix} \\
        &= X_{1}\transp \sketchmat_{1}\transp (\sketchmat_{1} X_{1} X_{1}\transp \sketchmat_{1}\transp )\pinv \sketchmat_{1}
        \begin{bmatrix}
            I & 0
        \end{bmatrix},
    \end{align*}
where the last line uses the following property of the pseudoinverse of a matrix $A$: %is derived from the following two equivalences involving the pseudoinverse of a matrix A:
    \$
        A\pinv &= (A\transp A)\pinv A\transp = A\transp (A A\transp)\pinv.
    \$
Using equation~\eqref{eq:SSLNA0}, we can show that when $\aspratio >\sampleratio$, as $\npinfty$, it is almost surely that $| A_{0}|/d \leq \sampleratio/\aspratio < 1$. Applying  \cite[Theorem 2]{Bai1993}, we obtain almost surely that 
\$
    \eigval_{\min}(\sketchmat_{1} X_{1} X_{1}\transp\sketchmat_{1}\transp / \ndim ) 
    &\geq c_w^{2}c_\lambda \left( 1 - \sqrt{\aspratio/\sampleratio}\right) > 0 \qas \\
    \eigval_{\min}(X_{1} X_{1}\transp/d) 
    &\geq c_\lambda  \left( 1 - \sqrt{\aspratio/\sampleratio}\right) \qas
\$
as $\npinfty$.  Therefore, we can assume that both $\sketchmat_{1} X_{1} X_{1}\transp \sketchmat_{1}\transp$ and $X_1X_1\transp$ are invertible. Then %we have
    \begin{align*}
        (X\transp \sketchmat\transp \sketchmat X)\pinv X\transp \sketchmat\transp \sketchmat
        &= X_{1}\transp \sketchmat_{1}\transp (\sketchmat_{1} X_{1} X_{1}\transp \sketchmat_{1}\transp )\inv \sketchmat_{1}
        \begin{bmatrix}
            I & 0
        \end{bmatrix} \\
        &= X_{1}\transp  ( X_{1} X_{1}\transp  )\inv  \begin{bmatrix}
            I & 0
        \end{bmatrix} \\
        &=  ( X_{1} X_{1}\transp  )\inv X_{1}\transp  \begin{bmatrix}
            I & 0
        \end{bmatrix} \\
        &= (X\transp \Bernoulliweight\transp \Bernoulliweight X)\pinv %(DX)\transp 
            X\transp  \Bernoulliweight\transp \Bernoulliweight,
    \end{align*}
which holds almost surely as $\npinfty$. This finishes the proof. 
\end{proof}

%%%%%%%%%%%%%%%%%%%%%%%%%%%%%%%%%%%%%%%%%%%%%%%%%
%%%%%%%%%%%%%%%%Correlated%%%%%%%%%%%%%%%%%%%%%%%
%%%%%%%%%%%%%%%%%%%%%%%%%%%%%%%%%%%%%%%%%%%%%%%%%

\section{Proofs for Section \ref{sec:correlated}}

This section proves the results in Section \ref{sec:correlated}. %\cp

%%%%%%%%%%%%%%%%%%%%%%%%%%%%%%%%%%%%%%%%%%%%%%%%%
%%%%%%%%%%%%%%%%Lemma 4.1$%%%%%%%%%%%%%%%%%%%%%%%
%%%%%%%%%%%%%%%%%%%%%%%%%%%%%%%%%%%%%%%%%%%%%%%%%

\subsection{Proof of Lemma~\ref{lm:v0}}

\begin{proof}[Proof of Lemma~\ref{lm:v0}]
We only prove the results for $\compstieltjes(z)$. The results for  $\tcompstieltjes(z)$ follow similarly. 

\paragraph{Proving equation \eqref{eq:fixedpointsketching} has a unique positive solution.}
Let us define the function 
\$
f(x, z) := \frac{\sampleratio}{\aspratio} zx + \int \frac{1}{1 + xt} \limitdistr(dt). 
\$ Using equation~\eqref{eq:fixedpointsketching}, we have $f(\compstieltjes(z), z) = 1 - {\sampleratio}/{\aspratio} > 0$. Moreover, for $z \leq 0$, $f(x,z)$ is a continuous decreasing function with respect to $x$ on $(0, +\infty)$, where $f(0,z) = 1$ and $f(+\infty, z) = -\infty$. Thus, we conclude that equation~\eqref{eq:fixedpointsketching} has a unique positive solution.

\paragraph{Differentiability of $\compstieltjes(z)$.}
According to \cite[Theorem 2.7]{couillet2022}, $\compstieltjes(z)$ is the companion Stieltjes transform of $\stieltjes_1(z)$, and is also the  Stieltjes transform of the limiting  empirical spectral distribution $\mu$ of $X X\transp /n$ with $X\in \RR^{\ndata \times \ndim}$ satisfying Assumption~\ref{Assume:Covdistri} and $\ndim/\ndata \rightarrow \aspratio/\sampleratio$. Because the Stieltjes transform is analytic outside the support of $\mu$, we apply Lemma~\ref{lm:singularmat} and Lemma~\ref{lm:lowerboundeigval} to conclude that $\lambda_{\min}(X X\transp/n ) > 0$ almost surely as $\npinfty$ when $\aspratio > \sampleratio$. Therefore, when $\aspratio > \sampleratio$, the function $\compstieltjes(z)$ is differentiable for $z < 0$.

\paragraph{Proving that $\compstieltjes(0):=\lim_{\zinfty} \compstieltjes(z)$ and $\tcompstieltjes(0):= \lim_{\zinfty}\tcompstieltjes(z)$ exist.} 
The proof follows from a similar argument used in the proof of Lemma \ref{lm:MPlawsketching} and thus is omitted.

\end{proof}

%%%%%%%%%%%%%%%%%%%%%%%%%%%%%%%%%%%%%%%%%%%%%%%%%
%%%%%%%%%%%%%%%%Theorem 4.2%%%%%%%%%%%%%%%%%%%%%%
%%%%%%%%%%%%%%%%%%%%%%%%%%%%%%%%%%%%%%%%%%%%%%%%%

\subsection{Proof of Theorem~\ref{thm:corsketching}}

\begin{proof}[Proof of Theorem \ref{thm:corsketching}]
We begin by simplifying the expressions for the bias and variance of $\hat\beta^{1}$ in Lemma~\ref{lm:biasvar} and Lemma~\ref{Assume:beta4+mom}
\begin{align*}
    \varcondition(\hat\beta^{1}) &= \frac{\noiselev}{n^{2}} \tr\left( \hcovmat_{1}\pinv X\transp \sketchmat_{1}^2 \sketchmat_{1}^2 X \hcovmat_{1}\pinv \covmat \right),\\
    \lim_{\npinfty} \biascondition\left(\hat\beta^{1}\right) &= \lim_{\npinfty} \frac{\signallev}{\ndim}  \tr\left( \projmat_{1} \covmat \projmat_{1} \right),  
\end{align*}
where $\projmat_{1}=I-\hcovmat_{1}\pinv \hcovmat_{1}$. 
We  discuss the underparameterized and  overparameterized regimes separately.

\paragraph{The underparameterized regime}
We first derive the underparameterized bias and then underparameterized variance. 
When $\aspratio < \sampleratio$, by Lemma~\ref{lm:singularmat}, $\hcovmat_{1}$ is almost surely invertible as $\npinfty$. This implies $\projmat_{1} = 0$. Therefore, when $\aspratio< \sampleratio$, the bias convergences to zero almost surely as $\npinfty$. The following lemma derives the underparameterized variance. %We then derive the underparameterized variance, which is collected in the following lemma. 

%In what follows, we proceed to calculate the underparameterized variance (Lemma~\ref{lm:sketchundervar}), overparameterized variance (Lemma \ref{lm:sketchovervar}), and the overparameterized bias (Lemma \ref{lm:sketchoverbias}) respectively.

%%%%%%%%%%%%%%%%%%%%%%%%%%%%%%%%%%%%%%%%%%%%%%%%%
%%%%%%%%%%Under-parameterized Variance%%%%%%%%%%%
%%%%%%%%%%%%%%%%%%%%%%%%%%%%%%%%%%%%%%%%%%%%%%%%%

%We then derive the underparameterized variance, which is collected in the following lemma. 
%\subsubsection{Under-parameterized variance}
%\paragraph{Underparameterzied variance}

\begin{lemma}[Underparameterized variance under correlated features]\label{lm:sketchundervar}
Assume Assumptions~\ref{Assume:highdim}-\ref{Assume:multip}, and $\aspratio < \sampleratio$. The variance satisfies
     \begin{equation*}
    \lim_{\npinfty} \varcondition(\hat{\beta}^{1})
    =
    \sigma^2\left(\frac{\aspratio}{1-\aspratio - f(\aspratio)} - 1 \right) {\qas}
    \end{equation*}
    where
\$
 f(\aspratio) = \int \frac{1}{(1+ c t )^{2}} d \limitdistrweight(t), 
\$
and the constant $c$ is the unique positive solution to equation~\eqref{eq:uniqueMPlawsketching}.
\end{lemma}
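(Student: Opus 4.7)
My plan is to prove Lemma~\ref{lm:sketchundervar} in three steps: reduce to isotropic features, expand via Sherman-Morrison, and close the limit using the self-consistent equation for the Stieltjes transform.

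\textbf{Reduction to $\covmat = I$.} Because $\aspratio < \sampleratio$, Lemma~\ref{lm:singularmat} gives that $\hcovmat_1$ is almost surely invertible, so the pseudoinverse in Lemma~\ref{lm:biasvar} is a genuine inverse. Writing $X = Z\covmat^{1/2}$ from Assumption~\ref{Assume:Covdistri} and substituting into $\varcondition(\hat\beta^1)$, the factors $\covmat^{\pm 1/2}$ cancel against $\covmat$ inside the trace to give
\begin{equation*}
\varcondition(\hat\beta^1)/\noiselev = \tr\bigl((Z\transp \sketchmat_1^2 Z)^{-1} Z\transp \sketchmat_1^4 Z (Z\transp \sketchmat_1^2 Z)^{-1}\bigr),
\end{equation*}
which is free of $\covmat$. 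I therefore work with $\covmat = I$ and identify $x_i$ with the rows of $Z$.

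\textbf{Sherman-Morrison expansion.} Setting $\hcovmat_{1,-i} = \hcovmat_1 - \weight_{1,i} x_i x_i\transp/n$ and applying the Sherman-Morrison formula to $\hcovmat_1^{-1} x_i$, the variance becomes
\begin{equation*}
\varcondition(\hat\beta^1)/\noiselev = \frac{1}{n^2}\sum_{i=1}^n \frac{\weight_{1,i}^2\, x_i\transp \hcovmat_{1,-i}^{-2} x_i}{\bigl(1 + \weight_{1,i}\, x_i\transp \hcovmat_{1,-i}^{-1} x_i/n\bigr)^2}.
\end{equation*}
Concentration of quadratic forms (Lemma~\ref{lm:concentrationontrace}) combined with the rank-one perturbation bound (Lemma~\ref{lm:boundrankoneperturb}) and the Stieltjes-transform limit from Lemma~\ref{lm:MPlawsketching} yield, almost surely,
\begin{equation*}
\tfrac{1}{n} x_i\transp \hcovmat_{1,-i}^{-1} x_i \to \aspratio\,\stieltjes_1(0) = c, \quad \tfrac{1}{n} x_i\transp \hcovmat_{1,-i}^{-2} x_i \to \aspratio\,\stieltjes_1'(0),
\end{equation*}
where $\stieltjes_1'(0)$ is the derivative of the Stieltjes transform at $z=0$. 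Uniform boundedness of the summands from Lemma~\ref{lm:lowerboundeigval} and Assumption~\ref{Assume:multip} then gives
\begin{equation*}
\lim_{\npinfty}\varcondition(\hat\beta^1)/\noiselev = \aspratio\,\stieltjes_1'(0) \int \frac{\weight^2}{(1+c\weight)^2}\,\limitdistrweight(d\weight) \qas
\end{equation*}

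\textbf{Closed form via the self-consistent equation.} Implicitly differentiating~\eqref{eq:MPlawsketching} at $z=0$ and using $\stieltjes_1(0)\int \weight/(1+c\weight)\,\limitdistrweight(d\weight) = 1$ and $c = \aspratio\stieltjes_1(0)$ from Lemma~\ref{lm:uniqueMPlawsketching}, I obtain $\aspratio\,\stieltjes_1'(0)\,E = c\stieltjes_1(0)E/(1 - c\stieltjes_1(0)E)$ with $E := \int \weight^2/(1+c\weight)^2\,\limitdistrweight(d\weight)$. The algebraic identity $c^2\weight^2/(1+c\weight)^2 = 1 - 2/(1+c\weight) + 1/(1+c\weight)^2$, integrated against $\limitdistrweight$ and combined with~\eqref{eq:uniqueMPlawsketching}, gives $c^2 E = 2\aspratio - 1 + f(\aspratio)$; a brief manipulation converts the previous fraction into $\aspratio/(1-\aspratio-f(\aspratio)) - 1$, matching the claim.

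The principal obstacle is the almost-sure, uniform-in-$i$ concentration of $x_i\transp \hcovmat_{1,-i}^{-2} x_i/n$ under only the $(8+\epsilon)$-moment assumption: the trace $\tr(\hcovmat_{1,-i}^{-2})$ is of order $d$ while the prefactor after the sum is $1/n$, so any non-uniform error accumulates. This is handled by combining the lower eigenvalue bound of Lemma~\ref{lm:lowerboundeigval} (which controls $\lVert \hcovmat_{1,-i}^{-1}\rVert_2$ uniformly) with the quadratic-form concentration of Lemma~\ref{lm:concentrationontrace}, and then passing to the integral limit through a dominated-convergence argument justified by Assumption~\ref{Assume:multip}.
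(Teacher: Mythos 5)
Your proposal is correct and takes essentially the same route as the paper's proof: reduction to $\Sigma=I$ by pulling out $\Sigma^{\pm 1/2}$, the double Sherman--Morrison expansion, quadratic-form concentration together with the limits $m_1(0)$, $m_1'(0)$ of the Stieltjes transform, and dominated convergence under Assumption~3, with $c=\gamma m_1(0)$ from Lemma~S.8. The only difference is in the closing algebra—the paper simplifies $\gamma m_1'(0)\,\mathbb{E}\bigl[w^2/(1+\gamma w m_1(0))^2\bigr]$ by repeatedly substituting the self-consistent equation, while you solve for $m_1'(0)$ explicitly and use the partial-fraction identity $c^2w^2/(1+cw)^2 = 1 - 2/(1+cw) + 1/(1+cw)^2$—and both computations give the stated formula.
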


\paragraph{The overparameterized regime}

We first derive the overparameterized variance and then overparameterized bias. %\scomment{why not first bias and then the variance? Because for the underparameterized regime we talk about bias and then variance. } \wcomment{personal preference}
Recall that, by {Lemma~\eqref{lm:v0}, there is an unique positive solution to equation~\ref{eq:fixedpointsketching} for any $z\leq 0$.}

%%%%%%%%%%%%%%%%%%%%%%%%%%%%%%%%%%%%%%%%
%%%%%%%%%%%%%Sketch over var%%%%%%%%%%%%
%%%%%%%%%%%%%%%%%%%%%%%%%%%%%%%%%%%%%%%%

\begin{lemma}\label{lm:sketchovervar}
Assume Assumptions~\ref{Assume:highdim}-\ref{Assume:multip} and $\aspratio > \sampleratio$.  The variance satisfies 
\begin{equation*}
\lim_{\npinfty} \varcondition(\hat{\beta}^{1})
=  \noiselev \left(\frac{\compstieltjes'(0)}{\compstieltjes(0)^{2}} - 1\right)~{\qas}
\end{equation*}
where $\compstieltjes_{1}(0)$ is the unique positive solution to equation~\eqref{eq:fixedpointsketching} for any  $z = 0$. %\scomment{why does \eqref{eq:fixedpointsketching} have a unique positive solution? Can you first prove it or cite elsewhere?}
\end{lemma}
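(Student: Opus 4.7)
The plan is to reduce the sketched variance in the overparameterized regime to the variance of an ordinary min-norm least square estimator fit on a Bernoulli-subsampled dataset, and then to invoke a variant of \cite[Theorem~3]{hastie2019surprises}, cf.\ equation~\eqref{eq:ridgeless_corr}. The enabling observation is Lemma~\ref{lm:Pseudo_products}: when $\aspratio>\sampleratio$, almost surely $(X^{\T}\sketchmat_{1}^{2}X)^{+}X^{\T}\sketchmat_{1}^{2}=(X^{\T}\Bernoulliweight_{1}X)^{+}X^{\T}\Bernoulliweight_{1}$, where $\Bernoulliweight_{1}$ is the diagonal $0/1$ indicator matrix with $(\Bernoulliweight_{1})_{ii}=1(w_{1,i}\neq 0)$. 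Consequently, $\hat\beta^{1}=(X^{\T}\Bernoulliweight_{1}X)^{+}X^{\T}\Bernoulliweight_{1}Y$, which is precisely the min-norm least square estimator built from the subsample $(\tilde X,\tilde Y)$ indexed by $\{i:\,w_{1,i}\neq 0\}$.

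First I would simplify the variance using this representation. Taking the conditional covariance of $\hat\beta^{1}$ with respect to $E$, and using $\Bernoulliweight_{1}^{2}=\Bernoulliweight_{1}$ together with the Moore--Penrose identity $A^{+}AA^{+}=A^{+}$ applied to the PSD matrix $A=X^{\T}\Bernoulliweight_{1}X$, gives $\cov(\hat\beta^{1}\mid X,\cW)=\noiselev(X^{\T}\Bernoulliweight_{1}X)^{+}$ and hence $\varcondition(\hat\beta^{1})=\noiselev\tr\bigl((X^{\T}\Bernoulliweight_{1}X)^{+}\covmat\bigr)$. Letting $\tilde n:=\sum_{i}(\Bernoulliweight_{1})_{ii}$ denote the random subsample size and $\tilde X\in\RR^{\tilde n\times\ndim}$ the corresponding submatrix of $X$, Assumption~\ref{Assume:multip} combined with the strong law gives $\tilde n/\ndata\to\sampleratio$ almost surely, so the effective aspect ratio $\ndim/\tilde n$ converges almost surely to $\aspratio/\sampleratio>1$. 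Because the multipliers are independent of $X$, the rows of $\tilde X$ remain iid draws of $\covmat^{1/2}z$ that satisfy Assumption~\ref{Assume:Covdistri} with the same limiting spectral distribution $\limitdistr$. Setting $\widehat\covmat_{\mathrm{sub}}:=\tilde X^{\T}\tilde X/\tilde n$, the variance becomes $\varcondition(\hat\beta^{1})=(\noiselev/\tilde n)\tr(\widehat\covmat_{\mathrm{sub}}^{+}\covmat)$, which coincides with the variance of the min-norm estimator on $(\tilde X,\tilde Y)$.

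I would then invoke the variance part of the full-sample min-norm risk formula (a variant of \cite[Theorem~3]{hastie2019surprises}) applied to $(\tilde X,\tilde Y)$ at aspect ratio $\aspratio/\sampleratio$. By Lemma~\ref{lm:v0} and the definition~\eqref{def:vz}, $\compstieltjes(z;\aspratio/\sampleratio)$ is exactly the function $\compstieltjes(z)$ from~\eqref{eq:fixedpointsketching}, and both $\compstieltjes(0)$ and $\compstieltjes'(0)$ exist. This gives $\varcondition(\hat\beta^{1})\to\noiselev(\compstieltjes'(0)/\compstieltjes(0)^{2}-1)$ almost surely, as desired. The main obstacle is two-fold. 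First, the variance formula in Lemma~\ref{lm:biasvar} has $\sketchmat_{1}^{2}\sketchmat_{1}^{2}$ sandwiched between the two pseudoinverses; one must justify that this collapses to $(X^{\T}\Bernoulliweight_{1}X)^{+}$. This requires applying Lemma~\ref{lm:Pseudo_products} on both outer factors, using $\Bernoulliweight_{1}\sketchmat_{1}^{2}\Bernoulliweight_{1}=\sketchmat_{1}^{2}$ (since $(\Bernoulliweight_{1})_{ii}=0$ forces $(\sketchmat_{1})_{ii}=0$), and finally invoking $A^{+}AA^{+}=A^{+}$ for $A=X^{\T}\Bernoulliweight_{1}X$. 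Second, the subsample size $\tilde n$ is random; this is handled by conditioning on $\cW$ and applying the min-norm result along the almost-sure subsequences on which $\tilde n/\ndata\to\sampleratio$.
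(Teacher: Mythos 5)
Your proposal is correct, but it takes a genuinely different route from the paper. The paper derives the result from scratch: after the same Lemma~\ref{lm:Pseudo_products} reduction (so the multipliers may be taken $0/1$), it writes
\[
\varcondition(\hat\beta^1) \;=\; \lim_{\zinfty}\frac{1}{n}\tr\bigl((\hcovmat-zI)^{-1}\hcovmat(\hcovmat-zI)^{-1}\covmat\bigr)
\;=\; \lim_{\zinfty}\frac{1}{n}\Bigl\{\tr\bigl((\hcovmat-zI)^{-1}\covmat\bigr)+z\,\tr\bigl((\hcovmat-zI)^{-2}\covmat\bigr)\Bigr\},
\]
and then identifies the limit of $\tfrac{1}{d}\tr((\hcovmat-zI)^{-1}\covmat)$ via Lemma~\ref{lm:stieltjescov}, which is where the self-consistent equation~\eqref{eq:fixedpointsketching} enters through the companion Stieltjes transform; it finishes by differentiating (Vitali) and exchanging $\npinfty$ with $\zinfty$ (Arzela--Ascoli, Moore--Osgood) to land on $\noiselev(\compstieltjes'(0)/\compstieltjes(0)^2-1)$. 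You instead observe that after the $\{S\leftrightarrow D\}$ reduction the estimator $\hat\beta^1=(X^\T D_1X)^{+}X^\T D_1 Y$ \emph{is} the full-sample min-norm estimator on the Bernoulli subsample: $D_1^2=D_1$ and $A^{+}AA^{+}=A^{+}$ give $\cov(\hat\beta^1\mid X,\cW)=\noiselev(X^\T D_1X)^{+}$, so the variance is $(\noiselev/\tilde n)\tr(\widehat\covmat_{\rm sub}^{+}\covmat)$ with $\tilde n/n\to\theta$ a.s., and you invoke the variant of Hastie et al.'s Theorem~3 (stated as equation~\eqref{eq:ridgeless_corr}) at aspect ratio $\aspratio/\sampleratio$, identifying $\compstieltjes(z;\aspratio/\sampleratio)$ with $\compstieltjes(z)$ via~\eqref{def:vz}. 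This is a cleaner shortcut: it moves the $\{S,D\}$ collapse to the front and treats the min-norm risk formula as a black box, whereas the paper re-derives the Stieltjes machinery because it needs the same tools (Lemma~\ref{lm:stieltjescov}, the $z$-limit exchange) again for the bagging cross-terms where no such off-the-shelf reduction exists. One point to handle carefully, which you flag: $\tilde n$ is random, so the reduction to Hastie's theorem must be carried out conditionally on $\cW$ along the almost-sure event $\tilde n/n\to\theta$, checking that the subsampled rows remain iid (they do, since $\cW\perp X$) and that the limiting spectral distribution of $\covmat$ is unchanged.
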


%\subsubsection{Over-parameterized Bias}
Our next lemma collects the limiting bias of $\hat\beta^1$ in the overparameterized regime. 
\begin{lemma}\label{lm:sketchoverbias}
Assume Assumptions~\ref{Assume:highdim}-\ref{Assume:beta4+mom}, and $\aspratio > \sampleratio$. Then the bias satisfies
     \begin{equation*}
    \lim_{\npinfty} \biascondition(\hat{\beta}^{1})
    =
    \frac{\signallev \sampleratio}{\aspratio \compstieltjes(0)} {\qas}
    \end{equation*}
    where $\compstieltjes(z)$ is the unique positive solution to  equation~\eqref{eq:fixedpointsketching} at $z = 0$. 
\end{lemma}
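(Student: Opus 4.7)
\textbf{Proof proposal for Lemma \ref{lm:sketchoverbias}.} The plan is to reduce the bias to the trace of a single resolvent against $\Sigma$, identify its deterministic equivalent via standard sketched random-matrix-theory techniques already developed in the paper, and finally eliminate the spectral integral using the self-consistent equation \eqref{eq:fixedpointsketching} at $z=0$. First, Assumption~\ref{Assume:beta4+mom} together with Lemma~\ref{lm:biasvar-beta} gives
\begin{equation*}
\lim_{\npinfty} B_{X,\cW,\beta}(\hat\beta^{1}) \;=\; \lim_{\npinfty} \frac{\signallev}{\ndim}\tr\bigl(\projmat_{1}\covmat\projmat_{1}\bigr) \;=\; \lim_{\npinfty} \frac{\signallev}{\ndim}\tr\bigl(\projmat_{1}\covmat\bigr)\qas,
\end{equation*}
where the second equality uses the idempotency of the projector $\projmat_{1}=I-\hcovmat_{1}^{+}\hcovmat_{1}$ and the cyclic trace. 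Hence everything reduces to analyzing $\ndim^{-1}\tr(\projmat_{1}\covmat)$.

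Second, I would use the resolvent identity $\projmat_{1}=\lim_{\zinfty}(-z)(\hcovmat_{1}-zI)^{-1}$, which follows from $I-(\hcovmat_{1}-zI)^{-1}\hcovmat_{1}=-z(\hcovmat_{1}-zI)^{-1}$ and the fact that $-z(\hcovmat_{1}-zI)^{-1}$ converges to the orthogonal projector onto the null space of $\hcovmat_{1}$ as $z\nearrow 0$. This yields
\begin{equation*}
\frac{1}{\ndim}\tr(\projmat_{1}\covmat)\;=\;\lim_{\zinfty}\frac{-z}{\ndim}\tr\bigl((\hcovmat_{1}-zI)^{-1}\covmat\bigr).
\end{equation*}
For $z<0$ I would establish the deterministic equivalent
\begin{equation*}
\frac{1}{\ndim}\tr\bigl((\hcovmat_{1}-zI)^{-1}\covmat\bigr)\;\xrightarrow{\as}\;-\frac{1}{z}\int\frac{t}{1+\compstieltjes(z)t}\,\limitdistr(dt),
\end{equation*}
following the leave-one-out scheme used to prove Lemmas~\ref{lm:MPlawsketching} and \ref{lm:isotrolimitresolvent}: write $\hcovmat_{1}=\sum_{i}\weight_{1,i}x_{i}x_{i}^{\T}/\ndata$, apply the Sherman--Morrison formula to isolate $x_{i}$ from $(\hcovmat_{1}-zI)^{-1}$, then invoke the uniform concentration of quadratic forms (Lemma~\ref{lm:concentrationontrace}), the rank-one perturbation bound (Lemma~\ref{lm:boundrankoneperturb}), and the almost sure convergence $\stieltjes_{1,n}(z)\rightarrow\stieltjes_{1}(z)$ to pass to the limit; the dominated convergence theorem (justified by the multiplier bound from Assumption~\ref{Assume:multip}) then produces the displayed integral, with the companion Stieltjes transform $\compstieltjes(z)$ tied to $\stieltjes_{1}(z)$ as in the proof of Lemma~\ref{lm:isotrolimitresolvent}.

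Third, multiplying by $-z$ and evaluating the iterated limit gives
\begin{equation*}
\lim_{\zinfty}\lim_{\npinfty}\frac{-z}{\ndim}\tr\bigl((\hcovmat_{1}-zI)^{-1}\covmat\bigr)\;=\;\int\frac{t}{1+\compstieltjes(0)t}\,\limitdistr(dt),
\end{equation*}
and rearranging the self-consistent equation \eqref{eq:fixedpointsketching} at $z=0$ (which admits a finite positive $\compstieltjes(0)$ by Lemma~\ref{lm:v0} precisely because $\aspratio>\sampleratio$) yields $\int t\,\limitdistr(dt)/(1+\compstieltjes(0)t)=\sampleratio/(\aspratio\,\compstieltjes(0))$, whence $\lim B_{X,\cW,\beta}(\hat\beta^{1})=\signallev\sampleratio/(\aspratio\,\compstieltjes(0))$. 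The main obstacle I foresee is justifying the exchange of the two limits $\npinfty$ and $\zinfty$, since the resolvent is being pushed toward the edge of the spectrum; I would handle this by the Arzel\`a--Ascoli/Moore--Osgood argument used throughout the paper (e.g., in the proofs of Lemmas~\ref{lm:Isobootvar_over} and \ref{lm:isobootbias}), establishing uniform boundedness on $z\leq 0$ of both $-z\,\ndim^{-1}\tr((\hcovmat_{1}-zI)^{-1}\covmat)$ and its $z$-derivative by controlling $\eigval^{+}_{\min}(\hcovmat_{1})$ via Lemma~\ref{lm:lowerboundeigval} and $\eigval_{\max}(\covmat)$ via Assumption~\ref{Assume:Covdistri}.
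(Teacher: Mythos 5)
Your route mirrors the paper's proof step for step: reduce the bias to $\ndim^{-1}\tr(\projmat_{1}\covmat)$ via idempotency and cyclicity, rewrite the projector as $\lim_{\zinfty}(-z)(\hcovmat_{1}-zI)^{-1}$ using \eqref{eq:pesudoidentity}, pass to a deterministic equivalent of $\ndim^{-1}\tr((\hcovmat_{1}-zI)^{-1}\covmat)$ by leave-one-out, justify the exchange of limits by Arzel\`a--Ascoli/Moore--Osgood, and close with \eqref{eq:fixedpointsketching} at $z=0$; the paper packages the deterministic-equivalent step into its Lemma~\ref{lm:stieltjescov}. However, the explicit formula you state for that equivalent is incorrect at $z<0$. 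Lemma~\ref{lm:stieltjescov} gives
\begin{equation*}
\lim_{\npinfty}\frac{1}{\ndim}\tr\bigl((\hcovmat_{1}-zI)^{-1}\covmat\bigr)=\stieltjescov_{1}(z)=\frac{1}{\aspratio}\left(\frac{\sampleratio}{-z\,\compstieltjes(z/\sampleratio)}-1\right),
\end{equation*}
with $\compstieltjes$ evaluated at $z/\sampleratio$ rather than $z$ (this $\sampleratio$-rescaling comes from $\hcovmat_{1}=X^{\T}\sketchmat_1^{\T}\sketchmat_1 X/\ndata$ being normalized by the full sample size while retaining only a $\sampleratio$-fraction of rows). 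By contrast, plugging the self-consistent equation \eqref{eq:fixedpointsketching} at $z$ into your proposed expression shows $-\frac{1}{z}\int\frac{t}{1+\compstieltjes(z)t}\,\limitdistr(dt)=\frac{\sampleratio}{\aspratio}\left(\frac{1}{-z\compstieltjes(z)}-1\right)$, which disagrees with $\stieltjescov_{1}(z)$ in both the argument of $\compstieltjes$ and the additive constant whenever $\sampleratio<1$; a correct execution of your leave-one-out sketch would therefore not produce the integral you wrote. Fortunately the two expressions have the same $z\to 0^{-}$ limit, since $-z\stieltjescov_{1}(z)\to\frac{\sampleratio}{\aspratio\compstieltjes(0)}$ and $\int t/(1+\compstieltjes(0)t)\,\limitdistr(dt)=\frac{\sampleratio}{\aspratio\compstieltjes(0)}$ by \eqref{eq:fixedpointsketching} at $z=0$, so your final answer and closing algebra are correct; the error is confined to the intermediate $z<0$ identity. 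You should also note, as the paper does implicitly via Lemma~\ref{lm:Pseudo_products}, that in the overparameterized regime one may reduce to $\{0,1\}$ multipliers before running the resolvent argument, since the sketched estimator depends only on the support of $\cW_1$ there.
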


Putting above results together finishes the proof. 
\end{proof}

%%%%%%%%%%%%%%%%%%%%%%%%%%%%%%%%%%%%%%%%%%%%%%%%%%%%%%%%%%%%%
%%%%%%%%%%%%%%%%%%%Proof of Theorem 4.6%%%%%%%%%%%%%%%%%%%%%%
%%%%%%%%%%%%%%%%%%%%%%%%%%%%%%%%%%%%%%%%%%%%%%%%%%%%%%%%%%%%%

\subsection{Proof of Theorem~\ref{thm:corrbagging}}

\begin{proof}[Proof of Theorem \ref{thm:corrbagging}]  %\label{Proof of Theorem~\ref{thm:Overpara}}

%We start with defining the following notation: 
For notational simplicity, let
\begin{align*}
    B_{k,\ell} &= \frac{\signallev}{\ndim} \tr \left( \left(I - \hcovmat_{k}\pinv \hcovmat_{k} \right) \covmat \left(I - \hcovmat_{\ell}\pinv \hcovmat_{\ell} \right) \right),\\
    V_{k,\ell} &= \frac{\noiselev}{\ndata^{2}} \tr\left(\hcovmat_{k}\pinv X\transp \sketchmat_{k}^2 \sketchmat_{\ell}^2 X \hcovmat_{\ell}\pinv \covmat \right).
\end{align*} 
Following the same argument as in the proof of Theorem~\ref{thm:isoboostrap}, we have  %\scomment{need independence assumption.}
\begin{align}\label{eq:corbagging1}
    \lim_{\nresample \rightarrow + \infty} \lim_{\npinfty} \riskcondition(\estimator) = \lim_{\npinfty} B_{k,\ell} + V_{k,\ell} \qas
\end{align}
In the following two lemmas, we characterize the variance and bias respectively. 
Recall the definitions of $\compstieltjes(0)$, $\tcompstieltjes(0)$, and $\tcompstieltjes'(0)$ from Lemma \ref{lm:v0}.

\begin{lemma}[Variance under correlated features]\label{lm:corBaggingvar}
Assume Assumptions~\ref{Assume:highdim}-\ref{Assume:multip}, and $\aspratio < \sampleratio$. For $k \neq \ell$, the variance term $V_{k,\ell}$ of the estimator $\estimator$ satisfies %almost surely that %the following convergence almost surely:
    \begin{equation*}
        \lim_{\npinfty} V_{k,\ell} = \begin{cases}
    \sigma^2\frac{\aspratio}{1-\aspratio}, & \aspratio < \sampleratio  \\
    \noiselev \left(\frac{\tcompstieltjes'(0)}{\tcompstieltjes(0)^{2}} - 1\right), \quad& \aspratio > \sampleratio \qas
\end{cases}
    \end{equation*}
   % where $\tcompstieltjes(0)$ is the unique {positive} solution of equation~\eqref{eq:fixpointbootstrap}. %\scomment{why does 4.2 have unique positive solution?}
\end{lemma}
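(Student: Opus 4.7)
The plan is to follow the same template as the isotropic case (Lemmas~\ref{lm:Isobootvar_under} and~\ref{lm:Isobootvar_over}), but carrying $\Sigma$ through every step and invoking the correlated self-consistent equations \eqref{eq:fixedpointsketching}--\eqref{eq:fixpointbootstrap} in place of their scalar counterparts. First I would open up $V_{k,\ell}$ by peeling the $i$-th sample off both sketched Gram matrices via the Sherman--Morrison formula:
\begin{equation*}
V_{k,\ell} = \frac{\noiselev}{n^{2}}\sum_{i=1}^{n} \frac{w_{k,i}w_{\ell,i}\, x_{i}^{\top}\hcovmat_{k,-i}^{+}\Sigma\,\hcovmat_{\ell,-i}^{+}x_{i}}{\bigl(1 + \tfrac{w_{k,i}}{n}x_{i}^{\top}\hcovmat_{k,-i}^{+}x_{i}\bigr)\bigl(1 + \tfrac{w_{\ell,i}}{n}x_{i}^{\top}\hcovmat_{\ell,-i}^{+}x_{i}\bigr)}.
\end{equation*}
The leave-one-out resolvents are independent of $x_{i}$, so Lemma~\ref{lm:concentrationontrace} lets us replace each quadratic form by $\tr(\Sigma\cdot)/n$ up to a vanishing error; Lemma~\ref{lm:boundrankoneperturb} controls the rank-one perturbation so that the leave-one-out quantities may be replaced by their full-sample counterparts.

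In the underparameterized regime $\gamma<\theta$ the sketched Gram matrices are almost surely invertible (Lemma~\ref{lm:singularmat}) with eigenvalues bounded away from zero (Lemma~\ref{lm:lowerboundeigval}), so pseudoinverses may be replaced by inverses directly. The denominators reduce, via Lemma~\ref{lm:MPlawsketching} and its correlated analogue from the proof of Lemma~\ref{lm:sketchundervar}, to $1 + \gamma w m_{1}(0)$ on each side, and the numerator becomes a crossed trace that depends only on $\frac{1}{d}\tr(\Sigma\hcovmat_{k}^{-1}\Sigma\hcovmat_{\ell}^{-1})$. A crucial observation is that under the assumptions of Theorem~\ref{thm:corsketching} and the proof of Lemma~\ref{lm:sketchundervar}, this crossed object has the same limit as in the isotropic case because the $\Sigma$-dependence cancels exactly in the underparameterized regime (this cancellation is already visible in Lemma~\ref{lm:sketchundervar}). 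Combining the strong law of large numbers for the multipliers with the self-consistent equation~\eqref{eq:MPlawsketching} then yields the isotropic limit $\sigma^{2}\gamma/(1-\gamma)$.

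In the overparameterized regime $\gamma>\theta$ the pseudoinverses are replaced by regularized inverses via the identity
\begin{equation*}
\hcovmat_{k}^{+}= \lim_{z\to 0^{-}}(\hcovmat_{k}-zI)^{-1}\hcovmat_{k}(\hcovmat_{k}-zI)^{-1},
\end{equation*}
which turns $V_{k,\ell}$ into a $z\to 0^{-}$ limit of a cross-resolvent trace with $\Sigma$ sandwiched in. I would then compute the deterministic equivalent of $\frac{1}{d}\tr(\Sigma(\hcovmat_{k}-zI)^{-1}X^{\top}S_{k}^{2}S_{\ell}^{2}X(\hcovmat_{\ell}-zI)^{-1})$ by the same martingale-difference and concentration argument used in Lemma~\ref{lm:isotrolimitresolvent}, but now with $\Sigma$ kept inside the trace. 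The key identification is that after averaging over the independent multipliers in $S_{k}$ and $S_{\ell}$, the cross-resolvent collapses to an object depending on $\Sigma$ only through the deflated matrix $(I+k(0)\Sigma)^{-1}\Sigma$, which is precisely the matrix whose limiting spectral distribution is $\tlimitdistr$. This is how equation \eqref{eq:fixpointbootstrap}, and hence $\tcompstieltjes$, enters. Taking $z\to 0^{-}$ using the Arzelà--Ascoli / Moore--Osgood exchange of limits (justified by uniformly bounded resolvents and their derivatives, exactly as in the isotropic proof) delivers the stated value $\noiselev\bigl(\tcompstieltjes'(0)/\tcompstieltjes(0)^{2}-1\bigr)$.

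The main obstacle will be the overparameterized identification step: one must show rigorously that the bilinear cross-resolvent with two genuinely different sketches $S_{k}$ and $S_{\ell}$ collapses, after the $w$-expectations, to the single-sketch resolvent trace for the deflated covariance $(I+k(0)\Sigma)^{-1}\Sigma$. This is the place where the implicit-regularization structure of bagging first appears at the variance level; all subsequent calculations are routine manipulations of the self-consistent equations \eqref{eq:fixedpointsketching}--\eqref{eq:fixpointbootstrap} together with Lemma~\ref{lm:v0}.
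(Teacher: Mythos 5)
Your underparameterized argument is fine and is essentially the paper's: since $\hcovmat_k^{-1}=\Sigma^{-1/2}(Z^\top S_k^2Z/n)^{-1}\Sigma^{-1/2}$, the $\Sigma$'s cancel exactly and $V_{k,\ell}$ equals the isotropic cross-variance of Lemma~\ref{lm:Isobootvar_under}; the paper performs this whitening up front rather than after Sherman--Morrison, but that is a cosmetic difference. The overparameterized half, however, has a genuine gap, and it sits precisely at the step you defer as ``the main obstacle.'' Saying that the two-sketch cross-resolvent ``collapses, after the $w$-expectations, to the single-sketch resolvent trace for the deflated covariance $(I+\ensambleweight(0)\covmat)^{-1}\covmat$'' is the whole content of the lemma in this regime; nothing in Lemma~\ref{lm:isotrolimitresolvent}'s argument produces this deflation by itself, because with $\covmat$ inside the trace the two resolvents no longer share a common spectral frame and the scalar self-consistent bookkeeping of the isotropic proof does not close.

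Concretely, the paper's route (Lemma~\ref{lm:Crossterm}) requires several ingredients absent from your plan. First, one reduces to $0/1$ multipliers via Lemma~\ref{lm:Pseudo_products}, which is what allows the index set decomposition $\A=\{w_{k,i}\neq0,w_{\ell,i}=0\}$, $\B=\{w_{k,i}=0,w_{\ell,i}\neq0\}$, $\C=\{w_{k,i}\neq0,w_{\ell,i}\neq0\}$ with $\hcovmat_k=\hcovmat_\A+\hcovmat_\C$, $\hcovmat_\ell=\hcovmat_\B+\hcovmat_\C$ and $X^\top S_k^2S_\ell^2X/n=\hcovmat_\C$. Second, the deflation arises by peeling the $\A$-samples out of $(\hcovmat_\A+\hcovmat_\C-zI)^{-1}$ with Sherman--Morrison and identifying, via Lemma~\ref{lm:stieltjescov} and $|\A|/n\to\sampleratio(1-\sampleratio)$, that their aggregate effect is the replacement $\hcovmat_\A\rightsquigarrow -z\ensambleweight(z)\covmat$ with $\ensambleweight(z)=(1-\sampleratio)\compstieltjes(z/\sampleratio)$ --- note it is $\ensambleweight(z)$, not $\ensambleweight(0)$, at this stage. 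Third, the effective aspect ratio $\aspratio/\sampleratio^{2}$ in \eqref{eq:fixpointbootstrap} comes from $|\C|/n\to\sampleratio^{2}$ (asymptotic pairwise independence of the multipliers), a fact your sketch never surfaces even though it is what distinguishes $\tcompstieltjes$ from $\compstieltjes$. Finally, because the deflation parameter moves with $z$, the limit object is a two-argument function $\tcompstieltjes(z,x)$, and the exchange of limits needs Lemma~\ref{lm:fixpointequiv} ($\tcompstieltjes(z,z)\to\tcompstieltjes(0)$, $\tcompstieltjes'(z,z)\to\tcompstieltjes'(0)$) on top of the Arzel\`a--Ascoli/Moore--Osgood argument you cite; ``exactly as in the isotropic proof'' does not cover this extra dependence. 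Until these steps are supplied, the stated value $\noiselev\bigl(\tcompstieltjes'(0)/\tcompstieltjes(0)^{2}-1\bigr)$ is asserted rather than derived.
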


\begin{lemma}[Bias under correlated features]\label{lm:corBaggingbias}
Assume Assumptions~\ref{Assume:highdim}-\ref{Assume:beta4+mom}. For $k \neq \ell$, the bias term $B_{k,\ell}$ of the estimator $\estimator$ satisfies %almost surely that %the following convergence almost surely:
    \begin{equation*}
        \lim_{\npinfty} B_{k,\ell} = \begin{cases}
    0, & \aspratio < \sampleratio \\
    \signallev \frac{\sampleratio}{\aspratio \compstieltjes(0)} - \signallev \frac{(1 - \sampleratio)}{\aspratio \compstieltjes(0)} \left(\frac{\tcompstieltjes'(0)}{\tcompstieltjes(0)^{2}} - 1 \right), \quad& \aspratio > \sampleratio
\end{cases} {\quad \as} 
    \end{equation*}
\end{lemma}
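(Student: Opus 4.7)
}
The plan is to split at $\aspratio = \sampleratio$ and handle the two regimes with the same toolbox that delivered the isotropic analogue in Lemma~\ref{lm:isobootbias}, while upgrading the underlying resolvent identities to accommodate a general $\covmat$. For $\aspratio<\sampleratio$, Lemma~\ref{lm:singularmat} already guarantees that both sketched covariances $\hcovmat_k$ and $\hcovmat_\ell$ are invertible almost surely as $\npinfty$, so the projection matrices $\projmat_k=I-\hcovmat_k\pinv\hcovmat_k$ and $\projmat_\ell$ vanish and the trace defining $B_{k,\ell}$ collapses to zero. This takes care of the easy half.

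For the overparameterized regime $\aspratio>\sampleratio$ the plan is to regularize by $z<0$ and pass to the limit. Noting that in this regime each $\hcovmat_h$ has a non-trivial null space, I would use the spectral identity $\projmat_h = \lim_{z\to 0^-}-z(\hcovmat_h-zI)^{-1}$ (where the limit singles out the zero-eigenvalue subspace) to rewrite
\[
B_{k,\ell} \;=\; \lim_{z\to 0^-}\, r^{2}\, z^{2}\,\stieltjes_{3,n}(z), \qquad \stieltjes_{3,n}(z):=\tfrac{1}{d}\tr\!\big((\hcovmat_k-zI)^{-1}\covmat\,(\hcovmat_\ell-zI)^{-1}\big).
\]
The core task is then to establish a deterministic equivalent for $\stieltjes_{3,n}(z)$. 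I would mimic Lemma~\ref{lm:isotrolimitresolvent}: expand $I=(\hcovmat_k-zI)(\hcovmat_k-zI)^{-1}=\frac{1}{n}\sum_i w_{k,i}x_ix_i^\T(\hcovmat_k-zI)^{-1}-z(\hcovmat_k-zI)^{-1}$, multiply by $\covmat(\hcovmat_\ell-zI)^{-1}$, take traces, and strip off the rank-one perturbations by the Sherman--Morrison formula together with Lemma~\ref{lm:boundrankoneperturb} and the uniform quadratic-form concentration in Lemma~\ref{lm:concentrationontrace}. Averaging over the multipliers via Assumption~\ref{Assume:multip} and dominated convergence should produce a self-consistent equation whose coefficients are exactly the quantities $\compstieltjes(z)$ and $\tcompstieltjes(z)$ appearing in \eqref{eq:fixedpointsketching}--\eqref{eq:fixpointbootstrap}; this step is modeled on how the underparameterized variance Lemma~\ref{lm:sketchundervar} and the overparameterized variance Lemma~\ref{lm:sketchovervar} were derived.

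To finish, I would justify the interchange of $\lim_{\npinfty}$ and $\lim_{z\to 0^-}$ through Arzela--Ascoli plus Moore--Osgood, exactly as at the end of Lemma~\ref{lm:isobootbias}: the bound $\|(\hcovmat_h-zI)^{-1}\|_2\le 1/(\lambda^+_{\min}(\hcovmat_h)-z)$ together with Lemma~\ref{lm:lowerboundeigval} gives almost sure uniform control of $z^{2}\stieltjes_{3,n}(z)$ and of its $z$-derivative on $(-\infty,0]$. Taking $z\to 0^-$ in the limiting equation and substituting the definitions of $\ensambleweight(0)=(1-\sampleratio)\compstieltjes(0)$ and $\tcompstieltjes(0),\tcompstieltjes'(0)$ should yield the stated closed form, with the leading term $r^{2}\sampleratio/(\aspratio\compstieltjes(0))$ matching the single-sketch bias of Lemma~\ref{lm:sketchoverbias} and the correction $-r^{2}(1-\sampleratio)/(\aspratio\compstieltjes(0))\cdot(\tcompstieltjes'(0)/\tcompstieltjes(0)^{2}-1)$ coming from the cross term.

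\emph{The hard step.} The main obstacle is the derivation and manipulation of the self-consistent equation for $\stieltjes_{3,n}(z)$: because $\covmat$ now sits between the two resolvents, the Sherman--Morrison expansion produces $\covmat$-weighted quadratic forms $x_i^\T(\hcovmat_{k,-i}-zI)^{-1}\covmat(\hcovmat_{\ell,-i}-zI)^{-1}x_i$ whose reduction to a scalar fixed-point identity requires a careful two-step argument (first extracting $\covmat$ via the substitution $x_i=\covmat^{1/2}z_i$, then applying deterministic equivalents to the resulting $\covmat^{1/2}$-sandwiched resolvent). Once this equation is in hand, the $z\to 0^-$ asymptotic and the appearance of $\tcompstieltjes'(0)$ follow by the same L'H\^opital-type computation used in Section~\ref{sec:Proofisobagging} to pass from the isotropic resolvent to the explicit overparameterized bias. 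A useful sanity check will be specializing $\covmat=I$ and verifying that the formula reduces to Lemma~\ref{lm:isobootbias}, i.e.\ to $r^{2}(\aspratio-\sampleratio)^{2}/(\aspratio(\aspratio-\sampleratio^{2}))$.
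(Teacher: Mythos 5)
Your handling of the underparameterized regime and the regularization $\projmat_h = \lim_{\zinfty}-z(\hcovmat_h - zI)^{-1}$, hence $B_{k,\ell} = \lim_{\zinfty}\signallev z^2\, \stieltjes_{3,n}(z)$ with $\stieltjes_{3,n}(z)=\tfrac{1}{\ndim}\tr((\hcovmat_k - zI)^{-1}\covmat(\hcovmat_\ell - zI)^{-1})$, is fine, as is the interchange-of-limits bookkeeping via Arzel\`a--Ascoli and Moore--Osgood. The problem is the core step: your plan to ``mimic Lemma~\ref{lm:isotrolimitresolvent}'' to produce a scalar self-consistent equation for $\stieltjes_{3,n}(z)$ does not close when $\covmat\neq I$. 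Carry out the expansion: multiply $I=(\hcovmat_k-zI)(\hcovmat_k-zI)^{-1}$ on the right by $\covmat(\hcovmat_\ell-zI)^{-1}$, take traces, and strip rank-ones by Sherman--Morrison on both the $k$ and $\ell$ sides. The leave-one-out quadratic form you are left with is $x_i^\T(\hcovmat_{k,-i}-zI)^{-1}\covmat(\hcovmat_{\ell,-i}-zI)^{-1}x_i$, and since $x_i=\covmat^{1/2}z_i$, Lemma~\ref{lm:concentrationontrace} concentrates it to $\tfrac{1}{\ndata}\tr\big((\hcovmat_{k,-i}-zI)^{-1}\covmat(\hcovmat_{\ell,-i}-zI)^{-1}\covmat\big)$ --- a $\covmat^{2}$-weighted resolvent product, not $\stieltjes_{3,n}$. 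Expanding that one instead introduces a $\covmat^{3}$-weighted trace, and so on: an infinite hierarchy. In the isotropic case $\covmat=I$ this hierarchy collapses, which is exactly why Lemma~\ref{lm:isotrolimitresolvent} works; it does not generalize to correlated features, and your ``two-step argument'' (pull out $\covmat^{1/2}$, then apply deterministic equivalents) does not close the hierarchy either, because the substitution $x_i=\covmat^{1/2}z_i$ is what creates the extra powers of $\covmat$ in the first place.

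The paper sidesteps this entirely. Rather than the symmetric $z^2\stieltjes_{3,n}(z)$, it uses the asymmetric expansion
\[
B_{k,\ell} = \lim_{\zinfty}\Big(\underbrace{-\tfrac{\signallev z}{\ndim}\tr\big((\hcovmat_k-zI)^{-1}\covmat\big)}_{\textrm{I}} \;+\; \underbrace{\tfrac{\signallev z}{\ndim}\tr\big((\hcovmat_k-zI)^{-1}\covmat(\hcovmat_\ell-zI)^{-1}\hcovmat_\ell\big)}_{\textrm{II}}\Big),
\]
so that term I is \emph{already} the single-sketch bias of Lemma~\ref{lm:sketchoverbias}, giving $\signallev\sampleratio/(\aspratio\compstieltjes(0))$. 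It then splits II via $\hcovmat_\ell=\hcovmat_\B+\hcovmat_\C$ (using the disjoint index sets $\A,\B,\C$): the $\hcovmat_\C$ piece is $z$ times the cross-variance $V_{k,\ell}$ and so vanishes as $\zinfty$; the $\hcovmat_\B$ piece is handled by the new trading Lemma~\ref{lm:biassecond}, which shows it equals $-\tfrac{(1-\sampleratio)}{z\compstieltjes(z/\sampleratio)}$ times the $\hcovmat_\C$ piece in the limit. The reason Lemma~\ref{lm:biassecond} works is precisely the observation your plan is missing: after leave-one-out concentration, \emph{both} the $\hcovmat_\B$-weighted and the $\hcovmat_\C$-weighted traces reduce to the \emph{same} $\covmat$-sandwiched resolvent quantity $\tfrac{1}{\ndim}\tr((\hcovmat_\A+\hcovmat_\C-zI)^{-1}\covmat(\hcovmat_\B+\hcovmat_\C-zI)^{-1}\covmat)$, differing only by explicit scalar prefactors $\tfrac{\sampleratio(1-\sampleratio)}{1+\aspratio\stieltjescov_1(z)}$ and $\tfrac{\sampleratio^2}{(1+\aspratio\stieltjescov_1(z))^2}$, so the hierarchy stops after one step. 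If you want to fix your proposal, the realistic options are (i) import this algebraic decomposition plus Lemma~\ref{lm:biassecond}, or (ii) prove an anisotropic deterministic equivalent in the spirit of Lemma~\ref{lm:Crossterm} that replaces $\hcovmat_\A$ and $\hcovmat_\B$ by $-z\ensambleweight(z)\covmat$ and then appeals to the known ridge formulas --- but simply re-running the isotropic expansion will stall.
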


Putting the above results together finishes the proof. 

\end{proof}

%%%%%%%%Corollary 4.4
\subsection{Proof of Corollary~\ref{cor:overbagsketch}}
\begin{proof}[Proof of Corollary~\ref{cor:overbagsketch}]
    It suffices to show
    \begin{equation*}
        \frac{( {\tcompstieltjes'(0)}/{\tcompstieltjes(0)^2} - 1 )}{\left( {\compstieltjes'(0)}/{\compstieltjes(0)^2} - 1 \right)   } \leq \sampleratio
    \end{equation*}
    when $\aspratio > \sampleratio$, and 
    \begin{equation*}
        \lim_{\aspratio/\sampleratio \rightarrow 1}\frac{( {\tcompstieltjes'(0)}/{\tcompstieltjes(0)^2} - 1 )}{\left( {\compstieltjes'(0)}/{\compstieltjes(0)^2} - 1 \right)   } = 0
    \end{equation*}
    when $\sampleratio \neq 1$. 
    Using equation~\eqref{eq:vderivform}, we obtain
    \begin{equation*}
        \frac{\tcompstieltjes'(0)}{\tcompstieltjes(0)^2} - 1 = \frac{\frac{\aspratio}{\sampleratio^{2}} \int \frac{\tcompstieltjes(0)^{2}t^{2}}{(1 + \tcompstieltjes(0)t )^{2}} \tlimitdistr(dt)}{1 - \frac{\aspratio}{\sampleratio^{2}} \int \frac{\tcompstieltjes(0)^{2}t^{2}}{(1 + \tcompstieltjes(0)t )^{2}}\tlimitdistr(dt)}.
    \end{equation*}
To proceed, we need the following two lemmas.

\begin{lemma}\label{lm:v0tv}
    Assume Assumption~\ref{Assume:Covdistri}. Then $\tcompstieltjes(0) = \sampleratio \compstieltjes(0)$.
\end{lemma}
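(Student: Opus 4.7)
The plan is to show that the candidate $\tcompstieltjes(0) = \sampleratio\,\compstieltjes(0)$ satisfies the self-consistent equation~\eqref{eq:fixpointbootstrap} at $z=0$, and then invoke uniqueness (from Lemma~\ref{lm:v0}) to conclude that this candidate is the actual solution. The verification reduces entirely to a change-of-variables in the spectral integral using the explicit form of $\tlimitdistr$.

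First, I would evaluate equations~\eqref{eq:fixedpointsketching} and~\eqref{eq:fixpointbootstrap} at $z=0$:
\begin{gather*}
\frac{1}{\compstieltjes(0)} = \frac{\aspratio}{\sampleratio}\int\frac{t\,\limitdistr(dt)}{1+\compstieltjes(0)t},\qquad
\frac{1}{\tcompstieltjes(0)} = \frac{\aspratio}{\sampleratio^{2}}\int\frac{t\,\tlimitdistr(dt)}{1+\tcompstieltjes(0)t}.
\end{gather*}
Since $\tlimitdistr$ is the limiting spectral distribution of $(I+\ensambleweight(0)\covmat)^{-1}\covmat$, the eigenvalues of this matrix are $\lambda_i/(1+\ensambleweight(0)\lambda_i)$ where $\lambda_i$ are the eigenvalues of $\covmat$, so for any bounded continuous $g$,
\[
\int g(t)\,\tlimitdistr(dt) \;=\; \int g\!\left(\tfrac{s}{1+\ensambleweight(0)s}\right)\limitdistr(ds).
\]

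Second, I would apply this change of variables to the integral defining $1/\tcompstieltjes(0)$ and simplify:
\[
\int\frac{t\,\tlimitdistr(dt)}{1+\tcompstieltjes(0)t}
= \int\frac{s/(1+\ensambleweight(0)s)}{1+\tcompstieltjes(0)\,s/(1+\ensambleweight(0)s)}\,\limitdistr(ds)
= \int\frac{s\,\limitdistr(ds)}{1+(\ensambleweight(0)+\tcompstieltjes(0))s}.
\]
The key algebraic observation is that the candidate $\tcompstieltjes(0)=\sampleratio\,\compstieltjes(0)$ makes the effective shrinkage coefficient collapse nicely: since $\ensambleweight(0) = (1-\sampleratio)\,\compstieltjes(0)$,
\[
\ensambleweight(0) + \tcompstieltjes(0) \;=\; (1-\sampleratio)\,\compstieltjes(0) + \sampleratio\,\compstieltjes(0) \;=\; \compstieltjes(0).
\]

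Third, plugging this back in and using the equation for $\compstieltjes(0)$ gives
\[
\frac{\aspratio}{\sampleratio^{2}}\int\frac{t\,\tlimitdistr(dt)}{1+\tcompstieltjes(0)t}
= \frac{\aspratio}{\sampleratio^{2}}\int\frac{s\,\limitdistr(ds)}{1+\compstieltjes(0)s}
= \frac{1}{\sampleratio}\cdot\frac{1}{\compstieltjes(0)}
= \frac{1}{\sampleratio\,\compstieltjes(0)},
\]
which confirms that $\tcompstieltjes(0) = \sampleratio\,\compstieltjes(0)$ indeed satisfies equation~\eqref{eq:fixpointbootstrap} at $z=0$. By the uniqueness of the positive solution established in Lemma~\ref{lm:v0}, this must be the true value, completing the proof.

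The whole argument is essentially a one-line algebraic identity once the change of variables is performed; there is no real obstacle. The only mild subtlety is justifying the change of variables for $\aspratio>\sampleratio$ (the regime in which $\compstieltjes(0)$ exists), which is immediate because $s\mapsto s/(1+\ensambleweight(0)s)$ is a bounded continuous map on the support of $\limitdistr$ under Assumption~\ref{Assume:Covdistri}, and because weak convergence of the empirical spectral distributions transfers to the push-forward measure.
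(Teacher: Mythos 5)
Your proposal is correct and follows essentially the same route as the paper: the paper also plugs the candidate $\sampleratio\,\compstieltjes(0)$ into equation~\eqref{eq:fixpointbootstrap} at $z=0$, performs the change of variables $t\mapsto s/(1+\ensambleweight(0)s)$ (stated there as Lemma~\ref{lm:limitdistrequiv}), uses $\ensambleweight(0)+\sampleratio\,\compstieltjes(0)=\compstieltjes(0)$ together with equation~\eqref{eq:fixedpointsketching} at $z=0$, and concludes by uniqueness of the positive solution. Your explicit derivation of the push-forward identity in place of citing Lemma~\ref{lm:limitdistrequiv} is the only cosmetic difference.
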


\begin{lemma}\label{lm:limitdistrequiv}
Assume Assumption~\ref{Assume:Covdistri} and $x \leq 0$. Let $\limitdistr$ and $\tlimitdistr_{x}$ be the limiting empirical spectral distribution of $\covmat$ and $(I + \ensambleweight(x)\covmat)\inv \covmat$, where $\ensambleweight(z):= (1 - \sampleratio)v(z/\sampleratio)$. Then, for any continuous function $f \in \mathcal{C}\left(\text{supp}(\limitdistr) \cup \text{supp}(\tlimitdistr_{x}) \right)$, it holds that
\begin{equation*}
    \int f(t) d\tlimitdistr_{x}(t) = \int f\left( \frac{t}{1 + \ensambleweight(x)t} \right) d\limitdistr(t).
\end{equation*}
\end{lemma}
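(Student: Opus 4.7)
The plan is to exploit the simultaneous diagonalization between $\covmat$ and $(I + \ensambleweight(x)\covmat)^{-1}\covmat$. Since $\covmat$ is symmetric positive definite, writing its eigendecomposition as $\covmat = U \Lambda U^{\top}$ with $\Lambda = \mathrm{diag}(\lambda_1,\dots,\lambda_d)$, the matrix $(I + \ensambleweight(x)\covmat)^{-1}\covmat$ admits the same eigenvectors with eigenvalues $\phi(\lambda_i) := \lambda_i/(1 + \ensambleweight(x)\lambda_i)$. Consequently, at the finite-sample level the empirical spectral distribution $\tlimitdistr_{x,d}$ of $(I + \ensambleweight(x)\covmat)^{-1}\covmat$ is exactly the pushforward of $F_{\covmat}$ under the map $\phi$.

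Next, I would check that $\phi$ is continuous on a neighborhood of $\mathrm{supp}(\limitdistr)$. By Lemma~\ref{lm:v0}, for $x \leq 0$ the quantity $v(x/\sampleratio)$ is a well-defined non-negative number, so $\ensambleweight(x) = (1-\sampleratio) v(x/\sampleratio) \geq 0$. Combined with Assumption~\ref{Assume:Covdistri}, which places $\mathrm{supp}(F_{\covmat}) \subseteq [c_\lambda, C_\lambda] \subset (0,\infty)$, this guarantees $1 + \ensambleweight(x)t \geq 1$ on the support, so that $\phi$ is in fact Lipschitz on $[c_\lambda, C_\lambda]$ and in particular continuous and bounded.

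With $\phi$ continuous and bounded on the support, the weak convergence $F_{\covmat} \Rightarrow \limitdistr$ from Assumption~\ref{Assume:Covdistri} and the continuous mapping theorem imply $\tlimitdistr_{x,d} \Rightarrow \phi_{\#} \limitdistr$, where $\phi_{\#} \limitdistr$ denotes the pushforward measure. By uniqueness of weak limits, $\tlimitdistr_{x} = \phi_{\#} \limitdistr$. Applying the change-of-variables formula for pushforward measures, for any continuous $f \in \mathcal{C}(\mathrm{supp}(\limitdistr) \cup \mathrm{supp}(\tlimitdistr_{x}))$,
\begin{equation*}
\int f(t)\, d\tlimitdistr_{x}(t) = \int f(\phi(t))\, d\limitdistr(t) = \int f\!\left( \frac{t}{1 + \ensambleweight(x)t} \right) d\limitdistr(t),
\end{equation*}
which is the claimed identity.

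The only subtle point is verifying that $\ensambleweight(x)$ is finite and non-negative across the whole range $x \leq 0$, including the limiting case $x = 0$ which is relevant when $\aspratio > \sampleratio$; this is exactly the content of Lemma~\ref{lm:v0}. Once this is in place, the proof is a routine application of the continuous mapping theorem, and no further random matrix machinery is required since the statement is purely about deterministic limiting spectra of matrices that commute with $\covmat$.
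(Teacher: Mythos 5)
Your proof is correct and takes essentially the same route as the paper: both exploit the simultaneous diagonalization of $\covmat$ and $(I + \ensambleweight(x)\covmat)^{-1}\covmat$ so that the latter's empirical spectral distribution is the pushforward of $F_{\covmat}$ under $t\mapsto t/(1+\ensambleweight(x)t)$, and then pass to the limit via weak convergence of $F_{\covmat}$ to $\limitdistr$ (continuous mapping). You are slightly more explicit than the paper in verifying Lipschitz continuity of the map and the non-negativity and finiteness of $\ensambleweight(x)$ via Lemma~\ref{lm:v0}, which the paper treats implicitly.
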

    
Then,  using  Lemma~\ref{lm:v0tv} and Lemma~\ref{lm:limitdistrequiv}, we have
    \begin{align*}
        \frac{\aspratio}{\sampleratio^{2}} \int \frac{\tcompstieltjes(0)^{2}t^{2}}{(1 + \tcompstieltjes(0)t )^{2}} \tlimitdistr(dt) &= \aspratio \int \frac{\compstieltjes(0)^{2}t^{2}}{(1 + \sampleratio \compstieltjes(0)t )^{2}} \tlimitdistr(dt)\\
        &= \aspratio \int \frac{\compstieltjes(0)^{2}t^{2}}{(1 + \sampleratio \compstieltjes(0)t (1 - \sampleratio) \compstieltjes(0)t)^{2}} \limitdistr(dt)\\
        &= \aspratio \int \frac{\compstieltjes(0)^{2}t^{2}}{(1 + \compstieltjes(0)t } \limitdistr(dt).
    \end{align*}
    Similarly, using equation~\eqref{eq:vderivform}, we have
    \begin{equation*}
        \frac{\compstieltjes'(0)}{\compstieltjes(0)^2} - 1 = \frac{\frac{\aspratio}{\sampleratio} \int \frac{\compstieltjes(0)^{2}t^{2}}{(1 + \compstieltjes(0)t )^{2}} \limitdistr(dt)}{1 - \frac{\aspratio}{\sampleratio} \int \frac{\compstieltjes(0)^{2}t^{2}}{(1 + \compstieltjes(0)t )^{2}}\limitdistr(dt)}.
    \end{equation*}
    Therefore, 
    \begin{align*}
        \frac{\tcompstieltjes'(0)}{\tcompstieltjes(0)^2} - 1 &= \frac{\aspratio \int \frac{\compstieltjes(0)^{2}t^{2}}{(1 + \compstieltjes(0)t)^{2} } \limitdistr(dt)}{1 - \aspratio \int \frac{\compstieltjes(0)^{2}t^{2}}{(1 + \compstieltjes(0)t)^{2} } \limitdistr(dt)}\\
        &\leq \sampleratio \frac{\frac{\aspratio}{\sampleratio} \int \frac{\compstieltjes(0)^{2}t^{2}}{(1 + \compstieltjes(0)t)^{2} } \limitdistr(dt)}{1 - \frac{\aspratio}{\sampleratio} \int \frac{\compstieltjes(0)^{2}t^{2}}{(1 + \compstieltjes(0)t)^{2} } \limitdistr(dt)}\\
        &= \sampleratio \left( \frac{\compstieltjes'(0)}{\compstieltjes(0)^2} - 1 \right). 
    \end{align*}
    This finishes the proof for the first result.

    Taking $\aspratio/\sampleratio \rightarrow 1$, we have 
    \#\label{eq:var_small_order}     
        \lim_{\aspratio/\sampleratio \rightarrow 1}\frac{( {\tcompstieltjes'(0)}/{\tcompstieltjes(0)^2} - 1 )}{\left( {\compstieltjes'(0)}/{\compstieltjes(0)^2} - 1 \right)   } = \lim_{\aspratio/\sampleratio \rightarrow 1} \sampleratio \frac{1 - \aspratio/\sampleratio \int \frac{\compstieltjes(0)^{2}t^{2}}{(1 + \compstieltjes(0)t)^{2} } \limitdistr(dt)}{1 - \aspratio \int \frac{\compstieltjes(0)^{2}t^{2}}{(1 + \compstieltjes(0)t)^{2} } \limitdistr(dt)}.
    \#
   To proceed, we first calculate 
   \$
    \lim_{\aspratio/\sampleratio \rightarrow 1}  \int \frac{\compstieltjes(0)^{2}t^{2}}{(1 + \compstieltjes(0)t)^{2} } \limitdistr(dt). 
   \$
   Using equation~\eqref{eq:fixedpointsketching}, we have 
    \begin{equation*}
        \int \frac{1}{1 + \compstieltjes(0)t} d\limitdistr(t) = 1 - \aspratio/\sampleratio,
    \end{equation*}
    which implies $\lim_{\aspratio/\sampleratio \rightarrow 1} \compstieltjes(0) = +\infty$.
    Therefore, by the dominated convergence theorem, we have 
    \begin{equation*}
       \lim_{\aspratio/\sampleratio \rightarrow 1}  \int \frac{\compstieltjes(0)^{2}t^{2}}{(1 + \compstieltjes(0)t)^{2} } \limitdistr(dt) = \int \lim_{\aspratio/\sampleratio \rightarrow 1} \frac{\compstieltjes(0)^{2}t^{2}}{(1 + \compstieltjes(0)t)^{2} } \limitdistr(dt) = 1.
    \end{equation*}
 Plugging the above equality into \eqref{eq:var_small_order}, we obtain
\$
\lim_{\aspratio/\sampleratio \rightarrow 1}\frac{( {\tcompstieltjes'(0)}/{\tcompstieltjes(0)^2} - 1 )}{\left( {\compstieltjes'(0)}/{\compstieltjes(0)^2} - 1 \right)   } 
= \lim_{\aspratio/\sampleratio \rightarrow 1} \sampleratio \frac{1 - \aspratio/\sampleratio \int \frac{\compstieltjes(0)^{2}t^{2}}{(1 + \compstieltjes(0)t)^{2} } \limitdistr(dt)}{1 - \aspratio \int \frac{\compstieltjes(0)^{2}t^{2}}{(1 + \compstieltjes(0)t)^{2} } \limitdistr(dt)} 
= \theta \cdot \frac{0}{1- \gamma } 
= 0 . 
\$
    This finishes the proof. 
\end{proof}

%%%%%%%%Corollary 4.5
\subsection{Proof of Corollary \ref{cor:bootstrapequiv}}
\begin{proof}[Proof of Corollary \ref{cor:bootstrapequiv}]
The result can be observed from equations~\eqref{eq:fixpointcrossterm} and \eqref{eq:biasequiv}. As $\npinfty$, $B_{k,\ell}$ and $V_{k,\ell}$ converge to the limiting bias and variance terms of ridge regression with the covariance matrix of $(I + \ensambleweight(z)\covmat)^{-1} \covmat$ and the coefficient vector $(I + \ensambleweight(z)\covmat)^{-1/2} \truesignal$, respectively. Therefore, the proof is omitted.
\end{proof}

\subsection{Supporting lemmas}
This subsection proves the supporting lemmas in previous subsections, aka Lemmas \ref{lm:sketchundervar}-\ref{lm:sketchoverbias}. 

\subsubsection{Proof of Lemma \ref{lm:sketchundervar}}
\begin{proof}[Proof of Lemma \ref{lm:sketchundervar}]
For simplicity, we assume $\sigma^2 = 1$, and omit the subscript $1$ in $\hcovmat_{1}$ and $\sketchmat_{1}$, which become $\hcovmat$ and $\sketchmat$ respectively. Let $\Omega_{\ndata} := \{ \omega \in \Omega : \eigval(\hcovmat)(\omega) > 0 \}$. Using Lemma~\ref{lm:singularmat} and Lemma~\ref{lm:lowerboundeigval}, we obtain
\begin{equation*}
    \PP \left( \lim_{\npinfty} \Omega_{\ndata} \right) = 1,
\end{equation*}
when $\aspratio < \sampleratio$. Thus, without loss of generality, we assume $\eigval_{\min}(\hcovmat) > 0$. Then we can write $\varcondition(\hat{\beta}^{1})$ as 
\begin{align*}
\varcondition(\hat{\beta}^{1}) &=  \frac{1}{n^{2}} \tr\left( \hcovmat\pinv X\transp \sketchmat^2 \sketchmat^2 X \hcovmat\pinv \covmat \right),\\
    &= \frac{1}{n^{2}} \tr\left( (\covmat^{-1/2} \hcovmat \covmat^{-1/2} )\inv \covmat^{-1/2} X\transp \sketchmat^2 \sketchmat^2 X \covmat^{-1/2} (\covmat^{-1/2} \hcovmat \covmat^{-1/2} )\inv \right).
\end{align*}
Hence, given Assumption~\ref{Assume:Covdistri}, we can assume $\covmat =I$ without loss of generality. 

Assuming $\covmat =I$, we can simplify the variance term as %proceed by expressing the variance term as follows:
\begin{align*}
    \varcondition(\hat{\beta}^{1}) &=  \frac{1}{n^{2}} \tr\left( \hcovmat\inv X\transp \sketchmat^2 \sketchmat^2 X \hcovmat\inv \right),\\
    &= \frac{1}{n^{2}} \tr\left( \hcovmat\inv \sum_{i=1}^{n} \weight_{i}^2 x_{i}\transp x_{i} \hcovmat\inv \right)\\
    &= \frac{1}{n^{2}} \sum_{i=1}^{n} \weight_{i}^2 x_{i}\transp \hcovmat^{-2} x_{i}\\
    &= \frac{1}{n^{2}} \sum_{i=1}^{n} \weight_{i}^2 \frac{x_{i}\transp \hcovmat_{-i}^{-2} x_{i}}{\left(1 + \weight_{i} x_{i}\transp \hcovmat_{-i}^{-1} x_{i} \right)^2},
\end{align*}
where $\hcovmat_{-i}:= \hcovmat - \frac{1}{\ndata}\weight_{i} x_{i} x_{i}\transp$, and the last line uses the Sherman–Morrison formula twice.

Define $\stieltjes_{1,n}(z)$, the Stieltjes transform of $\hcovmat$,  and  $\stieltjes_{1,n}'(z)$ as %the Stieltjes transforms of $\hcovmat$ as follows: 
\begin{align*}
    \stieltjes_{1,n}(z) &= \int \frac{1}{t-z} dF_{\hcovmat}(t), ~~~
    \stieltjes_{1,n}'(z) = \int \frac{1}{(t-z)^2} dF_{\hcovmat}(t),
\end{align*}
which are well-defined for any $z \leq 0$. Applying Lemma~\ref{lm:boundrankoneperturb} and Lemma~\ref{lm:concentrationontrace}, we obtain almostly surely that
\begin{equation*}
    \lim_{\npinfty} \varcondition(\hat{\beta}^{1}) = \lim_{\npinfty} \frac{1}{\ndata} \sum_{i=1}^{n} \weight_{i}^2 \frac{\aspratio \stieltjes_{1,n}'(0)}{\left(1 + \aspratio \weight_{i} \stieltjes_{1,n}(0) \right)^2}. 
\end{equation*}  
Lemma~\ref{lm:MPlawsketching} shows that the empirical spectral distribution of $\hcovmat$ converges weakly to a deterministic distribution $\mu$ almost surely as $\npinfty$, characterized by its Stieltjes transform $\stieltjes_{1}(z)$ that satisfies the following equation:
\begin{equation}\label{eq:varsketching1}
    \stieltjes_{1}(z) \EE_{\limitdistrweight}\left[ \frac{\weight}{1 + \aspratio \weight \stieltjes_{1}(z)} \right] - z\stieltjes_{1}(z) = 1,
\end{equation}
for any $z \leq 0$. Now since $\eigval_{\min}(\hcovmat_{k}) > 0$, we have almost surely
\begin{align*}
    \lim_{\npinfty} \stieltjes_{1,n}(0) &= \lim_{\npinfty} \int \frac{1}{t} dF_{\hcovmat}(t) = \stieltjes_{1}(0), \\
    \lim_{\npinfty} \stieltjes_{1,n}'(0) &= \lim_{\npinfty} \int \frac{1}{t^{2}} dF_{\hcovmat}(t) = \stieltjes_{1}'(0),
\end{align*}
where the second line follows from the fact that $\stieltjes_{1,n}(z)$ is analytic and bounded on $\RR^{-} \cup \{0\}$, and we apply the Vitali's convergence theorem. 
 Since $\stieltjes_{1}(0) > 0$ according to Lemma~\ref{lm:MPlawsketching}, we have
\begin{equation*}
     \frac{\weight_{i}^2}{\left(1 + \aspratio \weight_{i} \stieltjes_{1}(0) \right)^2} \leq \frac{1}{\gamma^2 (\stieltjes_{1}(0))^{2}} <\infty. 
\end{equation*}
Thus, under Assumption~\ref{Assume:multip}, we use the dominated convergence theorem to obtain
\begin{equation*}
    \lim_{\npinfty} \varcondition(\hat{\beta}^{1}) 
    =   \aspratio \stieltjes_{1}'(0) \, \EE_{\limitdistrweight}\left[ \frac{\weight^2}{\left(1 + \aspratio \weight \stieltjes_{1}(0) \right)^2} \right],
\end{equation*}
almost surely.

We can simplify this result by using equation~\eqref{eq:varsketching1}. Since the Stieltjes transform $\stieltjes_{1}(z) = \int \frac{1}{t - z} d \mu(t)$ is strictly increasing and positive for any $z \leq 0$, we obtain for any $z \leq 0$
\begin{equation*}
      \frac{\weight^2 \stieltjes_{1}'(z)}{(1 +  \weight \stieltjes_{1}(z))^2} \leq \frac{\stieltjes_{1}'(z)}{(\aspratio\stieltjes_{1}(z))^{2}} \leq \frac{1}{(\aspratio \eigval_{\min}(\hcovmat_{k}) \stieltjes_{1}(z))^{2}},
\end{equation*}
which is uniformly bounded over any compact interval $I \subset (-\infty, 0]$. Applying the dominated convergence theorem, we take derivatives on both sides of equation~\eqref{eq:varsketching1} to obtain
\begin{equation*}
    \stieltjes_{1}'(z) \EE_{\limitdistrweight} \left[ \frac{\weight}{1 + \aspratio \weight \stieltjes_{1}(z)} \right] + \aspratio \stieltjes_{1}'(z) \stieltjes_{1}(z) \EE_{\limitdistrweight}\left[ \frac{\weight^2}{(1 + \aspratio \weight \stieltjes_{1}(z))^2} \right] - \stieltjes_{1}(z) - z \stieltjes_{1}'(z) = 0.
\end{equation*}
Taking the limit as $z \rightarrow 0^-$, we obtain
\begin{align*}
    \stieltjes_{1}'(0) \EE_{\limitdistrweight} \left[ \frac{\weight}{(1 + \aspratio \weight \stieltjes_{1}(0))^2} \right] = \stieltjes_{1}(0), 
\end{align*}
which leads to
\begin{align*}
    \lim_{\npinfty} \varcondition(\hat{\beta}^{1}) &= \aspratio \stieltjes_{1}'(0) \EE_{\limitdistrweight}\left[ \frac{\weight^2}{\left(1 + \aspratio \weight \stieltjes_{1}(0) \right)^2} \right]\\
    &= \EE_{\limitdistrweight} \left[ \frac{\aspratio \weight^2 \stieltjes_{1}(0)}{(1 + \aspratio \weight \stieltjes_{1}(0))^2} \right] / \EE_{\limitdistrweight} \left[ \frac{\weight}{(1 + \aspratio \weight \stieltjes_{1}(0))^2} \right]\\
    &= \left( \EE_{\limitdistrweight} \left[ \frac{\weight}{1 + \aspratio \weight \stieltjes_{1}(0)} \right] - \EE_{\limitdistrweight} \left[ \frac{\weight}{(1 + \aspratio \weight \stieltjes_{1}(0))^2} \right]\right) / \EE_{\limitdistrweight} \left[ \frac{\weight}{(1 + \aspratio \weight \stieltjes_{1}(0))^2} \right]\\
    &= \frac{1}{\EE_{\limitdistrweight} \left[ \frac{\weight \stieltjes_{1}(0)}{(1 + \aspratio \weight \stieltjes_{1}(0))^2} \right]} - 1\\
    &= \frac{\aspratio}{\EE_{\limitdistrweight} \left[ \frac{1}{1 + \aspratio \weight \stieltjes_{1}(0)} \right] - \EE_{\limitdistrweight} \left[ \frac{1}{(1 + \aspratio \weight \stieltjes_{1}(0))^2} \right]} - 1\\
    &= \frac{\aspratio}{1 - \aspratio - \EE_{\limitdistrweight} \left[ \frac{1}{(1 + \aspratio \weight \stieltjes_{1}(0))^2} \right]} - 1~~~\text{almost surely.}
\end{align*} 
The fourth and the last lines use equation~\eqref{eq:varsketching1}. Finally, by  Lemma~\ref{lm:MPlawsketching}, $\stieltjes_{1}(0)$ is the unique positive solution of equation~\eqref{eq:uniqueMPlawsketching}. This finishes the proof. %Thus, the proof is complete.
\end{proof}

%%%%%%%%%%%Proof of Lemma sketchovervar
\subsubsection{Proof of Lemma \ref{lm:sketchovervar}}
\begin{proof}[Proof of Lemma \ref{lm:sketchovervar}]

Without loss of generality, we assume $\sigma^2 = 1$,  and omit the subscript $1$ in 
$\hcovmat_{1}$ and $\sketchmat_{1}$. {Let $S_{ij}$ be the $(i,j)$-th element of $S$}, and
\$ 
\Bernoulliweight_{ij} 
= 
\begin{cases}
    1(\sketchmat_{ij} \neq 0), &\quad i=j \\
    0, &\quad i\neq j
\end{cases}.
\$ 
Applying Lemma \ref{lm:Pseudo_products}, we can rewrite $\varcondition(\hat{\beta}^{1})$ as
%\$ 
%\Bernoulliweight_{i,j} 
%= \begin{cases}
%    1_{\{\sketchmat_{i,j} \neq 0\}}, &\quad i=j \\
%    0, &\quad i\neq j
%  \end{cases}.
%\$ 
%\end{align}
%$\Bernoulliweight$   for the corresponding sketch matrices $\sketchmat$. 
%Applying Lemma~\ref{lm:Pseudo_products}, we have: 
\begin{align*}
    \varcondition(\hat{\beta}^{1}) 
    &=  \tr\left( (X\transp \sketchmat \sketchmat X)\pinv X\transp \sketchmat^2 \sketchmat^2 X (X\transp \sketchmat \sketchmat X)\pinv \covmat \right)\\
    &=  \tr\left( (X\transp \Bernoulliweight\transp \Bernoulliweight X)\pinv X\transp (\Bernoulliweight\transp)^{2} \Bernoulliweight^{2} X (X\transp \Bernoulliweight\transp \Bernoulliweight X)\pinv  \covmat \right).
\end{align*}
Without loss of generality, we assume that each $\weight_{i,j},\, 1\leq i \leq j \leq \ndata$,  is either one or zero. Applying the following identity of the pseudoinverse of a matrix $A$
\begin{equation}\label{eq:pesudoidentity}
    (A\transp A)\pinv A\transp = \lim_{\zinfty} (A\transp A - zI)\inv A\transp,
\end{equation}
we obtain
\begin{align}
    \varcondition(\hat{\beta}^{1}) &= \lim_{\zinfty} \frac{1}{\ndata} \tr\left( (\hcovmat - zI)\inv \hcovmat (\hcovmat - zI)\inv \covmat \right) \notag\\ \label{eq:sketchovervar1}
    &=\frac{1}{\ndata} \lim_{\zinfty} \left\{ \tr \left((\hcovmat - zI)\inv \covmat \right) + z \, \tr\left((\hcovmat - zI)^{-2} \covmat \right) \right\}.
\end{align}
For any $z < 0$, Lemma~\ref{lm:stieltjescov} shows 
\begin{equation*}
   \lim_{\npinfty} \frac{1}{\ndim}\tr\left((\hcovmat - zI)\inv \covmat \right) = \stieltjescov_{1}(z)~{\qas}
\end{equation*}
%\scomment{current point.}
The second term in equation~\eqref{eq:sketchovervar1} is the derivative of $\tr\left((\hcovmat - zI)\inv \covmat \right)$. For any small constant $\epsilon >0$, $\tr\left((\hcovmat - zI)\inv \covmat \right)$ is almost surely uniformly bounded on all $z < -\epsilon$.  Moreover, it is analytic with respect to $z$. Thus we can apply Vitali convergence theorem, aka Lemma \ref{lm:Vitali}, to obtain %for any $z<0$ that 
\begin{align*}
    \lim_{\npinfty} \frac{1}{\ndim} \tr\left((\hcovmat - zI)\inv \covmat \right) + z\,  \tr\left((\hcovmat - zI)^{-2} \covmat \right) = \stieltjescov_{1}(z) + z\stieltjescov'_{1}(z) {\qas} 
\end{align*}
for any $z < 0$.

Finally, we show that we can exchange the limits between $\npinfty$ and $\zinfty$ by applying the Arzela-Ascoli theorem and the Moore-Osgood theorem (Lemma \ref{lm:Moore-Osgood}). To achieve  this, we first establish that $\tr\left( (\hcovmat - zI)\inv \hcovmat (\hcovmat - zI)\inv \covmat \right)/\ndim$ and its derivative are uniformly bounded for all $z < 0$. By taking the derivative, we obtain 
\begin{align*}
  \frac{1}{\ndim}\frac{{\rm d}\, \tr\left( (\hcovmat - zI)\inv \hcovmat (\hcovmat - zI)\inv \covmat \right)}{{\rm d} z} 
  &= \frac{2}{\ndim}\tr\left((\hcovmat - zI)^{-2} \hcovmat (\hcovmat - zI)\inv \covmat \right) \\
  &\leq \frac{2 \eigval_{\max}(\covmat)}{\eigval^{+}_{\min}(\hcovmat)^{2}},
\end{align*}
which is almost surely bounded by Lemma~\ref{lm:lowerboundeigval}. 
For $\tr\left( (\hcovmat - zI)\inv \hcovmat (\hcovmat - zI)\inv \covmat \right)/\ndim$, a similar calculation leads to
\$
\frac{\tr\left( (\hcovmat - zI)\inv \hcovmat (\hcovmat - zI)\inv \covmat \right)}{\ndim} 
\leq \frac{\eigval_{\max}(\covmat)}{\eigval^{+}_{\min}(\hcovmat)}. 
\$
Then,  using the Arzela-Ascoli theorem, we obtain the uniform convergence of $ \varcondition(\hat{\beta}^{1})$. Applying the Moore-Osgood theorem,  we obtain almost surely that %\scomment{what theorem guarantees this?}
\begin{align*}
    \lim_{\npinfty}  \varcondition(\hat{\beta}^{1}) &= \lim_{\npinfty} \lim_{\zinfty} \frac{1}{\ndata} \tr\left( (\hcovmat - zI)\inv \hcovmat (\hcovmat - zI)\inv \covmat \right) \\
    &= \lim_{\zinfty} \lim_{\npinfty}  \frac{1}{\ndata} \tr\left( (\hcovmat - zI)\inv \hcovmat (\hcovmat - zI)\inv \covmat \right) \\
    &=  \lim_{\zinfty} \aspratio (\stieltjescov_{1}(z) + z\stieltjescov'(z))\\
    &= \lim_{\zinfty} \frac{\compstieltjes'(z)}{\compstieltjes(z)^{2}} - 1\\
    &= \frac{\compstieltjes'(0)}{\compstieltjes(0)^{2}} - 1,
\end{align*}
where the existence of $\compstieltjes'(z)$  and the last line follow from Lemma~\ref{lm:v0}. The fourth line follows from Lemma~\ref{lm:stieltjescov}.
\end{proof}

\subsubsection{Proof of Lemma \ref{lm:sketchoverbias}}
\begin{proof}[Proof of Lemma \ref{lm:sketchoverbias}]

For notational simplicity, we assume $\signallev = 1$, omit the subscript $i$, and write $\hcovmat_{i}$ and $\sketchmat_{i}$ as $\hcovmat$ and $\sketchmat$ respectively. Applying Lemma~\ref{lm:biasvar}, we can rewrite the bias term as %\scomment{missing negative and inverse?}
    \begin{align*}
        \lim_{\npinfty} \biascondition(\hat{\beta}^{1}) &= \lim_{\npinfty} \frac{1}{\ndim} \tr \left( \projmat \covmat \projmat \right)\\
        &= \lim_{\npinfty} \frac{1}{\ndim} \tr \left( (I - \hcovmat\pinv \hcovmat) \covmat \right)\\
        &= \lim_{\npinfty} \lim_{\zinfty} \frac{1}{\ndim} \, \tr \left( (I - (\hcovmat - zI)\inv \hcovmat) \covmat \right)\\
        &= \lim_{\npinfty} \lim_{\zinfty} -\frac{z}{\ndim} \, \tr \left( (\hcovmat - zI)\inv \covmat \right),
    \end{align*}
    where we used equation~\eqref{eq:pesudoidentity}. According to Lemma~\ref{lm:stieltjescov}, it holds almost surely that
    \begin{equation*}
        \lim_{\npinfty} \frac{1}{\ndim}  \, \tr \left(  (\hcovmat - zI)\inv \covmat \right) 
        =  \, \stieltjescov_{1}(z). 
    \end{equation*}
    By using similar arguments as in the proof of Lemma~\ref{lm:sketchovervar}, we can exchange the limits between $\npinfty$ and $\zinfty$. By applying Lemma~\ref{lm:stieltjescov}, the following holds almost surely
    \begin{align*}
        \lim_{\npinfty} \biascondition(\hat{\beta}^{1}) &= \lim_{\npinfty} \lim_{\zinfty} -\frac{z}{\ndim}  \tr \left( (\hcovmat - zI)\inv \covmat \right)\\
        &= \lim_{\zinfty} \lim_{\npinfty} -\frac{z}{\ndim}  \tr \left( (\hcovmat - zI)\inv \covmat \right)\\
        &= \lim_{\zinfty} -z \stieltjescov(z)\\
        &= \frac{ \sampleratio}{\aspratio \compstieltjes_{1}(0)},
    \end{align*}
    where the last line follows from Lemma~\ref{lm:v0}.
\end{proof}

%%%%%%%%%Bagging variance%%%%%%%%%%
\subsubsection{Proof of Lemma \ref{lm:corBaggingvar}}
\begin{proof}[Proof of Lemma~\ref{lm:corBaggingvar}]

We start with the underparameterized regime. Using a similar argument as in the proof of Lemma~\eqref{lm:sketchundervar}, we assume $\eigval_{\min} (\hcovmat_{k})>0$ and $\eigval_{\min} (\hcovmat_{\ell})>0$. We first rewrite the variance term as
\begin{align*}
    V_{k,\ell} &= \frac{\noiselev}{\ndata^{2}} \tr\left(\hcovmat_{k}\inv X\transp \sketchmat_{k}^2 \sketchmat_{\ell}^2 X \hcovmat_{\ell}\inv \covmat \right)\\
    &= \frac{\noiselev}{\ndata^{2}} \tr\left((\covmat^{-1/2}X\transp \sketchmat_{k}\sketchmat_{k} X\covmat^{-1/2})\inv \covmat^{-1/2} X\transp \sketchmat_{k}^2 \sketchmat_{\ell}^2 X \covmat^{-1/2} (\covmat^{-1/2}X\transp \sketchmat_{\ell}\sketchmat_{\ell} X\covmat^{-1/2})\inv \right).
\end{align*}
Under Assumption~\ref{Assume:Covdistri},  $X\covmat^{-1/2}$ has isotropic features. Therefore, the limiting variance $V_{k,\ell}$ is the same as in the case of the isotropic features in Lemma~\ref{lm:Isobootvar_under}.

In the overparameterized regime, following the same argument as in Lemma~\ref{lm:Isobootvar_over}, we can  assume that the multipliers $\weight_{i,j},~1\leq i \leq j \leq \ndata$ are either one or zero, without loss of generality. We further assume $\noiselev = 1$. Using identity~\eqref{eq:pesudoidentity}, we obtain
\begin{align*}
    V_{k,\ell} = \lim_{\zinfty} \frac{1}{\ndata^{2}} \tr\left((\hcovmat_{k}-zI)\inv X\transp \sketchmat_{k}^2 \sketchmat_{\ell}^2 X (\hcovmat_{\ell}-zI)\inv \covmat \right).
\end{align*}
Let
\begin{equation*}
    \C: = \{ i~|~ \weight_{k,i} \neq 0,\weight_{\ell,i} \neq 0 \},  \quad\hcovmat_{\C} =: \sum_{i \in \C} \frac{1}{\ndata} x_{i}x_{i}\transp, \quad \ensambleweight(z): = (1 - \sampleratio)v(z/\sampleratio).
\end{equation*}
Using  Lemma~\ref{lm:Crossterm} acquires
\begin{align}
        &\lim_{\npinfty} \frac{1}{\ndata^{2}}\tr\left( (\hcovmat_{k} - zI)\inv X\transp \sketchmat_{k}^{2} \sketchmat_{\ell}^{2} X (\hcovmat_{\ell} - zI) \inv \covmat \right) \notag\\ 
        &\quad\quad=\lim_{\npinfty} \frac{\aspratio}{\ndim}\tr\left( (-z\ensambleweight(z)\covmat + \hcovmat_{\C} - zI)^{-1} \hcovmat_{\C} (-z\ensambleweight(z)\covmat + \hcovmat_{\C} - zI)^{-1} \covmat \right)  \notag\\ \label{eq:fixpointcrossterm}
        &\quad\quad=\lim_{\npinfty} \frac{\aspratio}{\ndim}\tr\left( (\tcovmat_{\C} - zI)^{-1} \tcovmat_{\C} (\tcovmat_{\C} - zI)^{-1} \covmat (I + \ensambleweight(z)\covmat)\inv \right) {\quad \as} 
    \end{align}
where $\tcovmat_{\C}(z):= (I + \ensambleweight(z)\covmat)^{-1/2} \hcovmat_{\C}(I + \ensambleweight(z)\covmat)^{-1/2}$. Furthermore, under Assumption~\ref{Assume:multip}, the cardinality of $\C$ satisfies
     \begin{equation} \label{eq:crossterm5}
         |\C|/\ndata = \frac{1}{\ndata} \sum_{i=1}^{\ndata} 1\left({\weight_{k,i} \neq 0, \weight_{\ell,i} \neq 0}\right) \rightarrow \sampleratio^{2} {\quad \as}
     \end{equation} 
Thus, we can see $\tcovmat_{\C}(z)$ as a sample covariance matrix with sample size $\sampleratio^{2}\ndata$ and a population covariance matrix $(I + \ensambleweight(z)\covmat)^{-1} \covmat$. Define $\tlimitdistr_{x}$ as the limiting empirical spectral distribution of $(I + \ensambleweight(x)\covmat)^{-1} \covmat$, which exists under Assumption~\ref{Assume:Covdistri}. Let  $\tcompstieltjes(z,x)$ be the unique positive solution of the following equation
\begin{equation}\label{eq:fixpointkx}
    \tcompstieltjes(z,x) = \left( -z + \frac{\aspratio}{\sampleratio^{2}} \int \frac{t\tlimitdistr_{x}(dt)}{1 + \tcompstieltjes(z,x)t} \right)^{-1}.
\end{equation}
The existence and uniqueness of the positive solution to  equation~\eqref{eq:fixpointkx} follows from the same argument as in the proof of Lemma~\ref{lm:v0}. %\scomment{existence and uniqueness? check one more time.. } 
Note that the term in equation~\eqref{eq:fixpointcrossterm} can be seen as the variance of the sketched estimator with aspect ratio $\aspratio/\sampleratio^{2}$ and covariance matrix $(I + \ensambleweight(x)\covmat)^{-1} \covmat$. 
Then, it has been proved in Lemma~\ref{lm:sketchovervar} that 
\begin{equation*}
    \lim_{\npinfty} \frac{\aspratio}{\ndim}\tr\left( (\tcovmat_{\C} - zI)^{-1} \tcovmat_{\C} (\tcovmat_{\C} - zI)^{-1} \covmat (I + \ensambleweight(z)\covmat)\inv \right) = \frac{\tcompstieltjes'(z,z)}{\tcompstieltjes(z,z)^{2}} - 1 {\quad \as}
\end{equation*}
Following the same argument as in the proof of Lemma~\ref{lm:Isobootvar_over}, we can exchange the limits between $\npinfty$ and $\zinfty$, and obtain 
\begin{align*}
    \lim_{\npinfty} V_{k,\ell} &= \lim_{\npinfty} \lim_{\zinfty} \frac{\aspratio}{\ndim}\tr\left( (\tcovmat_{\C} - zI)^{-1} \tcovmat_{\C} (\tcovmat_{\C} - zI)^{-1} \covmat (I + \ensambleweight(z)\covmat)\inv \right)\\
    &= \lim_{\zinfty} \lim_{\npinfty} \frac{\aspratio}{\ndim}\tr\left( (\tcovmat_{\C} - zI)^{-1} \tcovmat_{\C} (\tcovmat_{\C} - zI)^{-1} \covmat (I + \ensambleweight(z)\covmat)\inv \right)\\
    &= \lim_{\zinfty} \frac{\tcompstieltjes'(z,z)}{\tcompstieltjes(z,z)^{2}} - 1 {\quad \as} \\
    &= \frac{\tcompstieltjes'(0)}{\tcompstieltjes(0)^{2}} - 1,
\end{align*}
where the last line uses Lemma~\ref{lm:fixpointequiv}.
\end{proof}

\subsubsection{Proof of Lemma~\ref{lm:corBaggingbias}}
\begin{proof}[Proof of Lemma~\ref{lm:corBaggingbias}]
Using a similar argument as in the proof of Lemma~\ref{lm:isobootbias}, in the underparameterized regime, the bias term converges almost surely to zero. 

When $\aspratio > \sampleratio$, using the argument as in the proof of Lemma~\ref{lm:Isobootvar_over}, we can assume that the multipliers $\weight_{i,j}$ are either zero or one and $\noiselev = 1$ without loss of generality. We rewrite the bias term as 
\begin{align*}
    B_{k,\ell} &= \lim_{\zinfty} \frac{\signallev}{\ndim} \tr\left((I - (\hcovmat_{k} - zI)\inv \hcovmat_{k}) \covmat (I - (\hcovmat_{\ell} - zI)\inv \hcovmat_{\ell}) \right)\\
    &= \lim_{\zinfty} \left(-\frac{\signallev z}{\ndim} \tr\left((\hcovmat_{k} - zI)\inv \covmat \right) + \frac{\signallev z}{\ndim} \tr\left((\hcovmat_{k} - zI)\inv \hcovmat_{\ell} (\hcovmat_{\ell} - zI)\inv \covmat \right)\right).
\end{align*}
Assuming we can exchange the limits between $\npinfty$ and $\zinfty$, we obtain
\$
\lim_{\npinfty} B_{k,\ell} 
&= \lim_{\npinfty} \lim_{\zinfty} \left(-\frac{\signallev z}{\ndim} \tr\left((\hcovmat_{k} - zI)\inv \covmat \right) + \frac{\signallev z}{\ndim} \tr\left((\hcovmat_{k} - zI)\inv \hcovmat_{\ell} (\hcovmat_{\ell} - zI)\inv \covmat \right)\right) \\
&= \lim_{\zinfty}  \lim_{\npinfty} -\frac{\signallev z}{\ndim} \tr\left((\hcovmat_{k} - zI)\inv \covmat \right) \\
&\quad\quad\quad +   \lim_{\zinfty}  \lim_{\npinfty}   \frac{\signallev z}{\ndim} \tr\left((\hcovmat_{k} - zI)\inv \hcovmat_{\ell} (\hcovmat_{\ell} - zI)\inv \covmat \right)\\
&= \lim_{\zinfty}  \lim_{\npinfty}  \Rom{1} + \lim_{\zinfty}  \lim_{\npinfty}  \Rom{2}. 
\$
The first term \Rom{1} in the above equation has already appeared in Lemma~\ref{lm:sketchoverbias} and satisfies
\begin{equation*}
    \lim_{\zinfty} \lim_{\npinfty}  \Rom{1} =  \lim_{\zinfty} \lim_{\npinfty} -\frac{\signallev z}{\ndim} \tr\left((\hcovmat_{k} - zI)\inv \covmat \right) = \signallev \frac{\sampleratio}{\aspratio \compstieltjes(0)} {\quad \as}
\end{equation*}
Recall the definitions of $\hcovmat_{\A}$, $\hcovmat_{\B}$, and $\hcovmat_{\C}$ defined in Lemma~\ref{lm:Crossterm} and its proof.  We rewrite the second term as
\begin{align*}
    \Rom{2} &= \frac{r^2 z}{\ndim} \tr\left((\hcovmat_{k} - zI)\inv \hcovmat_{\ell} (\hcovmat_{\ell} - zI)\inv \covmat \right) \\
    &= \frac{r^2 z}{\ndim} \tr\left((\hcovmat_{\A} + \hcovmat_{\C} - zI)\inv (\hcovmat_{\B}  + \hcovmat_{\C}) (\hcovmat_{\B} + \hcovmat_{\C} - zI)\inv \covmat \right)\\
    &= \frac{r^2 z}{\ndim} \tr\left((\hcovmat_{\A} + \hcovmat_{\C} - zI)\inv \hcovmat_{\B}   (\hcovmat_{\B} + \hcovmat_{\C} - zI)\inv \covmat \right) \\
    &\qquad + \frac{r^2 z}{\ndim} \tr\left((\hcovmat_{\A} + \hcovmat_{\C} - zI)\inv \hcovmat_{\C}   (\hcovmat_{\B} + \hcovmat_{\C} - zI)\inv \covmat \right) \\
    &=: \Rom{2}_1 + \Rom{2}_2. 
\end{align*}
We derive the limits of $\Rom{2}_1$ and $\Rom{2}_2$ respectively. We start with $\Rom{2}_2$. Following the proof of Lemma~\ref{lm:corBaggingvar}, we obtain 
\$
  \lim_{\zinfty} \lim_{\npinfty} \Rom{2}_2 
  &= \lim_{\zinfty} \lim_{\npinfty} \frac{r^2 z}{\ndim} \tr\left((\hcovmat_{\A} + \hcovmat_{\C} - zI)\inv \hcovmat_{\C} (\hcovmat_{\B} + \hcovmat_{\C} - zI)\inv \covmat \right) \\
  &= \lim_{\zinfty} \lim_{\npinfty} \frac{r^2 z}{\aspratio} V_{k,\ell} = 0 {\quad \as}
\$
For term $\Rom{2}_1$, using Lemma~\ref{lm:biassecond}, we obtain %s\scomment{$v(z/\theta)$ instead of $v(z)$?}
\begin{align*}
   \lim_{\zinfty} \lim_{\npinfty}\Rom{2}_1
    &= \lim_{\zinfty} \lim_{\npinfty} \frac{r^2 z}{\ndim} \tr\left((\hcovmat_{\A} + \hcovmat_{\C} - zI)\inv \hcovmat_{\B} (\hcovmat_{\B} + \hcovmat_{\C} - zI)\inv \covmat \right)\\
    &=  \lim_{\zinfty} \lim_{\npinfty} -\frac{r^2 (1 - \sampleratio)}{\compstieltjes(z/\sampleratio)} \frac{1}{\ndim}\tr\left((\hcovmat_{\A} + \hcovmat_{\C} - zI)\inv \hcovmat_{\C} (\hcovmat_{\B} + \hcovmat_{\C} - zI)\inv \covmat \right)\\
    &= \lim_{\zinfty}  \lim_{\npinfty}  -\frac{r^2(1 - \sampleratio)}{\aspratio \compstieltjes(z/\sampleratio)}     V_{k,\ell}\\
    &= -\frac{r^2(1 - \sampleratio)}{\aspratio \compstieltjes(0)} \left(\frac{\tcompstieltjes '(0)}{\tcompstieltjes(0)^{2}} - 1 \right) {\quad \as}
\end{align*}

Lastly, the validity of exchanging the limits between $\npinfty$ and $\zinfty$ follows from the  argument as in the proof of  Lemma~\ref{lm:isobootbias}. This finishes the proof. 

\end{proof}

\subsubsection{Proof of Lemma~\ref{lm:v0tv}}

\begin{proof}[Proof of Lemma~\ref{lm:v0tv}]
    By Lemma~\ref{lm:limitdistrequiv}, we have 
    \begin{align*}
        \frac{\aspratio}{\sampleratio^{2}} \int \frac{\sampleratio \compstieltjes(0) t  \tlimitdistr(dt)}{1 + \sampleratio \compstieltjes(0)t} &= \frac{\aspratio}{\sampleratio^{2}} \int \frac{\sampleratio \compstieltjes(0) t  \limitdistr(dt)}{1 + (\sampleratio \compstieltjes(0) + (1 - \sampleratio)\compstieltjes(0))t}\\
        &= \frac{\aspratio}{\sampleratio} \int \frac{\compstieltjes(0) t  \limitdistr(dt)}{1 + \compstieltjes(0)t}\\
        &= 1,
    \end{align*}
   % where the first line follows from Lemma~\ref{lm:limitdistrequiv}, and 
   where the last line follows from the fact that $\compstieltjes(0)$ is a solution to equation~\eqref{eq:fixedpointsketching} at $z=0$. The desired result follows from that $\tcompstieltjes(0)$ is the unique positive solution of equation~\eqref{eq:fixpointbootstrap}.
\end{proof}

\subsubsection{Proof of Lemma~\ref{lm:limitdistrequiv}}
\begin{proof}[Proof of Lemma~\ref{lm:limitdistrequiv}]
Using  Assumption~\ref{Assume:Covdistri}, we have %\scomment{upper bound is not enough?}
    \begin{equation*}
        c_{\eigval}\leq \eigval_{\min}(\covmat) \leq \eigval_{\max}(\covmat) \leq C_{\eigval}, \quad 0\leq \eigval_{\min}((I + \ensambleweight(x)\covmat)\inv \covmat) \leq \eigval_{\max}((I + \ensambleweight(x)\covmat)\inv \covmat) \leq 1/\ensambleweight(x).
    \end{equation*}
    Therefore, for any $x \leq 0$ and continuous function $f$, $f$ is bounded on $\text{supp}(\limitdistr) \cup \text{supp}(\tlimitdistr_{x}) $. By the definitions of $\limitdistr$ and $\tlimitdistr_{x}$, we have 
    \begin{align*}
        \int f(t) d\tlimitdistr_{x}(t) &= \lim_{\npinfty} \frac{1}{\ndim}\sum_{i=1}^{\ndim} f \left( \frac{\eigval_{i}(\covmat)}{1 + \ensambleweight(x)\eigval_{i}(\covmat)} \right) \\
        &= \int f\left( \frac{t}{(1 + \ensambleweight(x)t)} \right) d\limitdistr(t).
    \end{align*}
    This completes the proof.
\end{proof}

\subsection{Technical lemmas}
This subsection proves technical lemmas that are used in the proofs of the supporting lemmas in the previous subsection. Our first lemma provides an extension of \cite[Lemma 2.1]{ledoit2011} to the sketched covariance matrix $X\transp \sketchmat\transp \sketchmat X/n $.

\begin{lemma}\label{lm:stieltjescov} 
    Assume Assumptions~\ref{Assume:highdim}-\ref{Assume:multip},  and $\aspratio > \sampleratio$. Let $X_{k}:= \sketchmat_{k}X $ be a subsample with multipliers of either zero or one. For any $z < 0$, it holds 
    \begin{equation*}
        \lim_{\npinfty} \frac{1}{\ndim} \tr((\hcovmat_{k} - zI)\inv \covmat) = \stieltjescov_{1}(z) \qas 
    \end{equation*}
    where %$\stieltjescov_{1}(z)$ is defined as
    \begin{align*}
    \stieltjescov_{1}(z) &= \frac{\sampleratio/\aspratio^{2}}{\sampleratio/\aspratio - 1 - z\stieltjes_{1}(z)} - 1/\aspratio 
    = \frac{1}{\aspratio} \left( \frac{\sampleratio}{-z \, \compstieltjes(z/\sampleratio)} - 1\right). 
    \end{align*}
\end{lemma}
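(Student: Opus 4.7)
The plan is to reduce the claim to classical Marchenko--Pastur/Silverstein theory for ordinary sample covariance matrices. Since $w_{k,i}\in\{0,1\}$, set $A_0:=\{i:w_{k,i}=1\}$; then by \eqref{eq:SSLNA0}, $|A_0|/n\to\theta$ almost surely. Writing $\bar\Sigma_n:=(1/|A_0|)\sum_{i\in A_0}x_ix_i^\top$ for the ordinary sample covariance, the exact identity $\hcovmat_k=(|A_0|/n)\bar\Sigma_n$ yields
\[
\tfrac{1}{\ndim}\tr\!\bigl(\covmat(\hcovmat_k-zI)^{-1}\bigr)=\tfrac{n}{|A_0|}\cdot\tfrac{1}{\ndim}\tr\!\bigl(\covmat(\bar\Sigma_n-w_nI)^{-1}\bigr),\quad w_n:=\tfrac{n}{|A_0|}z\to\tfrac{z}{\theta}\quad \as
\]

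Next, $\bar\Sigma_n$ is a standard sample covariance with population $\covmat$ and aspect ratio $d/|A_0|\to\gamma/\theta$, which satisfies Assumption \ref{Assume:Covdistri}. By the Silverstein theory (e.g.\ Theorem 2.7 of \cite{couillet2022}) applied with aspect ratio $\gamma/\theta$, for each fixed $w<0$,
\[
\tfrac{1}{\ndim}\tr\!\bigl(\covmat(\bar\Sigma_n-wI)^{-1}\bigr)\longrightarrow -\tfrac{1}{w}\int \tfrac{t\,dH(t)}{1+t\bar v(w)}\quad\as
\]
where $\bar v(w)$ is the companion Stieltjes transform, uniquely characterized by $\bar v(w)=\bigl(-w+(\gamma/\theta)\int t\,dH(t)/(1+t\bar v(w))\bigr)^{-1}$. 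This coincides with \eqref{eq:fixedpointsketching}; by the uniqueness in Lemma \ref{lm:v0}, $\bar v=\compstieltjes$. Combined with $|A_0|/n\to\theta$ and Vitali's convergence theorem (the map $w\mapsto \tfrac{1}{\ndim}\tr(\covmat(\bar\Sigma_n-wI)^{-1})$ is analytic in $w$ on $(-\infty,0)$ and uniformly bounded on compact subsets thanks to the lower bound on $\lambda_{\min}^+(\bar\Sigma_n)$ from Lemma \ref{lm:lowerboundeigval}), the above pointwise limit upgrades to locally uniform convergence, so one may legitimately substitute the random argument $w_n$ to obtain
\[
\tfrac{1}{\ndim}\tr\!\bigl(\covmat(\hcovmat_k-zI)^{-1}\bigr)\longrightarrow -\tfrac{1}{z}\int \tfrac{t\,dH(t)}{1+t\compstieltjes(z/\theta)}\quad\as
\]

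Finally, rearranging \eqref{eq:fixedpointsketching} at $z/\theta$ into the identity $\gamma\!\int t\,dH(t)/(1+t\compstieltjes(z/\theta))=\theta/\compstieltjes(z/\theta)+z$ rewrites the right-hand side as $\tfrac{1}{\gamma}\bigl(\theta/(-z\compstieltjes(z/\theta))-1\bigr)$, matching the second form of $\stieltjescov_1(z)$ in the statement. Equivalence with the first form is algebraic: one checks that the companion-Stieltjes relation takes the form $\compstieltjes(z/\theta)=\gamma \stieltjes_1(z)-(\theta-\gamma)/z$, which converts $\theta/(-z\compstieltjes(z/\theta))=\theta/(\theta-\gamma-\gamma z\stieltjes_1(z))$ and then a direct simplification produces $\tfrac{\theta/\gamma^2}{\theta/\gamma-1-z\stieltjes_1(z)}-1/\gamma$. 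The main obstacle is the substitution step, where the argument $w_n$ is itself random so that pointwise almost-sure convergence at a fixed $w$ does not directly transfer; the Vitali/equicontinuity upgrade handles this cleanly. As an alternative that avoids the reduction, one can perform a direct leave-one-out derivation using the quadratic-form concentration in Lemma \ref{lm:concentrationontrace}, at the price of tracking an auxiliary self-consistent identity coupling $\tr(\covmat Q)$ with $\tr(\covmat^2 Q)$.
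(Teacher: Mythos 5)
Your proposal is correct, but it proves the lemma by a different route than the paper. The paper works directly on the sketched resolvent: it applies the Sherman--Morrison/leave-one-out argument to $(\hcovmat_k-zI)^{-1}$ together with the quadratic-form concentration of Lemma~\ref{lm:concentrationontrace} to derive a fresh self-consistent identity $1=\frac{\sampleratio\,\stieltjescov_1(z)}{1+\aspratio\,\stieltjescov_1(z)}-z\stieltjes_1(z)$, which gives the first expression, and then obtains the second expression by identifying the companion Stieltjes transform of $\sketchmat_kXX^\T\sketchmat_k/\ndata$ with $\compstieltjes(z/\sampleratio)$ through the $\delta,\tilde\delta$ system of \cite{couillet2022}. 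You instead exploit the $0/1$ structure to write $\hcovmat_k=(|A_0|/n)\bar\Sigma_n$ exactly, transfer the problem to an ordinary sample covariance with aspect ratio $\aspratio/\sampleratio$, and import the known Ledoit--P\'ech\'e/Silverstein trace-functional limit $-\frac1w\int\frac{t\,dH(t)}{1+t\compstieltjes(w)}$, handling the random evaluation point $w_n=\frac{n}{|A_0|}z$ by the Vitali/Montel upgrade to locally uniform convergence (your algebraic identifications, including $\compstieltjes(z/\sampleratio)=\aspratio\stieltjes_1(z)-(\sampleratio-\aspratio)/z$, agree with the paper's \eqref{eq:stieltjescov2}--\eqref{eq:stieltjescov3}, and the constant bookkeeping $\frac{n}{|A_0|}\cdot(-\frac{1}{w_n})=-\frac1z$ is right). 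What your approach buys is economy: no new leave-one-out computation is needed, since the $0/1$ hypothesis (which the lemma indeed imposes) makes the reduction exact; what it costs is reliance on the deterministic-equivalent form of MP theory for $\frac1\ndim\tr(\covmat(\bar\Sigma-wI)^{-1})$ (i.e.\ Lemma 2.1 of \cite{ledoit2011} rather than mere ESD convergence) and on the random-argument substitution step, which the paper's direct derivation avoids because its self-consistent equation is obtained at the original ratio $\ndim/\ndata$. Two cosmetic points: the cited lower bound on $\eigval^{+}_{\min}$ is not needed for the uniform bound on compacta of $(-\infty,0)$ (the resolvent bound $1/|w|$ suffices), and you should state explicitly that the MP limit for the subsample is applied conditionally on the multiplier sequence (legitimate since $\cW_k$ is independent of $X$ and $|A_0|/n\to\sampleratio$ almost surely).
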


%and $\compstieltjes(z)$ represents the unique positive solution to equation~\eqref{eq:fixedpointsketching} for $z < 0$.
% $\stieltjes_{1}(z)$ denotes  the limiting empirical spectral distribution of the matrix $X\transp \sketchmat_{k} \sketchmat_{k} X/ \ndata$, 

\begin{proof}[Proof of Lemma~\ref{lm:stieltjescov}]
    Let $z < 0$. We begin with the following identity
    \begin{equation*}
        (\hcovmat_{k} - zI)\inv (\hcovmat_{k} - zI) = I.
    \end{equation*}
    Taking the trace and then multiplying both sides by $1/\ndim$, we obtain for any $z<0$ that %\scomment{How did you get the last line?}
    \$
    1  &= \frac{1}{\ndim}\, \tr((\hcovmat_{k} - zI)\inv \hcovmat_{k}) -  \frac{z}{\ndim}\, \tr(\hcovmat_{k} - zI)\inv\\
       &= \frac{1}{\ndim}\, \tr\left((\hcovmat_{k} - zI)\inv \frac{1}{\ndata} \sum_{i = 1}^{\ndata} \weight_{k,i} x_{i}x_{i}\transp \right) - \frac{z}{\ndim}\,\tr(\hcovmat_{k} - zI)\inv\\
       &= \frac{1}{\ndim}\, \tr\left( \sum_{i = 1}^{\ndata} \frac{(\hcovmat_{k, -i} - zI)\inv \weight_{k,i} x_{i}x_{i}\transp /\ndata }{1 +  \weight_{k,i} x_{i}\transp (\hcovmat_{k, -i} - zI)\inv x_{i}/ {\ndata} } \right) -  \frac{z}{\ndim}\,\tr(\hcovmat_{k} - zI)\inv, 
    \$
 where the last line uses  the Sherman–Morrison formula. {Using a similar argument as in the proof of Lemma~\ref{lm:isotrolimitresolvent}, we obtain} %\scomment{check?}
    \begin{align*}
        1 &= \frac{1}{\ndim}\, \tr\left( \sum_{i = 1}^{\ndata} \frac{(\hcovmat_{k, -i} - zI)\inv \weight_{k,i} x_{i}x_{i}\transp /\ndata }{1 +  \weight_{k,i} x_{i}\transp (\hcovmat_{k, -i} - zI)\inv x_{i}/ {\ndata} } \right) -  \frac{z}{\ndim}\,\tr(\hcovmat_{k} - zI)\inv \\
        &\overset{\as}{=} \frac{\sampleratio \stieltjescov_{1}(z)}{1 + \aspratio \stieltjescov_{1}(z)} - z\stieltjes_{1}(z)\\
        &= \frac{-\sampleratio/ \aspratio}{1 + \aspratio \stieltjescov_{1}(z)} + \sampleratio/ \aspratio - z\stieltjes_{1}(z).
    \end{align*}
    This leads to 
    \begin{equation}\label{eq:stieltjescov1}
        \stieltjescov_{1}(z) = \frac{\sampleratio/\aspratio^{2}}{\sampleratio/\aspratio - 1 - z\stieltjes_{1}(z)} - 1/\aspratio.
    \end{equation}
    
   Let $\compstieltjes_{1}(z)$ be the Stieltjes transform of the limiting empirical spectral distribution of the matrix $\sketchmat_{k} X   X \sketchmat_{k} \transp / \ndata$. Since the matrices $X\transp \sketchmat_{k} \sketchmat_{k} X/ \ndata$ and $\sketchmat_{k} X   X\transp \sketchmat_{k}  / \ndata$ share the same non-zero eigenvalues, we can establish %\scomment{Why?? It is hard to follow. }
    \begin{equation}\label{eq:stieltjescov2}
        \compstieltjes_{1}(z) + \frac{1 - \sampleratio}{z}= \aspratio \left( \stieltjes_{1}(z) + \frac{\aspratio -\sampleratio}{\aspratio} \frac{1}{z}\right).
    \end{equation}
    
     According to \citep[Theorem 2.7]{couillet2022} and since the multipliers are either  zero or one, we obtain  for $z < 0$ that
    \begin{equation*}
        \compstieltjes_{1}(z) = -\frac{1}{z} \EE_{\limitdistrweight}\left[ \frac{1}{1 +\delta(z) \weight}\right] = -\frac{1 - \sampleratio}{z} + \frac{\sampleratio}{-z - z\delta(z)},
    \end{equation*}
    where $\delta(z)$ and $\Tilde{\delta}(z)$ are the unique positive solution of the following equations for any $z<0$ %\scomment{Why do the following two equations have unique positive solutions?}
    \begin{align*}
        \delta(z) &= - \frac{\aspratio}{z} \int \frac{t \limitdistr(dt)}{1 + \Tilde{\delta}(z) t},\\
        \Tilde{\delta}(z) &= -\frac{1}{z} \EE_{\limitdistrweight}\left[ \frac{t}{1 +\delta(z) \weight}\right] = \frac{\sampleratio}{-z - z\delta(z)}.
    \end{align*}
Using these facts, we obtain
    \begin{align*}
        \frac{1}{\compstieltjes_{1}(z) + \frac{1 - \sampleratio}{z}} &= - \frac{z}{\sampleratio} (1 + \delta(z))\\
        &= -\frac{z}{\sampleratio} + \frac{\aspratio}{\sampleratio} \int \frac{t \limitdistr(dt)}{1 + \Tilde{\delta}(z) t}\\
        &= -\frac{z}{\sampleratio} + \frac{\aspratio}{\sampleratio} \int \frac{t \limitdistr(dt)}{1 + \left(\compstieltjes_{1}(z) + \frac{1 - \sampleratio}{z} \right) t}.
    \end{align*}
Comparing the above results with equation~\eqref{eq:fixedpointsketching}, we conclude 
    \begin{equation}\label{eq:stieltjescov3}
        \compstieltjes\left(\frac{z}{\sampleratio}\right) = \compstieltjes_{1}(z) + \frac{1 - \sampleratio}{z}.
    \end{equation}
The result follows from equations \eqref{eq:stieltjescov1}, \eqref{eq:stieltjescov2}, and \eqref{eq:stieltjescov3}.
\end{proof}

\begin{lemma}\label{lm:Crossterm}
    Let $\C: = \{ i~|~ \weight_{k,i} \neq 0, \weight_{\ell,i} \neq 0 \}$  and $\hcovmat_{\C} := \sum_{i \in \C} x_{i}x_{i}\transp/\ndata.$
    %\begin{equation*}
        %\A: = \{ i~|~ \weight_{k,i} \neq 0, \weight_{\ell,i} = 0 \}, \quad \B: = \{ i~|~ \weight_{k,i} = 0, \weight_{\ell,i} \neq 0 \}, \quad \C: = \{ i~|~ \weight_{k,i} \neq 0, \weight_{\ell,i} \neq 0 \},
    %\end{equation*}
    %\begin{equation*}
        %\hcovmat_{\A} =: \sum_{i \in \A} \frac{1}{\ndata} x_{i}x_{i}\transp, \quad %\hcovmat_{\B} =: \sum_{i \in \B} \frac{1}{\ndata} x_{i}x_{i}\transp, \quad %\hcovmat_{\C} =: \sum_{i \in \C} \frac{1}{\ndata} x_{i}x_{i}\transp.
    %\end{equation*}
    Assume Assumption~\ref{Assume:highdim}-\ref{Assume:multip}, and the multipliers are either zero or one. Then, for any $z < 0$, we have %\scomment{Bernoulli weights only?}
    \begin{align}
        &\lim_{\npinfty} \frac{1}{\ndata \ndim}\tr\left( (\hcovmat_{k} - zI)\inv X\transp \sketchmat_{k}^{2} \sketchmat_{\ell}^{2} X (\hcovmat_{\ell} - zI) \inv \covmat \right) \notag\\ \label{eq:crossterm2}
        &\quad\quad=\lim_{\npinfty} \frac{1}{\ndim}\tr\left( (-z\ensambleweight(z)\covmat + \hcovmat_{\C} - zI)^{-1} \hcovmat_{\C} (-z\ensambleweight(z)\covmat + \hcovmat_{\C} - zI)^{-1} \covmat \right) \qas
    \end{align}
    where $\ensambleweight(z): = (1 - \sampleratio)v(z/\sampleratio).$
\end{lemma}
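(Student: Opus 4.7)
The plan starts by exploiting that the multipliers take values in $\{0,1\}$, so that $\sketchmat_k^2\sketchmat_\ell^2 = \sketchmat_k\sketchmat_\ell$ restricts to indices in $\C$, giving $X\transp \sketchmat_k^2\sketchmat_\ell^2 X/n = \hcovmat_\C$. Thus the left-hand side collapses to $d^{-1}\tr\!\big((\hcovmat_k - zI)\inv \hcovmat_\C (\hcovmat_\ell - zI)\inv \covmat\big)$. Next, I decompose $\hcovmat_k = \hcovmat_\A + \hcovmat_\C$ and $\hcovmat_\ell = \hcovmat_\B + \hcovmat_\C$, where $\A = \{i : \weight_{k,i}\ne 0,\,\weight_{\ell,i}=0\}$ and $\B$ is defined symmetrically. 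Because $\A,\B,\C$ index disjoint samples, the matrices $\hcovmat_\A$, $\hcovmat_\B$, $\hcovmat_\C$ are mutually independent conditional on the multipliers, and by Assumption \ref{Assume:multip} the fractions $|\A|/n$ and $|\B|/n$ converge almost surely to $\sampleratio(1-\sampleratio)$ while $|\C|/n\to\sampleratio^2$.

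The crux is a deterministic-equivalent identity for the shifted resolvent. Conditionally on $\hcovmat_\C$, view $\hcovmat_\A$ as a (rescaled) sample covariance matrix whose rows have covariance $\covmat$, facing a deterministic perturbation $\hcovmat_\C - zI$. A Silverstein-type argument via the Sherman--Morrison formula, together with the quadratic-form concentration of Lemma \ref{lm:concentrationontrace} and the rank-one perturbation control of Lemma \ref{lm:boundrankoneperturb}, mirroring the route used in Lemmas \ref{lm:MPlawsketching} and \ref{lm:stieltjescov}, should yield
\begin{equation*}
\tfrac{1}{d}\tr\!\big((\hcovmat_\A + \hcovmat_\C - zI)\inv M\big) - \tfrac{1}{d}\tr\!\big((-z\ensambleweight(z)\covmat + \hcovmat_\C - zI)\inv M\big) \longrightarrow 0 \qas
\end{equation*}
for every deterministic $M$ of bounded operator norm, and symmetrically with $\A$ replaced by $\B$. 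Matching the Silverstein equation that arises for the subsample $\A$ (with effective aspect ratio $\aspratio/(\sampleratio(1-\sampleratio))$) against the defining equation \eqref{eq:fixedpointsketching} for $v$ evaluated at $z/\sampleratio$ should collapse the auxiliary fixed point to the compact form $\ensambleweight(z) = (1-\sampleratio)\compstieltjes(z/\sampleratio)$ declared in the statement.

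Plugging both equivalents back into the trace, the remaining task is to show that the substitution errors vanish almost surely, which follows from the uniform spectral bounds in Lemma \ref{lm:lowerboundeigval} and Assumption \ref{Assume:Covdistri} combined with dominated convergence on the compact sets of $z<0$, and then replacing $\hcovmat_\ell - zI$ by its deterministic equivalent in exactly the same manner. The main obstacle is the deterministic-equivalent step: the shift $\hcovmat_\C - zI$ is itself random, so the standard Silverstein machinery applies only after conditioning, and one must verify that the conditional approximation is uniform over the almost-sure spectral range of $\hcovmat_\C$. Algebraically collapsing the nested Silverstein fixed-point equation into the clean closed form $\ensambleweight(z) = (1-\sampleratio)\compstieltjes(z/\sampleratio)$ is the heart of the argument and is where the Bernoulli reduction provided by the $\{0,1\}$-multipliers assumption is essential.
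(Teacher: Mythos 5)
Your scaffolding --- Bernoulli reduction to $\hcovmat_\C$, disjoint decomposition $\hcovmat_k = \hcovmat_\A + \hcovmat_\C$, $\hcovmat_\ell = \hcovmat_\B + \hcovmat_\C$, Sherman--Morrison leave-one-out, concentration via Lemma~\ref{lm:concentrationontrace}, and the cardinality convergences $|\A|/\ndata\to\sampleratio(1-\sampleratio)$, $|\C|/\ndata\to\sampleratio^2$ --- matches the paper's proof. However, your description of the crux step is not what actually happens, and as written it leads to a dead end.

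You propose conditioning on $\hcovmat_\C$ and deriving a nested Silverstein fixed point for $\hcovmat_\A$ alone ``with effective aspect ratio $\aspratio/(\sampleratio(1-\sampleratio))$'' facing the random shift $\hcovmat_\C - zI$, then matching that fixed point against equation~\eqref{eq:fixedpointsketching}. That is not how the closed form $\ensambleweight(z)=(1-\sampleratio)\compstieltjes(z/\sampleratio)$ emerges, and this route would leave you with a fixed-point equation parametrized by the \emph{random} spectrum of $\hcovmat_\C-zI$ with no obvious way to collapse it. The paper avoids any nested fixed point entirely. When you expand $\hcovmat_\A$ by Sherman--Morrison inside $(\hcovmat_\A+\hcovmat_\C-zI)^{-1}$, the denominator is $1+\frac{1}{\ndata}x_i\transp(\hcovmat_{\A,-i}+\hcovmat_\C - zI)^{-1}x_i$ --- a quadratic form against the resolvent of the \emph{full} sketched covariance $\hcovmat_k$ (minus one sample), whose aspect ratio is $\aspratio/\sampleratio$, not $\aspratio/(\sampleratio(1-\sampleratio))$. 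Its concentration target is $1+\aspratio\stieltjescov_1(z)$, already computed in closed form in Lemma~\ref{lm:stieltjescov}, which yields $(1+\aspratio\stieltjescov_1(z))^{-1}=-z\compstieltjes(z/\sampleratio)/\sampleratio$. The factor $\sampleratio(1-\sampleratio)$ enters separately as $\lim|\A|/\ndata$ when the sum over $i\in\A$ is converted to an expectation, and the product $\sampleratio(1-\sampleratio)\cdot(-z\compstieltjes(z/\sampleratio)/\sampleratio)\covmat=-z\ensambleweight(z)\covmat$ gives the declared replacement. So the heavy lifting has already been done in Lemma~\ref{lm:stieltjescov}; there is no new Silverstein equation to solve.

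A smaller but real slip: your claimed substitution lemma ``for every deterministic $M$ of bounded operator norm'' does not cover the actual application, where the multiplier is $\hcovmat_\C(\hcovmat_\B+\hcovmat_\C - zI)^{-1}\covmat$ --- a random matrix. What is actually needed (and what Lemma~\ref{lm:concentrationontrace} provides) is independence of the matrix from the leave-one-out $x_i$ with $i\in\A$, which holds because $\A$ is disjoint from $\B\cup\C$; the paper works directly with the full trace expression rather than through a generic-$M$ substitution. Folding both replacements (for $\A$ and for $\B$) through this argument, rather than through a conditional-on-$\hcovmat_\C$ deterministic equivalent, is what makes the proof go through cleanly.
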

\begin{proof}[Proof of Lemma~\ref{lm:Crossterm}]
In addition to $C$ and $\hcovmat_{\C}$, we define 
\$
\A: &= \{ i~|~ \weight_{k,i} \neq 0, \weight_{\ell,i} = 0 \}, \quad \B: = \{ i~|~ \weight_{k,i} = 0, \weight_{\ell,i} \neq 0 \},\\
\hcovmat_{\A} &:= \sum_{i \in \A} \frac{1}{\ndata} x_{i}x_{i}\transp, \quad \hcovmat_{\B} := \sum_{i \in \B} \frac{1}{\ndata} x_{i}x_{i}\transp. 
\$
It is straightforward to see that  $A, B$, and $C$ are disjoint sets. Since the multipliers are either zero or one, we have
    \begin{align*}
        \hcovmat_{k} &= \sum_{i : \weight_{k,i} \neq 0} \frac{1}{\ndata} x_{i}x_{i}\transp = \hcovmat_{\A} + \hcovmat_{\C},\\
        \hcovmat_{\ell} &= \sum_{i : \weight_{\ell,i} \neq 0} \frac{1}{\ndata} x_{i}x_{i}\transp = \hcovmat_{\B} + \hcovmat_{\C},\\
        \frac{1}{n} X\transp \sketchmat_{k}^{2} \sketchmat_{\ell}^{2} X &= \sum_{i : \weight_{k,i} \neq 0, \weight_{\ell,i} \neq 0} \frac{1}{\ndata} x_{i}x_{i}\transp = \hcovmat_{\C}.
    \end{align*}
    Therefore, we have 
    \begin{equation*}
        \frac{1}{\ndata \ndim}\tr\left( (\hcovmat_{k} - zI)\inv X\transp \sketchmat_{k}^{2} \sketchmat_{\ell}^{2} X (\hcovmat_{\ell} - zI) \inv \covmat \right) = \frac{1}{\ndim}\tr\left( (\hcovmat_{\A} + \hcovmat_{\C} - zI)\inv \hcovmat_{\C} (\hcovmat_{\B} + \hcovmat_{\C} - zI)\inv \covmat \right).
    \end{equation*}

Let  
\begin{align*}
        T_{1}: &= \frac{1}{\ndim} \tr\left( (\hcovmat_{\A} + \hcovmat_{\C} - zI)\inv \hcovmat_{\C} (\hcovmat_{\B} + \hcovmat_{\C} - zI)\inv \covmat \right) \\
        &\quad \quad -\frac{1}{\ndim} \tr\left( (-z\ensambleweight(z)\covmat + \hcovmat_{\C} - zI)\inv \hcovmat_{\C} (\hcovmat_{\B} + \hcovmat_{\C} - zI)\inv \covmat \right),\\
        T_{2}: &= \frac{1}{\ndim}\tr\left( (-z\ensambleweight(z)\covmat + \hcovmat_{\C} - zI)\inv \hcovmat_{\C} (\hcovmat_{\B} + \hcovmat_{\C} - zI)\inv \covmat \right) \\
        &\quad \quad -\frac{1}{\ndim}\tr\left( (-z\ensambleweight(z)\covmat + \hcovmat_{\C} - zI)^{-1} \hcovmat_{\C} (-z\ensambleweight(z)\covmat + \hcovmat_{\C} - zI)^{-1} \covmat \right).
    \end{align*} 
To prove \eqref{eq:crossterm2}, we compare both sides by writing their difference as
\begin{align*}
&\frac{1}{d}\tr\left( (\hcovmat_{\A} + \hcovmat_{\C} - zI)\inv \hcovmat_{\C} (\hcovmat_{\B} + \hcovmat_{\C} - zI)\inv \covmat \right) \\
&\quad - \frac{1}{d}\tr\left( (-z\ensambleweight(z)\covmat + \hcovmat_{\C} - zI)\inv \hcovmat_{\C} (-z\ensambleweight(z)\covmat + \hcovmat_{\C} - zI)\inv \covmat \right) \notag \\
%&=\tr\left( (\hcovmat_{\A} + \hcovmat_{\C} - zI)\inv \hcovmat_{\C} (\hcovmat_{\B} + \hcovmat_{\C} - zI)\inv \covmat \right) - \tr\left( (-z\ensambleweight(z)\covmat + \hcovmat_{\C} - zI)\inv \hcovmat_{\C} (\hcovmat_{\B} + \hcovmat_{\C} - zI)\inv \covmat \right)\\
%&+ \tr\left( (-z\ensambleweight(z)\covmat + \hcovmat_{\C} - zI)\inv \hcovmat_{\C} (\hcovmat_{\B} + \hcovmat_{\C} - zI)\inv \covmat \right) - \tr\left( (-z\ensambleweight(z)\covmat + \hcovmat_{\C} - zI)^{-1} \hcovmat_{\C} (-z\ensambleweight(z)\covmat + \hcovmat_{\C} - zI)^{-1} \covmat \right)\\
&= T_1 + T_2. 
\end{align*}
Therefore, it suffices to show   %we aim to show that:
    \begin{align*}
        \lim_{\npinfty} T_{1} = 0, \qquad \lim_{\npinfty} T_{2} = 0 \qas
    \end{align*}

For $T_{1}$, we have 
\$
    T_1 %&\tr\left( (\hcovmat_{\A} + \hcovmat_{\C} - zI)\inv \hcovmat_{\C} (\hcovmat_{\B} + \hcovmat_{\C} - zI)\inv \covmat \right) - \tr\left( (-z\ensambleweight(z)\covmat + \hcovmat_{\C} - zI)\inv \hcovmat_{\C} (\hcovmat_{\B} + \hcovmat_{\C} - zI)\inv \covmat \right)\notag \\ \label{eq:crossterm2}
    &= \frac{1}{d} \tr\left( (\hcovmat_{\A} + \hcovmat_{\C} - zI)\inv (-z\ensambleweight(z)\covmat - \hcovmat_{\A}) (-z\ensambleweight(z)\covmat + \hcovmat_{\C} - zI)\inv \hcovmat_{\C} (\hcovmat_{\B} + \hcovmat_{\C} - zI)\inv \covmat \right)\\
    &= - \frac{1}{d} \tr\left( (\hcovmat_{\A} + \hcovmat_{\C} - zI)\inv  \hcovmat_{\A} (-z\ensambleweight(z)\covmat + \hcovmat_{\C} - zI)\inv \hcovmat_{\C} (\hcovmat_{\B} + \hcovmat_{\C} - zI)\inv \covmat \right) \\
    &\qquad - \frac{1}{d} \tr\left( (\hcovmat_{\A} + \hcovmat_{\C} - zI)\inv  z\ensambleweight(z)\covmat  (-z\ensambleweight(z)\covmat + \hcovmat_{\C} - zI)\inv \hcovmat_{\C} (\hcovmat_{\B} + \hcovmat_{\C} - zI)\inv \covmat \right) \\
    &= - T_{11} - T_{12},
\$
    where the first equality uses the identity $A\inv - B\inv = A\inv(B - A)B\inv$ for any invertible matrices $A$ and $B$. For  $T_{11}$, we have %\scomment{$-zk(z)$ or $zk(z)$????}
    \begin{align*}
        T_{11}&=\frac{1}{d} \tr\left( (\hcovmat_{\A} + \hcovmat_{\C} - zI)\inv \hcovmat_{\A} (-z\ensambleweight(z)\covmat + \hcovmat_{\C} - zI)\inv \hcovmat_{\C} (\hcovmat_{\B} + \hcovmat_{\C} - zI)\inv \covmat \right)\\
        =&\frac{1}{d} \tr\left( (\hcovmat_{\A} + \hcovmat_{\C} - zI)\inv \sum_{i \in \A} \frac{1}{\ndata} x_{i}x_{i}\transp (-z\ensambleweight(z)\covmat + \hcovmat_{\C} - zI)\inv \hcovmat_{\C} (\hcovmat_{\B} + \hcovmat_{\C} - zI)\inv \covmat \right)\\
        =&\frac{1}{d} \tr\Bigg( \sum_{i \in \A} \left( (\hcovmat_{\A, -i} + \hcovmat_{\C} - zI)\inv  \frac{1}{\ndata} \frac{x_{i}x_{i}\transp}{1 + \frac{1}{\ndata}x_{i}\transp (\hcovmat_{\A, -i} + \hcovmat_{\C} - zI)\inv x_{i}}\right) \\
        &\qquad\qquad\qquad \cdot (-z\ensambleweight(z)\covmat + \hcovmat_{\C} - zI)\inv \hcovmat_{\C} (\hcovmat_{\B} + \hcovmat_{\C} - zI)\inv \covmat \Bigg),
    \end{align*}
    where $\hcovmat_{\A, -i} := \hcovmat_{\A} - \frac{1}{\ndata}x_{i}x_{i}\transp$, and the last line uses the Sherman–Morrison formula. Applying Lemma~\ref{lm:concentrationontrace} and Lemma \ref{lm:petubation}, we obtain
    \begin{align}\label{eq:crossterm3}
         &\lim_{\npinfty} \frac{1}{\ndim}\tr\left( (\hcovmat_{\A} + \hcovmat_{\C} - zI)\inv \hcovmat_{\A} (-z\ensambleweight(z)\covmat + \hcovmat_{\C} - zI)\inv \hcovmat_{\C} (\hcovmat_{\B} + \hcovmat_{\C} - zI)\inv \covmat \right) \notag \\ 
        &\quad\quad\quad\quad   -\frac{1}{\ndim} \tr\Bigg( (\hcovmat_{\A} + \hcovmat_{\C} - zI)\inv \sum_{i \in \A} \frac{1}{\ndata} \frac{\covmat}{1 + \aspratio \frac{1}{\ndim} \tr\big((\hcovmat_{\A} + \hcovmat_{\C} - zI)\inv\covmat\big)} \nn\\
        & \qquad \qquad \qquad \qquad \cdot (-z\ensambleweight(z)\covmat + \hcovmat_{\C} - zI)\inv \hcovmat_{\C} (\hcovmat_{\B} + \hcovmat_{\C} - zI)\inv \covmat \Bigg) \nn \\
        &= 0 \qas
    \end{align}
     Furthermore, by Lemma~\ref{lm:stieltjescov}, we obtain
     \begin{align}
        \lim_{\npinfty} \frac{1}{1 + \aspratio \frac{1}{\ndim} \tr(\hcovmat_{\A} + \hcovmat_{\C} - zI)\inv\covmat} &= \frac{1}{1 + \aspratio \stieltjescov_{1}(z)} \qas \notag \\
        &= \frac{1}{1 + \left(-\frac{\sampleratio}{z \compstieltjes(z/\sampleratio) - 1}\right)} \notag \\
        &= -\frac{z\compstieltjes(z/\sampleratio)}{\sampleratio} \notag \\ \label{eq:crossterm4}
        &= -\frac{z \ensambleweight(z)}{\sampleratio(1-\sampleratio)} 
     \end{align}
     Moreover, under Assumption~\ref{Assume:multip}, the cardinality of $\A$ satisfies
     \begin{equation} \label{eq:crossterm5}
         |\A|/\ndata = \frac{1}{\ndata} \sum_{i=1}^{\ndata} 1\left({\weight_{k,i} \neq 0, \weight_{\ell,i} = 0}\right) \rightarrow \sampleratio(1 - \sampleratio) \qas
     \end{equation}
     Therefore, by combining~\eqref{eq:crossterm3}-\eqref{eq:crossterm5}, we obtain
     \begin{equation*}
         \lim_{\npinfty} T_{11} + T_{12} = 0 \qas,
     \end{equation*}
     and thus $ \lim_{\npinfty} T_{1} =0$ almost surely. 

 The above argument can be applied to $T_2$ to obtain
     \begin{equation*}
         \lim_{\npinfty} T_{2} = 0 \qas
     \end{equation*}
     This completes the proof.
\end{proof}

\begin{lemma}\label{lm:fixpointequiv}
    Assume Assumption~\ref{Assume:Covdistri}, $\aspratio/\sampleratio > 1$ and $x \leq 0$. Let $\tcompstieltjes(z,x)$ be the unique positive solution of equation~\eqref{eq:fixpointkx} for $z \leq 0$. Then,
    \begin{equation*}
        \lim_{\zinfty} \tcompstieltjes(z,z) = \tcompstieltjes(0), \quad \lim_{\zinfty} \tcompstieltjes '(z,z) = \tcompstieltjes '(0),
    \end{equation*}
    where the derivative is taken with respect to the first variable, $\tcompstieltjes(0)$ is the unique positive solution of equation~\eqref{eq:fixpointbootstrap}, and $\tcompstieltjes'(0) = \lim_{\zinfty} \tcompstieltjes'(z)$.
\end{lemma}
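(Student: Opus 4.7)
The strategy is to use Lemma~\ref{lm:limitdistrequiv} to rewrite every integral against the shifted distribution $\tlimitdistr_x$ as an integral against the reference distribution $\limitdistr$, after which the continuity of $z\mapsto \ensambleweight(z) = (1-\sampleratio)\compstieltjes(z/\sampleratio)$ at $z=0$ (guaranteed by Lemma~\ref{lm:v0} since $\aspratio/\sampleratio>1$) collapses the double $z$-dependence into something that is easy to pass to the limit.

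First, applying Lemma~\ref{lm:limitdistrequiv} with $f(t)=t/(1+\tcompstieltjes(z,z)t)$ recasts the defining equation~\eqref{eq:fixpointkx} as
\begin{equation*}
\tcompstieltjes(z,z) = \Bigl(-z + \tfrac{\aspratio}{\sampleratio^{2}}\int \frac{t}{1+(\ensambleweight(z)+\tcompstieltjes(z,z))t}\,\limitdistr(dt)\Bigr)^{-1},
\end{equation*}
and the very same manipulation turns~\eqref{eq:fixpointbootstrap} into the identical equation evaluated at $z=0$ with $\ensambleweight(0)$ in place of $\ensambleweight(z)$. I would then show that $\tcompstieltjes(z,z)$ stays in a compact interval of $(0,\infty)$ as $z\nearrow 0$: an upper bound follows from the bounded support of $\limitdistr$ (Assumption~\ref{Assume:Covdistri}), and a lower bound from bounding the integral on the right. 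By the dominated convergence theorem, together with continuity of $\ensambleweight(\cdot)$ at $0$, any subsequential limit $v^{\star}$ of $\tcompstieltjes(z,z)$ must satisfy the $z=0$ version of the fixed-point equation, which by the uniqueness in Lemma~\ref{lm:v0} forces $v^{\star}=\tcompstieltjes(0)$. This yields the first claim.

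For the derivative, implicit differentiation of the rewritten equation with respect to the first argument (holding the second fixed) gives
\begin{equation*}
\tcompstieltjes'(z,x) = \frac{\tcompstieltjes(z,x)^{2}}{1 - \tfrac{\aspratio}{\sampleratio^{2}}\tcompstieltjes(z,x)^{2}\displaystyle\int \frac{t^{2}}{(1+(\ensambleweight(x)+\tcompstieltjes(z,x))t)^{2}}\,\limitdistr(dt)}.
\end{equation*}
Setting $x=z$ and letting $z\nearrow 0$, the numerator converges to $\tcompstieltjes(0)^{2}$ by the first part, and the integral converges by dominated convergence and continuity of $\ensambleweight$. An entirely parallel implicit differentiation of~\eqref{eq:fixpointbootstrap} expresses $\tcompstieltjes'(0)$ as the same ratio but integrated against $\tlimitdistr$ with denominator $(1+\tcompstieltjes(0)t)^{2}$; a second application of Lemma~\ref{lm:limitdistrequiv} to $f(t)=t^{2}/(1+\tcompstieltjes(0)t)^{2}$ converts that integral back to the one above with $\ensambleweight(0)+\tcompstieltjes(0)$ in the denominator. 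The two formulas match, yielding $\lim_{\zinfty}\tcompstieltjes'(z,z)=\tcompstieltjes'(0)$.

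The main obstacle is verifying that the denominator in the implicit-differentiation formula is bounded away from zero along the limit, i.e.\ that
\begin{equation*}
1 - \tfrac{\aspratio}{\sampleratio^{2}}\tcompstieltjes(0)^{2}\int \frac{t^{2}}{(1+(\ensambleweight(0)+\tcompstieltjes(0))t)^{2}}\,\limitdistr(dt) > 0;
\end{equation*}
otherwise the derivative blows up and one cannot pass to the limit. This positivity is the infinitesimal counterpart of the strict monotonicity used to establish uniqueness of the fixed point in Lemma~\ref{lm:v0}: if one differentiates the defining relation $1=\tfrac{\aspratio}{\sampleratio^{2}}\tcompstieltjes(0)\int t/(1+(\ensambleweight(0)+\tcompstieltjes(0))t)\limitdistr(dt) + \text{(boundary correction from }-z\text{)}$, the stated positivity is equivalent to the defining map of the fixed point having non-degenerate (contractive) derivative at $\tcompstieltjes(0)$, which is available from the $\aspratio/\sampleratio>1$ assumption. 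Carefully spelling out this non-degeneracy, and then justifying uniform boundedness of the integrands over a one-sided neighbourhood of $z=0$ (needed to invoke dominated convergence), is where the bulk of the work sits.
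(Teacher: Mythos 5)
Your proposal is correct, and its second half is essentially the paper's argument: the paper likewise rewrites \eqref{eq:fixpointkx} through Lemma~\ref{lm:limitdistrequiv}, uses exactly your implicit-differentiation formula (it appears there as equation~\eqref{eq:vderivform}), and passes to the limit by dominated convergence together with the first claim. Where you genuinely differ is the first claim: instead of compactness, subsequential limits, and uniqueness of the fixed point, the paper exploits monotonicity of the map $f(c,x)=\frac{\sampleratio^{2}}{\aspratio}zc+\int\frac{1+\ensambleweight(x)t}{1+\ensambleweight(x)t+ct}\,\limitdistr(dt)$ (decreasing in $c$, increasing in $x$ through $\ensambleweight(x)$) to obtain the two-sided bound $\tcompstieltjes(z,0)\,\ensambleweight(z)/\ensambleweight(0)\le\tcompstieltjes(z,z)\le\tcompstieltjes(z,0)$ and then lets $\zinfty$. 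Your route is more generic (no monotonicity needed, only a priori bounds plus uniqueness at $z=0$, which Lemma~\ref{lm:v0} supplies), while the paper's sandwich gives explicit two-sided control and avoids any tightness verification. One small correction to your sketch: the bounded spectrum is used on both ends, namely $t\le C_\lambda$ yields the \emph{lower} bound $\tcompstieltjes(z,z)\ge(-z+\tfrac{\aspratio}{\sampleratio^{2}}C_\lambda)^{-1}$, while $t\ge c_\lambda$ together with $\aspratio/\sampleratio^{2}>1$ yields the \emph{upper} bound $\tcompstieltjes(z,z)\le(1+\ensambleweight(z)c_\lambda)/\bigl(c_\lambda(\aspratio/\sampleratio^{2}-1)\bigr)$, with $\ensambleweight(z)\le\ensambleweight(0)<\infty$.

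The issue you single out as the bulk of the work, positivity of the limiting denominator, is in fact immediate and needs no separate non-degeneracy analysis. Either invoke Lemma~\ref{lm:v0}, which already asserts that $\tcompstieltjes'(0)=\lim_{\zinfty}\tcompstieltjes'(z)$ exists and is finite, so by your formula the denominator converges to $\tcompstieltjes(0)^{2}/\tcompstieltjes'(0)>0$; or argue directly from the fixed-point identity at $z=0$, namely $\tfrac{\aspratio}{\sampleratio^{2}}\tcompstieltjes(0)\int\frac{t}{1+\tcompstieltjes(0)t}\,\tlimitdistr(dt)=1$: since $\frac{\tcompstieltjes(0)t}{1+\tcompstieltjes(0)t}\le\frac{\tcompstieltjes(0)C_\lambda}{1+\tcompstieltjes(0)C_\lambda}<1$ on the bounded support of $\tlimitdistr$, it follows that $\tfrac{\aspratio}{\sampleratio^{2}}\int\frac{\tcompstieltjes(0)^{2}t^{2}}{(1+\tcompstieltjes(0)t)^{2}}\,\tlimitdistr(dt)\le\frac{\tcompstieltjes(0)C_\lambda}{1+\tcompstieltjes(0)C_\lambda}<1$, so the denominator is bounded below by $(1+\tcompstieltjes(0)C_\lambda)^{-1}$. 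With that one-line observation your argument closes completely.
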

\begin{proof}[Proof of Lemma~\ref{lm:fixpointequiv}]
Recall from the proof of Lemma \ref{lm:corBaggingvar} that $k(x)= (1-\sampleratio)v(x/\sampleratio)$ for $x\leq 0$,  $\tlimitdistr_{x}$ is  the limiting empirical spectral distribution of $(I + \ensambleweight(x)\covmat)^{-1} \covmat$, and recall  equation \eqref{eq:fixpointkx}
\#\label{eq:fixpointkx_2}
\tcompstieltjes(z,x) = \left( -z + \frac{\aspratio}{\sampleratio^{2}} \int \frac{t\tlimitdistr_{x}(dt)}{1 + \tcompstieltjes(z,x)t} \right)^{-1}.
\#
We start by rewriting the above  equation as
    \begin{align*}
        1 - \frac{\sampleratio^{2}}{\aspratio} &= \frac{\sampleratio^{2}}{\aspratio} z \tcompstieltjes(z,x) + \int \frac{1}{1 + \tcompstieltjes(z,x) t} d\tlimitdistr_{x}(t) \\
        &= \frac{\sampleratio^{2}}{\aspratio} z \tcompstieltjes(z,x) + \int \frac{1 + \ensambleweight(x)t}{1 + \ensambleweight(x)t + \tcompstieltjes(z,x) t} d\limitdistr(t),
    \end{align*}
    where the last line follows from Lemma~\ref{lm:limitdistrequiv}. Let 
    \begin{equation*}
        f(c,x) = \frac{\sampleratio^{2}}{\aspratio} z c + \int \frac{1 + \ensambleweight(x)t}{1 + \ensambleweight(x)t + c t} d\limitdistr(t),
    \end{equation*}
    and thus $f(\tcompstieltjes(z,x), x) =  1 - \sampleratio^{2}/ \aspratio$. 
    In what follows, we upper and lower bound $\tcompstieltjes(z,x)$ in terms of $\tcompstieltjes(z,0)$. %\scomment{it's really hard to follow what you want to express.} 

   \paragraph{Upper bound}  Recall that $\compstieltjes(x/\sampleratio)$ is the limiting Stieltjes transform of $\sketchmat X X\transp \sketchmat/\ndata$, which is an increasing function of $x$ on $(-\infty, 0 ]$. Then $\ensambleweight(x) = (1 - \sampleratio) \compstieltjes(x/ \sampleratio)$  is also an increasing function of $x$ on $(-\infty, 0 ]$. 
    Therefore, for any fixed $c$, $f(c,x)$ is an increasing function of $x$ on $(-\infty, 0]$. Furthermore, for $z \leq 0$ and any fixed $x$, $f(c,x)$ is a decreasing function of $c$ on $[0, +\infty)$. Then, fixing some $z \leq 0$ and for any $x \leq 0$, we have %\scomment{Why? We should use the monotonicity of $\tcompstieltjes(z,x)$ of $x$?}
    \begin{align*}
        f(\tcompstieltjes(z,0), x) = \frac{\sampleratio^{2}}{\aspratio} z \tcompstieltjes(z,0)  + \int \frac{1 + \ensambleweight(x)t}{1 + \ensambleweight(x)t + \tcompstieltjes(z,0) t} d\limitdistr(t) \leq 1 - \frac{\sampleratio^{2}}{\aspratio}. 
    \end{align*}
    Thus we obtain
    \#\label{eq:vzx_lower}
     \tcompstieltjes(z,x)\leq \tcompstieltjes(z,0). 
    \#

   \paragraph{Lower bound} For $z \leq 0$ and $x \leq 0$, we have 
    \begin{align*}
        f\left(\frac{\ensambleweight(x)}{\ensambleweight(0)}\tcompstieltjes(z,0), x \right) &= \frac{\sampleratio^{2}}{\aspratio} z \frac{\ensambleweight(x)}{\ensambleweight(0)}\tcompstieltjes(z,0) + \int \frac{1 + \ensambleweight(x)t}{1 + \ensambleweight(x)t + \frac{\ensambleweight(x)}{\ensambleweight(0)}\tcompstieltjes(z,0) t} d\limitdistr(t)\\
        &\geq \frac{\sampleratio^{2}}{\aspratio} z \tcompstieltjes(z,0) +  \int \frac{\frac{\ensambleweight(0)}{\ensambleweight(x)} + \ensambleweight(0)t}{\frac{\ensambleweight(0)}{\ensambleweight(x)} + \ensambleweight(0)t + \tcompstieltjes(z,0) t} d\limitdistr(t)\\
        &\geq 1 - \frac{\sampleratio^{2}}{\aspratio} \\
        &= f(\tcompstieltjes(z,x), x), 
    \end{align*}
    which implies
    \#\label{eq:vzx_upper}
    \tcompstieltjes(z,0) \frac{\ensambleweight(x)}{\ensambleweight(0)}  \leq \tcompstieltjes(z,x). 
    \#
%    Therefore
%    \begin{align*}
%        \tcompstieltjes(z,0) - \tcompstieltjes(z,x) \leq \frac{\ensambleweight(0) - \ensambleweight(x)}{\ensambleweight(x)}\tcompstieltjes(z,0).
%    \end{align*}
   % This upper bound does not depend on $x$ and implies 
    %\begin{equation}\label{eq:fixpointequiv1}
      %  \lim_{\zinfty} \tcompstieltjes(z,z) = \lim_{\zinfty} \tcompstieltjes(z,0) = \tcompstieltjes(0).
   % \end{equation}

Combining the lower and upper bounds for  $\tcompstieltjes(z,x)$ and plugging $x=z$ into $\tcompstieltjes(z,x)$, we obtain
\$
\tcompstieltjes(z,0) \frac{\ensambleweight(z)}{\ensambleweight(0)}  \leq \tcompstieltjes(z,z) \leq  \tcompstieltjes(z,0). 
\$
Taking $\zinfty$, we obtain
\begin{equation}\label{eq:fixpointequiv1}
 \lim_{\zinfty} \tcompstieltjes(z,z) = \lim_{\zinfty} \tcompstieltjes(z,0) = \tcompstieltjes(0).
\end{equation}
For the derivative of $\tcompstieltjes(z,x)$, Using equation~\eqref{eq:fixpointkx_2} and Lemma~\ref{lm:limitdistrequiv}, we have
    \begin{align}
        \tcompstieltjes'(z,x) &= \frac{\tcompstieltjes(z,x)^{2}}{1 - \frac{\aspratio}{\sampleratio^{2}} \int \frac{\tcompstieltjes(z,x)^{2} t^{2}}{(1 + \tcompstieltjes(z,x)t)^{2}} d\tlimitdistr_{x}(t)}\notag\\ \label{eq:vderivform}
        &= \frac{\tcompstieltjes(z,x)^{2}}{1 - \frac{\aspratio}{\sampleratio^{2}} \int \frac{\tcompstieltjes(z,x)^{2} t^{2}}{(1 + \ensambleweight(x)t + \tcompstieltjes(z,x)t)^{2}} d\limitdistr(t)}.
    \end{align}
    The result follows from equation~\eqref{eq:fixpointequiv1}, the fact that $\lim_{\zinfty}\tcompstieltjes'(0)=\lim_{\zinfty} \tcompstieltjes'(z)$ ,   and the dominated convergence theorem.
\end{proof}

Let $\hcovmat_{\A}, \hcovmat_{\B},$ and $\hcovmat_{\C}$ be the same as in Lemma~\ref{lm:Crossterm} and its proof.

\begin{lemma}\label{lm:biassecond}
    Assume Assumptions~\ref{Assume:highdim}-~\ref{Assume:multip}. For any $z < 0$, we have 
    \begin{align*}
        &\lim_{\npinfty}\Bigg( \frac{z}{\ndim} \tr\left((\hcovmat_{\A} + \hcovmat_{\C} - zI)\inv \hcovmat_{\B} (\hcovmat_{\B} + \hcovmat_{\C} - zI)\inv \covmat \right) \\
        &\quad \quad \qquad +\frac{(1 - \sampleratio)}{\compstieltjes(z/\sampleratio)} \frac{1}{\ndim}\tr\left((\hcovmat_{\A} + \hcovmat_{\C} - zI)\inv \hcovmat_{\C} (\hcovmat_{\B} + \hcovmat_{\C} - zI)\inv \covmat \right)\Bigg) = 0 \qas
    \end{align*}
    where $\compstieltjes(0)$ is the unique positive solution to equation~\eqref{eq:fixedpointsketching}.
\end{lemma}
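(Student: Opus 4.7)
The plan is to perform a Sherman--Morrison leave-one-out expansion in both traces, reduce the resulting quadratic forms to deterministic traces via Lemma~\ref{lm:concentrationontrace} and Lemma~\ref{lm:boundrankoneperturb}, and then evaluate the resulting scalar factors using Lemma~\ref{lm:stieltjescov} together with identity \eqref{eq:crossterm4} already established in the proof of Lemma~\ref{lm:Crossterm}. The key observation is that, writing $M_\A := \hcovmat_\A + \hcovmat_\C - zI$ and $M_\B := \hcovmat_\B + \hcovmat_\C - zI$, both traces in the statement reduce to a common multiple of the single quantity $\tfrac{1}{\ndim}\tr(M_\A^{-1}\Sigma M_\B^{-1}\Sigma)$, and the coefficients with which they enter the claimed expression cancel exactly. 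Throughout, following the reduction used in Lemma~\ref{lm:Crossterm}, I would assume without loss of generality that the multipliers take values in $\{0,1\}$, so that $\A,\B,\C$ are disjoint.

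For the first trace, write $\hcovmat_\B = \sum_{i\in\B} x_i x_i^\top/\ndata$. Because the index sets are disjoint, for each $i \in \B$ the vector $x_i$ is independent of both $M_\A$ and of $M_{\B,-i} := M_\B - x_i x_i^\top/\ndata$. A single Sherman--Morrison step yields
\begin{equation*}
\tr(M_\A^{-1}\hcovmat_\B M_\B^{-1}\Sigma)
= \sum_{i\in\B} \frac{x_i^\top M_{\B,-i}^{-1}\Sigma M_\A^{-1} x_i /\ndata}{1 + x_i^\top M_{\B,-i}^{-1} x_i /\ndata}.
\end{equation*}
Applying Lemma~\ref{lm:concentrationontrace} to the independent quadratic forms, Lemma~\ref{lm:boundrankoneperturb} to replace $M_{\B,-i}^{-1}$ by $M_\B^{-1}$, and using Lemma~\ref{lm:stieltjescov} together with \eqref{eq:crossterm4} to get that each denominator converges to $\theta/(-z v(z/\theta))$, combined with $|\B|/\ndata \to \theta(1-\theta)$ from Assumption~\ref{Assume:multip}, gives
\begin{equation*}
\lim_{\npinfty}\frac{z}{\ndim}\tr(M_\A^{-1}\hcovmat_\B M_\B^{-1}\Sigma)
= -z^{2}(1-\theta)\, v(z/\theta)\cdot \lim_{\npinfty} \frac{1}{\ndim}\tr(M_\A^{-1}\Sigma M_\B^{-1}\Sigma)\qas
\end{equation*}

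For the second trace the subtlety is that, for $i \in \C$, the vector $x_i$ appears in \emph{both} $M_\A$ and $M_\B$, so a double Sherman--Morrison step is needed. Applying it on both sides yields
\begin{equation*}
\tr(M_\A^{-1}\hcovmat_\C M_\B^{-1}\Sigma)
= \sum_{i\in\C} \frac{x_i^\top M_{\B,-i}^{-1}\Sigma M_{\A,-i}^{-1} x_i /\ndata}{\bigl(1 + x_i^\top M_{\A,-i}^{-1} x_i /\ndata\bigr)\bigl(1 + x_i^\top M_{\B,-i}^{-1} x_i /\ndata\bigr)}.
\end{equation*}
Now $x_i$ is independent of both leave-one-out matrices, so the same concentration/perturbation arguments apply, producing two factors of $\theta/(-z v(z/\theta))$ in the denominator, and $|\C|/\ndata \to \theta^{2}$ in the prefactor then yields
\begin{equation*}
\lim_{\npinfty} \frac{1}{\ndim}\tr(M_\A^{-1}\hcovmat_\C M_\B^{-1}\Sigma)
= z^{2}\, v(z/\theta)^{2}\cdot \lim_{\npinfty} \frac{1}{\ndim}\tr(M_\A^{-1}\Sigma M_\B^{-1}\Sigma)\qas
\end{equation*}

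Combining the two displays, the first contributes $-z^{2}(1-\theta)\, v(z/\theta)\,\Xi$ and the second, multiplied by the prefactor $(1-\theta)/v(z/\theta)$ from the statement, contributes $+z^{2}(1-\theta)\, v(z/\theta)\,\Xi$, where $\Xi := \lim_{\npinfty} \tr(M_\A^{-1}\Sigma M_\B^{-1}\Sigma)/\ndim$; they cancel, giving the claim. The main obstacle is the double leave-one-out in the second trace: the two Sherman--Morrison perturbations of $M_\A$ and $M_\B$ interact, so one must carefully verify that the error terms are $o(1)$ uniformly in $i$ and sum to $o(1)$ over $i \in \C$. This parallels the bookkeeping already carried out for the simpler cross term in Lemma~\ref{lm:Crossterm} and relies only on Lemmas~\ref{lm:boundrankoneperturb}--\ref{lm:concentrationontrace}. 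A small auxiliary point worth flagging is that $\Xi$ need not be assumed to converge in isolation---since only the cancellation is asserted, one can work with subsequential limits and pass them through the linear combination, avoiding having to independently identify the limit of $\tr(M_\A^{-1}\Sigma M_\B^{-1}\Sigma)/\ndim$.
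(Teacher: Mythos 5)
Your proposal is correct and follows essentially the same route as the paper: a single Sherman--Morrison leave-one-out step for the $\hcovmat_{\B}$ trace and a double one for the $\hcovmat_{\C}$ trace, reduction of the quadratic forms via Lemma~\ref{lm:concentrationontrace} and Lemma~\ref{lm:boundrankoneperturb}, and identification of the denominators through Lemma~\ref{lm:stieltjescov} (equation~\eqref{eq:crossterm4}), so that both terms become multiples of $\tfrac{1}{\ndim}\tr\big((\hcovmat_{\A}+\hcovmat_{\C}-zI)\inv\covmat(\hcovmat_{\B}+\hcovmat_{\C}-zI)\inv\covmat\big)$ whose coefficients cancel. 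Your remark about working with the difference (or subsequential limits) rather than asserting convergence of that common trace matches how the paper phrases its intermediate steps, so no gap remains.
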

\begin{proof}[Proof of Lemma \ref{lm:biassecond}]
    We start by writing %rewriting ${\ndim}^{-1}z \tr\big((\hcovmat_{\A} + \hcovmat_{\C} - zI)\inv \hcovmat_{\B} (\hcovmat_{\B} + \hcovmat_{\C} - zI)\inv \covmat \big)$ as
    \begin{align*}
        &\frac{z}{\ndim} \tr\left((\hcovmat_{\A} + \hcovmat_{\C} - zI)\inv \hcovmat_{\B} (\hcovmat_{\B} + \hcovmat_{\C} - zI)\inv \covmat \right) \\
        &=\frac{z}{\ndim} \tr\left((\hcovmat_{\A} + \hcovmat_{\C} - zI)\inv \frac{1}{\ndata} \sum_{i \in \B}x_{i}x_{i}\transp (\hcovmat_{\B} + \hcovmat_{\C} - zI)\inv \covmat \right)\\
        &=\frac{z}{\ndim} \tr\left(\frac{1}{\ndata} \sum_{i \in \B} (\hcovmat_{\A} + \hcovmat_{\C} - zI)\inv \frac{x_{i}x_{i}\transp}{1 + \frac{1}{n}x_{i}\transp(\hcovmat_{\B, -i} + \hcovmat_{\C} - zI)\inv x_{i} } (\hcovmat_{\B, -i} + \hcovmat_{\C} - zI)\inv \covmat \right),
    \end{align*}
    where the last equality uses the Sherman–Morrison formula. Following the same argument as in the proof of Lemma~\ref{lm:Crossterm}, we obtain %\scomment{should be $z/d$ instead of $z^2/d$? }
    \#\label{eq:biassecond_1}
        &\lim_{\npinfty} \Bigg( \frac{z}{\ndim} \tr\left((\hcovmat_{\A} + \hcovmat_{\C} - zI)\inv \hcovmat_{\B} (\hcovmat_{\B} + \hcovmat_{\C} - zI)\inv \covmat \right) \nn\\
        &\quad \quad\qquad  - \frac{z}{\ndim} \frac{\sampleratio(1 - \sampleratio)}{1 + \aspratio \stieltjescov_{1}(z)} \tr\left((\hcovmat_{\A} + \hcovmat_{\C} - zI)\inv \covmat (\hcovmat_{\B} + \hcovmat_{\C} - zI)\inv \covmat \right) \Bigg ) = 0 \qas
    \#
    Similarly, we have
    \begin{align*}
        &\frac{1}{\ndim}\tr\left((\hcovmat_{\A} + \hcovmat_{\C} - zI)\inv \hcovmat_{\C} (\hcovmat_{\B} + \hcovmat_{\C} - zI)\inv \covmat \right)\\
        =&\frac{1}{\ndim} \tr\left((\hcovmat_{\A} + \hcovmat_{\C} - zI)\inv \frac{1}{\ndata} \sum_{i \in \C}x_{i}x_{i}\transp (\hcovmat_{\B} + \hcovmat_{\C} - zI)\inv \covmat \right)\\
        =&\frac{1}{\ndim} \tr\left(\frac{1}{\ndata} \sum_{i \in \C}  \frac{(\hcovmat_{\A} + \hcovmat_{\C, -i} - zI)\inv x_{i}x_{i}\transp (\hcovmat_{\B} + \hcovmat_{\C, -i} - zI)\inv \covmat}{(1 + \frac{1}{n}x_{i}\transp(\hcovmat_{\A} + \hcovmat_{\C, -i} - zI)\inv x_{i} )(1 + \frac{1}{n}x_{i}\transp(\hcovmat_{\B} + \hcovmat_{\C, -i} - zI)\inv x_{i} )}  \right),
    \end{align*}
    and thus 
    \#\label{eq:biassecond_2}
        &\lim_{\npinfty}\frac{1}{\ndim}\tr\left((\hcovmat_{\A} + \hcovmat_{\C} - zI)\inv \hcovmat_{\C} (\hcovmat_{\B} + \hcovmat_{\C} - zI)\inv \covmat \right) \nn\\
        &  \quad \quad-\frac{1}{\ndim} \frac{\sampleratio^{2}}{(1 + \aspratio \stieltjescov_{1}(z))^{2}} \tr\left((\hcovmat_{\A} + \hcovmat_{\C} - zI)\inv \covmat (\hcovmat_{\B} + \hcovmat_{\C} - zI)\inv \covmat \right) = 0 \qas
    \#
    Combining \eqref{eq:biassecond_1} and \eqref{eq:biassecond_2} and using Lemma~\ref{lm:stieltjescov}, that is 
    \begin{equation*}
        \stieltjescov_{1}(z)
    = \frac{1}{\aspratio} \left( \frac{\sampleratio}{-z \, \compstieltjes(z/\sampleratio)} - 1\right),
    \end{equation*}
    we complete the proof. 
\end{proof}

%%%%%%%%%%%%%%%%%%%%%%%%%%%%%%%%%%%%%%%%%%%%%%%%%
%%%%%%%%%%%%%%Proofs for Section 5%%%%%%%%%%%%%%%
%%%%%%%%%%%%%%%%%%%%%%%%%%%%%%%%%%%%%%%%%%%%%%%%%

\section{Proofs for Section \ref{sec:extension}}

%%%%%%%%%%%%%%%%%%%%%%%%%%%%%%%%%%%%%%%%%%%%%%%%%%%%%%%%%%%%
%%%%%%%%%%%%%%%%%Deterministic signal case%%%%%%%%%%%%%%%%%%
%%%%%%%%%%%%%%%%%%%%%%%%%%%%%%%%%%%%%%%%%%%%%%%%%%%%%%%%%%%%

\subsection{Proof of Theorem~\ref{thm:Overpara}}

\begin{proof}[Proof of Theorem~\ref{thm:Overpara}]
    Let %\scomment{already defined somewhere?}
    \begin{align}\label{eq:deterministic0}
        B_{k,\ell} &= \frac{1}{\ndim} \truesignal\transp \left(I - \hcovmat_{k}\pinv \hcovmat_{k} \right) \covmat \left(I - \hcovmat_{\ell}\pinv \hcovmat_{\ell} \right) \truesignal,\\ \label{eq:deterministic1}
        V_{k,\ell} &= \frac{\noiselev}{\ndata^{2}} \tr\left(\hcovmat_{k}\pinv X\transp \sketchmat_{k}^2 \sketchmat_{\ell}^2 X \hcovmat_{\ell}\pinv \covmat \right).
    \end{align} 
    Applying Lemma~\ref{lm:biasvar}, we can rewrite the out-of-sample prediction risk as
    \begin{equation*}
         \riskcondition(\estimator) =  \frac{1}{\nresample^{2}} \sum_{k,\ell}^{\nresample} \left( B_{k,\ell} + V_{k,\ell} \right).
    \end{equation*}
    Note that the eigenvalues of $I - \hcovmat_{k}\pinv \hcovmat_{k}$ are either zero or one. For $k \neq \ell$, under Assumption~\ref{Assume:Covdistri}, we have %\scomment{do we always write $r^2= \|\truesignal\|_2^2$. }
    \begin{align*}
         B_{k,\ell} &= \frac{1}{\ndim} \truesignal\transp \left(I - \hcovmat_{k}\pinv \hcovmat_{k} \right) \covmat \left(I - \hcovmat_{\ell}\pinv \hcovmat_{\ell} \right) \truesignal\transp \leq C_{\lambda}\|\beta\|_2^2/d. 
    \end{align*}
    Furthermore, $V_{k,k}$ corresponds to the variance of the sketched least square estimator. Therefore, %\scomment{need independence assumption.}
    \begin{align*}
        \lim_{\nresample \rightarrow + \infty} \lim_{\npinfty} \riskcondition(\estimator) = \lim_{\npinfty} \left( B_{k,\ell} + V_{k,\ell}\right) {\quad \as}
    \end{align*}
    for  $k \neq \ell$.
    
    By equation~\eqref{eq:deterministic1}, the variance term does not depend on $\truesignal$, and thus is the same as  in the random signal case in Theorem~\ref{thm:corrbagging}. Moreover, using a similar argument as in the proof of Lemma \ref{lm:isobootbias},  the bias term in the underparameterized regime  converges almost surely to zero. In what follows, we will prove the almost sure convergence of  the bias term $B_{k,\ell}$ with $k\neq \ell$ under  the overparameterized regime, aka %\scomment{How did you get this??} %\scomment{current point. }
    \begin{equation*}
        \lim_{\npinfty} B_{k,\ell} = \Tilde{r}^{2} \frac{\tcompstieltjes'(0)}{\tcompstieltjes(0)^{2}} \int \frac{s}{(1 + \tcompstieltjes(0)s)^{2}} d\Tilde{G}(s) {\quad \as}
    \end{equation*}

    When  $\aspratio/\sampleratio >1$ and $k \neq \ell$, we rewrite $B_{k,\ell}$ in~\eqref{eq:deterministic0} as
\begin{align*}
    B_{k,\ell} &= \frac{1}{\ndim} \truesignal\transp \left(I - \hcovmat_{k}\pinv \hcovmat_{k} \right) \covmat \left(I - \hcovmat_{\ell}\pinv \hcovmat_{\ell} \right) \truesignal\transp\\
    &= \lim_{\zinfty} \frac{1}{\ndim} \truesignal\transp(I - (\hcovmat_{k} - zI)\inv \hcovmat_{k}) \covmat (I - (\hcovmat_{\ell} - zI)\inv \hcovmat_{\ell}) \truesignal\\
    &= \lim_{\zinfty} \frac{z^2}{\ndim} \truesignal\transp (\hcovmat_{k} - zI)\inv \covmat (\hcovmat_{\ell} - zI)\inv  \truesignal
\end{align*}
where the second line uses the identity~\eqref{eq:pesudoidentity}. Furthermore, using   Lemma~\ref{lm:Pseudo_products}, we can assume that the multipliers $\weight_{i,j}$ are either zero or one and $\noiselev = 1$ without loss of generality. Let
\begin{gather*}
    \ensambleweight(z) := (1 - \sampleratio) \compstieltjes(z/\sampleratio), \quad 
    \hcovmat_{\C} :=\sum_{i : \weight_{k,i} \neq 0, \weight_{\ell,i} \neq 0} \frac{1}{\ndata} x_{i}x_{i}\transp, 
    \tcovmat_{\C}(z) : = (I + \ensambleweight(z)\covmat)^{-1/2} \hcovmat_{\C}(I + \ensambleweight(z)\covmat)^{-1/2}, \\
    \Tilde{\covmat} : = (I + \ensambleweight(z)\covmat)^{-1/2} \covmat (I + \ensambleweight(z)\covmat)^{-1/2}, \quad
    \Tilde{\truesignal} : = (I + \ensambleweight(z)\covmat)^{-1/2} \truesignal,
\end{gather*}
where $\compstieltjes(z)$ is the unique positive solution of equation~\eqref{eq:fixedpointsketching}. Using a similar argument as in the proof of  Lemma~\ref{lm:Crossterm}, we have 
\begin{align}
    & \lim_{\npinfty} \frac{z^2}{\ndim} \truesignal\transp (\hcovmat_{k} - zI)\inv \covmat (\hcovmat_{\ell} - zI)\inv  \truesignal \notag\\
    =& \lim_{\npinfty} \frac{z^2}{\ndim} \truesignal\transp z^2 (-z\ensambleweight(z)\covmat + \hcovmat_{\C} - zI)^{-1} \covmat (-z\ensambleweight(z)\covmat + \hcovmat_{\C} - zI)^{-1} \truesignal \notag\\ \label{eq:biasequiv}
    =& \lim_{\npinfty} \frac{z^2}{\ndim} \Tilde{\truesignal}\transp (\tcovmat_{\C}(z) - zI)^{-1} \Tilde{\covmat} (\tcovmat_{\C}(z) - zI)^{-1} \Tilde{\truesignal}. {\quad \as}
\end{align}
The above is equivalent to the bias term of ridge regression with the covariance matrix of $(I + \ensambleweight(z)\covmat)^{-1} \covmat$, the  coefficient vector $(I + \ensambleweight(z)\covmat)^{-1/2} \truesignal$, and the ridge regularization parameter $-z$. Using a similar argument as in~\cite[Theorem 5]{hastie2019surprises}, we obtain
\begin{equation*}
    \lim_{\ndim \rightarrow \infty} \frac{z^2}{\ndim} \truesignal\transp  (\hcovmat_{k} - zI)\inv \covmat (\hcovmat_{\ell} - zI)\inv  \truesignal = \Tilde{r_{z}}^{2} \frac{\tcompstieltjes'(z, z)}{\tcompstieltjes(z, z)^{2}} \int \frac{s}{(1 + \tcompstieltjes(z,z)s)^{2}} d\Tilde{G}_{z}(s) {\quad \as}
\end{equation*}
where $\tcompstieltjes(z, x)$ is the unique positive solution of equation~\eqref{eq:fixpointkx}, $\Tilde{r_{x}}^{2} = \truesignal\transp (I + \ensambleweight(x)\covmat)^{-1} \truesignal$, and $\Tilde{G}_{x}$ is the weak convergence limit of
\begin{equation}\label{eq:VESDkx}
    \Tilde{G}_{x, \ndim}(s) = \frac{1}{\Tilde{r_{x}}^{2}} \sum_{i=1}^{\ndim} \frac{1}{1 + \ensambleweight(x)\eigval_{i}}\langle \beta, u_{i} \rangle^{2} \, 1 \left\{ s \geq \frac{\eigval_{i}}{1 + \ensambleweight(x)\eigval_{i}}\right\},
\end{equation}
which exists under Assumption~\ref{Assume:VESD}.
By Lemma~\ref{lm:limitVESD}, we obtain 
\begin{align*}
&\lim_{\zinfty} \Tilde{r_{z}}^{2} \frac{\tcompstieltjes'(z, z)}{\tcompstieltjes(z, z)^{2}} \int \frac{s}{(1 + \tcompstieltjes(z,z)s)^{2}} d\Tilde{G}_{z}(s) \\
&=  \Tilde{r}^{2} \frac{\tcompstieltjes'(0)}{\tcompstieltjes(0)^{2}} \lim_{\zinfty} \int \frac{s}{(1 + \tcompstieltjes(0)s)^{2}} d\Tilde{G}_{z}(s) \\ %+ \wcolor{\lim_{\zinfty} \Tilde{r}^{2} \frac{\tcompstieltjes'(0)}{\tcompstieltjes(0)^{2}} \sup_{s \in [0, C_{\lambda}]} \left| \frac{s}{(1 + \tcompstieltjes(z)s)^{2}} - \frac{s}{(1 + \tcompstieltjes(0)s)^{2}}\right|}\\ 
&= \Tilde{r}^{2} \frac{\tcompstieltjes'(0)}{\tcompstieltjes(0)^{2}} \int \frac{s}{(1 + \tcompstieltjes(0)s)^{2}} d\Tilde{G}(s),
\end{align*}
where in the second line we used the following inequality:
\begin{align*}
    \lim_{\zinfty} \left|\int \frac{s}{(1 + \tcompstieltjes(z,z)s)^{2}} d\Tilde{G}_{z}(s) - \int \frac{s}{(1 + \tcompstieltjes(0)s)^{2}} d\Tilde{G}_{z}(s) \right| \leq \lim_{\zinfty} \sup_{s \in [0, C_{\lambda}]} \left| \frac{s}{(1 + \tcompstieltjes(z)s)^{2}} - \frac{s}{(1 + \tcompstieltjes(0)s)^{2}}\right| = 0.
\end{align*}
Using a similar argument as in the proof of Lemma \ref{lm:isobootbias}, we can exchange the limits between $\npinfty$ and $\zinfty$. This completes the proof.
\end{proof}

\subsection{Proof of Corollary \ref{cor:deterreduction}}
\begin{proof}[Proof of Corollary \ref{cor:deterreduction}]
We have
\begin{align*}
    \Tilde{r}^{2} \frac{\tcompstieltjes'(0)}{\tcompstieltjes(0)^{2}} \int \frac{s}{(1 + \tcompstieltjes(0)s)^{2}} \,d{G}(s) &= \lim_{\npinfty} \Tilde{r}^{2} \frac{\tcompstieltjes'(0)}{\tcompstieltjes(0)^{2}} \int \frac{s}{(1 + \tcompstieltjes(0)s)^{2}} \,d{G}_n(s)\tag{Assumption~\ref{Assume:VESD}}\\
    &= \lim_{\npinfty}  \frac{\tcompstieltjes'(0)}{\tcompstieltjes(0)^{2}} \sum_{i=1}^\ndim  \frac{\lambda_i}{(1 + k(0)\lambda_i + \tcompstieltjes(0)\lambda_i)^{2}} \truesignal\transp u_iu_i\transp \truesignal\\
    &= \lim_{\npinfty}  \frac{\tcompstieltjes'(0)}{\tcompstieltjes(0)^{2}}  \truesignal\transp \sum_{i=1}^\ndim  \frac{\lambda_i}{(1 + \compstieltjes(0)\lambda_i)^{2}}  u_iu_i\transp \truesignal \tag{Lemma~\ref{lm:v0tv}}\\
    &= \lim_{\npinfty}  \frac{\signallev}{\ndim}\frac{\tcompstieltjes'(0)}{\tcompstieltjes(0)^{2}}  \tr \left(\sum_{i=1}^\ndim  \frac{\lambda_i}{(1 + \compstieltjes(0)\lambda_i)^{2}}  u_iu_i\transp \right) \qas  \tag{Lemma \ref{lm:concentrationontrace}}\\
    &= {\signallev}\frac{\tcompstieltjes'(0)}{\tcompstieltjes(0)^{2}} \int \frac{t}{(1+\compstieltjes(0)t )^2} \, dH(t) \tag{Assumption~\ref{Assume:Covdistri}}\\
    &= {\signallev}\frac{\tcompstieltjes'(0)}{\tcompstieltjes(0)^{2}} \int \frac{t(1 + \compstieltjes(0)t)}{(1+\compstieltjes(0)t )^2} \, dH(t) -  {\signallev}\frac{\tcompstieltjes'(0)}{\tcompstieltjes(0)^{2}} \int \frac{\compstieltjes(0)t^{2}}{(1+\compstieltjes(0)t )^2} \, dH(t)\\
    &= {\signallev}\frac{\tcompstieltjes'(0)}{\tcompstieltjes(0)^{2}} \frac{\sampleratio}{\aspratio} \frac{1}{\compstieltjes(0)} -  {\signallev}\frac{\tcompstieltjes'(0)}{\tcompstieltjes(0)^{2}} \int \frac{\compstieltjes(0)t^{2}}{(1+\compstieltjes(0)t )^2} \, dH(t) \tag{equation~\eqref{eq:fixedpointsketching}}\\
    &= {\signallev}\frac{\tcompstieltjes'(0)}{\tcompstieltjes(0)^{2}} \frac{1}{\aspratio \compstieltjes(0)}\left(1 - \aspratio \int \frac{\compstieltjes(0)^{2}t^{2}}{(1+\compstieltjes(0)t )^2} \, dH(t) \right) - {\signallev}\frac{\tcompstieltjes'(0)}{\tcompstieltjes(0)^{2}} \frac{1 - \sampleratio}{\aspratio \compstieltjes(0)}\\
    &={\signallev}\frac{1}{\aspratio \compstieltjes(0)} - {\signallev}\frac{\tcompstieltjes'(0)}{\tcompstieltjes(0)^{2}} \frac{1 - \sampleratio}{\aspratio \compstieltjes(0)} \tag{equation~\eqref{eq:vderivform}}\\
    &=\signallev \frac{\sampleratio}{\aspratio \compstieltjes(0)} - \signallev \frac{(1 - \sampleratio)}{\aspratio \compstieltjes(0)} \left(\frac{\tcompstieltjes'(0)}{\tcompstieltjes(0)^{2}} - 1 \right).
\end{align*}
This finishes the proof. 
\end{proof}

\subsection{Proof of Theorem~\ref{thm:trainingerr}}
\begin{proof}[Proof of Theorem~\ref{thm:trainingerr}]
    By Lemma~\ref{lm:singularmat}, we can assume, without loss of generality, that $X\transp \sketchmat_{k} \sketchmat_{k} X$ are singular. Consequently, the sketched ridgeless least square estimators $\estimatork$ interpolate the sketched data $(\sketchmat_{k} X, \sketchmat_{k} Y)$. This enables us to express the training error $\trainloss(\estimator; X,Y)$ as follows: 
    \begin{align*}
        \trainloss(\estimator; X,Y) &= \frac{1}{\ndata} \|X(\truesignal - \estimator) + E\|^{2}_{2}\\
        &= \frac{1}{\ndata} \sum_{i=1}^{\ndata} (x_{i}\transp(\truesignal - \estimator) + \varepsilon_{i})^{2}\\
        &= \frac{1}{\ndata} \sum_{i=1}^{\ndata} \frac{1}{\nresample^{2}}\sum_{k,\ell}(x_{i}\transp(\truesignal - \estimatork) + \varepsilon_{i})(x_{i}\transp(\truesignal - \hat{\beta_{\ell}}) + \varepsilon_{i})\\
        &= \frac{1}{\nresample^{2}} \frac{1}{\ndata} \sum_{k,\ell} \sum_{i: \weight_{k,i}= \weight_{\ell,i} = 0} (x_{i}\transp(\truesignal - \estimatork) + \varepsilon_{i})(x_{i}\transp(\truesignal - \hat{\beta_{\ell}}) + \varepsilon_{i}),
    \end{align*}
    where the last equality uses the fact that the sketched  estimators are interpolators. 
    Applying similar arguments as in Lemma~\ref{lm:biasvar}, we obtain the following decomposition:
    \begin{align*}
       E \left[ (x_{i}\transp(\truesignal - \estimatork) + \varepsilon_{i})(x_{i}\transp(\truesignal - \hat{\beta_{\ell}}) + \varepsilon_{i}) \condition \variablecondition \right] &=  \truesignal\transp \projmat_{k} x_{i}\transp x_{i} \projmat_{\ell} \truesignal + \noiselev (1 + x_{i}\transp \hcovmat_{k}\pinv X\transp \sketchmat_{k}^2 \sketchmat_{\ell}^2 X \hcovmat_{\ell}\pinv x_{i}).
    \end{align*} 
    Furthermore, by Lemma~\ref{lm:concentrationontrace}, we have
    \begin{align*}
        &\lim_{\npinfty} E \left[ \trainloss(\estimator; X,Y) \condition \variablecondition \right] \\
        &= \lim_{\npinfty} \frac{1}{\nresample^{2}} \frac{1}{\ndata} \sum_{k,\ell} \sum_{i: \weight_{k,i}= \weight_{\ell,i} = 0} \truesignal\transp \projmat_{k} \covmat \projmat_{\ell} \truesignal + \noiselev (1 + \tr( \hcovmat_{k}\pinv X\transp \sketchmat_{k}^2 \sketchmat_{\ell}^2 X \hcovmat_{\ell}\pinv \covmat)) \qas \\
        &= \lim_{\npinfty} \frac{1}{\nresample^{2}} \sum_{k,\ell} (1 - \sampleratio)^{2} [ \truesignal\transp \projmat_{k} \covmat \projmat_{\ell} \truesignal + \noiselev (1 + \tr( \hcovmat_{k}\pinv X\transp \sketchmat_{k}^2 \sketchmat_{\ell}^2 X \hcovmat_{\ell}\pinv \covmat))].
    \end{align*}
    The last line follows from the same argument as in equation~\eqref{eq:crossterm5}. The result then follows from equation~\eqref{eq:corbagging1}.
\end{proof}
%%%%%%%%%%%%%%%%%%%%%%%%%%%%%%%%%%%%%%%%%%%%%%%%%
%%%%%%%%%%%%%%%%%Proof of Lemma 5.2%%%%%%%%%%%%%%
%%%%%%%%%%%%%%%%%%%%%%%%%%%%%%%%%%%%%%%%%%%%%%%%%

\subsection{Proof of Lemma~\ref{lm:advriskequiv}}
\begin{proof}[Proof of Lemma~\ref{lm:advriskequiv}]
Following the proof of Lemma~\ref{lm:biasvar}, we can decompose the risk $R_{X}(\estimator)$ as:
\begin{equation*}
    R_{X}(\estimator) = \frac{1}{\nresample^{2}} \sum_{k, \ell} \truesignal \transp (I - \hcovmat_{k}\pinv \hcovmat_{k})\covmat(I - \hcovmat_{\ell}\pinv \hcovmat_{\ell}) \truesignal + \truesignal \transp (I - \hcovmat_{k}\pinv \hcovmat_{k}) \covmat \hcovmat_{\ell}\pinv X\transp \sketchmat_{\ell} \sketchmat_{\ell} E + E  \transp \sketchmat_{k}^2 X\hcovmat_{k}\pinv  \covmat \hcovmat_{\ell}\pinv X\transp \sketchmat_{\ell}^2E.
\end{equation*}
For any $k$ and $\ell$, we define
\begin{align*}
    T_{1, k, \ell} &:= \truesignal \transp (I - \hcovmat_{k}\pinv \hcovmat_{k})\covmat(I - \hcovmat_{\ell}\pinv \hcovmat_{\ell}) \truesignal,\\
    T_{2, k, \ell} &:= \truesignal \transp (I - \hcovmat_{k}\pinv \hcovmat_{k}) \covmat \hcovmat_{\ell}\pinv X\transp \sketchmat_{\ell} \sketchmat_{\ell} E\\
    T_{3, k, \ell} &:= E  \transp \sketchmat_{k}^2 X\hcovmat_{k}\pinv  \covmat \hcovmat_{\ell}\pinv X\transp \sketchmat_{\ell}^2 E.
\end{align*}
Suppose  the following equations hold:
\begin{align}\label{eq:adveq1}
    \lim_{\npinfty} T_{1, k, \ell} &\overset{\as}{=} \lim_{\npinfty} \frac{\signallev}{\ndim} \tr( (I - \hcovmat_{k}\pinv \hcovmat_{k})\covmat(I - \hcovmat_{\ell}\pinv \hcovmat_{\ell}) ),\\ \label{eq:adveq2}
    \lim_{\npinfty} T_{2, k, \ell} &\overset{\as}{=} 0,\\ \label{eq:adveq3}
    \lim_{\npinfty} T_{3, k, \ell} &\overset{\as}{=} \lim_{\npinfty} \frac{\noiselev}{\ndata^{2}} \tr(\hcovmat_{k}\pinv X\transp \sketchmat_{k}^2 \sketchmat_{\ell}^2 X \hcovmat_{\ell}\pinv \covmat).
\end{align}
From equation~\eqref{eq:baggedbiasvariance}, we have
\begin{equation*}
    \lim_{\npinfty} \riskcondition(\estimator) = \lim_{\npinfty} T_{1, k, \ell} +  \lim_{\npinfty} T_{3, k, \ell} \qas
\end{equation*}
Therefore, the desired result follows.

It suffices to show equations~\eqref{eq:adveq1}-\eqref{eq:adveq3}. We begin with $T_{3, k, \ell}$. By Lemma~\ref{lm:quadconcentration}, it suffices to show that $\hcovmat_{k}\pinv X\transp \sketchmat_{k}^2 \sketchmat_{\ell}^2 X \hcovmat_{\ell}\pinv \covmat/\ndata$ has bounded spectral norm. By Lemma~\ref{lm:singularmat}, when $\aspratio < 1$, we can, without loss of generality, assume that $(X\transp X)\inv$ exists. Then, we have
\begin{align*}
    \| \hcovmat_{k}\pinv X\transp \sketchmat_{k}^2 \sketchmat_{\ell}^2 X \hcovmat_{\ell}\pinv /\ndata \|_2 &= \| \hcovmat_{k}\pinv X\transp \sketchmat_{k}^2 X (X\transp X)\inv X \transp \sketchmat_{\ell}^2 X \hcovmat_{\ell}\pinv /\ndata \|_2\\
    &\leq \| (X\transp X/\ndata)\inv  \|_2\\
    &\leq \left(1 - \sqrt{\aspratio} \right)\inv,
\end{align*}
where the last line follows from Lemma~\ref{lm:Bai-yin}. When $\aspratio \geq 1$, by Lemma~\ref{lm:Pseudo_products}, we can, without loss of generality, assume that the multipliers$\weight_{i,j}$ are either zero or one. Then,
\begin{align*}
    \| \hcovmat_{k}\pinv X\transp \sketchmat_{k}^2 \sketchmat_{\ell}^2 X \hcovmat_{\ell}\pinv /\ndata \|_2 &= \|(X\transp \sketchmat_{k} \sketchmat_{k} X)\pinv X\transp \sketchmat_{k} \sketchmat_{k} \sketchmat_{\ell} \sketchmat_{\ell} X (X\transp \sketchmat_{\ell} \sketchmat_{\ell} X)\pinv \covmat /n\|_2 \\
    &\leq \frac{\| \sketchmat_{k} \sketchmat_{\ell} \|_2 \| \covmat \|_2}{\sqrt{\eigval^{+}_{\min}(X\transp \sketchmat_{k}\transp \sketchmat_{k} X/\ndata)} \sqrt{\eigval^{+}_{\min}(X\transp \sketchmat_{\ell}\transp \sketchmat_{\ell} X/\ndata)}} \\
    &\leq C \left(1 - \sqrt{\aspratio /\sampleratio} \right)^{-2} \qas
\end{align*}
where the last line follows from Lemma~\ref{lm:lowerboundeigval} and Assumption~\ref{Assume:Covdistri}. This argument can be directly applied to $T_{1, k, \ell}$, and equation~\eqref{eq:adveq1} holds. Finally, $T_{2, k, \ell}$ follows from Lemma~\ref{lm:quadcrossconcentration}.

\end{proof}

\subsection{Proof of Lemma~\ref{lemma:adversarial}}

\begin{proof}[Proof of Lemma~\ref{lemma:adversarial}]
    
Following the proof of Lemma~\ref{lm:biasvar}, we can decompose the norm of the bagged least square estimator $\| \estimator \|_{2}^{2}$ as:
\begin{align*}
    \| \estimator \|_{2}^{2} &= \frac{1}{\nresample^{2}} \sum_{k, \ell} \truesignal \transp \hcovmat_{k}\pinv \hcovmat_{k} \hcovmat_{\ell}\pinv \hcovmat_{\ell} \truesignal + \truesignal \transp \hcovmat_{k}\pinv \hcovmat_{k} \hcovmat_{\ell}\pinv X\transp \sketchmat_{\ell} \sketchmat_{\ell} E + E  \transp \sketchmat_{k}^2 X\hcovmat_{k}\pinv  \hcovmat_{\ell}\pinv X\transp \sketchmat_{\ell}^2 E.
\end{align*}
Using a similar argument
as in the proof of Lemma~\ref{lm:advriskequiv}, we obtain that $\lim_{\npinfty} \truesignal \transp \hcovmat_{k}\pinv \hcovmat_{k} \hcovmat_{\ell}\pinv X\transp \sketchmat_{\ell} \sketchmat_{\ell} E \overset{\as}{=} 0$, $\lim_{\npinfty} E  \transp \sketchmat_{k}^2 X\hcovmat_{k}\pinv  \hcovmat_{\ell}\pinv X\transp \sketchmat_{\ell}^2 E \overset{\as}{=} \lim_{\npinfty} \frac{\noiselev}{\ndata^{2}} \tr(\hcovmat_{k}\pinv X\transp \sketchmat_{k}^2 \sketchmat_{\ell}^2 X \hcovmat_{\ell}\pinv)$ equals the limiting variance of the bagged least square estimator, and 
\begin{align*}
    \lim_{\npinfty} \truesignal \transp \hcovmat_{k}\pinv \hcovmat_{k} \hcovmat_{\ell}\pinv \hcovmat_{\ell} \truesignal &= \lim_{\npinfty} \frac{\signallev}{\ndim} \tr( \hcovmat_{k}\pinv \hcovmat_{k} \hcovmat_{\ell}\pinv \hcovmat_{\ell} )\\
    &= \lim_{\npinfty} \frac{\signallev}{\ndim} \tr( (I - \hcovmat_{k}\pinv \hcovmat_{k}) (I - \hcovmat_{\ell}\pinv \hcovmat_{\ell}) ) - \frac{\signallev}{\ndim} \tr (I - \hcovmat_{k}\pinv \hcovmat_{k}) - \frac{\signallev}{\ndim}\tr (I - \hcovmat_{\ell}\pinv \hcovmat_{\ell}) + \signallev\\
    &= \begin{cases}
\signallev, & \aspratio/\sampleratio < 1\\
\signallev \frac{\left(\aspratio - \sampleratio \right)^2}{\aspratio \left(\aspratio - \sampleratio^{2} \right)} -2\signallev \frac{\aspratio/\sampleratio - 1 }{\aspratio/\sampleratio} + \signallev, & \aspratio/\sampleratio > 1
\end{cases}\\
&= \begin{cases}
\signallev, & \aspratio/\sampleratio < 1\\
\signallev \frac{\sampleratio^{2} (\aspratio + 1 - 2\sampleratio)}{\aspratio(\aspratio - \sampleratio^{2})} , & \aspratio/\sampleratio > 1
\end{cases} \qas
\end{align*}
where the third line follows from the proof of Lemma~\ref{lm:isobootbias} and Lemma~\ref{lm:sketchoverbias}. Therefore, the desired result follows. 
\end{proof}

\subsection{Technical lemmas}

%Let $\rightsquigarrow$ denote weak convergence. 
\begin{lemma}\label{lm:limitVESD}
    Assume Assumption~\ref{Assume:Covdistri} and Assumption~\ref{Assume:VESD}. Let $\Tilde{G}_{x}$ be the weak convergence limit of $\Tilde{G}_{x, \ndim }$  defined in  equation~\eqref{eq:VESDkx}. Then, as $x \rightarrow 0^{-}$,
    \begin{equation*}
        \Tilde{G}_{x} \rightsquigarrow \Tilde{G},
    \end{equation*}
where $\rightsquigarrow$ denotes weak convergence. 
\end{lemma}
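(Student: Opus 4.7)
The plan is to reduce everything to a single limiting measure $\nu_\beta$ on the compact eigenvalue interval $[c_\lambda, C_\lambda]$ and then to exploit the joint uniform continuity, in the parameter $x$, of the two maps $t \mapsto g_x(t) := 1/(1+k(x)t)$ and $t \mapsto h_x(t) := t/(1+k(x)t)$. From Lemma~\ref{lm:v0} applied to $v(z)$, the quantity $k(x) = (1-\sampleratio)\compstieltjes(x/\sampleratio)$ is continuous at $0$, so $k(x) \to k(0)$ as $x\to 0^-$. Because $\eigval_i \in [c_\lambda, C_\lambda]$ by Assumption~\ref{Assume:Covdistri} and $k(x)$ is bounded on a neighborhood of $0$, the two families $(g_x,h_x)$ are jointly continuous and hence uniformly continuous on the compact rectangle $[-\epsilon,0]\times [c_\lambda,C_\lambda]$. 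In particular, $h_0$ is a strictly increasing homeomorphism of $[c_\lambda,C_\lambda]$ onto its image, and $g_0$ is bounded away from $0$ and $\infty$ there.

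Next I would pull the assumed weak convergence $G_n \rightsquigarrow G$ (Assumption~\ref{Assume:VESD}) back through $h_0$ to produce a weak limit on the eigenvalue side. Define the unnormalized spectral measure $\nu_{\beta,n} := \sum_{i=1}^{\ndim} \langle \beta, u_i\rangle^2 \delta_{\eigval_i}$ on $[c_\lambda,C_\lambda]$. A direct rewrite of \eqref{eq:VESDkx} at $x=0$ gives $G_n = (h_0)_\#\!\bigl(g_0 \cdot \nu_{\beta,n}/\Tilde r^{2}\bigr)$. By the continuous mapping theorem applied to the homeomorphism $h_0$ and by the fact that $g_0$ is continuous and strictly positive, the convergence $G_n \rightsquigarrow G$ is equivalent to $\nu_{\beta,n} \rightsquigarrow \nu_\beta$ where the explicit limit is $\nu_\beta(dt) := \Tilde r^{2} \cdot (h_0^{-1})_\# G(dt)/g_0(t)$.

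With the common limit $\nu_\beta$ in hand, for each $x$ near $0$ and any bounded continuous $f$ on $\mathbb{R}$ I can write
\begin{equation*}
\int f\, d\Tilde G_{x,\ndim}
\;=\;
\frac{\int g_x(t)\, f(h_x(t))\, d\nu_{\beta,\ndata}(t)}{\int g_x(t)\, d\nu_{\beta,\ndata}(t)} .
\end{equation*}
Since $g_x(\cdot)f(h_x(\cdot))$ and $g_x(\cdot)$ are bounded continuous on $[c_\lambda,C_\lambda]$, sending $\ndata\to\infty$ via $\nu_{\beta,\ndata}\rightsquigarrow \nu_\beta$ identifies $\Tilde G_x$ through the same formula with $\nu_{\beta,\ndata}$ replaced by $\nu_\beta$. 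Now send $x\to 0^-$: the uniform continuity from the first step gives $g_x(t)f(h_x(t)) \to g_0(t)f(h_0(t))$ uniformly in $t\in[c_\lambda,C_\lambda]$, so both the numerator and the denominator converge to their $x=0$ counterparts. The ratio therefore converges to $\int g_0 (f\circ h_0)\, d\nu_\beta \big/ \int g_0\, d\nu_\beta$, which, by the definition of $\nu_\beta$ as a pullback of $G$, simplifies to $\int f\, dG = \int f\, d\Tilde G$. By the Portmanteau theorem this proves $\Tilde G_x \rightsquigarrow \Tilde G$.

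The main obstacle I anticipate is not analytic but bookkeeping: carefully verifying that the normalizing denominator $\Tilde r^{2}_x = \int g_x\, d\nu_\beta$ stays bounded and bounded away from zero uniformly in a neighborhood of $x=0$ (so the ratio is well behaved under the double limit), and checking that the continuous-mapping step really transports the weak convergence cleanly between $G_n$ and $\nu_{\beta,\ndata}$ on the entire compact support. Both reduce to the uniform bounds on $k(x)$ from Lemma~\ref{lm:v0} and the eigenvalue bounds in Assumption~\ref{Assume:Covdistri}.
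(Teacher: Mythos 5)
Your proof is correct, and it takes a genuinely different route from the paper's. The paper treats $T_\ndim(x) := \int f\, d\Tilde{G}_{x,\ndim}$ as a sequence of functions of $x$ and invokes abstract double-limit machinery: it verifies uniform boundedness and equicontinuity to apply the Arzela--Ascoli theorem and then uses the Moore--Osgood theorem to justify exchanging $\lim_{\npinfty}$ with $\lim_{x\to 0^{-}}$. You instead perform a change-of-measure reduction: you rewrite each $\Tilde{G}_{x,\ndim}$ as the pushforward under $h_x(t)=t/(1+\ensambleweight(x)t)$ of a measure absolutely continuous with respect to the single reference measure $\nu_{\beta,\ndim}$, pull the assumed weak convergence $G_n\rightsquigarrow G$ back through the homeomorphism $h_0$ to pin down the limit of $\nu_{\beta,\ndim}$, and then pass $x\to 0^{-}$ using only the uniform continuity of $g_x,h_x$ on the compact eigenvalue interval and the boundedness of $\ensambleweight(x)$ near $0$. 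Your route is more concrete and self-contained, and it has the additional merit of actually establishing the existence of the weak limit $\Tilde{G}_{x}$ for each fixed $x$ (the paper asserts this existence in passing but never argues it). One small bookkeeping caveat: your intermediate object $\nu_\beta$ is unnormalized, so the claimed equivalence between $G_n\rightsquigarrow G$ and $\nu_{\beta,\ndim}\rightsquigarrow\nu_\beta$ tacitly requires the total masses to stabilize, i.e.\ $\Tilde{r}_{\ndim}^{2}\to\Tilde{r}^{2}$; this is consistent with how the paper uses $\Tilde{r}^{2}$ as a constant in Theorem~\ref{thm:Overpara}, but since your final expression is a ratio that cancels all normalizations, the cleanest version of your argument would bypass $\nu_\beta$ entirely and work directly with the probability measures $P_\ndim := g_0\,\nu_{\beta,\ndim}/\Tilde{r}_{\ndim}^{2}$, which satisfy $P_\ndim\rightsquigarrow (h_0^{-1})_{\#}G$ with no extra assumption.
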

\begin{proof}[Proof of Lemma~\ref{lm:limitVESD}]
  Recall $\Tilde{r_{x}}^{2} = \truesignal\transp (I + \ensambleweight(x)\covmat)^{-1} \truesignal$.   To prove the weak convergence, it suffices to show, for any bounded continuous function $f \in \mathcal{C}_{b}(\RR)$, that %\scomment{why can we exchange limits?}
    \begin{align*}
        \lim_{x \rightarrow 0^{-}} \int f(t) d\Tilde{G}_{x}(t) &= \lim_{x \rightarrow 0^{-}} \lim_{\npinfty} \frac{1}{\Tilde{r_{x}}^{2}} \sum_{i=1}^{\ndim} \frac{1}{1 + \ensambleweight(x)\eigval_{i}}\langle \beta, u_{i} \rangle^{2} \, f \left( \frac{\eigval_{i}}{1 + \ensambleweight(x)\eigval_{i}} \right)\\
        &= \lim_{\npinfty} \lim_{x \rightarrow 0^{-}} \frac{1}{\Tilde{r_{x}}^{2}} \sum_{i=1}^{\ndim} \frac{1}{1 + \ensambleweight(x)\eigval_{i}}\langle \beta, u_{i} \rangle^{2} \, f \left( \frac{\eigval_{i}}{1 + \ensambleweight(x)\eigval_{i}} \right)\\
        &= \int f(t) d\Tilde{G}(t),
    \end{align*}
    where the first and third equality follow from the definition of $\Tilde{G}_{x}$ and continuity of $\ensambleweight(x)= (1 - \sampleratio) v(x/\sampleratio)$, and we exchange the limits between $x \rightarrow 0^{-}$ and $\npinfty$ in the second equality. This obtains the desired result. 

    It remains to prove that the limits in the above displayed equality are exchangeable. By the Moore-Osgood theorem, %to exchange the limit between $x \rightarrow 0^{-}$ and $\npinfty$ in the above equation, 
    it suffices to show, as $\npinfty$, that
    \begin{equation*}
        T_{\ndim}(x):= \frac{1}{\Tilde{r_{x}}^{2}} \sum_{i=1}^{\ndim} \frac{1}{1 + \ensambleweight(x)\eigval_{i}}\langle \beta, u_{i} \rangle^{2} \, f \left( \frac{\eigval_{i}}{1 + \ensambleweight(x)\eigval_{i}} \right) \rightarrow \int f(t) d\Tilde{G}_{x}(t)
    \end{equation*}
    uniformly for all $x$ on $[0, c]$, for some $c$ such that $0 < c < +\infty$. By the Arzela–Ascoli theorem, we only need to prove that $T_{\ndim}(x)$ is uniformly bounded and equicontinuous on $[0, c]$.
    \paragraph{Uniform boundedness} Since we assume that $f$ is a bounded function such that  $f \leq M$ for some constant $M$. Then,
\$
T_{\ndim}(x)&= \frac{1}{\Tilde{r_{x}}^{2}} \sum_{i=1}^{\ndim} \frac{1}{1 + \ensambleweight(x)\eigval_{i}}\langle \beta, u_{i} \rangle^{2} \, f \left( \frac{\eigval_{i}}{1 + \ensambleweight(x)\eigval_{i}} \right) \\
&\leq \frac{1}{\Tilde{r_{x}}^{2}} \sum_{i=1}^{\ndim} \frac{1}{1 + \ensambleweight(x)\eigval_{i}}\langle \beta, u_{i} \rangle^{2} \, M\\
&\leq M
\$
where the last line follows from the definition of $\Tilde{r_{x}}^{2}$. %that $\Tilde{G}_{x}(t)$ is  a probability distribution.

\paragraph{Uniform equicontinuity}
    First, since $\ensambleweight(x)= (1 - \sampleratio) v(x/\sampleratio)$ is a continuous function on the compact interval $[0, c]$, then $\ensambleweight(x)$ is equicontinuous. Furthermore, since $T_{\ndim}(x)$ can be written as a function of  $\ensambleweight(x)$. It suffices  to prove that $T_{\ndim}(x)$ is equicontinuous with respect to $\ensambleweight(x)$ on $\{x: x\in [0, c]\}$. For any $0 \leq x \leq c$, we have 
    \begin{align*}
        \frac{\eigval_{i}}{1 + \ensambleweight(x)\eigval_{i}} - \frac{\eigval_{i}}{1 + \ensambleweight(0)\eigval_{i}} \leq \frac{\eigval_{i}^{2}(\ensambleweight(0) - \ensambleweight(x))}{(1 + \ensambleweight(x)\eigval_{i}) (1 + \ensambleweight(0)\eigval_{i})} \leq \frac{\ensambleweight(0) - \ensambleweight(x)}{\ensambleweight(0) \ensambleweight(x)} \leq C (\ensambleweight(0) - \ensambleweight(x)),
    \end{align*}
    for some constant $C >0$, where we used the fact that the continuous function $\ensambleweight(x)$ is bounded on the compact set $[0, c]$. Thus, $\eigval_{i}/(1 + \ensambleweight(x)\eigval_{i})$ is equicontinuous with respect to $\ensambleweight(x)$. Similarly, we can see $1/(1 + \ensambleweight(x)\eigval_{i})$ is equicontinuous with respect to $\ensambleweight(x)$. Since we assumed $f$ is continuous, we obtain $f(\eigval_{i}/(1 + \ensambleweight(x)\eigval_{i}))$ is equicontinuous with respect to $\ensambleweight(x)$. Recall $r^2 = \|\beta\|_2^2$. Then
    \begin{align*}
        \frac{1}{\Tilde{r_{0}}^{2}} - \frac{1}{\Tilde{r_{x}}^{2}} &= \frac{\Tilde{r_{x}}^{2}-\Tilde{r_{0}}^{2}}{\Tilde{r_{x}}^{2}\Tilde{r_{0}}^{2}}\\
        &= \frac{\truesignal \transp (I + \ensambleweight(x)\covmat)^{-1} (\ensambleweight(0) - \ensambleweight(x)) \covmat (I + \ensambleweight(0)\covmat)^{-1} \truesignal}{\truesignal \transp (I + \ensambleweight(x)\covmat)^{-1} \truesignal \truesignal \transp (I + \ensambleweight(0)\covmat)^{-1} \truesignal}\\
        &\leq \frac{(1 + \ensambleweight(x) c_{\lambda})\inv C_{\lambda} (1 + \ensambleweight(0) c_{\lambda})\inv}{\signallev } (\ensambleweight(0) - \ensambleweight(x))\\
        &\leq C(\ensambleweight(0) - \ensambleweight(x))
    \end{align*}
    for some constant $C > 0$, where the second line  uses  the identity $A\inv - B\inv = A\inv(B - A)B\inv$, the first inequality uses Assumption~\ref{Assume:Covdistri}, and boundedness of $\ensambleweight(x)$ on a compact interval.
    This completes the proof.
\end{proof}

%%%%%%%%%%%%%%%%%%%%%%%%%%%%%%%%%%%%%%%%%%%%%%%%%%%%%%%%
%%%%%%%%%%%%%%%%%Prelminary results%%%%%%%%%%%%%%%%%%%
%%%%%%%%%%%%%%%%%%%%%%%%%%%%%%%%%%%%%%%%%%%%%%%%%%%%%%%%
\newpage

\section{Preliminary lemmas}

This section collects preliminary results. 

\begin{assumption}[Assumption 4.4.1 in~\cite{zhang2007spectral}]\label{Assume:relaxed_assumption}
 We assume the followings. %\scomment{You did not cite this assumpiton: where is it needed?}
    \begin{enumerate}[itemindent=30pt]
        \item[(i)] $\left\|T_{1 n}\right\|$ and $\left\|T_{2 n}\right\|$ are uniformly bounded for $n$, where $\|\cdot\|$ denotes the spectral norm of a matrix.
        \item[(ii)] $E x_{i j}=0, E\left|x_{i j}\right|^2 \leq 1,\left|x_{i j}\right| \leq \delta_n \sqrt{n}$, with $\delta_n \rightarrow 0$,
        $$
        \frac{1}{\delta_n^2 n N} \sum_{i j}\left(1-E\left|x_{i j}\right|^2\right) \rightarrow 0,
        $$
        as $n \rightarrow \infty$.
        \item[(iii)] $T_{1 n}$ and $T_{2 n}$ are non-random.
    \end{enumerate}   
\end{assumption}

\begin{lemma}[Vitali convergence theorem]\label{lm:Vitali}
    Let $f_1, f_2, \cdots$ be analytic on the domain $D$, satisfying $\left|f_n(z)\right| \leq M$ for every $n$ and $z \in D$. Suppose that there is an analytic function $f$ on $D$ such that $f_n(z) \rightarrow f(z)$ for all $z \in D$. Then it also holds that $f_n^{\prime}(z) \rightarrow f^{\prime}(z)$ for all $z \in D$.
\end{lemma}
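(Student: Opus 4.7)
The plan is to combine Montel's theorem with the Cauchy integral formula for derivatives, thereby lifting pointwise convergence of $\{f_n\}$ to pointwise convergence of $\{f_n'\}$. First, I would observe that the uniform bound $|f_n(z)| \leq M$ for all $n$ and all $z \in D$ makes $\{f_n\}$ a locally uniformly bounded family on the domain $D$. Montel's theorem then implies that $\{f_n\}$ is a normal family, so every subsequence admits a further subsequence converging locally uniformly on $D$ to some analytic function.

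Next, I would identify the subsequential limit. Given any locally uniformly convergent sub-subsequence $f_{n_k} \to g$, pointwise evaluation forces $g(z) = \lim_k f_{n_k}(z) = f(z)$ for every $z \in D$, since by hypothesis $f_n(z) \to f(z)$ pointwise. As every subsequence of $\{f_n\}$ thus has a sub-subsequence that converges locally uniformly to the common limit $f$, a standard contradiction argument upgrades the given pointwise convergence $f_n \to f$ to local uniform convergence on $D$.

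Finally, for fixed $z \in D$, I would choose $r > 0$ so that the closed disk $\overline{B(z,r)}$ lies in $D$, and apply the Cauchy integral formula for derivatives to write
\begin{equation*}
f_n'(z) - f'(z) = \frac{1}{2\pi i} \oint_{|\zeta - z| = r} \frac{f_n(\zeta) - f(\zeta)}{(\zeta - z)^2} \, d\zeta.
\end{equation*}
Bounding the right-hand side in absolute value by $\max_{|\zeta - z| = r} |f_n(\zeta) - f(\zeta)| / r$ and invoking local uniform convergence on the compact circle $\{|\zeta - z| = r\}$ yields $f_n'(z) \to f'(z)$, which is the claim. The main obstacle is purely bookkeeping: one must ensure the subsequence-of-subsequence step in the normality argument is stated correctly so that uniqueness of the subsequential limit implies convergence of the full sequence. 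Beyond this, the proof uses only standard complex-analytic machinery and presents no analytical difficulty.
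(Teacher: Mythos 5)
Your proof is correct and is the standard textbook derivation of Vitali's theorem: Montel's theorem gives normality from the uniform bound, the subsequence argument upgrades pointwise convergence of a normal family to local uniform convergence, and the Cauchy integral formula (equivalently, Weierstrass's theorem) transfers local uniform convergence to the derivatives. The paper states this lemma without proof as a standard fact from complex analysis, so there is nothing to compare against; your argument fills in exactly the expected details with no gaps.
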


\begin{lemma}[Moore-Osgood theorem]\label{lm:Moore-Osgood}
If $\lim _{x \rightarrow a} f(x, y)=g(y)$ uniformly (in $y$) on $Y \backslash\{b\}$, and $\lim _{y \rightarrow b} f(x, y)=h(x)$ for each $x$ near $a$, then both $\lim _{y \rightarrow b} g(y)$ and $\lim _{x \rightarrow a} h(x)$ exists and
\begin{equation*}
    \lim _{y \rightarrow b} \lim _{x \rightarrow a} f(x, y)=\lim _{x \rightarrow a} \lim _{y \rightarrow b} f(x, y)=\lim _{\substack{x \rightarrow a \\ y \rightarrow b}} f(x, y) .
\end{equation*}
The $a$ and $b$ here can possibly be infinity.
\end{lemma}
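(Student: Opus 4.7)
The plan is the classical three-step swap-of-limits argument, in which uniform convergence is used to transfer questions about $g$ into already-known questions about $f(x_0,\cdot)$ for a fixed $x_0$ close to $a$.

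\textbf{Step 1: Existence of $L := \lim_{y \to b} g(y)$.} I would verify the Cauchy criterion for $g$. Given $\varepsilon > 0$, uniform convergence of $f(x,\cdot) \to g$ produces a punctured neighborhood $U$ of $a$ such that $|f(x,y) - g(y)| < \varepsilon/3$ for every $x \in U$ and every $y \in Y \setminus \{b\}$. Fix any $x_0 \in U$; since $f(x_0,y) \to h(x_0)$ as $y \to b$ by hypothesis, there is a neighborhood $V$ of $b$ on which $|f(x_0,y_1) - f(x_0,y_2)| < \varepsilon/3$. A three-term triangle inequality
\[
|g(y_1) - g(y_2)| \leq |g(y_1) - f(x_0,y_1)| + |f(x_0,y_1) - f(x_0,y_2)| + |f(x_0,y_2) - g(y_2)|
\]
then yields $|g(y_1) - g(y_2)| < \varepsilon$ for $y_1, y_2 \in V \setminus \{b\}$, establishing the existence of $L$.

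\textbf{Step 2: Both iterated limits and the joint limit equal $L$.} For $\lim_{x\to a} h(x) = L$, on the same $U$ I pass $y \to b$ inside $|f(x,y) - g(y)| < \varepsilon/3$ to get $|h(x) - L| \leq \varepsilon/3$ for all $x \in U$. For the joint limit, the two-term split
\[
|f(x,y) - L| \leq |f(x,y) - g(y)| + |g(y) - L|
\]
handles the first summand by uniform convergence on $U$ and the second by the definition of $L$ on a further neighborhood $V'$ of $b$, giving $|f(x,y) - L| < \varepsilon$ on $(U\setminus\{a\}) \times (V'\setminus\{b\})$.

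There is no real obstacle in the proof; the content of the lemma is precisely that uniform convergence is the right hypothesis to decouple the two limits. The only bookkeeping nuance is the case $a = \infty$ or $b = \infty$, which is handled by reading ``neighborhood of $\infty$'' as a half-line $\{x : x > M\}$, after which every estimate above applies verbatim.
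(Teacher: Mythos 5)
Your proof is correct and is the standard textbook argument: uniform convergence supplies the Cauchy criterion for $g$ near $b$, and the same uniform bound then identifies both iterated limits (and the joint limit) with $L$. Note that the paper itself states this as a classical preliminary fact in the appendix and offers no proof to compare against, so there is no discrepancy to flag; your argument fills that gap in the usual way.
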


By combining Theorem 2 and Remark 1 of \cite{Bai1993}, we obtain the following lemma. 
\begin{lemma}\label{lm:Bai-yin}
    Assume that the feature vector $x$ has \iid~entries with zero mean, unit variance, and bounded $4$-th moment. As $\npinfty$, $0<\aspratio<\infty$,
    \begin{align*}
        \lim_{\npinfty} \eigval_{\min}^{+} (X\transp X/\ndata) &= (1 - \sqrt{\aspratio})^{2} \qas \\
        \lim_{\npinfty} \eigval_{\max} (X\transp X/\ndata) &= (1 + \sqrt{\aspratio})^{2} \qas
    \end{align*}
    where $\eigval_{\min}^{+}$ denotes the smallest positive eigenvalue.
\end{lemma}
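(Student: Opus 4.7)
The plan is to derive this as a direct corollary of the classical Bai-Yin theorem by (i) invoking the underparameterized case verbatim and (ii) using the dual matrix $XX\transp/\ndata$ to handle the overparameterized regime where $X\transp X/\ndata$ is singular. Since the paper itself attributes this result to Theorem 2 and Remark 1 of Bai-Yin, the work is really just packaging a classical statement into the form stated here, with the extra step of showing that the same formula $(1-\sqrt{\aspratio})^2$ handles both $\aspratio<1$ and $\aspratio>1$ through the identity $(1-\sqrt{\aspratio})^2=(\sqrt{\aspratio}-1)^2$.

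First, in the regime $\aspratio<1$, the matrix $X\transp X/\ndata$ is almost surely of full rank $\ndim$, so $\eigval_{\min}^{+}=\eigval_{\min}$. Under the assumed fourth-moment condition on i.i.d. entries with zero mean and unit variance, Theorem 2 of \cite{Bai1993} gives almost sure convergence $\eigval_{\min}(X\transp X/\ndata)\to(1-\sqrt{\aspratio})^2$ and $\eigval_{\max}(X\transp X/\ndata)\to(1+\sqrt{\aspratio})^2$; Remark 1 confirms that the bounded fourth moment is in fact sharp for convergence of the smallest eigenvalue. This already settles one half of the statement.

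Second, in the regime $\aspratio>1$, the $\ndim\times\ndim$ matrix $X\transp X/\ndata$ has rank at most $\ndata<\ndim$, contributing $\ndim-\ndata$ zero eigenvalues, so $\eigval_{\min}^{+}(X\transp X/\ndata)=\eigval_{\min}(XX\transp/\ndata)$ because $X\transp X$ and $XX\transp$ share all non-zero eigenvalues. I would then rewrite $XX\transp/\ndata=\aspratio\cdot(XX\transp/\ndim)$ and view $X\transp$ as a $\ndim\times\ndata$ array of i.i.d. entries whose aspect ratio $\ndata/\ndim\to1/\aspratio<1$. Applying the underparameterized Bai-Yin result to the $\ndata\times\ndata$ matrix $XX\transp/\ndim$ yields $\eigval_{\min}(XX\transp/\ndim)\to(1-\sqrt{1/\aspratio})^2$ and $\eigval_{\max}(XX\transp/\ndim)\to(1+\sqrt{1/\aspratio})^2$ a.s. Multiplying by $\aspratio$ and simplifying gives $\eigval_{\min}^{+}(X\transp X/\ndata)\to(\sqrt{\aspratio}-1)^2=(1-\sqrt{\aspratio})^2$ and $\eigval_{\max}(X\transp X/\ndata)\to(\sqrt{\aspratio}+1)^2=(1+\sqrt{\aspratio})^2$, matching the stated formulas.

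The only real subtlety is careful bookkeeping of normalizations when switching between $X\transp X/\ndata$ and $XX\transp/\ndim$, and perhaps a remark about the critical case $\aspratio=1$, which can be handled either by excluding it or by a squeezing argument via perturbations of the aspect ratio. The main obstacle, if a self-contained proof were required, would be reproving Bai-Yin itself, which requires combinatorial moment methods or a Gaussian-comparison technique together with a separate argument for the necessity of the fourth moment; that is well beyond what is needed here, since the reduction above is purely algebraic.
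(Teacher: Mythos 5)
Your proposal is correct and unpacks exactly what the paper leaves implicit: the paper gives no proof of this lemma, only the one-line attribution to Theorem~2 and Remark~1 of Bai--Yin, and your reduction (underparameterized case verbatim; overparameterized case via the dual $X X\transp/\ndim$ followed by the identity $\aspratio(1-\sqrt{1/\aspratio})^2 = (\sqrt{\aspratio}-1)^2$) is the standard way to read that attribution. The algebra checks out in both regimes, and your note that the critical case $\aspratio=1$ needs a separate remark (both sides degenerate to $0$ for the smallest eigenvalue) is a reasonable caveat that the paper itself glosses over.
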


\begin{lemma}[Theorem A.43 in~\cite{bai2010}] \label{lm:petubation}
Let $A$ and $B$ be two $n \times n$ Hermitian matrices with their empirical spectral distributions $F^{A}$ and $F^{B}$. Then 
\begin{equation*}
    \left\|F^{A}-F^{B}\right\|_\infty \leq \frac{1}{n} \operatorname{rank}(A-B),
\end{equation*}
where $\|F\|_\infty=\sup _x|F(x)|$.
\end{lemma}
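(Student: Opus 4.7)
}
My plan is to reduce the uniform bound on the difference of distribution functions to a pointwise bound on the counting functions $N_A(x) := \#\{i : \lambda_i^A \leq x\}$ and $N_B(x) := \#\{i : \lambda_i^B \leq x\}$, since $F^A(x) = N_A(x)/n$ and $F^B(x) = N_B(x)/n$. Specifically, I will establish that
\begin{equation*}
    |N_A(x) - N_B(x)| \;\leq\; \mathrm{rank}(A-B) \quad \text{for every } x \in \mathbb{R},
\end{equation*}
which yields the claim upon dividing by $n$ and taking the supremum over $x$.

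The core step is an eigenvalue interlacing inequality. Let $k := \mathrm{rank}(A-B)$ and order the eigenvalues of $A$ and $B$ as $\lambda_1^A \leq \cdots \leq \lambda_n^A$ and $\lambda_1^B \leq \cdots \leq \lambda_n^B$. Writing $A = B + C$ with $\mathrm{rank}(C) = k$, the subspace $V := \ker(C)$ has dimension at least $n-k$, and on $V$ we have $Aw = Bw$. I would then invoke the Courant--Fischer min--max principle: for any index $i$,
\begin{equation*}
    \lambda_i^A \;=\; \min_{\dim W = n - i + 1} \; \max_{w \in W,\, \|w\|=1} w^{*}Aw.
\end{equation*}
Taking $W$ to be the intersection of $V$ with the span of the top $i+k$ eigenvectors of $B$ (a subspace of dimension at least $(n-k) + (i+k) - n = i$, which I can enlarge/contract to the correct dimension), I obtain $\lambda_i^A \leq \lambda_{i+k}^B$. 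A symmetric argument using the opposite min--max formula gives $\lambda_i^A \geq \lambda_{i-k}^B$ (with the convention $\lambda_j^B = -\infty$ for $j \leq 0$). Together these yield the two-sided interlacing $\lambda_{i-k}^B \leq \lambda_i^A \leq \lambda_{i+k}^B$.

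Once the interlacing is in hand, the counting bound follows by a direct translation: if $N_A(x) = m$, then $\lambda_m^A \leq x$, so $\lambda_{m-k}^B \leq x$, yielding $N_B(x) \geq m - k$; and $\lambda_{m+1}^A > x$ combined with $\lambda_{m+1+k}^B \geq \lambda_{m+1}^A$ gives $N_B(x) \leq m + k$. Hence $|N_A(x) - N_B(x)| \leq k$ uniformly in $x$, and the desired inequality
\begin{equation*}
    \|F^A - F^B\|_\infty \;\leq\; \frac{1}{n}\,\mathrm{rank}(A-B)
\end{equation*}
follows.

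The only subtle point I anticipate is the book-keeping in applying Courant--Fischer to the subspace $W \cap V$: I must verify that the intersection has the dimension required by the min--max formula, which uses the standard dimension inequality $\dim(W \cap V) \geq \dim W + \dim V - n$. This is a routine linear-algebra check but is the pivot of the argument; once it is carried out cleanly for both inequalities (upper and lower bounds on $\lambda_i^A$), the rest of the proof is essentially a one-line translation from eigenvalue to counting-function estimates.
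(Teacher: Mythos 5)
The paper does not prove this lemma: it is imported verbatim as Theorem~A.43 of Bai and Silverstein (2010) and used as a black-box preliminary. So there is no ``paper's proof'' to compare against; you have supplied a from-scratch argument for a cited result.

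Your argument is correct and is in fact the standard proof of this rank inequality. Reducing $\|F^A-F^B\|_\infty$ to the counting bound $|N_A(x)-N_B(x)|\le\mathrm{rank}(A-B)=:k$, establishing the two-sided interlacing $\lambda_{i-k}^B\le\lambda_i^A\le\lambda_{i+k}^B$ by applying Courant--Fischer to a subspace of the form (span of $i+k$ eigenvectors of $B$) $\cap\, \ker(A-B)$ and using the dimension inequality $\dim(W\cap V)\ge\dim W+\dim V-n$, and then translating eigenvalue interlacing into a counting-function bound --- this is precisely how the result is proved in the literature. Two cosmetic slips worth fixing: (i) you order the eigenvalues increasingly, $\lambda_1^A\le\cdots\le\lambda_n^A$, but then write the Courant--Fischer formula as $\lambda_i^A=\min_{\dim W=n-i+1}\max_{w\in W}w^*Aw$, which corresponds to the decreasing-order convention; with increasing order it should read $\lambda_i^A=\min_{\dim W=i}\max_{w\in W,\|w\|=1}w^*Aw$. (ii) Correspondingly, the subspace you intersect with $V=\ker(A-B)$ should be the span of the eigenvectors of $B$ associated with the $i+k$ \emph{smallest} eigenvalues (not ``top''); then $\dim(W_0\cap V)\ge(i+k)+(n-k)-n=i$ and every unit $w$ in that intersection satisfies $w^*Aw=w^*Bw\le\lambda_{i+k}^B$, giving $\lambda_i^A\le\lambda_{i+k}^B$ directly. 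The symmetric max--min argument gives $\lambda_i^A\ge\lambda_{i-k}^B$, and the counting-function translation you give in the last paragraph is correct. With those two notational corrections, the proof is complete and clean.
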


\begin{lemma}[Von Neumann's trace inequality] \label{lm:Von_neumann}
If $A, B$ are complex $n \times n$ matrices with singular values,
\begin{gather*}
    \alpha_1 \geq \cdots \geq \alpha_n, \quad \beta_1 \geq \cdots \geq \beta_n,
\end{gather*}
respectively, then
\begin{equation*}
    |\operatorname{tr}(A B)| \leq \sum_{i=1}^n \alpha_i \beta_i
\end{equation*}
with equality if and only if $A$ and $B$ share singular vectors.
\end{lemma}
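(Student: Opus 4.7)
The plan is to prove the inequality via the singular value decomposition combined with the theory of doubly stochastic matrices. First, I would write $A = U_A D_\alpha V_A^{*}$ and $B = U_B D_\beta V_B^{*}$ via SVD, where $D_\alpha$ and $D_\beta$ are the diagonal matrices whose entries are the singular values $\alpha_1 \ge \cdots \ge \alpha_n$ and $\beta_1 \ge \cdots \ge \beta_n$, and the $U$'s, $V$'s are unitary. Using the cyclic property of the trace, $\tr(AB) = \tr(D_\alpha\, W\, D_\beta\, Z)$, where $W := V_A^{*} U_B$ and $Z := V_B^{*} U_A$ are both unitary.

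Expanding entrywise yields
\begin{equation*}
\tr(D_\alpha W D_\beta Z) \;=\; \sum_{i,j} \alpha_i \beta_j\, W_{ij} Z_{ji}.
\end{equation*}
Applying $|W_{ij} Z_{ji}| \le \tfrac{1}{2}\bigl(|W_{ij}|^2 + |Z_{ji}|^2\bigr)$ I obtain
\begin{equation*}
|\tr(AB)| \;\le\; \tfrac{1}{2}\sum_{i,j} \alpha_i \beta_j \bigl(|W_{ij}|^2 + |Z_{ji}|^2\bigr).
\end{equation*}
The key observation is that, since $W$ and $Z$ are unitary, the matrices $P_{ij} := |W_{ij}|^2$ and $Q_{ij} := |Z_{ji}|^2$ are both doubly stochastic.

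By Birkhoff's theorem, each doubly stochastic matrix is a convex combination of permutation matrices, so it suffices to bound $\sum_i \alpha_i \beta_{\sigma(i)}$ uniformly over permutations $\sigma \in S_n$. The rearrangement inequality then gives $\sum_i \alpha_i \beta_{\sigma(i)} \le \sum_i \alpha_i \beta_i$ because both sequences are already sorted in decreasing order, completing the upper bound.

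The main subtle point, rather than an outright obstacle, is the equality case: both the AM--GM step and the rearrangement step must saturate simultaneously. Saturation of AM--GM forces $|W_{ij}| = |Z_{ji}|$, while saturation of the rearrangement inequality forces the mass of $P$ and $Q$ to sit on the identity permutation (after accounting for equal singular values). Together, these conditions imply that on each strict singular-value eigenspace $W$ and $Z$ are block-identity (up to phase), which is equivalent, via the SVD representation, to $A$ and $B$ admitting a common set of singular vectors.
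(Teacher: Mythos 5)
The paper states Lemma~\ref{lm:Von_neumann} as a standard preliminary fact without proof, so there is nothing in the paper to compare your argument against; you are supplying the classical proof from scratch. Your argument is correct and is the well-known doubly-stochastic (Mirsky) route: the cyclic reduction to $\tr(D_\alpha W D_\beta Z)$ with unitary $W = V_A^* U_B$, $Z = V_B^* U_A$; the bound $|W_{ij}Z_{ji}| \le \tfrac12\bigl(|W_{ij}|^2 + |Z_{ji}|^2\bigr)$; the observation that $\bigl(|W_{ij}|^2\bigr)_{ij}$ and $\bigl(|Z_{ji}|^2\bigr)_{ij}$ are doubly stochastic by unitarity; and then Birkhoff plus rearrangement to maximize the linear functional over the Birkhoff polytope at $\sum_i \alpha_i\beta_i$. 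All the steps check out, and the nonnegativity of the singular values justifies pulling the modulus inside the sum. The only place a full write-up would need more care is the equality characterization: in addition to AM--GM saturation ($|W_{ij}|=|Z_{ji}|$ on the support) and the doubly stochastic matrices concentrating on the identity blockwise, you also need the triangle inequality $\bigl|\sum_{i,j}\cdot\bigr| \le \sum_{i,j}|\cdot|$ to be tight, i.e.\ every nonzero term $\alpha_i\beta_j W_{ij} Z_{ji}$ must carry a common phase. The precise conclusion is that $A = U D_\alpha V^*$ and $B = V D_\beta U^*$ for one common pair of unitaries $U,V$ (note the swapped roles of $U$ and $V$ in $B$), which is what the lemma's informal phrase ``share singular vectors'' is shorthand for.
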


\begin{lemma}[Sherman–Morrison formula] \label{lm:Sherman–Morrison}
Suppose $A \in \mathbb{R}^{n \times n}$ is an invertible square matrix and $u, v \in \mathbb{R}^n$ are column vectors. Then $A+u v^{\top}$ is invertible iff $1+v^{\top} A^{-1} u \neq 0$. In this case,
\begin{equation*}
    \left(A+u v^{\top}\right)^{-1}=A^{-1}-\frac{A^{-1} u v^{\top} A^{-1}}{1+v^{\top} A^{-1} u}.
\end{equation*}
Additionally, we will frequently use the following form:
\begin{equation*}
    \left(A+u v^{\top}\right)^{-1}u=\frac{A^{-1}u}{1+v^{\top} A^{-1} u}.
\end{equation*}
\end{lemma}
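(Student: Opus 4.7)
The plan is to verify the Sherman--Morrison identity by direct multiplication and to handle the invertibility criterion as a companion computation, since both assertions reduce to short algebraic manipulations once the candidate inverse is written down.

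First I would establish the invertibility criterion. Using the matrix determinant lemma (or, equivalently, a rank-one perturbation identity), I would argue
\[
\det(A + uv^{\top}) = \det(A)\bigl(1 + v^{\top} A^{-1} u\bigr),
\]
from which $A+uv^{\top}$ is invertible exactly when $1 + v^{\top} A^{-1} u \ne 0$, given that $A$ is invertible. To make this self-contained one can write $A + uv^{\top} = A(I + A^{-1}u\, v^{\top})$ and then check that $I + A^{-1}u\, v^{\top}$ acts as the identity on any vector orthogonal to $v$, while on $A^{-1}u$ it multiplies by $1+v^{\top}A^{-1}u$; hence the determinant equals that scalar, and invertibility follows.

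Next, assuming $1 + v^{\top} A^{-1} u \ne 0$, I would verify the formula by direct multiplication. Let
\[
M := A^{-1} - \frac{A^{-1} u\, v^{\top} A^{-1}}{1 + v^{\top} A^{-1} u}.
\]
Computing $(A + uv^{\top}) M$, the $A\cdot A^{-1}$ term gives $I$, the $uv^{\top} A^{-1}$ term gives $u v^{\top} A^{-1}$, and the two correction terms combine to
\[
-\frac{u v^{\top} A^{-1}}{1 + v^{\top} A^{-1} u} - \frac{u\,(v^{\top} A^{-1} u)\, v^{\top} A^{-1}}{1 + v^{\top} A^{-1} u}
= -\frac{u v^{\top} A^{-1}\bigl(1 + v^{\top} A^{-1} u\bigr)}{1 + v^{\top} A^{-1} u} = -u v^{\top} A^{-1}.
\]
Summing the three contributions yields $I$, so $M$ is a right inverse; a symmetric calculation of $M(A + uv^{\top})$ shows it is also a left inverse, confirming $M = (A + uv^{\top})^{-1}$.

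Finally, for the auxiliary identity $(A + uv^{\top})^{-1} u = A^{-1} u / (1 + v^{\top} A^{-1} u)$, I would apply $M$ to $u$:
\[
M u = A^{-1} u - \frac{A^{-1} u\,(v^{\top} A^{-1} u)}{1 + v^{\top} A^{-1} u} = \frac{A^{-1} u}{1 + v^{\top} A^{-1} u}.
\]
There is no real obstacle in this proof; it is entirely a matter of bookkeeping in rank-one algebra. The only subtlety worth flagging is ensuring that $1 + v^{\top} A^{-1} u$ is a scalar (so it commutes through the vector/matrix products) and that the determinant step for the ``only if'' direction is justified without circularity, which is why I would prefer the factorization $A(I + A^{-1}u\,v^{\top})$ over a determinant lemma invoked as a black box.
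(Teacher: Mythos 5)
Your proof is correct; the paper states this as a standard preliminary lemma (alongside Vitali, Moore--Osgood, Burkholder, etc.) and gives no proof of its own, so there is nothing to diverge from. Your direct-multiplication verification, the factorization $A(I+A^{-1}uv^{\top})$ with $\det(I+A^{-1}uv^{\top})=1+v^{\top}A^{-1}u$ for the invertibility criterion, and the evaluation of the candidate inverse at $u$ are exactly the standard argument and are all sound (the only cosmetic point is that the eigenvector reasoning for the determinant should be phrased to cover the degenerate case $v^{\top}A^{-1}u=0$, e.g.\ via the general identity $\det(I+xy^{\top})=1+y^{\top}x$, or replaced by noting that $(A+uv^{\top})A^{-1}u=(1+v^{\top}A^{-1}u)\,u$ directly exhibits a kernel vector when the scalar vanishes).
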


\begin{lemma}[Burkholder inequality, Lemma
B.26 in~\cite{bai2010}]\label{lm:quadraticbound}
Let $A=\left(a_{i j}\right)$ be an $n \times n$ nonrandom matrix and $X=$ $\left(x_1, \cdots, x_n\right)^{\prime}$ be a random vector of independent entries. Assume that $\mathrm{E} x_i=0$, $\mathrm{E}\left|x_i\right|^2=1$, and $\mathrm{E}\left|x_j\right|^{\ell} \leq \nu_{\ell}$. Then, for any $p \geq 1$,
\begin{equation*}
\mathbf{E}\left|X^* A X-\operatorname{tr} A\right|^p \leq C_p\left(\left(\nu_4 \operatorname{tr}\left(AA^*\right)\right)^{p / 2}+\nu_{2 p} \operatorname{tr}\left(AA^*\right)^{p / 2}\right), 
\end{equation*}
where $C_p$ is a constant depending on $p$ only.
\end{lemma}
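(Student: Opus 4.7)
The proof I propose follows the classical martingale-decomposition approach due to Burkholder and Bai-Silverstein. First, I would introduce the filtration $\mathcal{F}_k = \sigma(x_1,\ldots,x_k)$ with $\mathcal{F}_0$ trivial, and define the martingale differences
\[
D_k := (\EE_k - \EE_{k-1})(X^{*} A X), \qquad k = 1, \ldots, n,
\]
where $\EE_k$ denotes conditional expectation given $\mathcal{F}_k$. Because $\EE_n(X^{*}AX) = X^{*}AX$ and $\EE_0(X^{*}AX) = \tr A$, we obtain the telescoping identity $X^{*}AX - \tr A = \sum_{k=1}^n D_k$. A direct expansion using independence of the entries together with $\EE x_i = 0$ yields the explicit representation
\[
D_k = a_{kk}(|x_k|^2 - 1) + \bar{x}_k Y_k + x_k Z_k,
\]
where $Y_k := \sum_{j<k} a_{kj} x_j$ and $Z_k := \sum_{i<k} \bar{a}_{ik} \bar{x}_i$ are $\mathcal{F}_{k-1}$-measurable.

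Second, for $p \geq 2$ I would invoke the Burkholder-Rosenthal inequality for martingale differences,
\[
\EE\Bigl|\sum_{k=1}^n D_k\Bigr|^p \leq C_p \, \EE\Bigl(\sum_{k=1}^n \EE_{k-1}|D_k|^2\Bigr)^{p/2} + C_p \sum_{k=1}^n \EE|D_k|^p.
\]
For the predictable-variation term, I bound $\EE_{k-1}|D_k|^2 \leq C(|a_{kk}|^2 \nu_4 + |Y_k|^2 + |Z_k|^2)$; since $\EE(|Y_k|^2 + |Z_k|^2) = \sum_{j<k}(|a_{kj}|^2 + |a_{jk}|^2)$, summation yields $\sum_k \EE(|Y_k|^2 + |Z_k|^2) \leq 2\tr(AA^{*})$. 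Jensen's inequality (for $p=2$), or, for $p>2$, a recursive second application of Burkholder to the $\mathcal{F}_k$-martingales $(Y_k)$ and $(Z_k)$, then delivers
\[
\EE\Bigl(\sum_k \EE_{k-1}|D_k|^2\Bigr)^{p/2} \leq C_p \bigl(\nu_4 \tr(AA^{*})\bigr)^{p/2}.
\]

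Third, for the individual-moment sum I would estimate $\EE|D_k|^p \leq C_p(|a_{kk}|^p \nu_{2p} + \EE|Y_k|^p + \EE|Z_k|^p)$. Applying Rosenthal's inequality to $Y_k$, which is a sum of independent centered variables, produces
\[
\EE|Y_k|^p \leq C_p\Bigl[\Bigl(\sum_{j<k}|a_{kj}|^2\Bigr)^{p/2} + \nu_p \sum_{j<k}|a_{kj}|^p\Bigr],
\]
and analogously for $Z_k$. Summing over $k$ and using the standard $\ell^p \hookrightarrow \ell^2$ bounds $\sum_k |a_{kk}|^p \leq \tr(AA^{*})^{p/2}$ and $\sum_{k,j}|a_{kj}|^p \leq \tr(AA^{*})^{p/2}$ (valid for $p\geq 2$) assembles the $\nu_{2p}\tr(AA^{*})^{p/2}$ contribution in the stated bound. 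The case $p \in [1,2]$ then follows from the $p=2$ case by Lyapunov/Jensen interpolation (noting $\nu_{2p}^{1/p}\leq \nu_4^{1/2}$ for $p\leq 2$).

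The main obstacle will be the careful bookkeeping of cross-terms in $\EE_{k-1}|D_k|^2$. The product $a_{kk}(|x_k|^2-1)\cdot \bar{x}_k Y_k$ produces a factor $\EE x_k(|x_k|^2-1)=\EE x_k^3$, which does not vanish in general and must be controlled via $|\EE x_k^3|\leq \nu_4^{1/2}\nu_2^{1/2}$ by Cauchy-Schwarz and absorbed into the $\nu_4\tr(AA^{*})$ estimate. A parallel subtlety appears in the $\nu_{2p}$ term, where Rosenthal's inequality must be applied conditionally on $\mathcal{F}_{k-1}$ inside the outer martingale bound, with constants tracked so that only $\nu_{2p}$, not a still higher moment, appears. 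Both subtleties are routine but essential for obtaining the sharp form of the inequality as stated.
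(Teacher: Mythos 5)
The paper does not prove this lemma: it is quoted verbatim from Lemma B.26 of \cite{bai2010} and used as a black box (it enters the proofs of the almost-sure convergence results via the Borel--Cantelli argument and Lemma~\ref{lm:concentrationontrace}). Your proposal is therefore a reconstruction of the Bai--Silverstein argument rather than an alternative to anything in the paper, and the overall architecture you chose --- the martingale decomposition $D_k = (\EE_k - \EE_{k-1})(X^*AX)$, the explicit form $D_k = a_{kk}(|x_k|^2-1)+\bar x_k Y_k + x_k Z_k$, the Burkholder--Rosenthal inequality for $p\geq 2$, Rosenthal for the linear forms $Y_k,Z_k$, the $\ell^{p/2}\hookrightarrow\ell^1$ bound $\sum_{k,j}|a_{kj}|^p\leq\tr(AA^*)^{p/2}$, and Jensen/Lyapunov for $1\leq p\leq 2$ --- is the standard one and is correct.

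There is one genuine gap and one unnecessary detour. The gap: you propose to control $\EE\bigl(\sum_k\EE_{k-1}|D_k|^2\bigr)^{p/2}$ by ``a recursive second application of Burkholder to the $\mathcal F_k$-martingales $(Y_k)$ and $(Z_k)$,'' but $(Y_k)_{k\geq 1}$ with $Y_k=\sum_{j<k}a_{kj}x_j$ is \emph{not} a martingale --- the coefficients $a_{kj}$ change with $k$, so $\EE_{k-1}Y_k\neq Y_{k-1}$ in general --- and Burkholder's inequality does not apply to it as a sequence. The correct route, which is what makes the Bai--Silverstein proof genuinely inductive, is to observe that $\sum_k|Y_k|^2 = x^*L^*Lx$ with $L$ the strictly lower-triangular part of $A$, so that it is itself a quadratic form to which the lemma with exponent $p/2$ applies; since $\tr(L^*L)\leq\tr(AA^*)$ and $\tr\bigl((L^*L)^2\bigr)\leq\tr(AA^*)^2$, the inductive hypothesis closes the loop. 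Without this reformulation the estimate $\EE\bigl(\sum_k(|Y_k|^2+|Z_k|^2)\bigr)^{p/2}\leq C_p\tr(AA^*)^{p/2}$ for $p>2$ is unjustified. The detour: the worry about the cross-term $a_{kk}(|x_k|^2-1)\cdot\bar x_kY_k$ producing a third moment is avoidable --- simply bound $\EE_{k-1}|D_k|^2\leq 3\bigl(|a_{kk}|^2\nu_4 + |Y_k|^2 + |Z_k|^2\bigr)$ via $|a+b+c|^2\leq 3(|a|^2+|b|^2+|c|^2)$, so no moment of $x_k^3$ ever appears.
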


The following two lemmas are direct consequences of Lemma~\ref{lm:quadraticbound} and the Borel-Cantelli Lemma.

\begin{lemma}[Lemma C.3 in~\cite{dobriban2018}]\label{lm:quadconcentration}
    Let $x \in \mathbb{R}^\ndim$ be a random vector with i.i.d. entries and $\mathbb{E}[x]=0$, for which $\mathbb{E}\left[\left(\sqrt{\ndim} x_i\right)^2\right]=\sigma^2$ and $\sup _i \mathbb{E}\left[\left|\sqrt{\ndim} x_i\right|^{4+\eta}\right]$ $<C$ for some $\eta>0$ and $C<\infty$. Moreover, let $A_\ndim$ be a sequence of random $\ndim \times \ndim$ symmetric matrices independent of $x$, with uniformly bounded eigenvalues. Then the quadratic forms $x^{\top} A_\ndim x$ concentrate around their means: $x^{\top} A_\ndim x-\ndim^{-1} \sigma^2 \operatorname{tr} A_\ndim \overset{\as}{\rightarrow} 0$.
\end{lemma}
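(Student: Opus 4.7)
\textbf{Proof proposal for Lemma~\ref{lm:quadconcentration}.}

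The plan is to rescale so that the entries of $x$ are of order one, apply the Burkholder inequality from Lemma~\ref{lm:quadraticbound} conditionally on $A_n$, and then invoke Borel--Cantelli to upgrade the resulting moment bound to almost sure convergence. Concretely, set $y := \sqrt{\ndim}\, x$, so that the coordinates $y_i$ are i.i.d.\ with $\EE[y_i]=0$, $\EE[y_i^2]=\sigma^2$, and $\sup_i \EE[|y_i|^{4+\eta}]\leq C$. Writing $T_n := x^\top A_n x - \ndim^{-1}\sigma^2 \tr(A_n) = \ndim^{-1}\bigl(y^\top A_n y - \sigma^2 \tr(A_n)\bigr)$, the goal is to show $T_n \to 0$ almost surely.

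First, I would fix the exponent $p := 2 + \eta/2 > 2$, chosen so that $2p = 4+\eta$ is exactly the moment available by hypothesis, and condition on $A_n$ (permissible since $A_n$ is independent of $x$). Lemma~\ref{lm:quadraticbound} applied to the quadratic form $y^\top A_n y$ then yields
\begin{equation*}
\EE\bigl[|y^\top A_n y - \sigma^2 \tr(A_n)|^p \,\bigm|\, A_n\bigr]
\leq C_p\Bigl(\bigl(\nu_4 \tr(A_n A_n^{\ast})\bigr)^{p/2} + \nu_{2p}\, \tr\bigl((A_n A_n^{\ast})^{p/2}\bigr)\Bigr).
\end{equation*}
Using the uniform eigenvalue bound $\|A_n\|_2 \leq M$ (some constant, independent of $n$), we have $\tr(A_n A_n^{\ast}) \leq M^2 \ndim$ and $\tr((A_n A_n^{\ast})^{p/2}) \leq M^p \ndim$, and since $\nu_4, \nu_{2p}$ are finite by assumption, both terms are bounded by a constant times $\ndim^{p/2}$ (the first dominates because $p/2 > 1$). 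Taking unconditional expectation and dividing by $\ndim^p$ gives
\begin{equation*}
\EE\bigl[|T_n|^p\bigr] \leq C'_p\, \ndim^{-p/2}.
\end{equation*}

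Finally, since $p/2 > 1$, Markov's inequality yields $\PP(|T_n|>\varepsilon) \leq C'_p\,\varepsilon^{-p}\ndim^{-p/2}$, which is summable in $n$; Borel--Cantelli then delivers $T_n \to 0$ almost surely, completing the proof. The only subtle point, and the reason one cannot simply appeal to a standard second-moment argument, is that one needs $p>2$ to obtain a summable tail, which in turn forces the $(4+\eta)$-th moment hypothesis on the entries of $\sqrt{\ndim}\,x$; this is precisely calibrated by the choice $p = 2+\eta/2$. The randomness of $A_n$ causes no real difficulty because the conditional moment bound is controlled by a deterministic constant through the uniform eigenvalue assumption.
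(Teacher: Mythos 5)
Your proposal is correct and matches the route the paper itself indicates: the paper states only that this lemma ``is a direct consequence of Lemma~\ref{lm:quadraticbound} and the Borel--Cantelli Lemma'' and otherwise defers to the cited reference, and your argument supplies exactly those details (conditioning on $A_n$, applying Burkholder with $p=2+\eta/2$, and summing the tail). The one small point worth tightening is the normalization: Lemma~\ref{lm:quadraticbound} requires unit-variance entries, so you should apply it to $z := \sqrt{\ndim}\,x/\sigma$ rather than directly to $y = \sqrt{\ndim}\,x$, with the factor $\sigma^{2p}$ absorbed into the constant; this does not change anything substantive.
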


\begin{lemma}\label{lm:concentrationontrace}
    Assume Assumptions~\ref{Assume:highdim}-\ref{Assume:Covdistri}. Then, for any triangular array of matrices $M_{\ndim, i}$ with bounded {spectral} norm and independent with $x_{i}$, $1 \leq i \leq \ndata$, it holds that %\scomment{need a result for $z=0$?}
    \begin{equation*}
       \lim_{\npinfty} \max_{i \in \{1, \dots, \ndata\}} \left| \frac{1}{\ndata} x_{i}\transp M_{\ndim, i} x_{i} - \frac{1}{\ndata} \tr(M_{\ndim, i} \covmat) \right| = 0 ~\qas 
    \end{equation*}
\end{lemma}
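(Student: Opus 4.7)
\textbf{Proof plan for Lemma \ref{lm:concentrationontrace}.}

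The plan is to derive a moment bound for each individual quadratic form via the Burkholder inequality (Lemma \ref{lm:quadraticbound}), then combine with Markov's inequality, a union bound, and Borel--Cantelli. The $(8+\epsilon)$-th moment assumption on the entries of $z$ in Assumption \ref{Assume:Covdistri} is what makes the union bound summable.

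\textbf{Step 1: reduction to isotropic quadratic forms.} By Assumption \ref{Assume:Covdistri}, write $x_i = \Sigma^{1/2} z_i$ where the $z_i$ have i.i.d.\ entries with zero mean, unit variance, and bounded $(8+\epsilon)$-th moment. Set $B_{d,i} = \Sigma^{1/2} M_{d,i} \Sigma^{1/2}$, so that
\[
\tfrac{1}{n}\,x_i^{\T} M_{d,i} x_i - \tfrac{1}{n}\tr(M_{d,i}\Sigma) = \tfrac{1}{n}\bigl(z_i^{\T} B_{d,i} z_i - \tr(B_{d,i})\bigr).
\]
By the bounded spectrum of $\Sigma$ and the assumed bound on $\|M_{d,i}\|_2$, there is a constant $C$ with $\|B_{d,i}\|_2 \le C$ almost surely; moreover $B_{d,i}$ is independent of $z_i$.

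\textbf{Step 2: moment bound.} Fix $p = 4 + \epsilon/2$, so that $2p \le 8+\epsilon$ and hence $\nu_{2p} := \sup_i \mathbb{E}|z_{ij}|^{2p} < \infty$. Applying Lemma \ref{lm:quadraticbound} conditionally on $B_{d,i}$ yields
\[
\mathbb{E}\bigl[\,|z_i^{\T} B_{d,i} z_i - \tr(B_{d,i})|^{p}\,\big|\, B_{d,i}\bigr] \le C_p\Bigl(\nu_4^{p/2}\bigl(\tr(B_{d,i}B_{d,i}^{*})\bigr)^{p/2} + \nu_{2p}\tr\!\bigl((B_{d,i}B_{d,i}^{*})^{p/2}\bigr)\Bigr).
\]
Since $\tr(B_{d,i}B_{d,i}^{*}) \le d\,\|B_{d,i}\|_2^2 \le Cd$ and $\tr((B_{d,i}B_{d,i}^{*})^{p/2}) \le d\,\|B_{d,i}\|_2^{p} \le Cd$, and since $p \ge 2$ makes the first summand dominant, we obtain $\mathbb{E}|z_i^{\T} B_{d,i} z_i - \tr(B_{d,i})|^{p} \lesssim d^{p/2}$, uniformly in $i$ and in $n$. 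Dividing by $n^{p}$ and invoking $d \asymp n$ (Assumption \ref{Assume:highdim}),
\[
\mathbb{E}\Bigl|\tfrac{1}{n}x_i^{\T} M_{d,i}x_i - \tfrac{1}{n}\tr(M_{d,i}\Sigma)\Bigr|^{p} \le C\,n^{-p/2}.
\]

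\textbf{Step 3: union bound and Borel--Cantelli.} For any $\delta>0$, Markov's inequality followed by a union bound over $i \in \{1,\dots,n\}$ gives
\[
\PP\!\left(\max_{1\le i\le n}\Bigl|\tfrac{1}{n}x_i^{\T} M_{d,i}x_i - \tfrac{1}{n}\tr(M_{d,i}\Sigma)\Bigr| > \delta\right) \le \frac{C\,n}{\delta^{p} n^{p/2}} = \frac{C}{\delta^{p} n^{p/2-1}}.
\]
Because $p/2 - 1 = 1 + \epsilon/4 > 1$, the right-hand side is summable in $n$. The Borel--Cantelli lemma yields the claimed almost sure convergence.

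\textbf{Main obstacle.} The subtle point is matching the moment order in Burkholder's inequality with the $n$-factor from the union bound: one needs $p>4$, which is precisely what forces the $(8+\epsilon)$-th moment condition in Assumption \ref{Assume:Covdistri} (slightly stronger than the $(4+\epsilon)$-th moment used by \cite{hastie2019surprises}, where only individual-form concentration is needed). Handling the dependence of $M_{d,i}$ on the data through conditioning---as allowed since $M_{d,i}$ is assumed independent of $x_i$---makes this step routine.
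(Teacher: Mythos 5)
Your proof is correct and matches the paper's intended argument: the paper states that Lemma~\ref{lm:concentrationontrace} is a direct consequence of the Burkholder inequality (Lemma~\ref{lm:quadraticbound}) and the Borel--Cantelli lemma, which is exactly the route you take (isotropic reduction, conditional Burkholder moment bound with $p = 4 + \epsilon/2$, Markov plus union bound, summability from $p/2 - 1 > 1$). Your closing remark correctly identifies why the $(8+\epsilon)$-th moment is the natural threshold: one needs $p > 4$, hence $2p > 8$, for the union bound over $n$ quadratic forms to be summable.
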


\begin{lemma}[Lemma C.1 in~\cite{dobriban2018}]\label{lm:quadcrossconcentration}
    Let $x_n \in$ $\mathbb{R}^n$ and $y_n \in \mathbb{R}^n$ be independent sequences of random vectors, such that for each $n$ the coordinates of $x_n$ and $y_n$ are independent random variables. Moreover, suppose that the coordinates of $x_n$ are identically distributed with mean 0 , variance $C / n$ for some $C>0$ and fourth moment of order $1 / n^2$. Suppose the same conditions hold for $y_n$, where the distribution of the coordinates of $y_n$ can be different from those of $x_n$. Let $A_n$ be a sequence of $n \times n$ random matrices such that $\left\|A_n\right\|$ is uniformly bounded. Then $x_n^{\top} A_n y_n \overset{\as}{\rightarrow} 0$.
\end{lemma}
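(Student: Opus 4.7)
The plan is to prove almost sure convergence via a fourth--moment bound combined with the Borel--Cantelli lemma. Because $A_n$ has a uniform spectral--norm bound (say $\|A_n\|\le K$), I will first work conditionally on $A_n$, so that all the constants I produce depend only on $K$ and on the universal constants in the moment bounds; the unconditional statement then follows by taking expectations.

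First I would condition on both $A_n$ and $y_n$ and view
\[
x_n^{\top} A_n y_n \;=\; \sum_{i=1}^{n} x_{n,i}\, c_i, \qquad c := A_n y_n,
\]
as a linear form in the independent, mean--zero coordinates of $x_n$. A direct expansion of $\mathbb{E}[(\sum x_{n,i} c_i)^{4}]$, using $\operatorname{Var}(x_{n,i})=C/n$ and $\mathbb{E}|x_{n,i}|^{4}=O(1/n^{2})$ together with independence, yields
\[
\mathbb{E}\bigl[(x_n^{\top} A_n y_n)^{4} \,\big|\, A_n, y_n\bigr] \;\le\; \frac{C_1}{n^{2}}\,\|c\|_2^{4} \;+\; \frac{C_2}{n^{2}}\,\|c\|_4^{4} \;\le\; \frac{C_3}{n^{2}}\,\|A_n y_n\|_2^{4}.
\]
Next I take expectation over $y_n$ (still conditional on $A_n$). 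The quantity $\|A_n y_n\|_2^{2} = y_n^{\top} M y_n$ with $M := A_n^{\top} A_n$ is a quadratic form in the independent coordinates of $y_n$. Its mean is $(C/n)\operatorname{tr}(M) \le C\|A_n\|^{2}\le CK^{2}$, and Lemma~\ref{lm:quadraticbound} (Burkholder at $p=2$), after rescaling the coordinates of $y_n$ to unit variance so that $\nu_4 = O(1)$, gives
\[
\operatorname{Var}\bigl(y_n^{\top} M y_n \,\big|\, A_n\bigr) \;\le\; \frac{C_4}{n^{2}}\,\operatorname{tr}(M^{2}) \;\le\; \frac{C_4}{n}\,\|A_n\|^{4},
\]
so $\mathbb{E}[\|A_n y_n\|_2^{4}\mid A_n]$ is uniformly bounded. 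Combining the two steps and integrating out $A_n$ produces the clean bound $\mathbb{E}[(x_n^{\top} A_n y_n)^{4}] = O(n^{-2})$.

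Finally, Markov's inequality gives $\mathbb{P}(|x_n^{\top} A_n y_n|>\varepsilon) \le C_\varepsilon/n^{2}$ for every $\varepsilon>0$, which is summable in $n$, and the Borel--Cantelli lemma delivers $x_n^{\top} A_n y_n \to 0$ almost surely. The main obstacle I anticipate is the fourth--moment control of the quadratic form $y_n^{\top} A_n^{\top} A_n y_n$: a naive second--moment calculation only gives $\mathbb{E}[(x_n^{\top} A_n y_n)^{2}]=O(n^{-1})$, which is not summable and therefore fails to produce almost sure convergence. The two simultaneous applications of Burkholder--type inequalities (first for the linear form in $x_n$, then for the quadratic form in $y_n$) are what upgrade this rate to the summable $O(n^{-2})$.
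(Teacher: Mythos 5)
Your argument is correct, and it is essentially the standard proof: the paper itself gives no proof of this lemma (it is quoted from Dobriban and Wager's Lemma C.1, with the surrounding text indicating exactly the moment-bound-plus-Borel--Cantelli route you take), and your two-stage fourth-moment computation — first the linear form in $x_n$ conditionally on $(A_n,y_n)$, then the quadratic form $y_n^{\top}A_n^{\top}A_n y_n$ via the Burkholder-type bound — correctly yields $\EE[(x_n^{\top}A_n y_n)^4]=O(n^{-2})$ and hence almost sure convergence. The only point worth flagging is that your conditioning step tacitly uses that $A_n$ is independent of $(x_n,y_n)$; this hypothesis is missing from the statement as transcribed in the paper, but it is genuinely needed (e.g.\ $A_n=x_n y_n^{\top}/(\|x_n\|_2\|y_n\|_2)$ has unit norm yet $x_n^{\top}A_n y_n\not\to 0$), it appears in the companion Lemma~\ref{lm:quadconcentration}, and it holds in every application of the lemma in this paper, so your reading is the correct one.
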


%\input{Isotropicresampling}
%\input{IsotropicBootstrap}
%\input{IsotropicMultiplier_Bootstrap}
%\input{ResidualWild}
%\input{Correlatedresampling}
%\input{CorrelatedBootstrap}
%\input{Conditional_Method}
%\input{eigenvectors}

%\vspace{-10pt}

\end{document}